\documentclass[conference]{IEEEtran}
\usepackage{times}
\usepackage[numbers]{natbib}
\usepackage{multicol}
\usepackage[bookmarks=true]{hyperref}

\usepackage{amsmath,amsthm,amssymb}
\usepackage{mathtools}
\usepackage{graphicx}
\usepackage{adjustbox}
\usepackage{tikz}
\usepackage{tikz-3dplot}
\usepackage{pgfplots}
\usepackage{pgfplotstable}
\usepackage{float}
\usepackage{acro}
\usepackage{wrapfig}
\let\labelindent\relax
\usepackage{enumitem}
\usepackage{bm}
\usepackage{etoolbox}
\usepackage{makecell}
\usepackage{upgreek}
\usepackage{stfloats}
\usepackage[capitalize]{cleveref}
\usepackage[caption=false,font=footnotesize]{subfig}
\usepackage[ruled,linesnumbered]{algorithm2e}

\renewcommand{\vec}[1]{\mathbf{\boldsymbol{#1}}}
\newcommand{\mat}[1]{\boldsymbol{\mathbf{#1}}}

\newcommand{\diag}{\operatorname{diag}}
\newcommand{\blkdiag}{\operatorname{blkdiag}}
\newcommand{\trace}{\operatorname{Tr}}

\newcommand{\minimize}{\mathop{\text{minimize}}}

\newcommand{\subjectto}{\mathop{\text{subject to}}}

\usetikzlibrary{arrows.meta}
\usetikzlibrary{3d}
\usetikzlibrary{calc}
\usetikzlibrary{positioning}
\usetikzlibrary{fit}
\usetikzlibrary{angles}
\usetikzlibrary{quotes}
\usetikzlibrary{patterns}
\usetikzlibrary{pgfplots.statistics}
\usetikzlibrary{pgfplots.fillbetween}

\usepackage{xcolor}
\definecolor{gtcolor}{RGB}{0, 48, 87} 

\usepgfplotslibrary{colorbrewer}
\pgfplotsset{compat=1.9}
\pgfplotsset{legend cell align={left}}
\pgfplotsset{every axis plot/.append style={line width=0.8pt}}
\pgfplotsset{cycle list/Set1-5}

\theoremstyle{definition}
\newtheorem{theorem}{Theorem}
\newtheorem{lemma}{Lemma}
\newtheorem{corollary}{Corollary}
\newtheorem{example}{Example}

\renewcommand{\figurename}{Figure}

\AtBeginDocument{%
  \setlength{\floatsep}{12pt}       
  \setlength{\textfloatsep}{14pt}   
  \setlength{\intextsep}{0pt}       
}

\AtBeginDocument{%
  \thinmuskip=3mu
  \medmuskip=4mu
  \thickmuskip=5mu
  \binoppenalty=10000
  \relpenalty=10000
  \linepenalty=1000
}

\makeatletter
\pretocmd{\appendices}{%
  \addtocontents{toc}{\protect\setcounter{tocdepth}{0}}%
  \crefalias{section}{appendix}%
}{}{}
\crefname{appendix}{Appendix}{Appendices}
\Crefname{appendix}{Appendix}{Appendices}
\makeatother

\DeclareAcronym{RL}{
  short = RL,
  long  = reinforcement learning
}
\DeclareAcronym{MPPI}{
  short = MPPI,
  long  = model predictive path integral
}
\DeclareAcronym{CMA-ES}{
  short = CMA-ES,
  long  = covariance matrix adaptation evolution strategy
}
\DeclareAcronym{CEM}{
  short = CEM,
  long  = cross-entropy method
}
\DeclareAcronym{KKT}{
  short = KKT,
  long  = Karush-Kuhn-Tucker
}
\DeclareAcronym{DoF}{
  short = {},
  long  = {},
  plural = degrees of freedom,
  short-plural = DoFs
}
\DeclareAcronym{QP}{
  short = QP,
  long  = quadratic program
}
\DeclareAcronym{MPC}{
  short = MPC,
  long  = model predictive control
}
\DeclareAcronym{SCP}{
  short = SCP,
  long  = sequential convex programming
}
\DeclareAcronym{iLQR}{
  short = iLQR,
  long  = iterative linear quadratic regulator
}
\DeclareAcronym{DDP}{
  short = DDP,
  long  = differential dynamic programming
}
\DeclareAcronym{CTR}{
  short = CTR,
  long  = contact trust region
}
\DeclareAcronym{TO-CTR}{
  short = TO-CTR,
  long  = trajectory optimization with contact trust region
}
\DeclareAcronym{LCP}{
  short = LCP,
  long  = linear complementarity program
}
\DeclareAcronym{NCP}{
  short = NCP,
  long  = nonlinear complementarity program
}
\DeclareAcronym{RRT}{
  short = RRT,
  long  = rapidly exploring random tree
}
\DeclareAcronym{MIP}{
  short = MIP,
  long  = mixed integer programming
}
\DeclareAcronym{GCS}{
  short = GCS,
  long  = graph of convex sets
}
\DeclareAcronym{LTV}{
  short = LTV,
  long  = linear time-varying
}
\DeclareAcronym{AM-GM}{
  short = AM-GM,
  long  = arithmetic mean-geometric mean
}
\DeclareAcronym{MPCC}{
  short = MPCC,
  long  = mathmatical program with complementarity constraints
}
\DeclareAcronym{DDF}{
  short = DDF,
  long  = diagonal dominance factor
}

\begin{document}

\title{\looseness-1Certified Gradient-Based Contact-Rich Manipulation via Smoothing-Error Reachable Tubes}

\author{\authorblockN{Wei-Chen Li and Glen Chou}
\authorblockA{Georgia Institute of Technology, Atlanta, Georgia 30308\\
Email: \texttt{\{wli777, chou\}@gatech.edu}\\
\textcolor{gtcolor}{\textbf{Project Website}: \href{https://trustworthyrobotics.github.io/quasistatic-contact-SLS}{(Link)}} \quad $\mid$ \quad \textcolor{gtcolor}{\textbf{Code}: \href{https://github.com/trustworthyrobotics/quasistatic-contact-SLS}{(Github)}} \quad $\mid$ \quad \textcolor{gtcolor}{\textbf{Video}: \href{https://youtu.be/LFxi2hkQJyo}{(YouTube)}}}
}


%

\maketitle

\begin{abstract}
\looseness-1Gradient-based methods can efficiently optimize controllers by leveraging differentiable simulation and physical priors. However, contact-rich manipulation remains challenging because hybrid contact dynamics often produce discontinuous or vanishing gradients. Although smoothing the dynamics can restore informative gradients, the resulting model mismatch can cause controller failures when deployed on real systems.
We address this trade-off by planning with smoothed dynamics while explicitly quantifying and compensating for the induced error, providing formal guarantees on safety and task completion under the original nonsmooth dynamics. Our approach applies smoothing to both contact dynamics and contact geometry within a differentiable simulator based on convex optimization, allowing us to characterize the deviation from the nonsmooth dynamics as a set-valued discrepancy.
We incorporate this discrepancy into the optimization of time-varying affine feedback policies through analytical reachable sets, enabling robust constraint satisfaction for the closed-loop hybrid system while relying solely on the informative gradients of the smoothed model. By bridging differentiable simulation with set-valued robust control, our method produces affine feedback policies that respect the unilateral nature of contact.
We evaluate our method on several contact-rich tasks, including planar pushing, object rotation, and in-hand dexterous manipulation, achieving certified constraint satisfaction with lower safety violations and smaller goal errors than baseline approaches.
\end{abstract}

\section{Introduction}
Contact-rich manipulation involves controlling unactuated objects to reach a desired configuration, which requires making discrete decisions about when and how to make or break contact. Among current approaches, \ac{RL} \cite{schulman2017proximal} and zero-order planning methods such as \ac{CEM} \cite{howell2022predictive, li2025drop}, \ac{CMA-ES} \cite{hansen2016cma, jankowski2022vp}, and \ac{MPPI} control \cite{pezzato2025sampling} have achieved notable success. However, these methods are fundamentally sampling-based and effectively require rediscovering the underlying physics through trial-and-error, resulting in low sample efficiency.

In contrast, gradient-based trajectory optimization \cite{betts1998survey, schulman2014motion, malyuta2022convex} can exploit the problem's physical structure via differentiable simulation \cite{newbury2024review, murthy2021gradsim}. 
However, hybrid dynamics in contact-rich manipulation violate key assumptions of these methods. In particular, nonsmooth hybrid dynamics create fundamental challenges: gradients are often discontinuous across contact modes and frequently zero. For example, before the robot contacts the object, its actions do not affect the object or reward. Since the contact manifold occupies a zero-measure subset of the configuration space, standard gradients provide little guidance for improving actions. As such, while differentiable simulators exist \cite{freeman2021brax}, the practical utility of the gradients they provide remains limited \cite{suh2022do} for manipulation.

\begin{figure}[!t]
    \centering
    \vspace{1mm}
    \subfloat[]{%
        \adjincludegraphics[height=2.2cm, trim={{.34\width} {.085\height} {.345\width} {.057\height}}, clip]{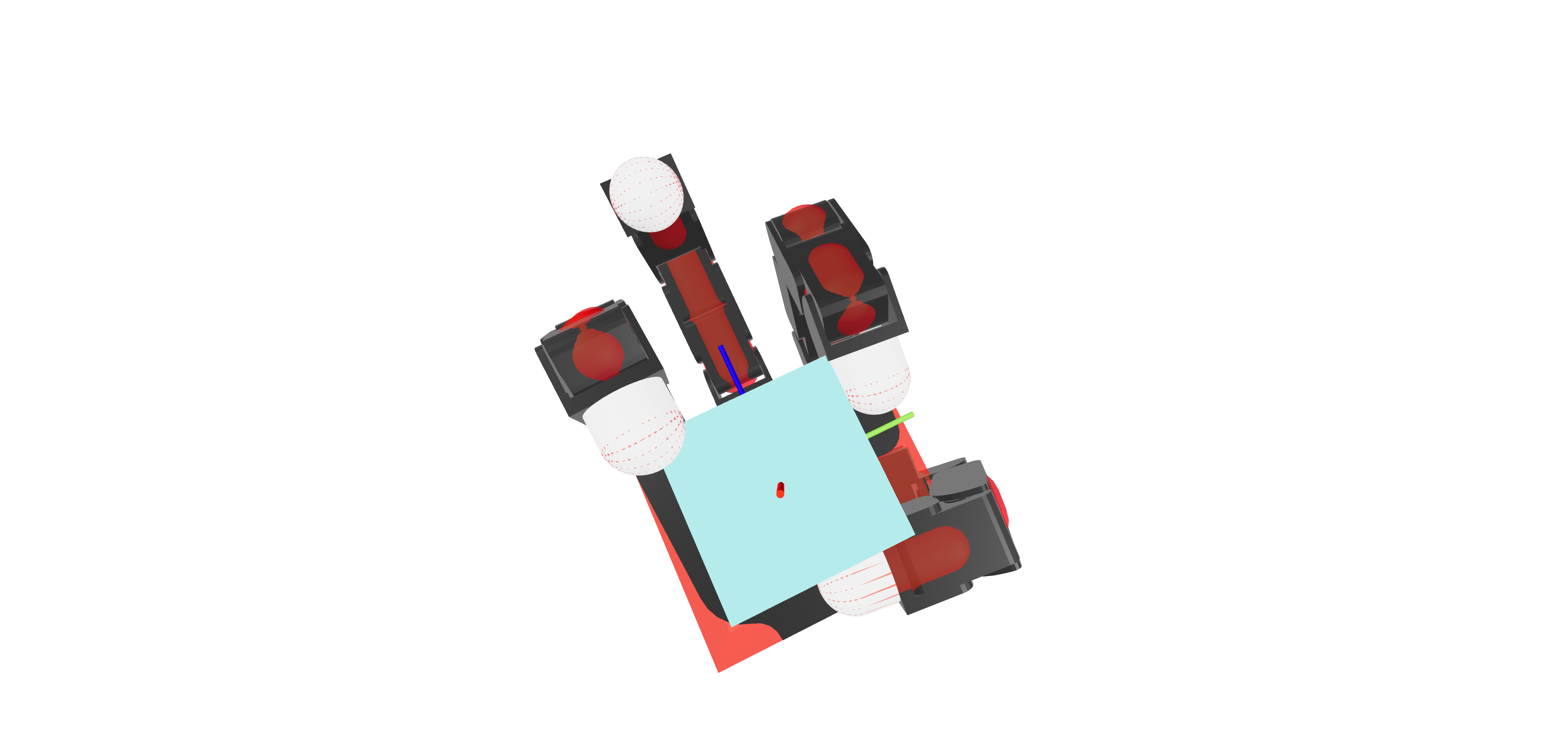}
        \adjincludegraphics[height=2.2cm, trim={{.34\width} {.085\height} {.345\width} {.057\height}}, clip]{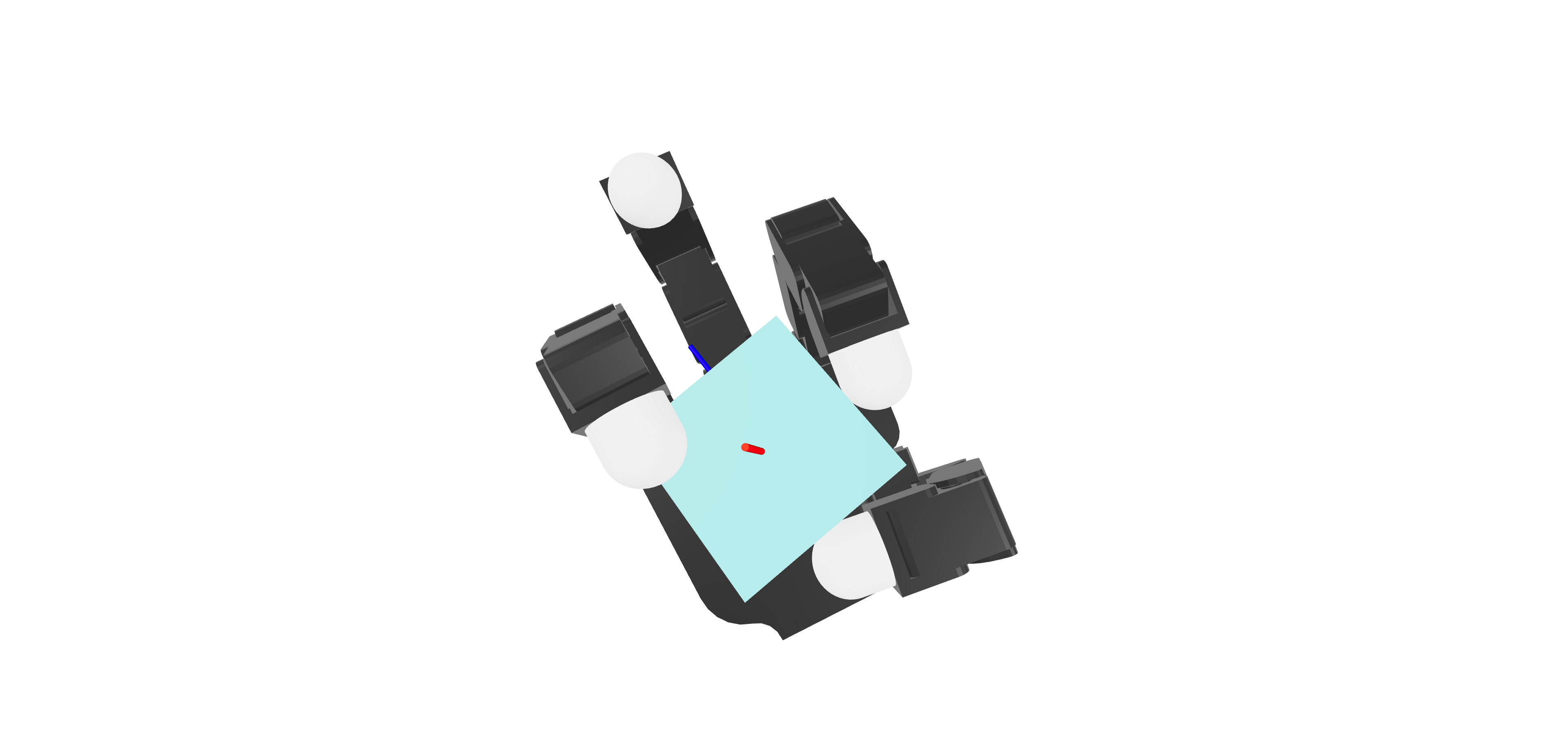}
        \adjincludegraphics[height=2.2cm, trim={{.34\width} {.085\height} {.345\width} {.057\height}}, clip]{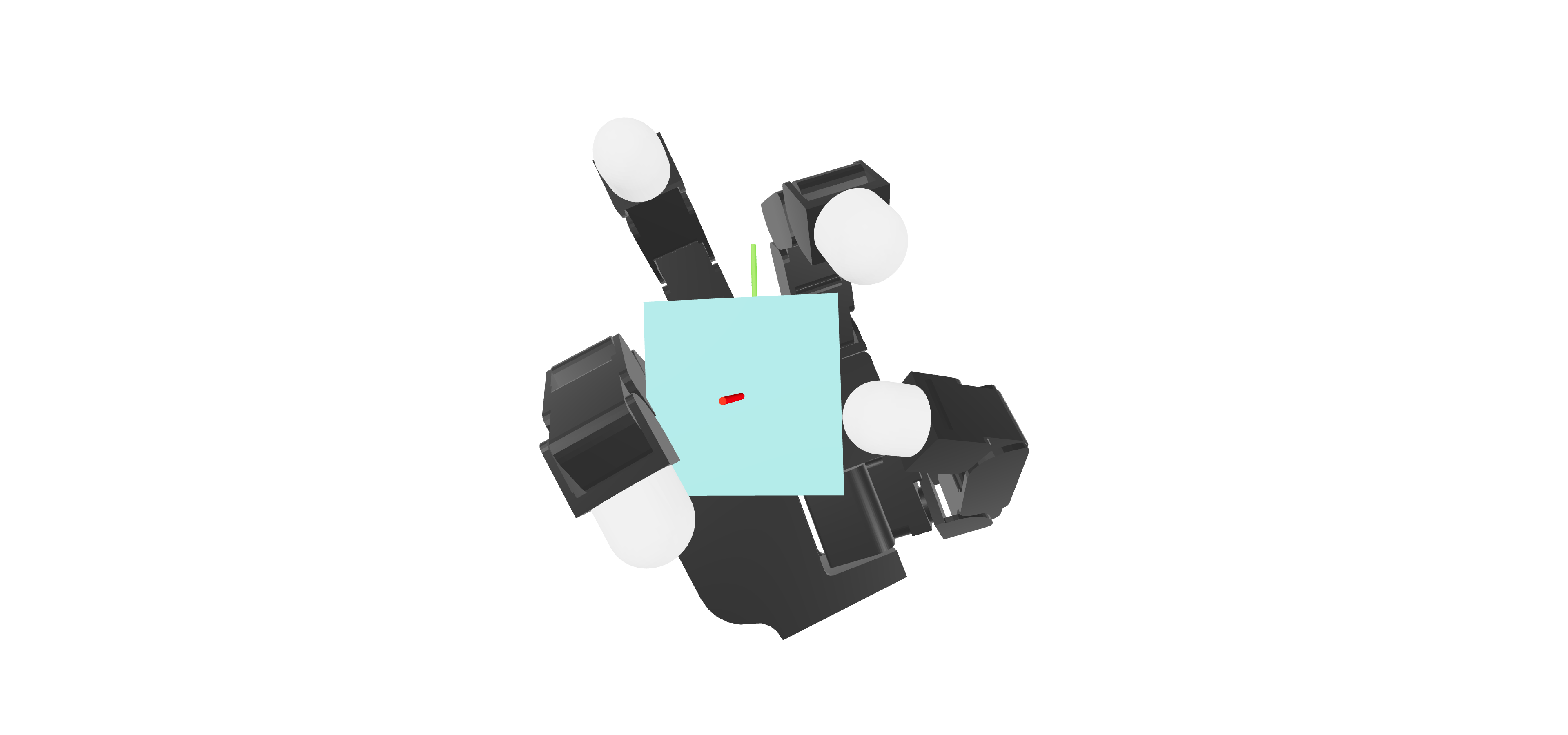}
        \adjincludegraphics[height=2.2cm, trim={{.34\width} {.085\height} {.345\width} {.057\height}}, clip]{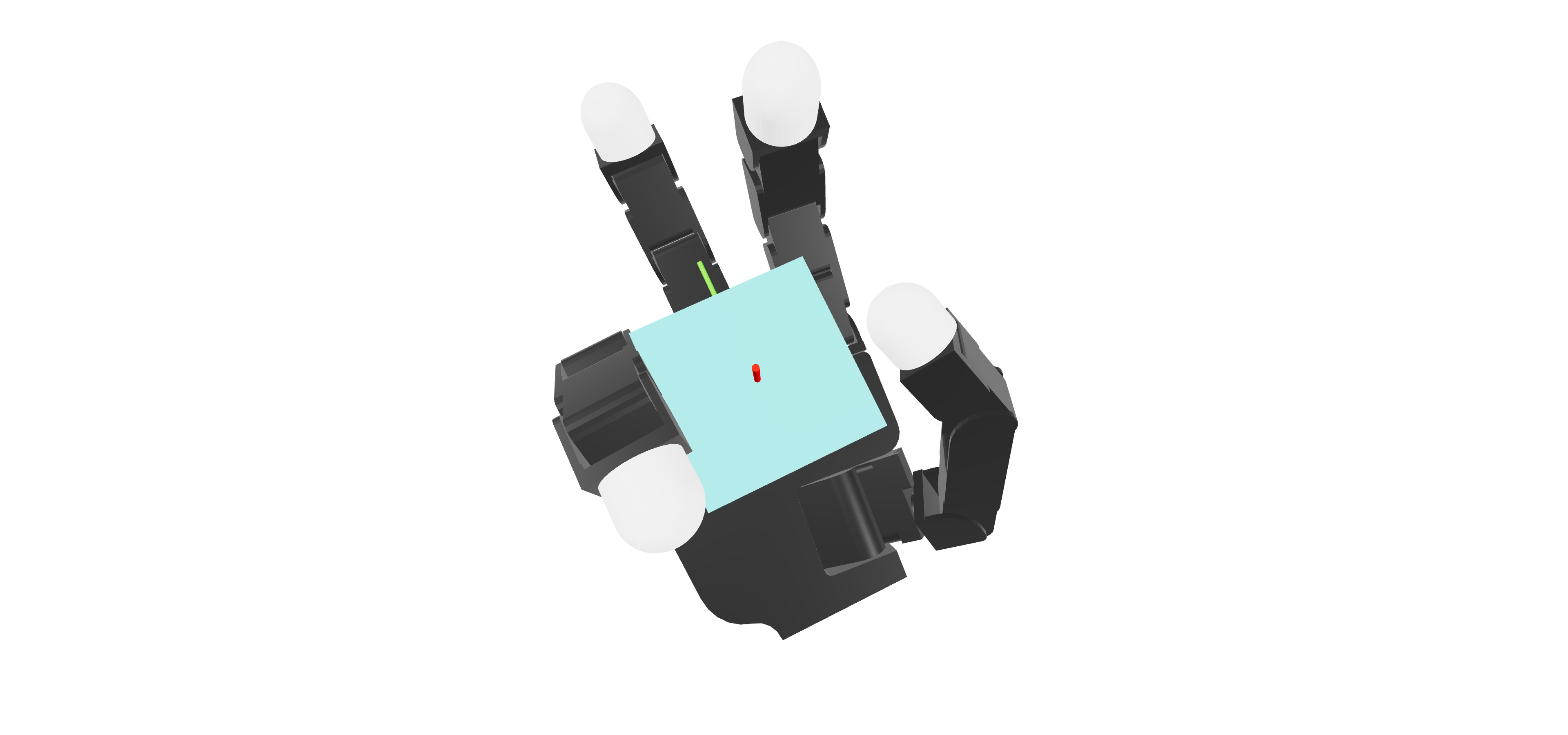}
        \adjincludegraphics[height=2.2cm, trim={{.34\width} {.085\height} {.345\width} {.057\height}}, clip]{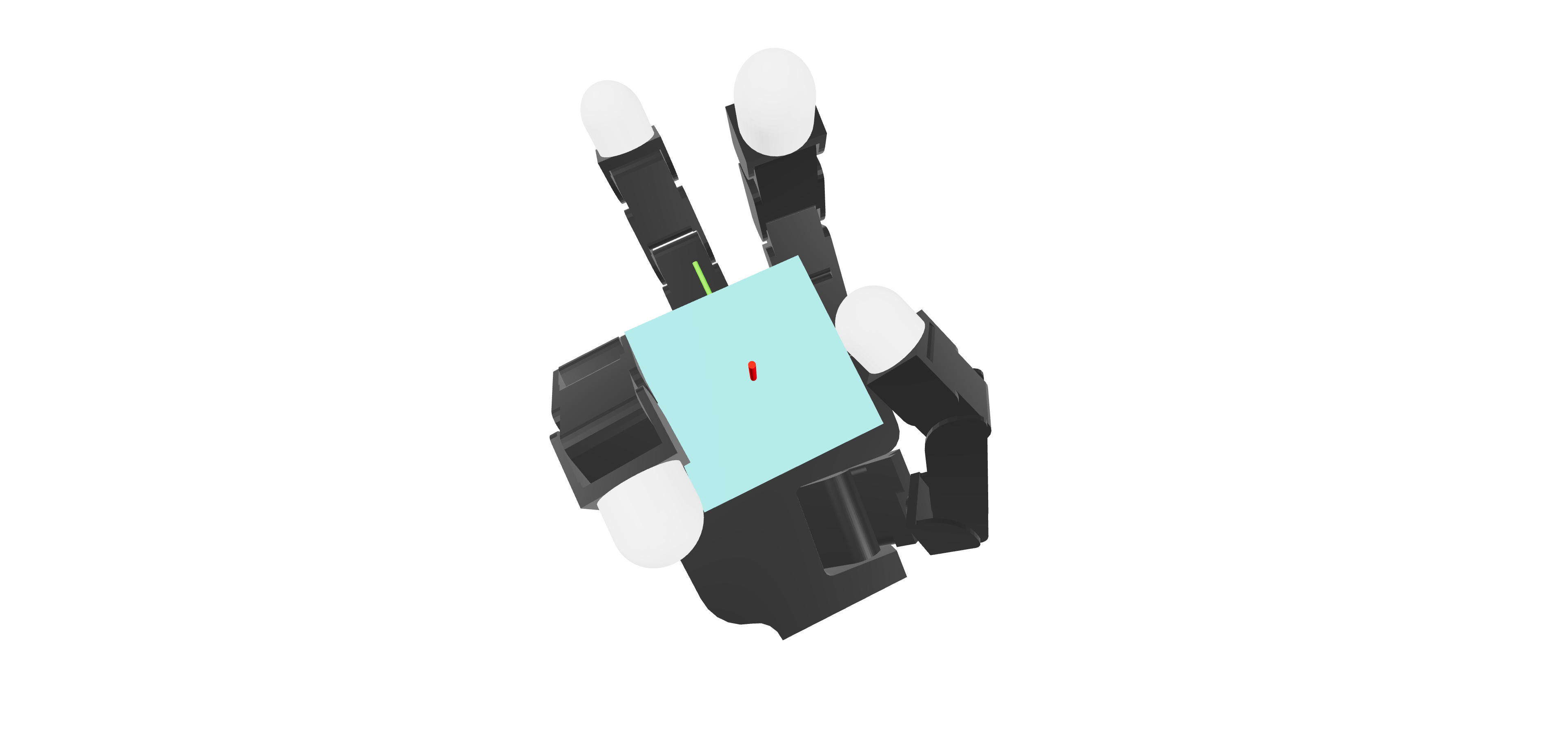}
    }\\
    \subfloat[]{%
      \shortstack{
        \hspace*{-0mm}
        \begin{tikzpicture}

\definecolor{color1}{RGB}{228,26,28}
\definecolor{color2}{RGB}{55,126,184}
\definecolor{color3}{RGB}{77,175,74}
\definecolor{color4}{RGB}{152,78,163}

\begin{axis}[
width=3.6cm, height=3.4cm,
xlabel={\footnotesize $t$},
ylabel={\footnotesize roll},
xlabel shift=-1mm,
ylabel shift=-1mm,
xmin=0, xmax=0.9,
tick label style={font=\scriptsize},
legend cell align={left},
legend style={
    font=\footnotesize,
    at={(-0.2,1.25)},
    anchor=north west,
    draw=none,
    fill=none,
    row sep=-2pt,
    inner sep=0pt,
},
legend image post style={scale=0.4},
]

\pgfplotstableread[row sep=crcr]{
t     Xo_nominal  Xo_true     Xo_lb       Xo_ub      \\
0.00  0.00000000  0.00000000  0.00000000  0.00000000 \\
0.02  0.14060123  0.07547210  0.17772230  0.04133807 \\
0.04  0.65062964  0.44805896  0.88616793  0.37182721 \\
0.06  0.83067282  0.68080950  1.23926773  0.38613393 \\
0.08  0.90062293  0.85158977  1.39965069  0.38183159 \\
0.10  0.92865189  0.95379366  1.10245515  0.74815941 \\
0.12  0.97753484  0.98733145  1.17088581  0.78206496 \\
0.14  1.04390466  1.06680235  1.40806992  0.67409528 \\
0.16  1.07213536  1.15193567  1.36551077  0.77657533 \\
0.18  1.07867366  1.22943037  1.75719874  0.39594387 \\
0.20  1.01889188  1.28270038  1.47755816  0.55513011 \\
0.22  1.07160986  1.32204958  1.91410285  0.22584301 \\
0.24  1.08342366  1.37190605  2.61716539 -0.45507043 \\
0.26  1.07480736  1.41373890  1.77875187  0.36994198 \\
0.28  1.34209965  1.43049563  2.00467455  0.68194973 \\
0.30  1.33976318  1.44243543  2.52525483  0.15259676 \\
0.32  1.35597211  1.45919685  1.75658555  0.95809204 \\
0.34  1.35279622  1.46983520  1.63918290  1.07347418 \\
0.36  1.62547854  1.48280300  1.80679090  1.45400969 \\
0.38  1.92890173  1.49027402  3.02637873  0.89432892 \\
0.40  1.96686726  1.50140073  2.26821989  1.64778862 \\
0.42  1.85493530  1.52226785  3.14143680  0.52366528 \\
0.44  1.84648892  1.54810868  3.63360025 -0.37449356 \\
0.46  1.80746318  1.55461533  2.78971650  0.25872717 \\
0.48  1.62925741  1.56630397  2.28700315  0.47724573 \\
0.50  1.61310156  1.57313092  3.00386153 -0.45564997 \\
0.52  1.83396582  1.57212247  3.22869155 -0.15925868 \\
0.54  1.88768324  1.57163298  3.44727800 -0.31325232 \\
0.56  1.67527220  1.57147560  2.97510376 -0.14679914 \\
0.58  1.63369162  1.57162231  2.71443850  0.03355537 \\
0.60  1.19377735  1.57202127  2.16058389  0.15355510 \\
0.62  1.31999166  1.57233277  2.36929954  0.21887068 \\
0.64  1.40137765  1.57277236  2.54620929  0.23664063 \\
0.66  1.43925366  1.57377805  2.79112618  0.34111546 \\
0.68  1.47333071  1.57503446  2.35548201  0.37476200 \\
0.70  1.49547882  1.57887186  2.56285336  0.15824101 \\
0.72  0.65643813  1.58147918  1.94802826 -1.32136593 \\
0.74  0.66861540  1.58702055  1.55617819 -0.68674913 \\
0.76  0.67719286  1.59031591  1.42152425 -0.78916986 \\
0.78  1.50814536  1.60138806  2.46395214 -0.72759415 \\
0.80  1.54191348  1.61526754  2.64291844 -1.20488800 \\
0.82  2.89627657  1.62581555  3.89548739  0.18889894 \\
0.84  2.74960846  1.62905720  3.53847158  0.50110371 \\
0.86  2.75957043  1.63031245  3.75403174 -0.41143078 \\
0.88  2.76764233  1.63297406  3.92314693 -0.47794137 \\
0.90  2.78042029  1.63481436  4.14776057 -0.23473048 \\
0.92  2.79216372  1.63598409  4.41216820  0.13307342 \\
0.94  2.80366701  1.63237315  4.80682477  0.47380592 \\
0.96  2.81405264  1.62899023  5.36243534  0.63686355 \\
0.98  2.82236182  1.58447749  6.14352666  0.63198157 \\
1.00  2.82850156  1.57749284  7.22230173  0.41544104 \\
}\datatable

\addplot[semithick, densely dashed, forget plot] {1.5707963};

\addplot[thick, color2]
    table[x=t, y=Xo_nominal]{\datatable};
\addlegendentry{Nominal trajectory}

\addplot[thick, color1]
    table[x=t, y=Xo_true]{\datatable};

\addplot[name path=lower, draw=none, forget plot]
    table[x=t, y=Xo_lb]{\datatable};

\addplot[name path=upper, draw=none, forget plot]
    table[x=t, y=Xo_ub]{\datatable};

\addplot[fill=color3, fill opacity=0.4, draw=none, legend image code/.code={
        \draw[fill=color3, fill opacity=0.4, draw=none] (0cm,-0.1cm) rectangle (0.6cm,0.1cm);
    }]
    fill between[of=lower and upper];

\end{axis}

\end{tikzpicture} \hspace*{-4mm}
        \begin{tikzpicture}

\definecolor{color1}{RGB}{228,26,28}
\definecolor{color2}{RGB}{55,126,184}
\definecolor{color3}{RGB}{77,175,74}
\definecolor{color4}{RGB}{152,78,163}

\begin{axis}[
width=3.6cm, height=3.4cm,
xlabel={\footnotesize $t$},
ylabel={\footnotesize pitch},
xlabel shift=-1mm,
ylabel shift=-3mm,
xmin=0, xmax=0.9,
tick label style={font=\scriptsize},
legend cell align={left},
legend style={
    font=\footnotesize,
    at={(-0.2,1.25)},
    anchor=north west,
    draw=none,
    fill=none,
    row sep=-2pt,
    inner sep=1pt,
},
legend image post style={scale=0.4},
]

\pgfplotstableread[row sep=crcr]{
t     Xo_nominal  Xo_true     Xo_lb       Xo_ub      \\
0.00  0.00000000  0.00000000  0.00000000  0.00000000 \\
0.02 -0.17673859 -0.07555672  0.00069633 -0.27283342 \\
0.04 -0.12343371 -0.13895533  0.39139666 -0.65010642 \\
0.06 -0.07571825 -0.03613510  0.47454709 -0.60858044 \\
0.08 -0.01410285  0.04818513  0.80554233 -0.82214961 \\
0.10 -0.01360059  0.06547300  0.19269577 -0.22117600 \\
0.12  0.13463394  0.03152827  0.54793796 -0.27677067 \\
0.14 -0.02578607  0.06835325  0.34938023 -0.40441394 \\
0.16 -0.02727212  0.08397178  0.57139956 -0.62559324 \\
0.18 -0.00489767  0.06053563  0.93787418 -0.94844242 \\
0.20 -0.09522814  0.05858721  0.68016233 -0.86881501 \\
0.22 -0.04167032  0.06630784  1.40477717 -1.48814993 \\
0.24 -0.00279069  0.04661688  0.84586287 -0.84850128 \\
0.26 -0.04433673  0.02126394  0.76129771 -0.85601100 \\
0.28  0.16005472  0.01649361  1.71221036 -1.39160734 \\
0.30  0.14064529  0.02239773  1.77977337 -1.48835493 \\
0.32  0.08370122  0.02268423  0.56455163 -0.40185922 \\
0.34  0.03919357  0.02987493  0.18977386 -0.11693588 \\
0.36  0.00285570  0.03778376  0.09604175 -0.09831604 \\
0.38  0.09391031  0.03805400  0.54528245 -0.42033080 \\
0.40 -0.17138019  0.03586421  0.95099439 -1.43044466 \\
0.42 -0.00721697  0.03004341  1.88141582 -2.04035530 \\
0.44  0.01246474  0.02576658  0.49049735 -0.60758168 \\
0.46 -0.23713942  0.02362877  0.42647481 -1.36040874 \\
0.48  0.02412309  0.01819250  1.21650873 -0.57551463 \\
0.50 -0.27904749  0.01330681  0.07395374 -0.73198026 \\
0.52  0.27835396  0.01534056  3.21740828 -1.77936698 \\
0.54  0.15707978  0.01830315  1.93825284 -1.43052398 \\
0.56  0.15589432  0.01986549  0.73076291 -0.68215564 \\
0.58  0.12817014  0.02013379  0.92213687 -1.36927202 \\
0.60  0.10271838  0.01999717  0.60476406 -1.10730411 \\
0.62  0.10062705  0.02061436  0.81231236 -0.88401360 \\
0.64  0.12204143  0.02052978  0.43443444 -0.92962961 \\
0.66  0.14979791  0.02012909  0.39077002 -1.06637717 \\
0.68  0.18698450  0.01865575  0.60986088 -1.74184708 \\
0.70  0.22942526  0.01435241  0.81006971 -2.06335015 \\
0.72  0.19619757  0.00934729  0.83371787 -1.96316839 \\
0.74  0.19930371  0.00104489  1.70920486 -2.43672544 \\
0.76  0.20661990 -0.00598022  1.69841415 -2.67719959 \\
0.78  0.15261183 -0.01428662  0.44576902 -0.63886671 \\
0.80  0.14684707 -0.02431005  0.46447650 -0.65234348 \\
0.82  0.17581969 -0.03229417  1.58333194 -0.67719643 \\
0.84  0.25822608 -0.03632358  1.28912422 -0.69421987 \\
0.86  0.24239820 -0.03770874  3.16944471 -1.28751048 \\
0.88  0.23195662 -0.04044846  1.67250419 -1.46952213 \\
0.90  0.21577627 -0.03872601  1.47028357 -2.66046009 \\
0.92  0.19788196 -0.03636060  1.49272060 -3.71647926 \\
0.94  0.17793139 -0.02834211  1.54712049 -4.37305848 \\
0.96  0.15626182 -0.01870050  1.60923665 -4.81026955 \\
0.98  0.13306360  0.01347806  1.62195867 -4.68354898 \\
1.00  0.10870430  0.01319885  1.60197664 -4.54210136 \\
}\datatable

\addplot[semithick, densely dashed, forget plot] {0};

\addplot[thick, color2, forget plot]
    table[x=t, y=Xo_nominal]{\datatable};

\addplot[thick, color1]
    table[x=t, y=Xo_true]{\datatable};
\addlegendentry{Closed-loop rollout}

\addplot[name path=lower, draw=none, forget plot]
    table[x=t, y=Xo_lb]{\datatable};

\addplot[name path=upper, draw=none, forget plot]
    table[x=t, y=Xo_ub]{\datatable};

\addplot[fill=color3, fill opacity=0.4, draw=none, legend image code/.code={
        \draw[fill=color3, fill opacity=0.4, draw=none] (0cm,-0.1cm) rectangle (0.6cm,0.1cm);
    }]
    fill between[of=lower and upper];

\end{axis}

\end{tikzpicture} \hspace*{-5mm}
        \begin{tikzpicture}

\definecolor{color1}{RGB}{228,26,28}
\definecolor{color2}{RGB}{55,126,184}
\definecolor{color3}{RGB}{77,175,74}
\definecolor{color4}{RGB}{152,78,163}

\begin{axis}[
width=3.6cm, height=3.4cm,
xlabel={\footnotesize $t$},
ylabel={\footnotesize yaw},
xlabel shift=-1mm,
ylabel shift=-3mm,
xmin=0, xmax=0.9,
tick label style={font=\scriptsize},
legend cell align={left},
legend style={
    font=\footnotesize,
    at={(-0.15,1.25)},
    anchor=north west,
    draw=none,
    fill=none,
    row sep=-2pt,
    inner sep=1pt,
},
legend image post style={scale=0.4},
]

\pgfplotstableread[row sep=crcr]{
t     Xo_nominal  Xo_true     Xo_lb       Xo_ub      \\
0.00  0.00000000  0.00000000  0.00000000  0.00000000 \\
0.02 -0.10817828 -0.01499433  0.04676520 -0.18066278 \\
0.04 -0.17132584 -0.17740796  0.26729981 -0.51791085 \\
0.06 -0.27339385 -0.27641002  0.58690364 -1.08206302 \\
0.08 -0.24203474 -0.30261023  0.48742983 -0.94974324 \\
0.10 -0.16798809 -0.27627989  0.13132231 -0.46142155 \\
0.12 -0.17939358 -0.21316796  0.50552840 -0.86192891 \\
0.14 -0.11122732 -0.19340900  0.56009961 -0.78014398 \\
0.16 -0.11966048 -0.15669289  1.82297747 -2.05700194 \\
0.18 -0.09494648 -0.10701526  1.03852957 -1.22547225 \\
0.20 -0.20700705 -0.09878132  2.31032382 -2.72043699 \\
0.22 -0.16567668 -0.08864714  1.28099572 -1.60905285 \\
0.24 -0.01174094 -0.07981696  3.28324263 -3.30038287 \\
0.26 -0.07399372 -0.07045262  1.48762125 -1.62420367 \\
0.28 -0.28613152 -0.05916727  1.44316392 -2.00092314 \\
0.30 -0.00258191 -0.04507064  1.99166935 -1.99004957 \\
0.32 -0.06998532 -0.03484730  0.68250812 -0.81505177 \\
0.34 -0.00018897 -0.02288134  0.28225655 -0.27225027 \\
0.36  0.06873224 -0.01011155  0.28902247 -0.13651467 \\
0.38  0.05755787 -0.00332652  1.42766944 -1.20444911 \\
0.40 -0.07839896  0.00129572  0.82772358 -0.89162020 \\
0.42  0.22153405  0.00858698  1.88745557 -1.34079773 \\
0.44  0.08880576  0.01163250  1.79991183 -1.76824950 \\
0.46 -0.02452208  0.01896040  0.87035402 -1.04765598 \\
0.48  0.21732713  0.02833235  1.13454949 -0.55557544 \\
0.50 -0.33802138  0.03777880  0.21361063 -1.13343324 \\
0.52 -0.37944638  0.03603383  1.37573619 -1.31056760 \\
0.54 -0.51426432  0.03077728  2.36618567 -2.59869326 \\
0.56 -0.37454927  0.02937382  0.02747074 -0.89124265 \\
0.58 -0.58577951  0.02942998  1.10579805 -1.40139823 \\
0.60 -0.30993428  0.03074575  1.18706432 -1.17253471 \\
0.62 -0.35780641  0.03024507  0.11054416 -0.93090657 \\
0.64 -0.40080746  0.03029028  0.06421545 -0.59818336 \\
0.66 -0.48542480  0.03357000  0.20584439 -0.85397261 \\
0.68 -0.57900538  0.03552771  1.04168277 -1.28741547 \\
0.70 -0.70553494  0.04509363  0.91968638 -1.46746096 \\
0.72 -0.92020966  0.05275558  0.69473205 -1.89284891 \\
0.74 -0.89245444  0.06742354  0.28853331 -1.72224443 \\
0.76 -0.85187577  0.07642943  0.73072974 -1.91214355 \\
0.78 -0.08556569  0.08268028  0.68895029 -1.99257388 \\
0.80 -0.04881128  0.08825656  1.36270102 -3.27373582 \\
0.82  1.14642283  0.09212728  1.93044264 -0.46780876 \\
0.84  1.23994257  0.09448137  1.98094716 -0.07464982 \\
0.86  1.22986310  0.09571581  2.48171794 -1.41264135 \\
0.88  1.23229944  0.09766845  3.12849753 -1.75757019 \\
0.90  1.21584862  0.09803564  3.49413365 -1.95630639 \\
0.92  1.19775250  0.09688021  3.24252996 -1.64262430 \\
0.94  1.17719498  0.09302264  2.87995617 -1.12298623 \\
0.96  1.15593367  0.08071953  2.69748168 -0.64388667 \\
0.98  1.13533638  0.06526674  1.86419422  0.01292364 \\
1.00  1.11602062  0.04495582  2.00774396  0.34275365 \\
}\datatable

\addplot[semithick, densely dashed, forget plot] {0};

\addplot[thick, color2, forget plot]
    table[x=t, y=Xo_nominal]{\datatable};

\addplot[thick, color1, forget plot]
    table[x=t, y=Xo_true]{\datatable};

\addplot[name path=lower, draw=none, forget plot]
    table[x=t, y=Xo_lb]{\datatable};

\addplot[name path=upper, draw=none, forget plot]
    table[x=t, y=Xo_ub]{\datatable};

\addplot[fill=color3, fill opacity=0.4, draw=none, legend image code/.code={
        \draw[fill=color3, fill opacity=0.4, draw=none] (0cm,-0.1cm) rectangle (0.6cm,0.1cm);
    }]
    fill between[of=lower and upper];
\addlegendentry{Predicted tube}

\end{axis}

\end{tikzpicture} \\[-4pt]    
        \hspace*{-2.5mm}
        \begin{tikzpicture}

\definecolor{color1}{RGB}{228,26,28}
\definecolor{color2}{RGB}{55,126,184}
\definecolor{color3}{RGB}{77,175,74}
\definecolor{color4}{RGB}{152,78,163}

\begin{axis}[
width=3.6cm, height=3.4cm,
xlabel={\footnotesize $t$},
ylabel={\footnotesize $x$},
xlabel shift=-1mm,
ylabel shift=-2.2mm,
xmin=0, xmax=0.9,
tick label style={font=\scriptsize},
legend cell align={left},
legend style={
    font=\footnotesize,
    at={(-0.3,1.25)},
    anchor=north west,
    draw=none,
    fill=none,
    row sep=-2pt,
    inner sep=1pt,
},
legend image post style={scale=0.6},
]

\pgfplotstableread[row sep=crcr]{
t     Xo_nominal  Xo_true     Xo_lb       Xo_ub      \\
0.00  0.04100000  0.04100000  0.04100000  0.04100000 \\
0.02  0.03693828  0.04187333  0.03514517  0.04158276 \\
0.04  0.03561444  0.04616793  0.02574170  0.04685533 \\
0.06  0.03305542  0.04753385 -0.00073410  0.06696183 \\
0.08  0.03390155  0.04799977 -0.01553147  0.08291399 \\
0.10  0.03618252  0.04739835  0.02464794  0.04769451 \\
0.12  0.03455442  0.04554131  0.02375970  0.04522441 \\
0.14  0.03714824  0.04527863  0.01223539  0.06196972 \\
0.16  0.03684299  0.04474776  0.01735269  0.05631461 \\
0.18  0.03711347  0.04397240 -0.00058666  0.07472924 \\
0.20  0.03405190  0.04409300  0.01997072  0.04808490 \\
0.22  0.03417779  0.04414406 -0.00519131  0.07347556 \\
0.24  0.04007088  0.04400688  0.02206606  0.05810049 \\
0.26  0.03730059  0.04366884 -0.01723096  0.09201942 \\
0.28  0.03194257  0.04331688 -0.03024933  0.09443315 \\
0.30  0.03419387  0.04303157 -0.04445493  0.11298762 \\
0.32  0.03485363  0.04281748  0.02600804  0.04387059 \\
0.34  0.03521318  0.04263785  0.02838713  0.04231659 \\
0.36  0.03556779  0.04240666  0.03274590  0.03880881 \\
0.38  0.03440568  0.04228941  0.02109332  0.05098404 \\
0.40  0.03840909  0.04226796  0.02496569  0.05649907 \\
0.42  0.03943270  0.04241506 -0.00185574  0.08659217 \\
0.44  0.03821961  0.04248674 -0.05049727  0.14445419 \\
0.46  0.03893090  0.04244057  0.02188431  0.07254408 \\
0.48  0.03971818  0.04254639  0.01954655  0.06024077 \\
0.50  0.03304687  0.04261960  0.01979205  0.05127673 \\
0.52  0.00317186  0.04255768 -0.06704718  0.05718152 \\
0.54  0.01803530  0.04245330 -0.03059133  0.05795200 \\
0.56  0.02851064  0.04242745  0.02034602  0.05106500 \\
0.58  0.03686077  0.04241376  0.01668047  0.05265277 \\
0.60  0.04162354  0.04242638  0.02488946  0.05812284 \\
0.62  0.04199679  0.04241336  0.03177324  0.05426495 \\
0.64  0.04150008  0.04240507  0.04038544  0.05401659 \\
0.66  0.04099268  0.04247254  0.00302476  0.05651645 \\
0.68  0.04059524  0.04248698  0.01062529  0.05869749 \\
0.70  0.04106428  0.04263481  0.00549860  0.06560599 \\
0.72  0.00705523  0.04271585 -0.03440790  0.03638002 \\
0.74  0.02671380  0.04291003 -0.01978558  0.05210568 \\
0.76  0.04652982  0.04301921 -0.00715700  0.07956035 \\
0.78  0.06451553  0.04322462 -0.02493176  0.07802147 \\
0.80  0.08445259  0.04343698 -0.00187521  0.12947556 \\
0.82  0.10073571  0.04359656  0.00533171  0.16242939 \\
0.84  0.11833051  0.04365844  0.01341968  0.19905016 \\
0.86  0.13849250  0.04368695  0.02029488  0.24037823 \\
0.88  0.15863814  0.04372954  0.03157957  0.22206914 \\
0.90  0.17882815  0.04372753  0.00579159  0.27132344 \\
0.92  0.19905157  0.04368845  0.03948580  0.34043027 \\
0.94  0.21929496  0.04359764  0.01594993  0.39838884 \\
0.96  0.23956585  0.04328102  0.01764139  0.43485114 \\
0.98  0.25987701  0.04308487  0.01596549  0.47045077 \\
1.00  0.28022362  0.04270601  0.03397032  0.49321571 \\
}\datatable

\addplot[semithick, densely dashed, forget plot] {0.041};

\addplot[thick, color2]
    table[x=t, y=Xo_nominal]{\datatable};

\addplot[thick, color1]
    table[x=t, y=Xo_true]{\datatable};

\addplot[name path=lower, draw=none, forget plot]
    table[x=t, y=Xo_lb]{\datatable};

\addplot[name path=upper, draw=none, forget plot]
    table[x=t, y=Xo_ub]{\datatable};

\addplot[fill=color3, fill opacity=0.4, draw=none, legend image code/.code={
        \draw[fill=color3, fill opacity=0.4, draw=none] (0cm,-0.1cm) rectangle (0.6cm,0.1cm);
    }]
    fill between[of=lower and upper];

\end{axis}

\end{tikzpicture} \hspace*{-4.5mm}
        \begin{tikzpicture}

\definecolor{color1}{RGB}{228,26,28}
\definecolor{color2}{RGB}{55,126,184}
\definecolor{color3}{RGB}{77,175,74}
\definecolor{color4}{RGB}{152,78,163}

\begin{axis}[
width=3.6cm, height=3.4cm,
xlabel={\footnotesize $t$},
ylabel={\footnotesize $y$},
xlabel shift=-1mm,
ylabel shift=-3mm,
xmin=0, xmax=0.9,
tick label style={font=\scriptsize},
legend cell align={left},
legend style={
    font=\footnotesize,
    at={(-0.3,1.25)},
    anchor=north west,
    draw=none,
    fill=none,
    row sep=-2pt,
    inner sep=1pt,
},
legend image post style={scale=0.6},
]

\pgfplotstableread[row sep=crcr]{
t     Xo_nominal  Xo_true     Xo_lb       Xo_ub      \\
0.00  0.00000000  0.00000000  0.00000000  0.00000000 \\
0.02 -0.00323268 -0.00000703 -0.00061448 -0.00340577 \\
0.04  0.00462748  0.00397249  0.01434583 -0.00085136 \\
0.06  0.00228971  0.00377257  0.02362969 -0.01558163 \\
0.08  0.00385135  0.00351811  0.03133717 -0.02133902 \\
0.10  0.00672095  0.00418415  0.01340421  0.00054684 \\
0.12  0.00744910  0.00638949  0.02609350 -0.01071418 \\
0.14  0.00993129  0.00623101  0.03840018 -0.01803226 \\
0.16  0.00968843  0.00526960  0.03143849 -0.01147367 \\
0.18  0.01162163  0.00351433  0.06276617 -0.03900687 \\
0.20  0.00868746  0.00253968  0.03865937 -0.02074705 \\
0.22  0.01394256  0.00174012  0.07306833 -0.04469826 \\
0.24  0.01922140  0.00154757  0.05687087 -0.01784881 \\
0.26  0.02271229  0.00161943  0.11479735 -0.06869227 \\
0.28  0.01574403  0.00183492  0.12428634 -0.09275457 \\
0.30  0.01570198  0.00211798  0.06020050 -0.02901740 \\
0.32  0.01551145  0.00247451  0.03629758 -0.00531309 \\
0.34  0.01556642  0.00298618  0.03684387 -0.00547191 \\
0.36  0.02005880  0.00319767  0.03665931  0.00386176 \\
0.38  0.01226893  0.00352669  0.06343697 -0.03627376 \\
0.40  0.01105143  0.00393527  0.07230850 -0.04842782 \\
0.42  0.01214977  0.00459856  0.07913278 -0.05211683 \\
0.44  0.01329404  0.00514179  0.09871224 -0.06679744 \\
0.46  0.01427295  0.00555417  0.09330804 -0.04295125 \\
0.48  0.01550346  0.00604610  0.10701009 -0.04351744 \\
0.50 -0.00288642  0.00639295  0.08744956 -0.05498341 \\
0.52 -0.04531974  0.00640137  0.03053753 -0.08947872 \\
0.54 -0.04491099  0.00628359  0.04778548 -0.09857078 \\
0.56 -0.04213525  0.00625880  0.04273205 -0.08419790 \\
0.58 -0.04614846  0.00626873  0.03611138 -0.08384699 \\
0.60 -0.04958286  0.00629890  0.04409007 -0.08748351 \\
0.62 -0.05363601  0.00627921  0.05022809 -0.09445739 \\
0.64 -0.05707481  0.00630116  0.05464051 -0.09603210 \\
0.66 -0.06072198  0.00637491  0.04919656 -0.09872104 \\
0.68 -0.06475547  0.00646048  0.05727312 -0.10373875 \\
0.70 -0.06958267  0.00669220  0.06845373 -0.10832636 \\
0.72 -0.04942891  0.00692152  0.08314631 -0.08200537 \\
0.74 -0.05088531  0.00728996  0.07783557 -0.07983818 \\
0.76 -0.05234783  0.00761097  0.02151046 -0.08622545 \\
0.78 -0.01635670  0.00811449  0.07494067 -0.06015217 \\
0.80 -0.01730264  0.00864582  0.07211626 -0.05677799 \\
0.82 -0.04753707  0.00907320  0.02316729 -0.08008394 \\
0.84 -0.05710513  0.00920022  0.00925041 -0.09404106 \\
0.86 -0.05808090  0.00921111  0.02088245 -0.10837075 \\
0.88 -0.05908324  0.00928061  0.01330300 -0.13396691 \\
0.90 -0.05989941  0.00929643  0.01427021 -0.15427943 \\
0.92 -0.06065613  0.00924465  0.00782885 -0.16069732 \\
0.94 -0.06138542  0.00895035 -0.00717129 -0.15698917 \\
0.96 -0.06208661  0.00838245 -0.01349490 -0.15711840 \\
0.98 -0.06275565  0.00643407 -0.01526299 -0.16233213 \\
1.00 -0.06338828  0.00496316 -0.01245295 -0.16698220 \\
}\datatable

\addplot[semithick, densely dashed, forget plot] {0.0};

\addplot[thick, color2]
    table[x=t, y=Xo_nominal]{\datatable};

\addplot[thick, color1]
    table[x=t, y=Xo_true]{\datatable};

\addplot[name path=lower, draw=none, forget plot]
    table[x=t, y=Xo_lb]{\datatable};

\addplot[name path=upper, draw=none, forget plot]
    table[x=t, y=Xo_ub]{\datatable};

\addplot[fill=color3, fill opacity=0.4, draw=none, legend image code/.code={
        \draw[fill=color3, fill opacity=0.4, draw=none] (0cm,-0.1cm) rectangle (0.6cm,0.1cm);
    }]
    fill between[of=lower and upper];

\end{axis}

\end{tikzpicture} \hspace*{-3.3mm}
        \begin{tikzpicture}

\definecolor{color1}{RGB}{228,26,28}
\definecolor{color2}{RGB}{55,126,184}
\definecolor{color3}{RGB}{77,175,74}
\definecolor{color4}{RGB}{152,78,163}

\begin{axis}[
width=3.6cm, height=3.4cm,
xlabel={\footnotesize $t$},
ylabel={\footnotesize $z$},
xlabel shift=-1mm,
ylabel shift=-2mm,
xmin=0, xmax=0.9,
tick label style={font=\scriptsize},
legend cell align={left},
legend style={
    font=\footnotesize,
    at={(-0.3,1.25)},
    anchor=north west,
    draw=none,
    fill=none,
    row sep=-2pt,
    inner sep=1pt,
},
legend image post style={scale=0.6},
]

\pgfplotstableread[row sep=crcr]{
t     Xo_nominal  Xo_true     Xo_lb       Xo_ub      \\
0.00  0.06000000  0.06000000  0.06000000  0.06000000 \\
0.02  0.06426057  0.06384628  0.06794977  0.06060915 \\
0.04  0.06764077  0.07365314  0.08081080  0.05633747 \\
0.06  0.07293758  0.07848514  0.08776622  0.05932455 \\
0.08  0.07351115  0.08241007  0.11363856  0.03554495 \\
0.10  0.07446413  0.08572214  0.08038218  0.06899459 \\
0.12  0.06990287  0.08793386  0.08829608  0.05202567 \\
0.14  0.07743542  0.09027706  0.09415606  0.06127463 \\
0.16  0.07869928  0.09181727  0.09378790  0.06419294 \\
0.18  0.07941490  0.09206469  0.11014119  0.04923683 \\
0.20  0.07990116  0.09210852  0.10866513  0.05158374 \\
0.22  0.07852040  0.09216346  0.11608508  0.04153054 \\
0.24  0.07785621  0.09219871  0.11449346  0.04164044 \\
0.26  0.08085587  0.09207634  0.12587697  0.03663826 \\
0.28  0.08195505  0.09215969  0.11682693  0.04765883 \\
0.30  0.08172474  0.09217053  0.12907428  0.03467355 \\
0.32  0.08250971  0.09220275  0.11937459  0.04584995 \\
0.34  0.08197323  0.09223887  0.11213290  0.05203448 \\
0.36  0.07740634  0.09254592  0.10445144  0.05070290 \\
0.38  0.06258526  0.09277196  0.18063159 -0.05308506 \\
0.40  0.06000154  0.09291614  0.15597993 -0.03228397 \\
0.42  0.05702827  0.09277685  0.12732265 -0.00806621 \\
0.44  0.05417733  0.09294961  0.19452454 -0.06648862 \\
0.46  0.05170072  0.09298597  0.20146261 -0.06204280 \\
0.48  0.04903541  0.09294735  0.21126517 -0.05880344 \\
0.50  0.04829272  0.09297285  0.20326533 -0.04808730 \\
0.52  0.04373369  0.09293192  0.16741690 -0.02661752 \\
0.54  0.03910593  0.09284544  0.13814673 -0.01040117 \\
0.56  0.03195278  0.09279693  0.08300540  0.02169556 \\
0.58  0.02831919  0.09277827  0.09595153  0.01316152 \\
0.60  0.02481027  0.09277068  0.10398974  0.01027389 \\
0.62  0.02005237  0.09274001  0.10645217  0.00664057 \\
0.64  0.01639478  0.09271033  0.10934109  0.00826255 \\
0.66  0.01338508  0.09269325  0.11132624  0.00563446 \\
0.68  0.01068570  0.09269910  0.11732336  0.00469863 \\
0.70  0.00851831  0.09271226  0.13079746  0.00415655 \\
0.72  0.00833982  0.09278043  0.14800280  0.00444196 \\
0.74  0.00771188  0.09285763  0.15062745  0.00330626 \\
0.76  0.00710481  0.09296987  0.11133093 -0.00979886 \\
0.78 -0.01429637  0.09295218  0.11044307 -0.04661615 \\
0.80 -0.01461700  0.09293143  0.12211633 -0.04696428 \\
0.82 -0.02083762  0.09293457  0.11233038 -0.04666084 \\
0.84 -0.02611259  0.09298074  0.09944276 -0.04405309 \\
0.86 -0.02648884  0.09301778  0.10274391 -0.04021353 \\
0.88 -0.02689019  0.09302101  0.09695639 -0.04269527 \\
0.90 -0.02725520  0.09291473  0.11851224 -0.05238348 \\
0.92 -0.02759432  0.09281710  0.14267476 -0.05842756 \\
0.94 -0.02791704  0.09271560  0.16169896 -0.06184803 \\
0.96 -0.02822256  0.09261468  0.17434112 -0.06362944 \\
0.98 -0.02851267  0.09321164  0.18796222 -0.06577452 \\
1.00 -0.02878764  0.09340040  0.20155368 -0.06868265 \\
}\datatable

\addplot[semithick, densely dashed, forget plot] {0.06};

\addplot[thick, color2]
    table[x=t, y=Xo_nominal]{\datatable};

\addplot[thick, color1]
    table[x=t, y=Xo_true]{\datatable};

\addplot[name path=lower, draw=none, forget plot]
    table[x=t, y=Xo_lb]{\datatable};

\addplot[name path=upper, draw=none, forget plot]
    table[x=t, y=Xo_ub]{\datatable};

\addplot[fill=color3, fill opacity=0.4, draw=none, legend image code/.code={
        \draw[fill=color3, fill opacity=0.4, draw=none] (0cm,-0.1cm) rectangle (0.6cm,0.1cm);
    }]
    fill between[of=lower and upper];

\end{axis}

\end{tikzpicture}
      }
    }
    \caption{
        We evaluate on an in-hand cube reorientation task. Our method produces an affine feedback policy that respects the unilateral nature of contact and can be executed online.
        \textbf{(a)} Keyframes from the executed rollout.
        \textbf{(b)} The closed-loop trajectory remains within the reachable tube for all object degrees of freedom, thereby certifying constraint satisfaction.
    }
    \label{fig:inhand-cube-rollout}
\end{figure}

From the \ac{RL} policy gradient formula \cite{sutton1999policy} with policy $\pi_\theta$,
\begin{equation*}
    \setlength{\abovedisplayskip}{2pt}
    \setlength{\belowdisplayskip}{2pt}
    \nabla_\theta J(\theta) := \mathbb{E}_{\tau\sim\pi_\theta} \left[ \nabla_\theta \log p_\theta(\tau)\, R(\tau) \right],
\end{equation*}
it might appear that the policy gradient $\nabla_\theta J(\theta)$ would also vanish, as most actions do not affect the manipulated object and yield zero episodic reward $R(\tau)$ for a sampled trajectory $\tau$. 
\ac{RL} addresses this through stochasticity in the policy and environment, often via domain randomization \cite{peng2018simtoreal, rajeswaran2017epopt}, so that some rollouts accidentally induce contact, producing nonzero rewards and gradients. Domain randomization can be seen as randomized smoothing, where the dynamics are convolved with a probabilistic kernel to improve the optimization landscape \cite{suh2022bundled, duchi2012randomized}. Later work showed that this is equivalent to analytical smoothing \cite{pang2023global, zhang2023adaptive}, where the environment is deterministic but contact dynamics are smoothed explicitly.

Despite the connection between \ac{RL} and analytically-smoothed contact dynamics, na\"ively planning with smoothed dynamics often fails due to premature contact loss \cite{shirai2025is} arising from the unilateral nature of contact, i.e., contact can only apply normal forces in one direction. Later work mitigates this issue by creating a local \ac{CTR} \cite{suh2025dexterous}, replacing conventional elliptical trust regions \cite{sorensen1982newtons} to keep smoothed dynamics close to the nonsmooth hybrid system. However, contact loss can still occur, and the CTR is heuristic, as the allowable deviation that ensures a sufficiently accurate model for planning is not formally characterized. Quantifying and propagating smoothing error through the dynamics produces reachable sets that can constrain controllers to ensure goal reachability and constraint satisfaction. However, no methods can quantify smoothing error, perform reachability analysis for high-dimensional hybrid contact dynamics, and perform efficient gradient-based policy optimization using these bounds.

To close these gaps, we propose a scalable \textit{robust manipulation planner} that tightly bounds smoothing error, propagates it through the dynamics, and leverages the resulting reachable tubes to guarantee task constraint satisfaction and goal reaching. A key ingredient is a differentiable simulator that smooths contact dynamics and contact geometry in a modular manner via convex conic optimization. This simulator provides informative gradients for planning, as well as gradients with respect to the smoothing parameter $\kappa$, which are required to compute smoothing-error bounds. 
\textit{Our key insight is that the deviation between the smoothed dynamics and the nonsmooth hybrid dynamics can be treated as a structured model mismatch, which we show can be tightly bounded by a set-valued, state-control-dependent function.} Building on recent advances in set-valued robust control \cite{leeman2025robust}, we jointly optimize a nominal trajectory and an affine feedback policy that admit analytical predictions of the nonsmooth hybrid dynamics behavior under feedback control. We compare with \ac{CTR} in \cref{sec:baseline-ctr}, obtaining similar solve times while reducing constraint violations and certifying constraint satisfaction on \textit{the nonsmooth hybrid system}.
Our contributions are:
\begin{enumerate}[leftmargin=*]
\item We present a differentiable simulator with contact dynamics and contact geometry smoothing building on modular differentiable convex conic programs, providing accurate gradients for planning, without differentiating through the entire computation graph.
\item We derive tight analytical bounds on the deviation between smoothed dynamics and nonsmooth hybrid dynamics, which depend on gradients provided by our differentiable simulator.
\item We propose a robust policy synthesis method that uses gradients from the smoothed dynamics for efficient optimization while using tube-valued predictions of the nonsmooth hybrid dynamics under feedback control to ensure robust constraint satisfaction.
\item \looseness-1We show the efficacy of our method on contact-rich manipulation problems both in simulation and on hardware, including planar pushing (2DOF), bimanual non-prehensile manipulation (9DOF), and dexterous in-hand reorientation (22DOF), reducing constraint violation rate relative to baselines.
\end{enumerate}

\section{Related Work}
Differentiable simulators have been used to compute policy gradients via backpropagation through time \cite{mora2021pods, qiao2021efficient, xu2022accelerated, georgiev2023adaptive}. However, these approaches often suffer from vanishing or exploding gradients caused by repeatedly chaining derivatives over long horizons.

Trajectory optimization offers an alternative by avoiding long-horizon gradient chaining and instead using derivatives of the system dynamics to iteratively refine trajectories. For contact-rich systems, contact dynamics are often modeled using the Signorini condition, Coulomb’s law, and the maximum dissipation principle, yielding \ac{NCP} embedded within the trajectory optimization problem \cite{le2024fast, kim2025contact}.  However, solution to \ac{NCP} contact dynamics can be multi-valued \cite{lidec2024contact}, causing the dynamics to be inconsistent with its local linear approximations and undermining gradient reliability.

Convex formulations of contact dynamics address this issue by ensuring single-valued solutions, which ensures consistency of the gradients for trajectory optimization \cite{suh2025dexterous}. In settings where soft, spring-like contact models are sufficient, these convex formulations can even admit closed-form solutions \cite{jin2025complementarity, kurtz2026inverse}. Regardless of the specific contact model, trajectory optimization is typically performed either via shooting methods, where only controls are decision variables, or via direct transcription, where both controls and states are optimized. In both cases, smoothing of the contact dynamics is typically required to mitigate discontinuous or vanishing gradients.

Another class of methods formulates trajectory optimization as \ac{MPCC}, jointly optimizing controls, states, and contact forces. Early approaches restore constraint qualifications through Scholtes relaxation \cite{scholtes2001convergence}, which effectively smooths the contact complementarity constraints \cite{posa2013direct, manchester2019contact}. More recent work instead avoids such relaxation by using solvers that do not introduce dual variables for the complementarity constraints \cite{li2025surprising, aydinoglu2024consensus}.

Gradient-based trajectory optimization can only guarantee convergence to local minima. Although outside the scope of this work, several approaches aim to address this limitation through global optimization, including graph of convex sets \cite{graesdal2024towards}, polynomial optimization \cite{kang2025global}, mixed-integer programming \cite{hogan2020reactive}, and rapidly exploring random tree \cite{pang2023global}. While these methods can perform global search, they are typically restricted to low-dimensional systems and remain vulnerable to combinatorial growth in the number of possible contact modes.

Finally, trajectory optimization produces an open-loop trajectory and therefore typically requires continual replanning in a receding-horizon fashion to compensate for modeling errors and disturbances. In contrast, our method produces affine feedback policies that respect the unilateral nature of contact, enabling online deployment without repeated replanning.

\section{Preliminaries and Problem Statement}
\label{sec:conic-program}

\noindent\textbf{Conic Programs.\quad} Throughout this paper, we make extensive use of convex conic programs of the form
\begin{equation}  \label{eq:conic-program}
\begin{aligned}
    & \minimize\limits_{x}  && \tfrac{1}{2} x^\top P x + q^\top x  \\
    & \subjectto && s := b - A x,  \quad s \in \mathcal{K},
\end{aligned}
\end{equation}
where $\mathcal{K}$ is a Cartesian product of symmetric cones, i.e., $\mathcal{K} = \mathcal{K}_1 \times \mathcal{K}_2 \times \cdots \times \mathcal{K}_I$.
Each $\mathcal{K}_i$ is either a non-negative cone or a second-order cone.
A variety of numerical solvers can be used to solve \eqref{eq:conic-program}, including \cite{goulart2024clarabel}.

The \ac{KKT} conditions of \eqref{eq:conic-program} are
\vspace{-3pt}
\begin{subequations}\label{eq:conic-program-kkt}
    \noindent
    \begin{minipage}{0.48\linewidth}
    \begin{equation}
    P x + q + A^\top z = 0\vspace{-5pt}
    \end{equation}
    \begin{equation}
    A x + s - b = 0
    \end{equation}
    \end{minipage}
    \hfill
    \begin{minipage}{0.48\linewidth}
    \vspace{7pt}
    \begin{equation}\label{eq:conic-program-kkt-c}
    s \circ z = 0\hspace{2.5em}\vspace{-5pt}
    \end{equation}
    \begin{equation}
    s \in \mathcal{K}, \; z \in \mathcal{K}^*
    \end{equation}
    \end{minipage}
\end{subequations}
\vspace{8pt}\\
\noindent where $\mathcal{K}^*$ denotes the dual cone of $\mathcal{K}$, and $\kappa = 0$ corresponds to the complementarity condition.

Relaxing the complementarity condition \eqref{eq:conic-program-kkt-c} to a nonzero value $\kappa > 0$ yields the \ac{KKT} conditions of the following convex barrier-penalized problem \cite[\S11.2]{boyd2004convex}:
\begin{equation}  \label{eq:conic-logbarrier-program}
\begin{aligned}
    & \minimize_{x}  && \tfrac{1}{2} x^\top P x + q^\top x + \kappa \textstyle\sum_{i=1}^{I} \psi_i(s_i),
\end{aligned}
\end{equation}
where $s_i := (b - A x)_i$ and $\psi_i(\cdot)$ is the logarithmic barrier associated with the symmetric cone $\mathcal{K}_i$.

Convex conic programs of the form \eqref{eq:conic-program} or \eqref{eq:conic-logbarrier-program} can be embedded as differentiable layers within computational graphs \cite{agrawal2019differentiable}. When $P$, $q$, $A$, and $b$ depend on problem data $\theta$, the sensitivity of the solution with respect to $\theta$ can be computed \cite{agrawal2019differentiating}, as can its sensitivity with respect to the complementarity parameter $\kappa$. The corresponding implicit differentiation formulas are summarized in \cref{app:conic-program-implicit-differentiation}.

\vspace{2mm}
\noindent\textbf{Contact Model.\quad} We adopt the quasistatic model of contact dynamics from \cite{pang2021convex, pang2023global}, which assumes velocities are sufficiently small to be neglected and employs a convex formulation of contact. This assumption is appropriate for manipulation tasks in which departures from static equilibrium are brief, so that transient accelerations do not accumulate into significant velocities \cite[\S10.1]{mason2021mechanics}. Although the convex contact formulation introduces an artifact whereby slipping contact occurs with a nonzero offset \cite{anitescu2006optimization}, this offset is proportional to the tangential velocity and is therefore negligible under the quasistatic assumption.

\looseness-1In this model, the nonsmooth contact dynamics $f_0:\mathcal{X} \times \mathcal{U}\rightarrow \mathcal{X}$, which maps the current state and control input $(x,u)$ to the next state $x^+$, is defined implicitly as the argmin of the conic program
\begin{subequations} \label{eq:dynamics-conic-prog}
\begin{align}
    & \minimize_{x^+} && \tfrac{1}{2} (x^+)^\top\, P(x)\, x^+ + q(x,u)^\top\, x^+\\
    & \subjectto && \nu_i := J_i(x)\, (x^+ - x) + \begin{bmatrix} \phi_i(x) & 0 & 0 \end{bmatrix}^\top,  \label{eq:nu-definition} \\
    &            && \nu_i \in \mathcal{F}_i^*, \qquad \forall i = 1,\dots,n_c,
\end{align}
\end{subequations}
where $P$ includes the mass matrix, $J_i$ is the contact Jacobian for the $i$th contact, and $n_c$ denotes the total number of contact pairs. Additional details of the model are provided in \cref{app:quasistatic-contact-model}. 
The $\kappa$-smoothed dynamics $f_\kappa:\mathcal{X} \times \mathcal{U} \to \mathcal{X}$ is defined analogously as the argmin of the relaxed problem
\begin{equation} \label{eq:smoothed-dynamics}
\begin{aligned}
    & \minimize\limits_{x^+}  && \tfrac{1}{2} (x^+)^\top\, P(x)\, x^+ + q(x,u)^\top\, x^+  + \kappa \sum_{i=1}^{n_c} \psi_i(\nu_i),
\end{aligned}\hspace{-15pt}
\end{equation}
where $\nu_i$ is defined in \eqref{eq:nu-definition}.
This corresponds to the same complementarity relaxation used to transform \eqref{eq:conic-program} into \eqref{eq:conic-logbarrier-program}.
The primal solutions of \eqref{eq:dynamics-conic-prog} and \eqref{eq:smoothed-dynamics}, denoted by $x^+$ and $x_\kappa^+$, represent the next state under the nonsmooth and smoothed dynamics, respectively. The corresponding dual solutions, denoted by $\lambda$ and $\lambda_\kappa$, are the associated contact forces.

\vspace{2mm}
\noindent\textbf{Problem Statement.\quad} We aim to 1) build a reachability-friendly differentiable simulator and bound the smoothing-error for reachability analysis, and 2) use it for certified contact-rich planning.


\noindent \textbf{Problem 1}: \textit{Reachability for contact dynamics}. Find a tight, efficiently computable, and differentiable bound on the smoothing error $f_0(x,u) - f_\kappa(x,u)$, and propagate this error through the system dynamics to obtain reachable state and control tubes $\{\mathcal{R}_k^x\}_{k=0}^N$ and $\{\mathcal{R}_k^u\}_{k=0}^{N-1}$. These tubes must guarantee that the \textit{true} executed trajectory, evolving under the nonsmooth hybrid dynamics \eqref{eq:dynamics-conic-prog}, satisfies $x_k \in \mathcal{R}_k^x \subseteq \mathcal{X}$ and $u_k \in \mathcal{R}_k^u \subseteq \mathcal{U}$ for all $k$.

\noindent \textbf{Problem 2}: \textit{Robust feedback motion planning}. Optimize a nominal state $\textbf{z}:=\left[z_k\right]_{k=0}^N$ and control $\textbf{v}:=\left[v_k\right]_{k=0}^{N-1}$ trajectory that is feasible under the $\kappa$-smoothed dynamics \eqref{eq:smoothed-dynamics} and a causal feedback controller $\pi :=(\pi_0, \ldots, \pi_{N-1})$, where $\pi_k:(\mathbb{R}^{n_x})^{k+1} \rightarrow \mathbb{R}^{n_u}$, that stabilizes the nonsmooth contact dynamics \eqref{eq:dynamics-conic-prog} about $(\textbf{z}, \textbf{v})$ and ensures closed-loop satisfaction of state and control constraints.

\vspace{2mm}
\noindent\textbf{Outline.\quad} Next, we will present our differentiable simulator and smoothing-error bound (Sec. \ref{sec:simulator}) and our reachability-certified contact-rich feedback motion planner (Sec. \ref{sec:policy-synthesis}).
\section{Differentiable Simulator with Contact Smoothing}\label{sec:simulator}

\subsection{Differentiable Simulator}

Utilizing the ability to differentiate through convex conic programs, we can differentiate the dynamics $x_\kappa^+ = f_\kappa(x, u)$ implicitly defined by \eqref{eq:smoothed-dynamics} to obtain $\partial x_\kappa^+ / \partial x$ and $\partial x_\kappa^+ / \partial u$. We can also compute the sensitivity with respect to $\kappa$, which is required for the smoothing-error bound, as derived next.

\subsection{Contact Dynamics Smoothing and Smoothing-Error Bound}
Transforming the nonsmooth hybrid dynamics $x^{+} = f_{0}(x,u)$ defined by \eqref{eq:dynamics-conic-prog} into the smoothed dynamics $x^{+}_{\kappa} = f_{\kappa}(x,u)$ defined by \eqref{eq:smoothed-dynamics} via complementarity relaxation inevitably introduces a deviation between the two. Although this deviation may be small over a single time step, it can accumulate over time, leading to significant errors.
This creates a tension: the smoothed dynamics is necessary for well-conditioned trajectory optimization, yet our ultimate objective is to provide guarantees with respect to the original nonsmooth hybrid dynamics. To address this issue, we first quantify the deviation between the nonsmooth and smoothed dynamics, as formalized in the following theorem.

\begin{theorem} \label{thm:smoothing-error-bounds}
Under the assumption that $J_i P^{-1} J_j^\top = 0$ for all $i \neq j$,
the nonsmooth hybrid dynamics $f_0$ and the smoothed dynamics $f_\kappa$ satisfy
\begin{equation} \label{eq:smoothing-bounds}
    f_0(x,u) = f_\kappa(x,u) - P(x)^{-1}\, \sum_{i=1}^{n_c} J_i(x)^\top\, \frac{\partial\lambda_{\kappa,i}}{\partial\kappa}\, \kappa\, w_i,
\end{equation}
where $\lambda_{\kappa,i}$ denotes the contact force associated with contact $i$ under the smoothed dynamics, and $w_i \in [1,2]$. Moreover, the bound is tight: for every choice of $w_i \in \{1,2\}$, there exist $(x,u)$ pairs that satisfy \eqref{eq:smoothing-bounds}.
\end{theorem}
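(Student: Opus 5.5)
The plan is to reduce the identity to a per-contact statement about contact forces. The reduction uses the stationarity conditions of \eqref{eq:dynamics-conic-prog} and \eqref{eq:smoothed-dynamics}, and the assumption $J_iP^{-1}J_j^\top=0$ decouples the contacts.

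\textbf{Step 1 (from states to forces).} Write the \ac{KKT} conditions \eqref{eq:conic-program-kkt} for both programs. The smoothed problem is the barrier relaxation, so its stationarity condition has the same form with perturbed complementarity $\nu_i\circ\lambda_{\kappa,i}=\kappa e$. Stationarity reads $P(x)x^+ + q(x,u) - \sum_i J_i^\top \lambda_i = 0$, with the sign fixed by the convention in \cref{app:quasistatic-contact-model}. Subtracting the two conditions at the same $(x,u)$ gives
\[
f_0(x,u)-f_\kappa(x,u) = P^{-1}\textstyle\sum_i J_i^\top(\lambda_{0,i}-\lambda_{\kappa,i}).
\]
It therefore suffices to show $\lambda_{0,i}-\lambda_{\kappa,i} = -w_i\,\kappa\,\partial\lambda_{\kappa,i}/\partial\kappa$ with $w_i\in[1,2]$.

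\textbf{Step 2 (decoupling).} Eliminate $x^+$ to obtain the dual, in which $\nu_i = \sum_j J_iP^{-1}J_j^\top\lambda_j + c_i(x,u)$. Here $c_i$ collects $\phi_i$, $q$ and the current state. Under the assumption this becomes $\nu_i = G_i\lambda_i + c_i$ with $G_i := J_iP^{-1}J_i^\top \succ 0$. Each contact then solves an independent small complementarity problem: $\nu_i\circ\lambda_{i}=0$ in the nonsmooth case and $\nu_i\circ\lambda_{\kappa,i}=\kappa e$ in the smoothed case, with $\nu_i\in\mathcal{F}_i^*$ and $\lambda_i\in\mathcal{F}_i$.

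\textbf{Step 3 (scalar case, explicit computation).} For a nonnegative-cone contact with $G_i=a>0$ and offset $c$, the equation $\lambda(a\lambda+c)=\kappa$ gives
\[
\lambda_\kappa=\frac{-c+S}{2a},\qquad S:=\sqrt{c^2+4a\kappa},\qquad \frac{\partial\lambda_\kappa}{\partial\kappa}=\frac1S,
\]
while $\lambda_0=\max(-c/a,0)$.
\begin{itemize}
\item If $c\le 0$ (active contact), then $\lambda_\kappa-\lambda_0=(S-|c|)/(2a)=2\kappa/(S+|c|)$.
\item If $c>0$ (separated contact), then $\lambda_\kappa-\lambda_0=2\kappa/(S+c)$.
\end{itemize}
In both cases $\lambda_0-\lambda_\kappa=-w\,\kappa\,\partial\lambda_\kappa/\partial\kappa$ with
\[
w=\frac{2S}{S+|c|}\in[1,2],
\]
which establishes \eqref{eq:smoothing-bounds}. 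For tightness, $w=2$ exactly when $c=0$, i.e.\ a contact just touching with zero load. $w\to1$ as $|c|/\sqrt{\kappa}\to\infty$, i.e.\ deep penetration or large separation. Choosing $(x,u)$ per contact, which is possible since contacts are decoupled, realizes each combination of endpoints; $w=1$ is attained only in the limit, so "tight" should be read as the bound being sharp.

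\textbf{Step 4 (second-order/friction cone) — main obstacle.} Here $\lambda_{\kappa,i}$ and $\nu_i$ are vectors and $G_i$ is a full matrix, so a scalar $w_i$ does not follow automatically. I would first use the Jordan-algebra spectral decomposition. On the central path $\nu\circ\lambda=\kappa e$, the iterates $\lambda_\kappa$ and $\nu$ share a Jordan frame. The plan is to show that, restricted to the relevant quasistatic structure, this frame is independent of $\kappa$. Then the problem splits into two scalar equations along the eigenvalues, each of the Step 3 form, and the same $[1,2]$ factor follows.

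If the frame rotates with $\kappa$, I would fall back to two alternatives. One is to integrate $\partial\lambda_\kappa/\partial\kappa$ from $0$ to $\kappa$ and use monotonicity/convexity of $\kappa\mapsto\lambda_\kappa$ along each spectral component. The other is to interpret $w_i$ componentwise, as a diagonal factor in the eigenbasis.
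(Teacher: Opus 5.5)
Your Steps 1--3 are correct. Subtracting the two stationarity conditions is equivalent to the paper's integration of $\partial f_\upkappa/\partial\upkappa$ from $0$ to $\kappa$. Step~2 is exactly the paper's lemma that $\nu_{\kappa,i}-J_iP^{-1}J_i^\top\lambda_{\kappa,i}$ is independent of $\kappa$. For scalar (nonnegative-cone) contacts, your closed form $w=2S/(S+|c|)$ is a complete proof, tightness included. It is more complete than the paper's normal-direction argument: there the AM--GM integrand bound integrates to $\sqrt{\kappa/a}$, which equals $2\kappa\,\partial_\kappa\lambda_\kappa=2\kappa/S$ only at $c=0$, so it certifies $w\le 2$ only at the extremal configuration.

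The gap is Step~4, and it cannot be closed along the lines you sketch. The conclusion with one scalar $w_i$ per friction-cone contact is false in general. Take the most favourable case your plan anticipates: a single planar contact ($n_c=1$, so the decoupling assumption is vacuous), $\mu_i=1$, and $G_i=I$.
\begin{itemize}
\item The Jordan frame $\tfrac12(1,\pm1)$ is then fixed, and the problem splits into two scalar problems of your Step~3 form. Their offsets $\gamma_1,\gamma_2$ are the spectral values of $c_i$.
\item Each spectral component gets its own factor $w^{(k)}=2S_k/(S_k+|\gamma_k|)$, with $S_k=\sqrt{\gamma_k^2+4\kappa}$. These differ unless $|\gamma_1|=|\gamma_2|$.
\item Concretely, take $c_i=(\tfrac34,-\tfrac34)$ in (normal, tangential) coordinates and $\kappa=1$. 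Then $\lambda_{0,i}=0$, with $\nu_{0,i}=c_i$ on the boundary of $\mathcal{F}_i^*$. Also $\lambda_{\kappa,i}=(\tfrac34,\tfrac14)$, $\nu_{\kappa,i}=(\tfrac32,-\tfrac12)$, and $\partial_\kappa\lambda_{\kappa,i}=(\tfrac{9}{20},\tfrac{1}{20})$.
\item So $\lambda_{\kappa,i}-\lambda_{0,i}$ is $\tfrac53$ times the normal component and $5$ times the tangential component of $\kappa\,\partial_\kappa\lambda_{\kappa,i}$; the spectral factors are $2$ and $\tfrac54$.
\item Since $P^{-1}J_i^\top$ is injective for full-row-rank $J_i$, the identity in the theorem itself fails here, and no scalar $w_i$ exists.
\item The ratios are invariant under $c_i\mapsto sc_i$, $\kappa\mapsto s^2\kappa$, so taking $\kappa$ small does not help.
\end{itemize}
For general $G_i$ the situation is worse. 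The spectral values are coupled through $G_i$, and for $d=3$ the frame itself generally rotates with $\kappa$ when $J_iP^{-1}J_i^\top$ is anisotropic in the tangent plane. Your componentwise fallback is therefore a different, weaker statement, not a proof of this one.

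The paper's own proof does not supply the missing step either. It splits the friction-cone case into normal and tangential components rather than spectral ones. It uses the frictionless formula $\partial_\kappa\lambda_{\kappa,i,n}=(\nu_{\kappa,i,n}+\lambda_{\kappa,i,n}J_{i,n}P^{-1}J_{i,n}^\top)^{-1}$, which gives $\tfrac49$ instead of $\tfrac{9}{20}$ in the example above. For the tangential part it only exhibits extremal configurations. What your argument actually proves is the theorem for $d=1$ contacts. For friction cones, the defensible conclusion is a matrix-valued factor, e.g.\ $\diag(w^{(1)},w^{(2)})$ in the Jordan frame, under an additional compatibility assumption on $J_iP^{-1}J_i^\top$.
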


The proof of \cref{thm:smoothing-error-bounds} proceeds by integrating $\frac{\partial f_\upkappa}{\partial \upkappa}$ from $0$ to $\kappa$; details are provided in \cref{app:proof-smoothing-error-bounds}.
Inspection of \eqref{eq:smoothing-bounds} reveals that the deviation bound is induced by the terms $\frac{\partial \lambda_{\kappa,i}}{\partial \kappa}\,\kappa\,w_i$, which represent deviations in the contact forces. Because $w_i$ is one-sided (i.e., not centered around zero), the resulting bound naturally reflects the unilateral nature of contact. The contact Jacobian $J_i^\top$ maps these contact force deviations into generalized coordinates, and the resulting contributions are aggregated and weighted by $P^{-1}$.

Equation \eqref{eq:smoothing-bounds} can be written compactly as
\begin{equation}  \label{eq:dynamics-bounds}
    x^+ = f_\kappa(x,u) + E_\kappa(x,u)\, w,
\end{equation}
where $w \in [1,2]^{n_c}$ and the $i$th column of $E_\kappa$ is given by
\begin{equation*}
    E_{\kappa,:i} = - P(x)^{-1}\, J_i(x)^\top\, \frac{\partial\lambda_{\kappa,i}}{\partial\kappa}\, \kappa.
\end{equation*}
Both $f_\kappa$ and $E_\kappa$ are continuously differentiable.
The following example illustrates this set-valued bound.
\begin{example}
\begin{figure}[!t]
    \centering
    \subfloat[]{%
      \begin{tikzpicture}[>=Stealth, scale=4]

\draw[dashed, semithick] (0,-0.05) -- (0,0.25);

\fill[red!60!black, draw=black] (0.3,0.1) circle (0.1);

\draw[densely dashed] (0.3,0) -- (0.3,0.2);
\draw[-{Latex[scale=0.8]}, densely dotted, semithick] (0,0.1) -- (0.3,0.1) node[midway, above, inner sep=1.5pt] {$x_a$};

\draw[fill=gray!30, , draw=black] (0.6,0) rectangle (0.8,0.2);

\draw[densely dashed] (0.7,-0.05) -- (0.7,0.22);
\draw[-{Latex[scale=0.8]}, densely dotted, semithick] (0,-0.04) -- (0.7,-0.04) node[midway, below, inner sep=1.5pt] {$x_o$};

\draw[red, thick]
(0.30,0.10)
-- (0.34,0.10)
-- (0.35,0.14)
-- (0.37,0.06)
-- (0.39,0.14)
-- (0.41,0.06)
-- (0.43,0.14)
-- (0.45,0.06)
-- (0.46,0.10)
-- (0.50,0.10);

\draw[semithick, red] (0.5,0.03) -- (0.5,0.25);
\draw[-{Latex[scale=0.8]}, densely dotted, semithick, red] (0,0.23) -- (0.5,0.23) node[midway, above, inner sep=1.5pt] {$u$};

\draw[thick] (0,0) -- (0.85,0);

\end{tikzpicture}
      \label{fig:1d-pusher-schematic}
    }
    \subfloat[]{%
      \begin{tikzpicture}

\definecolor{color1}{RGB}{228,26,28}
\definecolor{color2}{RGB}{55,126,184}
\definecolor{color3}{RGB}{77,175,74}

\begin{axis}[
width=6cm, height=3.6cm,
xlabel={\small $u$},
ylabel={\small $x_o^+$},
xmin=-0.1, xmax=0.3,
ymin=0.1985, ymax=0.209,
xtick=\empty,
ytick=\empty,
legend cell align={left},
legend style={
    font=\footnotesize,
    at={(0.00,1.00)},
    anchor=north west,
    draw=none,
    fill=none,
    row sep=-2pt,
    inner sep=1pt,
},
legend image post style={scale=0.6},
]

\pgfplotstableread[row sep=crcr]{
U        Xo_true      Xo           Eo             \\
-0.100   0.200000000  0.200195054  -0.000190346747\\
-0.096   0.200000000  0.200198837  -0.000193855248\\
-0.092   0.200000000  0.200202766  -0.000197488126\\
-0.088   0.200000000  0.200206849  -0.000201251652\\
-0.084   0.200000000  0.200211095  -0.000205152485\\
-0.080   0.200000000  0.200215513  -0.000209197696\\
-0.076   0.200000000  0.200220115  -0.000213394800\\
-0.072   0.200000000  0.200224912  -0.000217751778\\
-0.068   0.200000000  0.200229914  -0.000222277112\\
-0.064   0.200000001  0.200235137  -0.000226979812\\
-0.060   0.200000001  0.200240594  -0.000231869448\\
-0.056   0.200000001  0.200246300  -0.000236956180\\
-0.052   0.200000001  0.200252272  -0.000242250792\\
-0.048   0.200000001  0.200258529  -0.000247764715\\
-0.044   0.200000001  0.200265090  -0.000253510063\\
-0.040   0.200000001  0.200271977  -0.000259499648\\
-0.036   0.200000002  0.200279213  -0.000265747002\\
-0.032   0.200000002  0.200286826  -0.000272266385\\
-0.028   0.200000002  0.200294842  -0.000279072786\\
-0.024   0.200000003  0.200303293  -0.000286181904\\
-0.020   0.200000003  0.200312213  -0.000293610111\\
-0.016   0.200000000  0.200321640  -0.000301374389\\
-0.012   0.200000000  0.200331615  -0.000309492231\\
-0.008   0.200000000  0.200342182  -0.000317981503\\
-0.004   0.200000000  0.200353394  -0.000326860229\\
 0.000   0.200000000  0.200365304  -0.000336146325\\
 0.004   0.200000000  0.200377974  -0.000345857232\\
 0.008   0.200000000  0.200391472  -0.000356009443\\
 0.012   0.200000000  0.200405874  -0.000366617788\\
 0.016   0.200000000  0.200421262  -0.000377694787\\
 0.020   0.200000000  0.200437728  -0.000389249472\\
 0.024   0.200000000  0.200455375  -0.000401286177\\
 0.028   0.200000001  0.200474315  -0.000413802944\\
 0.032   0.200000001  0.200494673  -0.000426789600\\
 0.036   0.200000001  0.200516587  -0.000440225453\\
 0.040   0.200000001  0.200540209  -0.000454076609\\
 0.044   0.200000002  0.200565704  -0.000468292906\\
 0.048   0.200000003  0.200593255  -0.000482804550\\
 0.052   0.200000005  0.200623058  -0.000497518595\\
 0.056   0.200000008  0.200655325  -0.000512315520\\
 0.060   0.200000000  0.200690282  -0.000527046277\\
 0.064   0.200000000  0.200728168  -0.000541530361\\
 0.068   0.200000000  0.200769231  -0.000555555556\\
 0.072   0.200000001  0.200813723  -0.000568880124\\
 0.076   0.200000002  0.200861896  -0.000581238194\\
 0.080   0.200000005  0.200913996  -0.000592348878\\
 0.084   0.200000016  0.200970250  -0.000601929265\\
 0.088   0.200000001  0.201030863  -0.000609710761\\
 0.092   0.200000006  0.201096006  -0.000615457455\\
 0.096   0.200000002  0.201165807  -0.000618984461\\
 0.100   0.200002073  0.201240347  -0.000620173673\\
 0.104   0.200153848  0.201319653  -0.000618984461\\
 0.108   0.200307698  0.201403698  -0.000615457455\\
 0.112   0.200461539  0.201492401  -0.000609710761\\
 0.116   0.200615401  0.201585634  -0.000601929265\\
 0.120   0.200769236  0.201683226  -0.000592348874\\
 0.124   0.200923079  0.201784973  -0.000581238170\\
 0.128   0.201076924  0.201890646  -0.000568880124\\
 0.132   0.201230770  0.202000000  -0.000555555556\\
 0.136   0.201384616  0.202112783  -0.000541530361\\
 0.140   0.201538462  0.202228744  -0.000527046277\\
 0.144   0.201692316  0.202347632  -0.000512315519\\
 0.148   0.201846159  0.202469212  -0.000497518592\\
 0.152   0.202000003  0.202593255  -0.000482804538\\
 0.156   0.202153848  0.202719550  -0.000468292815\\
 0.160   0.202307694  0.202847901  -0.000454076609\\
 0.164   0.202461540  0.202978126  -0.000440225453\\
 0.168   0.202615385  0.203110058  -0.000426789600\\
 0.172   0.202769231  0.203243546  -0.000413802944\\
 0.176   0.202923077  0.203378452  -0.000401286175\\
 0.180   0.203076923  0.203514651  -0.000389249472\\
 0.184   0.203230769  0.203652031  -0.000377694629\\
 0.188   0.203384616  0.203790489  -0.000366617788\\
 0.192   0.203538462  0.203929934  -0.000356009427\\
 0.196   0.203692308  0.204070282  -0.000345857232\\
 0.200   0.203846154  0.204211458  -0.000336146322\\
 0.204   0.204000000  0.204353394  -0.000326860225\\
 0.208   0.204153846  0.204496028  -0.000317981456\\
 0.212   0.204307692  0.204639307  -0.000309492230\\
 0.216   0.204461539  0.204783178  -0.000301374387\\
 0.220   0.204615385  0.204927598  -0.000293610110\\
 0.224   0.204769234  0.205072524  -0.000286181903\\
 0.228   0.204923079  0.205217919  -0.000279072786\\
 0.232   0.205076925  0.205363749  -0.000272266326\\
 0.236   0.205230771  0.205509983  -0.000265746926\\
 0.240   0.205384617  0.205656592  -0.000259499648\\
 0.244   0.205538463  0.205803551  -0.000253510063\\
 0.248   0.205692309  0.205950836  -0.000247764715\\
 0.252   0.205846155  0.206098426  -0.000242250792\\
 0.256   0.206000001  0.206246300  -0.000236956180\\
 0.260   0.206153847  0.206394440  -0.000231869448\\
 0.264   0.206307693  0.206542829  -0.000226979812\\
 0.268   0.206461539  0.206691453  -0.000222277112\\
 0.272   0.206615385  0.206840296  -0.000217751776\\
 0.276   0.206769231  0.206989346  -0.000213394790\\
 0.280   0.206923077  0.207138590  -0.000209197652\\
 0.284   0.207076923  0.207288018  -0.000205152417\\
 0.288   0.207230770  0.207437618  -0.000201251652\\
 0.292   0.207384616  0.207587381  -0.000197488126\\
 0.296   0.207538462  0.207737299  -0.000193855248\\
 0.300   0.207692308  0.207887362  -0.000190346747\\
}\datatable

\addplot[thick, color1]
    table[x=U, y=Xo_true]{\datatable};
\addlegendentry{Nonsmooth dynamics}

\addplot[thick, color2]
    table[x=U, y=Xo]{\datatable};
\addlegendentry{Smoothed dynamics}

\addplot[name path=lower, draw=none, forget plot]
    table[x=U, y expr=\thisrow{Xo} + \thisrow{Eo}]{\datatable};

\addplot[name path=upper, draw=none, forget plot]
    table[x=U, y expr=\thisrow{Xo} + 2*\thisrow{Eo}]{\datatable};

\addplot[fill=color3, fill opacity=0.4, draw=none, legend image code/.code={
        \draw[fill=color3, fill opacity=0.4, draw=none] (0cm,-0.1cm) rectangle (0.6cm,0.1cm);
    }]
    fill between[of=lower and upper];
\addlegendentry{\shortstack[l]{Smoothed dynamics\\[-4pt]+ deviation bounds}}

\addplot[thin, densely dashed, color3]
    table[x=U, y expr=\thisrow{Xo} + 1.5*\thisrow{Eo}]{\datatable};

\addplot[dotted, gray] coordinates {(0.1,0.1) (0.1,0.3)};
\node[anchor=south west, gray, inner sep=0] at (rel axis cs:0,0) {\footnotesize no contact};
\node[anchor=south east, gray, inner sep=0] at (rel axis cs:1,0) {\footnotesize in contact};

\end{axis}

\end{tikzpicture}
      \label{fig:1d-pusher-dynamics}
    }
    \caption{1D pusher. \textbf{(a)} System schematic. \textbf{(b)} Discrete-time dynamics $x^+ = f(x,u)$, with $x_o^+$ plotted versus the input $u$.}
    \label{fig:1d-pusher}
\end{figure}
Consider the 1D system of a stiffness-controlled red pusher pushing a gray object, shown in \cref{fig:1d-pusher-schematic}. The corresponding discrete-time dynamics are plotted in \cref{fig:1d-pusher-dynamics}. Under the nonsmooth hybrid dynamics, when the pusher is not in contact with the object, the object position remains constant with respect to the input, resulting in flat regions and nonsmooth transitions between the non-contact and contact modes.

By smoothing the dynamics, the gradient $\partial x_o^+ / \partial u$ becomes positive for all values of $u$. These gradients are informative, as they guide the optimizer toward actions that induce contact, and they remain continuous across the entire input domain. The shaded region in \cref{fig:1d-pusher-dynamics} represents the bounds induced by the smoothing-error: its upper and lower envelopes correspond to $f_\kappa(x,u) + E_\kappa(x,u)$ and $f_\kappa(x,u) + 2\, E_\kappa(x,u)$, respectively. As predicted by \eqref{eq:dynamics-bounds}, the original nonsmooth dynamics lie entirely within this shaded region. 
One may also wonder if the nominal dynamics could instead be chosen as $f_\kappa(x,u) + 1.5\, E_\kappa(x,u)$, shown as the dashed line in \cref{fig:1d-pusher-dynamics}, to eliminate the bias. The answer is no. Such a choice yields zero or even negative $\partial x_o^+ / \partial u$ gradients over parts of the input domain, rendering them non-informative for action refinement.
\end{example}

\begin{wrapfigure}{r}{0.42\linewidth}
    \hspace*{-3mm}
    \begin{tikzpicture}

\definecolor{color1}{RGB}{228,26,28}
\definecolor{color3}{RGB}{77,175,74}

\begin{axis}[
width=4.375cm, height=3.75cm,
xlabel={\footnotesize $\frac{|J_i P^{-1} J_i^\top|}{\sum_{j \neq i} |J_i P^{-1} J_j^\top|}$},
xlabel shift=-1mm,
ylabel={\footnotesize Value of terms in (\ref*{eq:smoothing-bounds})},
ylabel shift=-1mm,
xmin=0.3, xmax=4.0,
ymin=1.5e-5, ymax=2.8e-5,
tick label style={font=\scriptsize},
legend cell align={left},
legend style={
    font=\footnotesize,
    at={(1.0,0.0)},
    anchor=south east,
    draw=none,
    fill=none,
    row sep=-2pt,
    inner sep=0pt,
},
legend image post style={scale=0.4},
every y tick scale label/.style={
    at={(axis description cs:0.15,1.1)},
    anchor=north east,
    inner sep=0,
},
]

\pgfplotstableread[row sep=crcr]{
ratio    value        lb           ub           \\
0.280985 1.485948e-05 1.848247e-05 3.696494e-05 \\
0.300137 1.677018e-05 1.924241e-05 3.848483e-05 \\
0.326183 1.825513e-05 1.985634e-05 3.971269e-05 \\
0.361204 1.938558e-05 2.033787e-05 4.067575e-05 \\
0.407596 2.023045e-05 2.070670e-05 4.141340e-05 \\
0.467885 2.085191e-05 2.098401e-05 4.196803e-05 \\
0.544391 2.130759e-05 2.118959e-05 4.237919e-05 \\
0.638764 2.163557e-05 2.134039e-05 4.268077e-05 \\
0.751550 2.187012e-05 2.145013e-05 4.290026e-05 \\
0.882027 2.203661e-05 2.152954e-05 4.305909e-05 \\
1.028578 2.215380e-05 2.158677e-05 4.317353e-05 \\
1.189605 2.223576e-05 2.162788e-05 4.325576e-05 \\
1.364592 2.229249e-05 2.165735e-05 4.331470e-05 \\
1.554726 2.233198e-05 2.167844e-05 4.335689e-05 \\
1.762561 2.235833e-05 2.169353e-05 4.338705e-05 \\
1.990658 2.237576e-05 2.170430e-05 4.340860e-05 \\
2.239588 2.238541e-05 2.171199e-05 4.342398e-05 \\
2.506081 2.238551e-05 2.171748e-05 4.343496e-05 \\
2.782251 2.237731e-05 2.172140e-05 4.344280e-05 \\
3.056524 2.240722e-05 2.172419e-05 4.344839e-05 \\
3.316139 2.240868e-05 2.172619e-05 4.345237e-05 \\
3.550135 2.240428e-05 2.172761e-05 4.345521e-05 \\
3.751521 2.241047e-05 2.172862e-05 4.345724e-05 \\
3.917893 2.239165e-05 2.172934e-05 4.345869e-05 \\
4.050679 2.241082e-05 2.172986e-05 4.345972e-05 \\
}\datatable

\addplot[thick, color1]
    table[x=ratio, y=value]{\datatable};
\addlegendentry{$f_0 - f_\kappa$}

\addplot[name path=lower, draw=none, forget plot]
    table[x=ratio, y=lb]{\datatable};

\addplot[name path=upper, draw=none, forget plot]
    table[x=ratio, y=ub]{\datatable};

\addplot[fill=color3, fill opacity=0.4, draw=none, legend image code/.code={
        \draw[fill=color3, fill opacity=0.4, draw=none] (0cm,-0.1cm) rectangle (0.6cm,0.1cm);
    }]
    fill between[of=lower and upper];
\addlegendentry{$E_\kappa\, w$}

\end{axis}

\end{tikzpicture}
    \vspace*{-4mm}
\end{wrapfigure}
\noindent\textbf{Remark 1.}
On the right, we plot terms in \eqref{eq:smoothing-bounds} against the \ac{DDF} of $J P^{-1} J^\top$ for the bimanual planar bucket manipulation example in Sec.~\ref{sec:planar-bucket}. 
Although \cref{thm:smoothing-error-bounds} requires $J_i P^{-1} J_j^\top = 0$ for all $i \neq j$, the plot shows that \eqref{eq:smoothing-bounds} holds if the \ac{DDF} exceeds approximately 0.5. This suggests, in practice, diagonal dominance is sufficient for \eqref{eq:smoothing-bounds} to hold numerically.
In the Sec.~\ref{sec:results} examples, the \ac{DDF} ranges from 1.3 to 2.1, exceeding the 0.5 cutoff.

\subsection{Contact Geometry Smoothing}
The complementarity between signed distance and contact force is not the only source of nonsmoothness in the mapping from $(x,u)$ to $x^+$. The geometry of contact primitives can also introduce nonsmoothness. For example, in contact between a sphere and a polyhedron, nonsmoothness arises from discontinuous changes in surface normals across the faces of the polyhedron. This results in discontinuities in the contact Jacobian and, consequently, in a nonsmooth dynamics. This issue has also been identified in prior work \cite{montaut2023differentiable, le2023single}.

In this work, we utilize the differentiability of conic programs by formulating contact point queries as \acp{QP}. For instance, the contact points between a sphere $\{p_1 \,|\, \|p_1 - p_c\|_2 \leq r\}$ and a polyhedron $\{p_2 \,|\, A\, p_2 \leq b\}$ are obtained by solving a relaxed version of the \ac{QP}
\begin{equation*}
\begin{aligned}
    & \minimize_{p_2}  && \tfrac{1}{2} \|p_2 - p_c\|_2^2 & \subjectto && A\, p_2 \leq b,
\end{aligned}
\end{equation*}
analogous to how \eqref{eq:conic-program} is relaxed to \eqref{eq:conic-logbarrier-program}. The resulting contact points are then used to compute the contact Jacobian $J_i(x)$ and signed distance $\phi_i(x)$, which enter into \eqref{eq:dynamics-conic-prog}.

\looseness-1Ideally, one would characterize how such geometry smoothing induces deviations in the contact Jacobian and signed distance, and thus in the dynamics mapping. We defer this analysis to future work, as the error introduced by contact geometry smoothing is less severe than that arising from contact dynamics smoothing. In particular, contact dynamics smoothing introduces a force-at-a-distance effect whose error can accumulate over time, whereas contact geometry smoothing only induces a local perturbation that manifests as a constant offset.

\section{Trajectory Optimization and Policy Synthesis}
\label{sec:policy-synthesis}

We aim to solve the trajectory optimization problem:
\begin{subequations} \label{eq:nominal-traj-opt}
\begin{alignat}{2}
    & \minimize\quad && J(\vec{z}, \vec{v}) \\
    & \subjectto && z_{k+1} = f_\kappa(z_k, v_k), \quad \forall k = 0,\dots,N-1, \label{eq:nominal-traj-opt-dynamics-constraint}  \\
    &            && z_0 = x_0,
\end{alignat}
\end{subequations}
where $\vec{z} = \begin{bmatrix} z_0^\top & \cdots & z_N^\top \end{bmatrix}^\top$ is the nominal state trajectory and $\vec{v} = \begin{bmatrix} v_0^\top & \cdots & v_{N-1}^\top \end{bmatrix}^\top$ is the nominal control sequence. The cost $J(\vec{z}, \vec{v})$ may include terms such as $\sum_{k=0}^{N} \tfrac{1}{2} \|z_k - x_{\text{goal}}\|_Q^2$.
The dynamics constraint \eqref{eq:nominal-traj-opt-dynamics-constraint} is based on the smoothed dynamics $f_\kappa$ rather than the nonsmooth dynamics $f_0$. This choice is necessary because $f_0$ does not provide informative gradients: its associated optimization landscape contains flat regions and sharp discontinuities that hinder gradient-based optimization.



However, when executing the nominal control sequence $\vec{v}$ on the nonsmooth hybrid dynamics, the resulting state trajectory $\vec{x} = \begin{bmatrix} x_0^\top & \cdots & x_N^\top \end{bmatrix}^\top$ deviates from $\vec{z}$. As a consequence, there is no guarantee that $\vec{x}$ will reach the goal state $x_{\text{goal}}$. More critically, in the presence of state or control constraints, the true trajectory may violate constraints even if the nominal trajectory is feasible. 
The key idea is that, instead of executing nominal controls on the nonsmooth hybrid dynamics, we execute a closed-loop feedback \textit{policy}. Specifically, we consider a class of affine feedback policies of the form
\begin{equation}  \label{eq:affine-policy}
    u_k = \pi(x_0, \dots, x_k) = v_k + \textstyle\sum_{j=0}^{k} K_{kj}\, (x_j - z_j),
\end{equation}
where $K_{kj}$ are the feedback gains. Under such policies, the closed-loop behavior of the nonsmooth hybrid dynamics admits analytically tractable tube-valued predictions. This result is formalized in the following theorem.

\begin{theorem} \label{thm:policy-parameterization}
For the dynamics written as \eqref{eq:dynamics-bounds}, consider strictly lower-triangular block matrices
{\footnotesize
\begin{equation*}
    \mat{\Phi}^x\hspace{-3pt} =\hspace{-3pt}
    \left[\setlength{\arraycolsep}{2pt}\begin{array}{l l l l}
        0 \\
        \Phi^x_{1,0} \\
        \:\vdots & \ddots \\
        \Phi^x_{N,0} & \cdots & \Phi^x_{N,N-1}
    \end{array}\right],\hspace{-3pt} \
    \mat{\Phi}^u\hspace{-3pt} =\hspace{-3pt}
    \left[\setlength{\arraycolsep}{2pt}\begin{array}{l l l l}
        0 \\
        \Phi^u_{1,0} \\
        \:\vdots & \ddots \\
        \Phi^u_{N-1,0} & \cdots & \Phi^u_{N-1,N-2} & 0
    \end{array}\right],
\end{equation*}}
that satisfy, for all $j = 0, \dots, N-1$,
\begin{equation*}
\begin{aligned}
    \Phi_{k+1,j}^x &= A_k \Phi_{k,j}^x + B_k \Phi_{k,j}^u,
    \quad \forall k = j+1, \dots, N-1, \\
    \Phi_{j+1,j}^x &= E_j,
\end{aligned}
\end{equation*}
where $A_k = \partial f_\kappa(z_k, v_k) / \partial z$,
$B_k = \partial f_\kappa(z_k, v_k) / \partial v$,
and $E_j = E_\kappa(z_j, v_j)$.
The set of all such block matrices $\mat{\Phi}^x$ and $\mat{\Phi}^u$ parameterize all possible closed-loop system responses
\begin{subequations} \label{eq:system-response}
\begin{alignat}{3}
    \vec{x} &= \vec{z} + \mat{\Phi}^x\, \vec{w} \quad&\text{or}\quad x_k &= z_k + \textstyle\sum_{j=0}^{k-1} \Phi_{k,j}^x\, w, \\
    \vec{u} &= \vec{v} + \mat{\Phi}^u\, \vec{w} \quad&\text{or}\quad u_k &= v_k + \textstyle\sum_{j=0}^{k-1} \Phi_{k,j}^u\, w,
\end{alignat}
\end{subequations}
with $\vec{w} \in [1,2]^{N \cdot n_c}$.
Moreover, a feedback policy of the form
\begin{equation*}
    \vec{u}
    = \vec{v}
    + \mat{\Phi}^u (\mat{\Phi}^x)^{-1}
    (\vec{x} - \vec{z}) := \vec{v}
    + \mat{K}
    (\vec{x} - \vec{z})
\end{equation*}
achieves such a closed-loop response.
\end{theorem}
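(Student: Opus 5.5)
This is a system level synthesis (SLS) parameterization result, here for the linear time-varying error system obtained by linearizing \eqref{eq:dynamics-bounds} about $(\vec z,\vec v)$. My plan follows the standard SLS argument of \cite{leeman2025robust}.

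\textbf{Step 1: error dynamics.} Set $\Delta x_k := x_k - z_k$ and $\Delta u_k := u_k - v_k$. Linearize $f_\kappa$ about $(z_k,v_k)$ and freeze $E_\kappa$ at $(z_k,v_k)$. Using $z_{k+1}=f_\kappa(z_k,v_k)$ and $\Delta x_0=0$, this gives
\begin{equation*}
\Delta x_{k+1} = A_k\,\Delta x_k + B_k\,\Delta u_k + E_k\, w_k .
\end{equation*}
The higher-order linearization remainder and the variation of $E_\kappa$ are not covered by the statement. I would make explicit that the theorem concerns this LTV model (as is implicit in the phrase ``for the dynamics written as''). I would note that in the full method these remainders are absorbed into additional error terms, but I would not prove that here.

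In stacked form, let $\mat{Z}$ be the block down-shift operator, $\mathcal{A}=\blkdiag(A_k)$, $\mathcal{B}=\blkdiag(B_k)$, and $\mathcal{E}$ the matrix placing $E_k w_k$ into row $k+1$. Then
\begin{equation*}
\Delta\vec x = \mat{Z}\mathcal{A}\,\Delta\vec x + \mat{Z}\mathcal{B}\,\Delta\vec u + \mathcal{E}\,\vec w .
\end{equation*}
Here $\mathcal{E}$ is a block down-shifted diagonal matrix.

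\textbf{Step 2: necessity.} Take any causal affine policy $\Delta\vec u=\mat K\Delta\vec x$ with $\mat K$ block lower triangular. Then $(I-\mat Z\mathcal A-\mat Z\mathcal B\mat K)$ is block unit lower triangular, hence invertible. So $\Delta\vec x=\mat\Phi^x\vec w$ and $\Delta\vec u=\mat K\mat\Phi^x\vec w=:\mat\Phi^u\vec w$, where
\begin{equation*}
\mat\Phi^x=(I-\mat Z\mathcal A-\mat Z\mathcal B\mat K)^{-1}\mathcal E .
\end{equation*}
Substituting back yields the affine constraint $(I-\mat Z\mathcal A)\mat\Phi^x-\mat Z\mathcal B\mat\Phi^u=\mathcal E$. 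Read blockwise, its $(k+1,j)$ entries are $\Phi^x_{k+1,j}=A_k\Phi^x_{k,j}+B_k\Phi^u_{k,j}$ for $k>j$ and $\Phi^x_{j+1,j}=E_j$. Strict lower triangularity of both maps follows from causality and the one-step delay of the disturbance.

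\textbf{Step 3: sufficiency.} Given $(\mat\Phi^x,\mat\Phi^u)$ satisfying the recursion, define $\mat K=\mat\Phi^u(\mat\Phi^x)^{-1}$. This is the main obstacle: $\mat\Phi^x$ is strictly lower triangular and so is not literally invertible. I would handle it by re-indexing. Drop the zero first block row of $\mat\Phi^x$ and the zero last block column (there is no $w_N$), and correspondingly shift $\vec w$. The remaining square block matrix has diagonal blocks $E_j$. Assuming each $E_j$ is invertible or full column rank (a left inverse, with $\mat K$ acting on the range), this block is invertible, or at least left-invertible, and the inverse is block lower triangular. Hence $\mat K$ is causal. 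Then check $\mat\Phi^u=\mat K\mat\Phi^x$ on the relevant indices. Since $\Delta\vec x$ is uniquely determined by the policy via Step 2, and $\mat\Phi^x\vec w$ satisfies the same closed-loop equation by the recursion, uniqueness gives $\Delta\vec x=\mat\Phi^x\vec w$ and $\Delta\vec u=\mat\Phi^u\vec w$.

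Finally, ranging over $\vec w\in[1,2]^{N n_c}$ gives the stated tubes \eqref{eq:system-response}.
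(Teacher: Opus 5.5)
Your Steps 1 and 2 coincide with the paper's proof. It likewise stacks the LTV error dynamics as $\vec{x}-\vec{z}=\mat{Z}\mat{A}(\vec{x}-\vec{z})+\mat{Z}\mat{B}(\vec{u}-\vec{v})+\mat{Z}\mat{E}\vec{w}$, writes the closed-loop maps as $(\mat{I}-\mat{Z}(\mat{A}+\mat{B}\mat{K}))^{-1}\mat{Z}\mat{E}$ and $\mat{K}$ times it, and reads the recursion off $(\mat{I}-\mat{Z}\mat{A})\mat{\Phi}^x-\mat{Z}\mat{B}\mat{\Phi}^u=\mat{Z}\mat{E}$.

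The two routes part ways at sufficiency. The paper substitutes $\mat{K}=\mat{\Phi}^u(\mat{\Phi}^x)^{-1}$ and computes $(\mat{I}-\mat{Z}(\mat{A}+\mat{B}\mat{K}))^{-1}\mat{Z}\mat{E}\vec{w}=\mat{\Phi}^x\bigl((\mat{I}-\mat{Z}\mat{A})\mat{\Phi}^x-\mat{Z}\mat{B}\mat{\Phi}^u\bigr)^{-1}\mat{Z}\mat{E}\vec{w}=\mat{\Phi}^x(\mat{Z}\mat{E})^{-1}\mat{Z}\mat{E}\vec{w}$. That is, it formally inverts $\mat{\Phi}^x$ and $\mat{Z}\mat{E}$, which are strictly block lower triangular and hence never invertible. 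So the obstacle you flag is genuine, and the paper's chain of equalities does not literally hold.

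Your argument needs only a one-sided inverse and is the rigorous form of the paper's computation. You re-index so that the diagonal blocks are the $E_j$, take a causal left inverse $L$, and set $\mat{K}=\mat{\Phi}^u L$. You then verify that $\mat{\Phi}^x\vec{w}$ solves the closed-loop equation and conclude by uniqueness, since $\mat{I}-\mat{Z}\mat{A}-\mat{Z}\mat{B}\mat{K}$ is block unit lower triangular.

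The rank hypothesis you add is not a weakness; it is necessary for the claim. Since $x_0=z_0$, every causal policy gives $u_1-v_1=K_{11}E_0w_0$. So if $E_0$ has a nontrivial kernel (automatic when $n_c>n_x$), any $\Phi^u_{1,0}$ that is nonzero on $\ker E_0$ satisfies the recursion but cannot be realized.

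One point to tighten: in the full-column-rank case, build $L$ by block forward substitution. Set $w_0=E_0^{\dagger}(x_1-z_1)$ and $w_j=E_j^{\dagger}\bigl(x_{j+1}-z_{j+1}-\sum_{l<j}\Phi^x_{j+1,l}w_l\bigr)$, with $E_j^{\dagger}$ any left inverse of $E_j$. A generic left inverse of the whole block matrix (e.g.\ its Moore--Penrose pseudoinverse) is not block lower triangular. Likewise, defining $\mat{K}$ only on the range and extending it arbitrarily can destroy causality, and your uniqueness step depends on causality.
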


The proof of \cref{thm:policy-parameterization} is provided in \cref{app:proof-policy-parameterization}. 
By leveraging \cref{thm:policy-parameterization}, we can formulate an optimization problem in which the closed-loop response  of the nonsmooth hybrid dynamics is guaranteed to satisfy the stage and terminal constraints
\begin{subequations}  \label{eq:constraint} 
\begin{align}
    G_k \begin{bmatrix} x_k \\ u_k \end{bmatrix} + g_k &\leq 0, \quad \forall k = 0, \dots, N-1,  \label{eq:stage-constraint} \\
    G_f\, x_N + g_f &\leq 0.  \label{eq:terminal-constraint} 
\end{align}
\end{subequations}
The optimization problem that jointly optimizes the nominal trajectory and affine feedback policy is formulated as
\begin{subequations} \label{eq:robust-traj-opt}
\begin{alignat}{2}
    & \min_{\substack{\vec{z}, \vec{v},\\\mat{\Phi}^x, \mat{\Phi}^u}}\ && J(\vec{z}, \vec{v}) + \mathcal{J}(\mat{\Phi}^x, \mat{\Phi}^u)  \label{eq:robust-traj-opt-cost} \\
    & \hspace{2.2mm}\text{s.t.} 
        && z_{k+1} = f_\kappa(z_k, v_k), \quad z_0 = x_0, \\
    & 
        && \Phi_{k+1,j}^x = A_k \Phi_{k,j}^x + B_k \Phi_{k,j}^u, \quad 
           \Phi_{j+1,j}^x = E_j,  \\
    &
        && \underbrace{\sum_{j=0}^{k-1} G_k 
           \begin{bmatrix} \Phi_{k,j}^x \\ \Phi_{k,j}^u \end{bmatrix} w_c 
           + w_r\, \big\| G_k \begin{bmatrix} \Phi_{k,j}^x \\ \Phi_{k,j}^u \end{bmatrix} \big\|_{\text{row},q}}_{h_k(\mat{\Phi}^x, \mat{\Phi}^u)}  \nonumber \\[-3mm]
    &   &&\hspace{10em} + G_k \begin{bmatrix} z_k \\ v_k \end{bmatrix} + g_k \leq 0,  \label{eq:robust-traj-opt-stage-constraint} \\
    & 
        && \underbrace{\sum_{j=0}^{N-1} G_f\, \Phi_{N,j}^x\, w_c 
           + w_r\, \big\| G_f \Phi_{N,j}^x \big\|_{\text{row},q}}_{h_f(\mat{\Phi}^x)}  \nonumber \\[-3mm]
    &   &&\hspace{10.7em} + G_f\, z_N + g_f \leq 0,  \label{eq:robust-traj-opt-terminal-constraint}
\end{alignat}
\end{subequations}
where $\|\cdot\|_{\text{row},q}$ denotes the row-wise $q$-norm.
The constraints \eqref{eq:robust-traj-opt-stage-constraint} and \eqref{eq:robust-traj-opt-terminal-constraint} correspond to the stage constraint \eqref{eq:stage-constraint} and the terminal constraint \eqref{eq:terminal-constraint}, respectively. Their form follows from the closed-loop system response representation in \eqref{eq:system-response} and a standard robustification argument. 
Specifically, recall
\begin{equation*}
    w \in [1,2]^{n_c}
    = \bigl\{ w \in \mathbb{R}^{n_c} \,\big|\, \|w - w_c\|_p \leq w_r \bigr\},
\end{equation*}
where $w_c = 1.5\,\bm{1}$, $w_r = 0.5$, and $p = \infty$.
For any vector $a \in \mathbb{R}^{n_c}$, the worst-case value of the linear
function $a^\top w$ over this uncertainty set admits the closed-form expression
\begin{equation*}
    \max_{w \in [1,2]^{n_c}} a^\top w
    = a^\top w_c + w_r\, \|a\|_q,
\end{equation*}
where $q$ is the dual norm exponent satisfying $1/p + 1/q = 1$.
Applying this property row-wise to the affine dependence of $(x_k,u_k)$ on $w$ in \eqref{eq:system-response} yields the robust constraints in \eqref{eq:robust-traj-opt}.

We do not explicitly account for model mismatch introduced by estimating the nonlinear tracking error dynamics as a \ac{LTV} system. This can be addressed via linearization error bounds \cite{leeman2025robust} computed via interval arithmetic \cite{limon2005robust} or statistical methods \cite{knuth2023statistical, knuth2021planning}, which inflate the tubes to maintain guarantees.

\subsection{Cost and Constraints}
The cost \eqref{eq:robust-traj-opt-cost} includes two components:
\begin{itemize}[leftmargin=*]
\item {
Nominal trajectory cost, encouraging goal reaching and smooth controls.
\begin{equation*}
    \textstyle J(\vec{v}, \vec{z}) 
    = \sum_{k=0}^N \tfrac{1}{2} \| z_k - x_{\text{goal}} \|_Q^2
      + \sum_{k=0}^{N-2} \tfrac{1}{2} \| v_k - v_{k+1} \|_R^2.
\end{equation*}
}
\item {
Tube cost, which penalizes the size of the tube.
\begin{equation*} 
    \mathcal{J}(\mat{\Phi}^x, \mat{\Phi}^u)
    = \sum_{j=0}^{N-1} \biggl( \sum_{k=j+1}^{N}\!\! \| \bar{Q}^\frac{1}{2} \Phi^x_{k,j} \|_\mathcal{F}^2
      +\!\! \sum_{k=j+1}^{N-1}\!\! \| \bar{R}^\frac{1}{2} \Phi^u_{k,j} \|_\mathcal{F}^2 \biggr),
\end{equation*}
}
\end{itemize}

The constraints \eqref{eq:robust-traj-opt-stage-constraint} and \eqref{eq:robust-traj-opt-terminal-constraint}, or equivalently \eqref{eq:constraint}, enforce the following:
\begin{itemize}[leftmargin=*]
\item Joint angle and torque limits.
\item Non-collision constraints, locally approximated by affine inequalities.
\item Object position constraints, which require the object state to lie within a polytopic set.
\end{itemize}

\subsection{Solver Algorithm}
We solve the optimization problem \eqref{eq:robust-traj-opt} using a \ac{SCP} approach. At each iteration, we linearize around the current nominal trajectory $(\vec{z}, \vec{v})$ and solve the resulting convex optimization problem:
\begin{subequations}  \label{eq:inner-cvx-robust}
\begin{alignat}{2}
    & \min_{\substack{\delta\vec{z}, \delta\vec{v},\\\mat{\Phi}^x, \mat{\Phi}^u}}\ && J(\vec{z} + \delta\vec{z}, \vec{v} + \delta\vec{v}) + \mathcal{J}(\mat{\Phi}^x, \mat{\Phi}^u)  \\
    & \hspace{2.2mm}\text{s.t.}
        && \delta z_{k+1} = A_k\, \delta z_k + B_k\, \delta v_k, \quad \delta z_0 = 0,  \label{eq:inner-cvx-robust-dynamics} \\
    & 
        && \Phi_{k+1,j}^x = A_k \Phi_{k,j}^x + B_k \Phi_{k,j}^u,  \nonumber \\
    &
        &&\hspace{2.2em} \Phi_{j+1,j}^x = E_j + \tfrac{\partial E_j}{\partial z}\, \delta z_j + \tfrac{\partial E_j}{\partial v}\, \delta v_j,  \\
    &
        && h_k(\mat{\Phi}^x, \mat{\Phi}^u)
           + G_k \begin{bmatrix} z_k + \delta z_k \\ v_k + \delta v_k \end{bmatrix} + g_k \leq 0,  \\
    & 
        && h_f(\mat{\Phi}^x)
           + G_f\, (z_N + \delta z_N) + g_f \leq 0,  \\
    &
        && \|\delta\vec{z}\| \leq \varepsilon, \quad
           \|\delta\vec{v}\| \leq \varepsilon.  \label{eq:inner-cvx-robust-trust-region}
\end{alignat}
\end{subequations}
The constraints \eqref{eq:inner-cvx-robust-trust-region} ensure the validity of the local linearization.
The resulting  increments $\delta \vec{z}$ and $\delta \vec{v}$ are then used to update the nominal trajectory, and the procedure is repeated until convergence. The full algorithm is summarized in Alg.~\ref{alg:scp}.
\begin{algorithm}  \label{alg:scp}
\caption{\mbox{Certifiable Gradient-Based Policy Synthesis}}
\KwIn{Initial control sequence $\vec{v}$, initial state $x_0$}
\KwOut{Nominal trajectory $(\vec{z}, \vec{v})$, feedback gains $\mat{K}$}

\tcp{\ac{SCP} iterations}
\For{$\text{iter} = 1$ \KwTo maxIters}{
    $z_0 \gets x_0$\\
    \For{$k = 0$ \KwTo $N-1$}{
        $z_{k+1} \gets f_\kappa(z_k, v_k)$
    }
    $\mat{A}, \mat{B} \gets \text{LinearizeDynamics}(f_\kappa, \vec{z}, \vec{v})$\\
    $\mat{E}, \frac{\partial \mat{E}}{\partial \vec{z}}, \frac{\partial \mat{E}}{\partial \vec{v}} 
       \gets \text{EvaluateDisturbance}(E_\kappa, \vec{z}, \vec{v})$\\
    $\delta \vec{z}, \delta \vec{v}, \mat{\Phi}^x, \mat{\Phi}^u 
       \gets \text{Solve\eqref{eq:inner-cvx-robust}}(\vec{z}, \vec{v}, \mat{A}, \mat{B}, \mat{E}, \frac{\partial \mat{E}}{\partial \vec{z}}, \frac{\partial \mat{E}}{\partial \vec{v}})$\\ 
    $\vec{v} \gets \vec{v} + \delta \vec{v}$
}

\tcp{Post-processing}
$\mat{K} \gets \mat{\Phi}^u (\mat{\Phi}^x)^{-1}$\\
\For{$k = 0$ \KwTo $N-1$}{
    $v_k \gets v_k + \sum_{j=0}^{k} K_{kj}\, (x_j - z_j)$\\
    $z_{k+1} \gets f_\kappa(z_k, v_k)$\\
    $x_{k+1} \gets f_0(x_k, v_k)$
}

\Return $\vec{z}, \vec{v}, \mat{K}$
\end{algorithm}

While the optimization problem \eqref{eq:inner-cvx-robust} is convex and can be solved by standard solvers, its structure can be exploited to drastically improve computational efficiency. A specialized solver is detailed in \cref{app:fast-sls}.

\subsection{Baseline}
\label{sec:baseline-ctr}
We compare our method against a baseline \ac{TO-CTR} proposed in \cite{suh2025dexterous}. The method is closely related to \ac{iLQR}, but augments the backward pass with a heuristic \ac{CTR} constraint. 
Specifically, during the backward pass, instead of solving \eqref{eq:inner-cvx-robust}, \ac{TO-CTR} solves the following convex optimization problem:
\begin{subequations} \label{eq:inner-cvx-ctr}
\begin{alignat}{2}
    & \min_{\delta\vec{z}, \delta\vec{v}}\ && J(\vec{z} + \delta\vec{z}, \vec{v} + \delta\vec{v})  \\
    & \hspace{1mm}\text{s.t.}
        && \delta z_{k+1} = A_k\, \delta z_k + B_k\, \delta v_k, \quad \delta z_0 = 0,  \label{eq:inner-cvx-ctr-dynamics} \\
    &
        && G_k \begin{bmatrix} z_k + \delta z_k \\ v_k + \delta v_k \end{bmatrix} + g_k \leq 0,  \\
    & 
        && G_f\, (z_N + \delta z_N) + g_f \leq 0,  \\
    &
        && \|\delta \vec{z}\| \leq \varepsilon, \quad
           \|\delta \vec{v}\| \leq \varepsilon,  \\
    &
        && \lambda_\kappa 
           + \tfrac{\partial \lambda_\kappa}{\partial z_k}\, \delta z_k 
           + \tfrac{\partial \lambda_\kappa}{\partial v_k}\, \delta v_k 
           \in \mathcal{F}.  \label{eq:inner-cvx-ctr-constraint}
\end{alignat}
\end{subequations}
The constraint \eqref{eq:inner-cvx-ctr-constraint} enforces the \ac{CTR} by restricting the linearized contact force to remain within the friction cone $\mathcal{F}$.
An intuitive explanation of the key difference between \ac{TO-CTR} and our method is best illustrated through an example.

\begin{example}
\begin{figure}[!t]
    \centering
    \vspace*{-2.5mm}
    \subfloat[]{%
      \hspace*{-3mm}
      \begin{tikzpicture}

\definecolor{color1}{RGB}{228,26,28}
\definecolor{color2}{RGB}{55,126,184}
\definecolor{color3}{RGB}{77,175,74}
\definecolor{color4}{RGB}{152,78,163}

\begin{axis}[
width=5.4cm, height=3.3cm,
xlabel={\small $v$},
ylabel={\small $z_o^+$},
xmin=-0.1, xmax=0.3,
ymin=0.1995, ymax=0.2080,
xtick=\empty,
ytick=\empty,
legend cell align={left},
legend style={
    font=\footnotesize,
    at={(0.00,1.00)},
    anchor=north west,
    draw=none,
    fill=none,
    row sep=-2pt,
    inner sep=0.5pt,
},
legend image post style={scale=0.6},
]

\pgfplotstableread[row sep=crcr]{
U        Xo_true      Xo           Eo             \\
-0.100   0.200000000  0.200195054  -0.000190346747\\
-0.096   0.200000000  0.200198837  -0.000193855248\\
-0.092   0.200000000  0.200202766  -0.000197488126\\
-0.088   0.200000000  0.200206849  -0.000201251652\\
-0.084   0.200000000  0.200211095  -0.000205152485\\
-0.080   0.200000000  0.200215513  -0.000209197696\\
-0.076   0.200000000  0.200220115  -0.000213394800\\
-0.072   0.200000000  0.200224912  -0.000217751778\\
-0.068   0.200000000  0.200229914  -0.000222277112\\
-0.064   0.200000001  0.200235137  -0.000226979812\\
-0.060   0.200000001  0.200240594  -0.000231869448\\
-0.056   0.200000001  0.200246300  -0.000236956180\\
-0.052   0.200000001  0.200252272  -0.000242250792\\
-0.048   0.200000001  0.200258529  -0.000247764715\\
-0.044   0.200000001  0.200265090  -0.000253510063\\
-0.040   0.200000001  0.200271977  -0.000259499648\\
-0.036   0.200000002  0.200279213  -0.000265747002\\
-0.032   0.200000002  0.200286826  -0.000272266385\\
-0.028   0.200000002  0.200294842  -0.000279072786\\
-0.024   0.200000003  0.200303293  -0.000286181904\\
-0.020   0.200000003  0.200312213  -0.000293610111\\
-0.016   0.200000000  0.200321640  -0.000301374389\\
-0.012   0.200000000  0.200331615  -0.000309492231\\
-0.008   0.200000000  0.200342182  -0.000317981503\\
-0.004   0.200000000  0.200353394  -0.000326860229\\
 0.000   0.200000000  0.200365304  -0.000336146325\\
 0.004   0.200000000  0.200377974  -0.000345857232\\
 0.008   0.200000000  0.200391472  -0.000356009443\\
 0.012   0.200000000  0.200405874  -0.000366617788\\
 0.016   0.200000000  0.200421262  -0.000377694787\\
 0.020   0.200000000  0.200437728  -0.000389249472\\
 0.024   0.200000000  0.200455375  -0.000401286177\\
 0.028   0.200000001  0.200474315  -0.000413802944\\
 0.032   0.200000001  0.200494673  -0.000426789600\\
 0.036   0.200000001  0.200516587  -0.000440225453\\
 0.040   0.200000001  0.200540209  -0.000454076609\\
 0.044   0.200000002  0.200565704  -0.000468292906\\
 0.048   0.200000003  0.200593255  -0.000482804550\\
 0.052   0.200000005  0.200623058  -0.000497518595\\
 0.056   0.200000008  0.200655325  -0.000512315520\\
 0.060   0.200000000  0.200690282  -0.000527046277\\
 0.064   0.200000000  0.200728168  -0.000541530361\\
 0.068   0.200000000  0.200769231  -0.000555555556\\
 0.072   0.200000001  0.200813723  -0.000568880124\\
 0.076   0.200000002  0.200861896  -0.000581238194\\
 0.080   0.200000005  0.200913996  -0.000592348878\\
 0.084   0.200000016  0.200970250  -0.000601929265\\
 0.088   0.200000001  0.201030863  -0.000609710761\\
 0.092   0.200000006  0.201096006  -0.000615457455\\
 0.096   0.200000002  0.201165807  -0.000618984461\\
 0.100   0.200002073  0.201240347  -0.000620173673\\
 0.104   0.200153848  0.201319653  -0.000618984461\\
 0.108   0.200307698  0.201403698  -0.000615457455\\
 0.112   0.200461539  0.201492401  -0.000609710761\\
 0.116   0.200615401  0.201585634  -0.000601929265\\
 0.120   0.200769236  0.201683226  -0.000592348874\\
 0.124   0.200923079  0.201784973  -0.000581238170\\
 0.128   0.201076924  0.201890646  -0.000568880124\\
 0.132   0.201230770  0.202000000  -0.000555555556\\
 0.136   0.201384616  0.202112783  -0.000541530361\\
 0.140   0.201538462  0.202228744  -0.000527046277\\
 0.144   0.201692316  0.202347632  -0.000512315519\\
 0.148   0.201846159  0.202469212  -0.000497518592\\
 0.152   0.202000003  0.202593255  -0.000482804538\\
 0.156   0.202153848  0.202719550  -0.000468292815\\
 0.160   0.202307694  0.202847901  -0.000454076609\\
 0.164   0.202461540  0.202978126  -0.000440225453\\
 0.168   0.202615385  0.203110058  -0.000426789600\\
 0.172   0.202769231  0.203243546  -0.000413802944\\
 0.176   0.202923077  0.203378452  -0.000401286175\\
 0.180   0.203076923  0.203514651  -0.000389249472\\
 0.184   0.203230769  0.203652031  -0.000377694629\\
 0.188   0.203384616  0.203790489  -0.000366617788\\
 0.192   0.203538462  0.203929934  -0.000356009427\\
 0.196   0.203692308  0.204070282  -0.000345857232\\
 0.200   0.203846154  0.204211458  -0.000336146322\\
 0.204   0.204000000  0.204353394  -0.000326860225\\
 0.208   0.204153846  0.204496028  -0.000317981456\\
 0.212   0.204307692  0.204639307  -0.000309492230\\
 0.216   0.204461539  0.204783178  -0.000301374387\\
 0.220   0.204615385  0.204927598  -0.000293610110\\
 0.224   0.204769234  0.205072524  -0.000286181903\\
 0.228   0.204923079  0.205217919  -0.000279072786\\
 0.232   0.205076925  0.205363749  -0.000272266326\\
 0.236   0.205230771  0.205509983  -0.000265746926\\
 0.240   0.205384617  0.205656592  -0.000259499648\\
 0.244   0.205538463  0.205803551  -0.000253510063\\
 0.248   0.205692309  0.205950836  -0.000247764715\\
 0.252   0.205846155  0.206098426  -0.000242250792\\
 0.256   0.206000001  0.206246300  -0.000236956180\\
 0.260   0.206153847  0.206394440  -0.000231869448\\
 0.264   0.206307693  0.206542829  -0.000226979812\\
 0.268   0.206461539  0.206691453  -0.000222277112\\
 0.272   0.206615385  0.206840296  -0.000217751776\\
 0.276   0.206769231  0.206989346  -0.000213394790\\
 0.280   0.206923077  0.207138590  -0.000209197652\\
 0.284   0.207076923  0.207288018  -0.000205152417\\
 0.288   0.207230770  0.207437618  -0.000201251652\\
 0.292   0.207384616  0.207587381  -0.000197488126\\
 0.296   0.207538462  0.207737299  -0.000193855248\\
 0.300   0.207692308  0.207887362  -0.000190346747\\
}\datatable

\addplot[thick, color1]
    table[x=U, y=Xo_true]{\datatable};
\addlegendentry{Nonsmooth dynamics}

\addplot[thick, color2]
    table[x=U, y=Xo]{\datatable};
\addlegendentry{Smoothed dynamics}

\pgfplotstableread[row sep=crcr]{
U        Xo        \\
0.0555   0.20000000\\
0.2050    0.20390551\\
}\datatable

\addplot[thick, color4]
    table[x=U, y=Xo]{\datatable};
\addlegendentry{\shortstack[l]{Smoothed dynamics\\[-4pt]linearized}}

\end{axis}

\end{tikzpicture}
      \label{fig:1d-pusher-dynamics-linearized-TOCTR}
    }
    \subfloat[]{%
      \hspace*{-3.6mm}
      \begin{tikzpicture}

\definecolor{color1}{RGB}{228,26,28}
\definecolor{color2}{RGB}{55,126,184}
\definecolor{color3}{RGB}{77,175,74}
\definecolor{color4}{RGB}{152,78,163}

\begin{axis}[
width=5.4cm, height=3.3cm,
xlabel={\small $v$},
ylabel={\small $z_o^+$},
xmin=-0.1, xmax=0.3,
ymin=0.1995, ymax=0.2080,
xtick=\empty,
ytick=\empty,
legend cell align={left},
legend style={
    font=\footnotesize,
    at={(0.00,1.00)},
    anchor=north west,
    draw=none,
    fill=none,
    row sep=-2pt,
    inner sep=0.5pt,
},
legend image post style={scale=0.6},
]

\pgfplotstableread[row sep=crcr]{
U        Xo_true      Xo           Eo             \\
-0.100   0.200000000  0.200195054  -0.000190346747\\
-0.096   0.200000000  0.200198837  -0.000193855248\\
-0.092   0.200000000  0.200202766  -0.000197488126\\
-0.088   0.200000000  0.200206849  -0.000201251652\\
-0.084   0.200000000  0.200211095  -0.000205152485\\
-0.080   0.200000000  0.200215513  -0.000209197696\\
-0.076   0.200000000  0.200220115  -0.000213394800\\
-0.072   0.200000000  0.200224912  -0.000217751778\\
-0.068   0.200000000  0.200229914  -0.000222277112\\
-0.064   0.200000001  0.200235137  -0.000226979812\\
-0.060   0.200000001  0.200240594  -0.000231869448\\
-0.056   0.200000001  0.200246300  -0.000236956180\\
-0.052   0.200000001  0.200252272  -0.000242250792\\
-0.048   0.200000001  0.200258529  -0.000247764715\\
-0.044   0.200000001  0.200265090  -0.000253510063\\
-0.040   0.200000001  0.200271977  -0.000259499648\\
-0.036   0.200000002  0.200279213  -0.000265747002\\
-0.032   0.200000002  0.200286826  -0.000272266385\\
-0.028   0.200000002  0.200294842  -0.000279072786\\
-0.024   0.200000003  0.200303293  -0.000286181904\\
-0.020   0.200000003  0.200312213  -0.000293610111\\
-0.016   0.200000000  0.200321640  -0.000301374389\\
-0.012   0.200000000  0.200331615  -0.000309492231\\
-0.008   0.200000000  0.200342182  -0.000317981503\\
-0.004   0.200000000  0.200353394  -0.000326860229\\
 0.000   0.200000000  0.200365304  -0.000336146325\\
 0.004   0.200000000  0.200377974  -0.000345857232\\
 0.008   0.200000000  0.200391472  -0.000356009443\\
 0.012   0.200000000  0.200405874  -0.000366617788\\
 0.016   0.200000000  0.200421262  -0.000377694787\\
 0.020   0.200000000  0.200437728  -0.000389249472\\
 0.024   0.200000000  0.200455375  -0.000401286177\\
 0.028   0.200000001  0.200474315  -0.000413802944\\
 0.032   0.200000001  0.200494673  -0.000426789600\\
 0.036   0.200000001  0.200516587  -0.000440225453\\
 0.040   0.200000001  0.200540209  -0.000454076609\\
 0.044   0.200000002  0.200565704  -0.000468292906\\
 0.048   0.200000003  0.200593255  -0.000482804550\\
 0.052   0.200000005  0.200623058  -0.000497518595\\
 0.056   0.200000008  0.200655325  -0.000512315520\\
 0.060   0.200000000  0.200690282  -0.000527046277\\
 0.064   0.200000000  0.200728168  -0.000541530361\\
 0.068   0.200000000  0.200769231  -0.000555555556\\
 0.072   0.200000001  0.200813723  -0.000568880124\\
 0.076   0.200000002  0.200861896  -0.000581238194\\
 0.080   0.200000005  0.200913996  -0.000592348878\\
 0.084   0.200000016  0.200970250  -0.000601929265\\
 0.088   0.200000001  0.201030863  -0.000609710761\\
 0.092   0.200000006  0.201096006  -0.000615457455\\
 0.096   0.200000002  0.201165807  -0.000618984461\\
 0.100   0.200002073  0.201240347  -0.000620173673\\
 0.104   0.200153848  0.201319653  -0.000618984461\\
 0.108   0.200307698  0.201403698  -0.000615457455\\
 0.112   0.200461539  0.201492401  -0.000609710761\\
 0.116   0.200615401  0.201585634  -0.000601929265\\
 0.120   0.200769236  0.201683226  -0.000592348874\\
 0.124   0.200923079  0.201784973  -0.000581238170\\
 0.128   0.201076924  0.201890646  -0.000568880124\\
 0.132   0.201230770  0.202000000  -0.000555555556\\
 0.136   0.201384616  0.202112783  -0.000541530361\\
 0.140   0.201538462  0.202228744  -0.000527046277\\
 0.144   0.201692316  0.202347632  -0.000512315519\\
 0.148   0.201846159  0.202469212  -0.000497518592\\
 0.152   0.202000003  0.202593255  -0.000482804538\\
 0.156   0.202153848  0.202719550  -0.000468292815\\
 0.160   0.202307694  0.202847901  -0.000454076609\\
 0.164   0.202461540  0.202978126  -0.000440225453\\
 0.168   0.202615385  0.203110058  -0.000426789600\\
 0.172   0.202769231  0.203243546  -0.000413802944\\
 0.176   0.202923077  0.203378452  -0.000401286175\\
 0.180   0.203076923  0.203514651  -0.000389249472\\
 0.184   0.203230769  0.203652031  -0.000377694629\\
 0.188   0.203384616  0.203790489  -0.000366617788\\
 0.192   0.203538462  0.203929934  -0.000356009427\\
 0.196   0.203692308  0.204070282  -0.000345857232\\
 0.200   0.203846154  0.204211458  -0.000336146322\\
 0.204   0.204000000  0.204353394  -0.000326860225\\
 0.208   0.204153846  0.204496028  -0.000317981456\\
 0.212   0.204307692  0.204639307  -0.000309492230\\
 0.216   0.204461539  0.204783178  -0.000301374387\\
 0.220   0.204615385  0.204927598  -0.000293610110\\
 0.224   0.204769234  0.205072524  -0.000286181903\\
 0.228   0.204923079  0.205217919  -0.000279072786\\
 0.232   0.205076925  0.205363749  -0.000272266326\\
 0.236   0.205230771  0.205509983  -0.000265746926\\
 0.240   0.205384617  0.205656592  -0.000259499648\\
 0.244   0.205538463  0.205803551  -0.000253510063\\
 0.248   0.205692309  0.205950836  -0.000247764715\\
 0.252   0.205846155  0.206098426  -0.000242250792\\
 0.256   0.206000001  0.206246300  -0.000236956180\\
 0.260   0.206153847  0.206394440  -0.000231869448\\
 0.264   0.206307693  0.206542829  -0.000226979812\\
 0.268   0.206461539  0.206691453  -0.000222277112\\
 0.272   0.206615385  0.206840296  -0.000217751776\\
 0.276   0.206769231  0.206989346  -0.000213394790\\
 0.280   0.206923077  0.207138590  -0.000209197652\\
 0.284   0.207076923  0.207288018  -0.000205152417\\
 0.288   0.207230770  0.207437618  -0.000201251652\\
 0.292   0.207384616  0.207587381  -0.000197488126\\
 0.296   0.207538462  0.207737299  -0.000193855248\\
 0.300   0.207692308  0.207887362  -0.000190346747\\
}\datatable

\addplot[thick, color1, forget plot]
    table[x=U, y=Xo_true]{\datatable};

\addplot[thick, color2, forget plot]
    table[x=U, y=Xo]{\datatable};

\pgfplotstableread[row sep=crcr]{
U        Xo           Eo       \\
0.083   0.20071143  -0.00070514\\
0.167   0.20291063  -0.00045136\\
}\datatable

\addplot[thick, color4, forget plot]
    table[x=U, y=Xo]{\datatable};

\addplot[name path=lower, draw=none, forget plot]
    table[x=U, y expr=\thisrow{Xo} + \thisrow{Eo}]{\datatable};

\addplot[name path=upper, draw=none, forget plot]
    table[x=U, y expr=\thisrow{Xo} + 2*\thisrow{Eo}]{\datatable};

\addplot[fill=color3, fill opacity=0.4, draw=none, legend image code/.code={
        \draw[fill=color3, fill opacity=0.4, draw=none] (0cm,-0.1cm) rectangle (0.6cm,0.1cm);
    }]
    fill between[of=lower and upper];
\addlegendentry{\shortstack[l]{Smoothed dynamics\\[-4pt]linearized\\[-2pt]+ deviation bounds}}

\end{axis}

\end{tikzpicture}
      \label{fig:1d-pusher-dynamics-linearized-iSLS}
    }
    \caption{Linearization of the smoothed dynamics $z^+ = f_\kappa(z,v)$ for a 1D pusher as viewed by \textbf{(a)} \ac{TO-CTR} and \textbf{(b)} our method.}
    \label{fig:1d-pusher-dynamics-linearized}
\end{figure}
\begin{figure}[!t]
    \centering
    \vspace*{-2.5mm}
    \subfloat[]{%
      \hspace*{-1.5mm}
      \begin{tikzpicture}

\definecolor{color1}{RGB}{228,26,28}
\definecolor{color2}{RGB}{55,126,184}
\definecolor{color3}{RGB}{77,175,74}
\definecolor{color4}{RGB}{152,78,163}

\begin{axis}[
width=5.4cm, height=3.1cm,
xlabel={\small $t$},
ylabel={\small $x_o$},
xmin=0, xmax=4,
ymin=0.2, ymax=0.53,
xtick=\empty,
ytick=\empty,
legend cell align={left},
legend style={
    font=\footnotesize,
    at={(1.00,0.00)},
    anchor=south east,
    draw=none,
    fill=none,
    row sep=-2pt,
    inner sep=0.5pt,
},
legend image post style={scale=0.6},
]

\pgfplotstableread[row sep=crcr]{
t          Xo_nominal Xo_true   \\
0.         0.2        0.2       \\
0.02       0.2003653  0.2       \\
0.04       0.20737958 0.20680899\\
0.06       0.22128527 0.220626  \\
0.08       0.23940491 0.23868608\\
0.1        0.25981618 0.25904963\\
0.12       0.28119443 0.2803854 \\
0.14       0.30265348 0.30180387\\
0.16       0.32362234 0.32273204\\
0.18       0.34375211 0.34281963\\
0.2        0.36284572 0.36186853\\
0.22       0.38080433 0.37977905\\
0.24       0.39758538 0.39650786\\
0.26       0.41316848 0.41203367\\
0.28       0.42752554 0.42632723\\
0.3        0.44059273 0.43932277\\
0.32       0.4522416  0.45088841\\
0.34       0.46224789 0.460793  \\
0.36       0.47025906 0.4686681 \\
0.38       0.4757758  0.47396716\\
0.4        0.47829677 0.47594743\\
0.42       0.47887617 0.47594743\\
0.44       0.47915306 0.47594743\\
0.46       0.47934834 0.47594744\\
0.48       0.4795082  0.47594744\\
0.5        0.4796491  0.47594744\\
0.52       0.47977864 0.47594744\\
0.54       0.47990091 0.47594744\\
0.56       0.48001833 0.47594744\\
0.58       0.4801324  0.47594744\\
0.6        0.48024414 0.47594744\\
0.62       0.48035419 0.47594744\\
0.64       0.48046304 0.47594744\\
0.66       0.48057101 0.47594744\\
0.68       0.48067832 0.47594744\\
0.7        0.48078516 0.47594744\\
0.72       0.48089163 0.47594744\\
0.74       0.48099782 0.47594744\\
0.76       0.4811038  0.47594744\\
0.78       0.48120961 0.47594744\\
0.8        0.48131529 0.47594744\\
0.82       0.48142085 0.47594744\\
0.84       0.48152633 0.47594744\\
0.86       0.48163172 0.47594744\\
0.88       0.48173705 0.47594744\\
0.9        0.48184231 0.47594744\\
0.92       0.48194752 0.47594744\\
0.94       0.48205267 0.47594744\\
0.96       0.48215777 0.47594744\\
0.98       0.48226283 0.47594744\\
1.         0.48236784 0.47594744\\
1.02       0.4824728  0.47594744\\
1.04       0.48257772 0.47594744\\
1.06       0.48268259 0.47594744\\
1.08       0.48278742 0.47594744\\
1.1        0.48289221 0.47594744\\
1.12       0.48299696 0.47594744\\
1.14       0.48310166 0.47594744\\
1.16       0.48320632 0.47594744\\
1.18       0.48331095 0.47594744\\
1.2        0.48341552 0.47594744\\
1.22       0.48352006 0.47594744\\
1.24       0.48362456 0.47594744\\
1.26       0.48372901 0.47594744\\
1.28       0.48383343 0.47594744\\
1.3        0.48393781 0.47594744\\
1.32       0.48404214 0.47594744\\
1.34       0.48414643 0.47594744\\
1.36       0.48425069 0.47594744\\
1.38       0.4843549  0.47594744\\
1.4        0.48445907 0.47594744\\
1.42       0.48456321 0.47594744\\
1.44       0.4846673  0.47594744\\
1.46       0.48477136 0.47594744\\
1.48       0.48487537 0.47594744\\
1.5        0.48497935 0.47594744\\
1.52       0.48508329 0.47594744\\
1.54       0.48518718 0.47594744\\
1.56       0.48529104 0.47594744\\
1.58       0.48539487 0.47594744\\
1.6        0.48549865 0.47594744\\
1.62       0.48560239 0.47594744\\
1.64       0.4857061  0.47594744\\
1.66       0.48580976 0.47594744\\
1.68       0.48591339 0.47594744\\
1.7        0.48601698 0.47594744\\
1.72       0.48612054 0.47594744\\
1.74       0.48622405 0.47594744\\
1.76       0.48632753 0.47594744\\
1.78       0.48643097 0.47594744\\
1.8        0.48653438 0.47594744\\
1.82       0.48663774 0.47594744\\
1.84       0.48674107 0.47594744\\
1.86       0.48684436 0.47594744\\
1.88       0.48694762 0.47594744\\
1.9        0.48705084 0.47594744\\
1.92       0.48715402 0.47594744\\
1.94       0.48725717 0.47594744\\
1.96       0.48736028 0.47594744\\
1.98       0.48746335 0.47594744\\
2.         0.48756639 0.47594744\\
2.02       0.48766939 0.47594744\\
2.04       0.48777235 0.47594744\\
2.06       0.48787528 0.47594745\\
2.08       0.48797818 0.47594745\\
2.1        0.48808103 0.47594745\\
2.12       0.48818386 0.47594745\\
2.14       0.48828664 0.47594745\\
2.16       0.4883894  0.47594745\\
2.18       0.48849211 0.47594745\\
2.2        0.4885948  0.47594745\\
2.22       0.48869744 0.47594745\\
2.24       0.48880006 0.47594745\\
2.26       0.48890264 0.47594745\\
2.28       0.48900518 0.47594745\\
2.3        0.48910769 0.47594745\\
2.32       0.48921016 0.47594745\\
2.34       0.4893126  0.47594745\\
2.36       0.48941501 0.47594745\\
2.38       0.48951738 0.47594745\\
2.4        0.48961972 0.47594745\\
2.42       0.48972202 0.47594745\\
2.44       0.48982429 0.47594745\\
2.46       0.48992653 0.47594745\\
2.48       0.49002874 0.47594745\\
2.5        0.49013091 0.47594745\\
2.52       0.49023305 0.47594745\\
2.54       0.49033515 0.47594745\\
2.56       0.49043722 0.47594745\\
2.58       0.49053926 0.47594745\\
2.6        0.49064126 0.47594745\\
2.62       0.49074324 0.47594745\\
2.64       0.49084518 0.47594745\\
2.66       0.49094709 0.47594745\\
2.68       0.49104896 0.47594745\\
2.7        0.49115081 0.47594745\\
2.72       0.49125262 0.47594745\\
2.74       0.4913544  0.47594745\\
2.76       0.49145614 0.47594745\\
2.78       0.49155786 0.47594745\\
2.8        0.49165954 0.47594745\\
2.82       0.49176119 0.47594745\\
2.84       0.49186281 0.47594745\\
2.86       0.4919644  0.47594745\\
2.88       0.49206596 0.47594745\\
2.9        0.49216749 0.47594745\\
2.92       0.49226898 0.47594745\\
2.94       0.49237044 0.47594745\\
2.96       0.49247188 0.47594745\\
2.98       0.49257328 0.47594745\\
3.         0.49267465 0.47594745\\
3.02       0.49277599 0.47594745\\
3.04       0.4928773  0.47594745\\
3.06       0.49297858 0.47594745\\
3.08       0.49307983 0.47594745\\
3.1        0.49318105 0.47594745\\
3.12       0.49328223 0.47594745\\
3.14       0.49338339 0.47594745\\
3.16       0.49348452 0.47594745\\
3.18       0.49358562 0.47594745\\
3.2        0.49368669 0.47594745\\
3.22       0.49378773 0.47594745\\
3.24       0.49388874 0.47594745\\
3.26       0.49398972 0.47594745\\
3.28       0.49409067 0.47594745\\
3.3        0.49419159 0.47594745\\
3.32       0.49429248 0.47594745\\
3.34       0.49439334 0.47594745\\
3.36       0.49449418 0.47594745\\
3.38       0.49459498 0.47594745\\
3.4        0.49469576 0.47594745\\
3.42       0.4947965  0.47594745\\
3.44       0.49489722 0.47594745\\
3.46       0.49499791 0.47594745\\
3.48       0.49509857 0.47594745\\
3.5        0.4951992  0.47594745\\
3.52       0.49529981 0.47594745\\
3.54       0.49540038 0.47594745\\
3.56       0.49550093 0.47594745\\
3.58       0.49560145 0.47594745\\
3.6        0.49570194 0.47594745\\
3.62       0.49580241 0.47594745\\
3.64       0.49590284 0.47594745\\
3.66       0.49600325 0.47594745\\
3.68       0.49610363 0.47594745\\
3.7        0.49620398 0.47594745\\
3.72       0.49630431 0.47594745\\
3.74       0.49640461 0.47594745\\
3.76       0.49650488 0.47594745\\
3.78       0.49660512 0.47594745\\
3.8        0.49670534 0.47594745\\
3.82       0.49680553 0.47594745\\
3.84       0.49690569 0.47594745\\
3.86       0.49700582 0.47594745\\
3.88       0.49710593 0.47594745\\
3.9        0.49720602 0.47594745\\
3.92       0.49730607 0.47594745\\
3.94       0.4974061  0.47594745\\
3.96       0.49750611 0.47594745\\
3.98       0.49760609 0.47594745\\
4.         0.49770604 0.47594745\\
}\datatable

\addplot[semithick, densely dashed, forget plot] {0.5};

\addplot[thick, color2]
    table[x=t, y=Xo_nominal]{\datatable};
\addlegendentry{Nominal trajectory}

\addplot[thick, color1]
    table[x=t, y=Xo_true]{\datatable};
\addlegendentry{\shortstack[l]{Nonsmooth dynamics\\[-4pt]rollout}}

\end{axis}

\end{tikzpicture}
      \label{fig:1d-rollout-TOCTR}
    }
    \subfloat[]{%
      \hspace*{-2.1mm}
      \input{figures/1d-pusher-rollout-iSLS}
      \label{fig:1d-rollout-iSLS}
    }
    \caption{Rollouts of the 1D pusher example for pushing the object to reach a goal position (dashed line) using \textbf{(a)} \ac{TO-CTR} and \textbf{(b)} our method.}
    \label{fig:1d-pusher-rollout}
\end{figure}
Consider the 1D pusher–object system (\cref{fig:1d-pusher}). The linearized dynamics used by \ac{TO-CTR} and our method are shown in \cref{fig:1d-pusher-dynamics-linearized}. 
\ac{TO-CTR} assumes that the linearization of the smoothed dynamics accurately approximates the nonsmooth hybrid dynamics over control perturbations $\delta v$ satisfying the \ac{CTR} constraint $\lambda_\kappa + \tfrac{\partial \lambda_\kappa}{\partial v}\delta v \geq 0$ and $|\delta v| \leq \varepsilon$. As illustrated, this approximation error remains non-negligible within the assumed region.
In contrast, our method explicitly bounds the deviation between the smoothed and nonsmooth hybrid dynamics, guaranteeing that for all $|\delta v| \leq \varepsilon'$, the nonsmooth dynamics lies within a known deviation of the smoothed model.

This difference is evident in the rollouts (\cref{fig:1d-pusher-rollout}). \ac{TO-CTR} exploits a force-at-a-distance artifact of the smoothed dynamics to reach the goal, but fails when executed on the nonsmooth dynamics (\cref{fig:1d-rollout-TOCTR}). Our method compensates for this discrepancy during optimization, overshooting the nominal trajectory so that rollout on the nonsmooth hybrid dynamics reaches the goal (\cref{fig:1d-rollout-iSLS}).
\end{example}

\section{Experiment Results and Discussion}
\label{sec:results}

\begin{table}[t]
    \caption{Task parameters}
    \label{tab:task-parameters}
    \centering
    \setlength{\tabcolsep}{5pt}
    \vspace*{-3mm}
    \begin{tabular}{l | c c c c}
      \hline
                             & \makecell{Time step\\size (s)} & \makecell{Time\\steps} & \makecell{Dynamics\\smoothing} & \makecell{Geometry\\smoothing} \\
      \hline
      Bimanual Planar Bucket & 0.02 & 100 & $10^{-3}$ & $0$ \\
      Bimanual Planar Box    & 0.02 & 100 & $10^{-3}$ & $10^{-3}$ \\
      In-hand cube           & 0.02 &  50 & $10^{-4}$ & $0$ \\
      \hline
    \end{tabular}
\end{table}
Through a series of simulated and real-world experiments, we seek to answer the following research questions (RQs):
\begin{enumerate}[label=\textbf{RQ\arabic*.}, leftmargin=*]
\item Do closed-loop rollouts of the nonsmooth hybrid dynamics remain within the predicted tube?
\item Does our method achieve lower goal-reaching error than the baseline while strictly enforcing constraints?
\item Does performance degrade due to strict constraint enforcement?
\item Is contact geometry smoothing necessary in practice?
\item Can our method scale to high-dimensional systems?
\end{enumerate}
Parameters for the experimented tasks are listed in \cref{tab:task-parameters}. We perform all computations on a desktop computer with an Intel Core Ultra 9 285K processor and 64 GB RAM.

\subsection{Bimanual Planar Bucket Manipulation}

\begin{figure}[!t]
    \centering
    \vspace{5mm}
    \subfloat[]{%
        \adjincludegraphics[height=2.0cm, trim={{.04\width} {.16\height} {.14\width} {.15\height}}, clip]{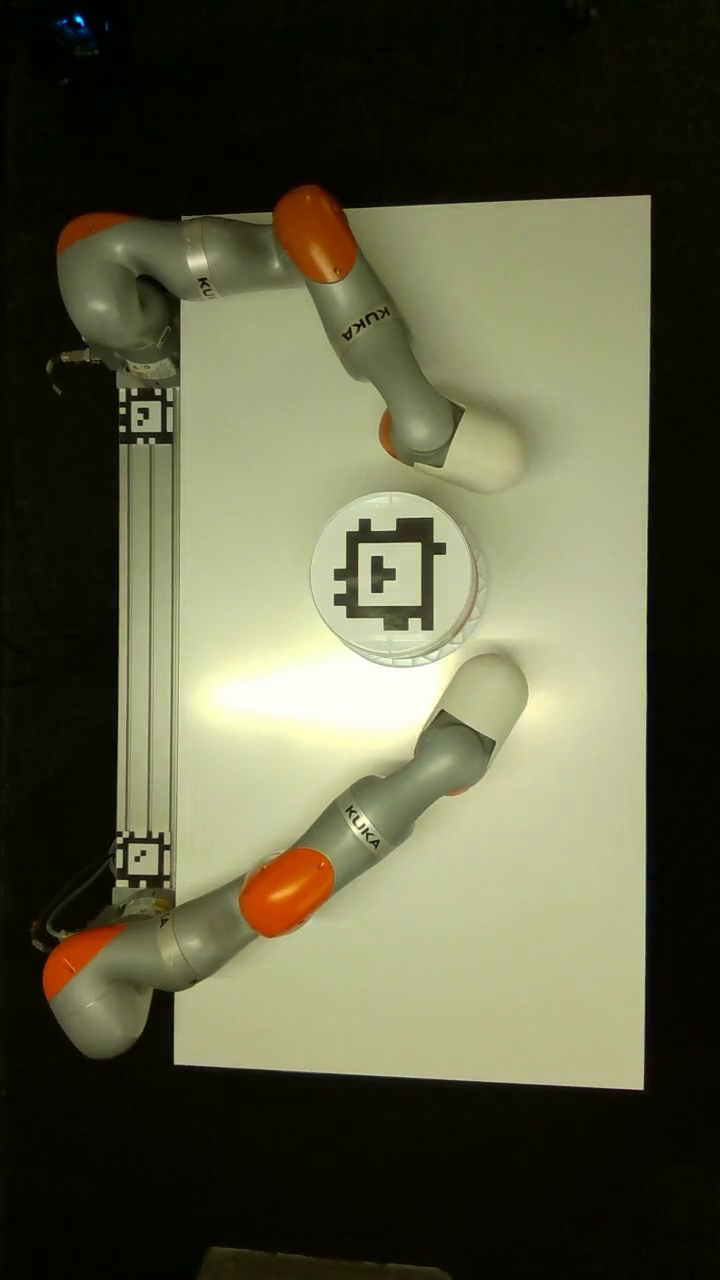}
        \adjincludegraphics[height=2.0cm, trim={{.04\width} {.16\height} {.14\width} {.15\height}}, clip]{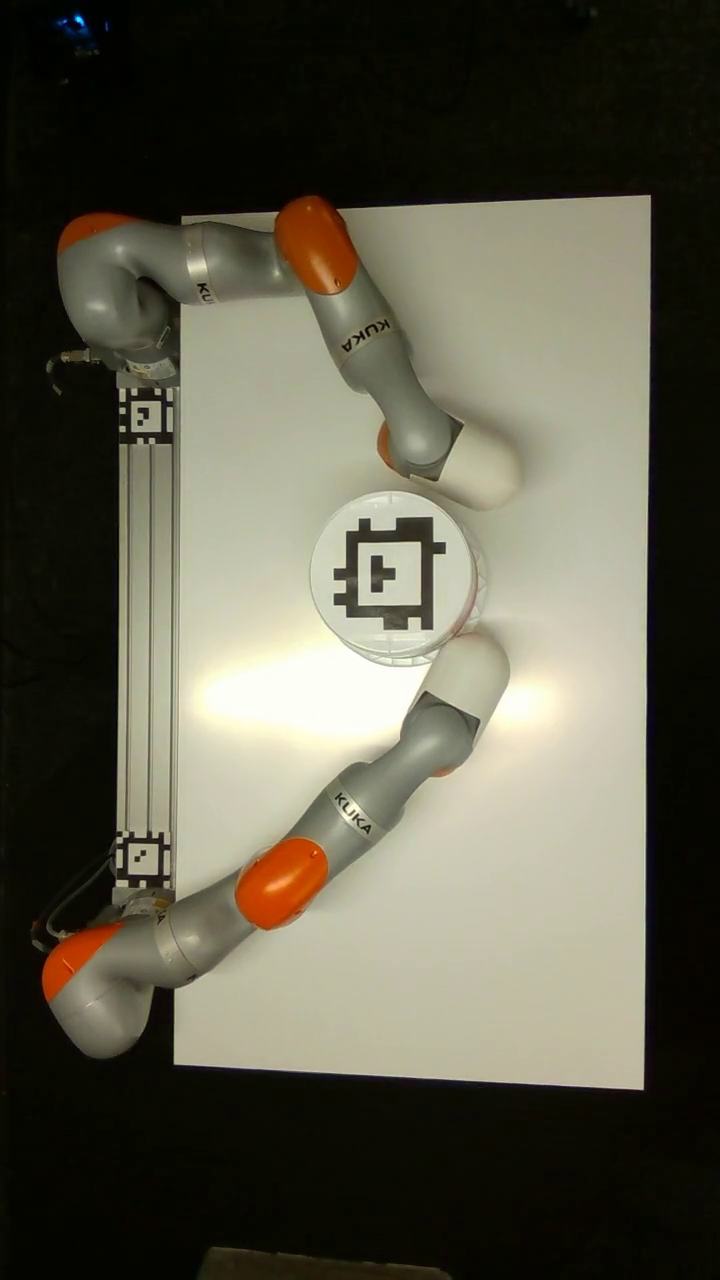}
        \adjincludegraphics[height=2.0cm, trim={{.04\width} {.16\height} {.14\width} {.15\height}}, clip]{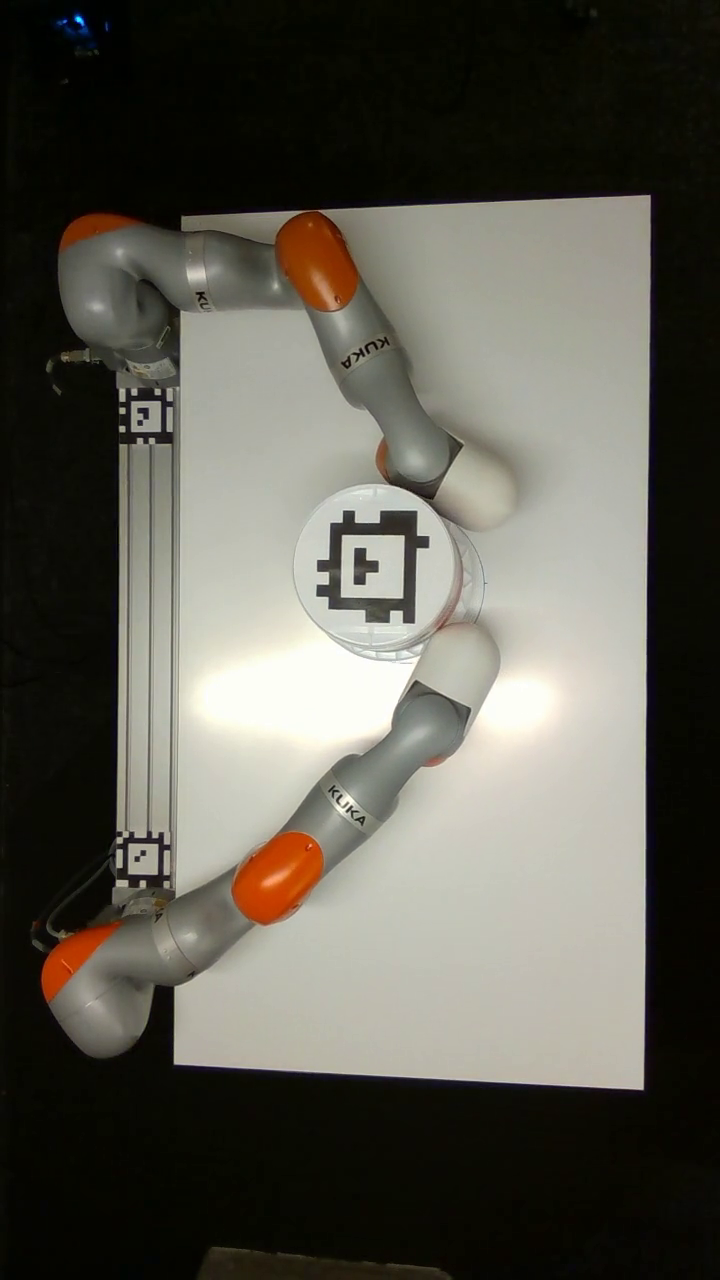}
        \adjincludegraphics[height=2.0cm, trim={{.04\width} {.16\height} {.14\width} {.15\height}}, clip]{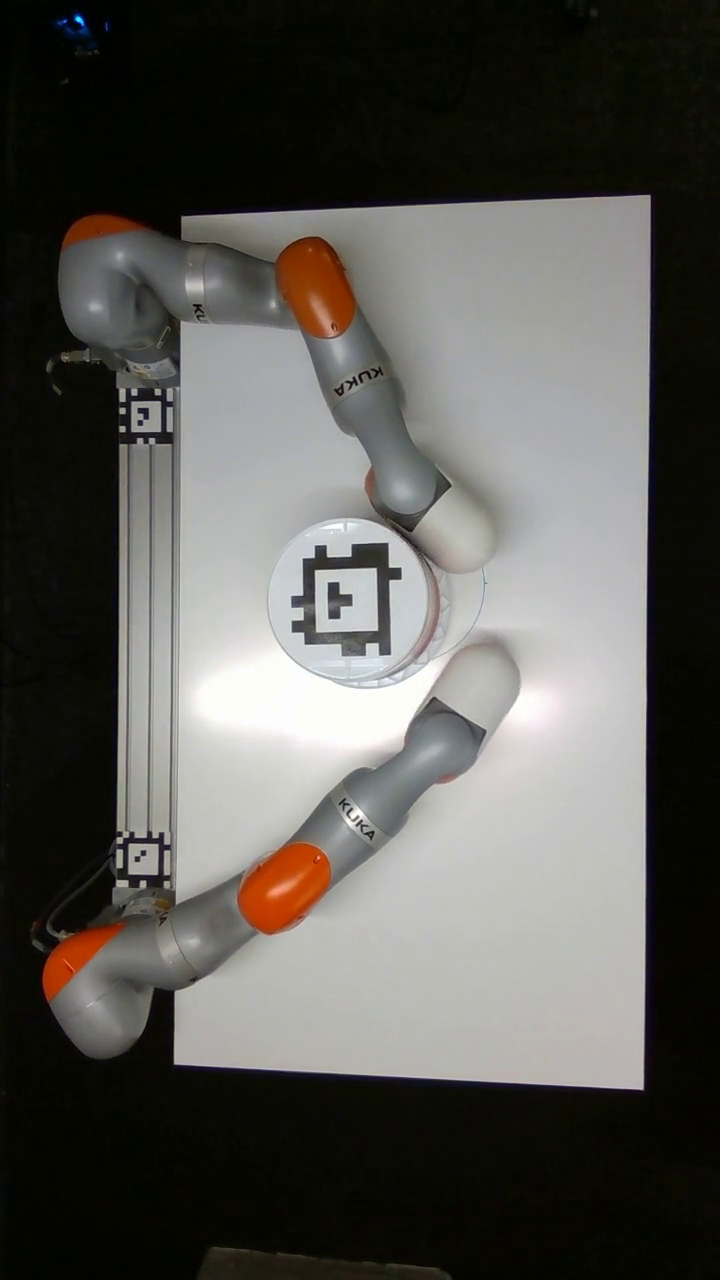}
        \adjincludegraphics[height=2.0cm, trim={{.04\width} {.16\height} {.14\width} {.15\height}}, clip]{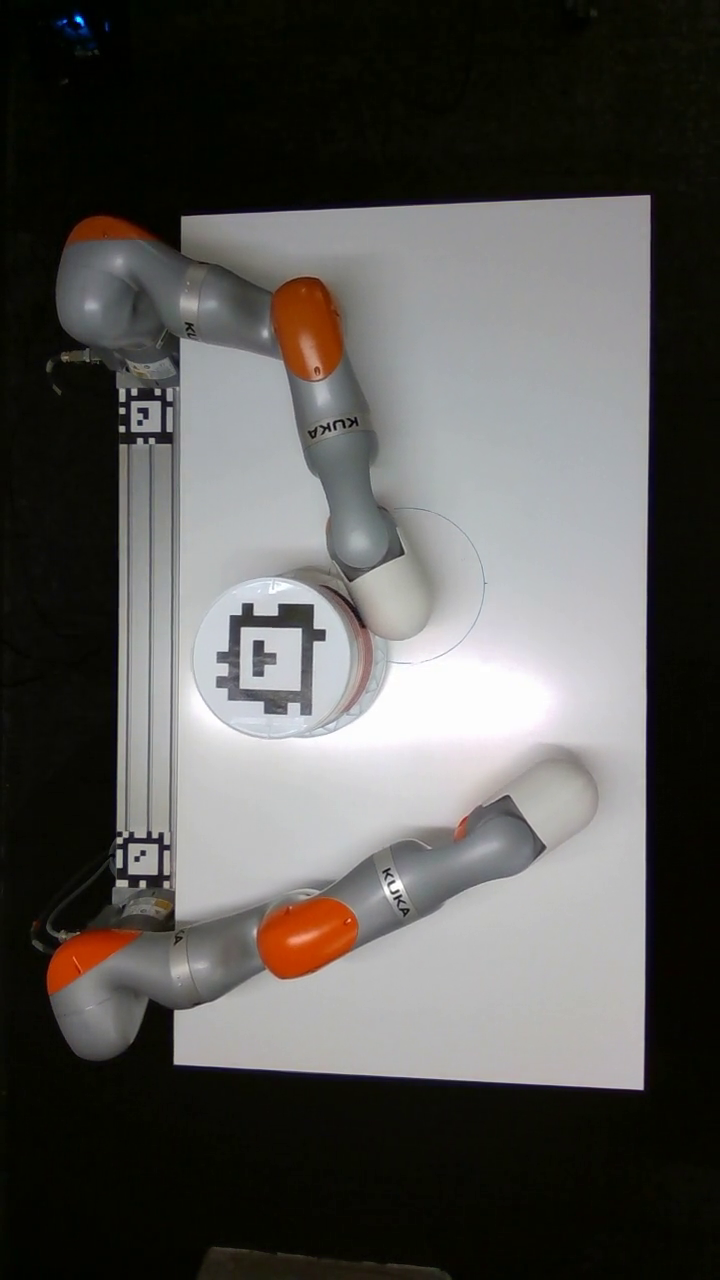}
        \adjincludegraphics[height=2.0cm, trim={{.04\width} {.16\height} {.14\width} {.15\height}}, clip]{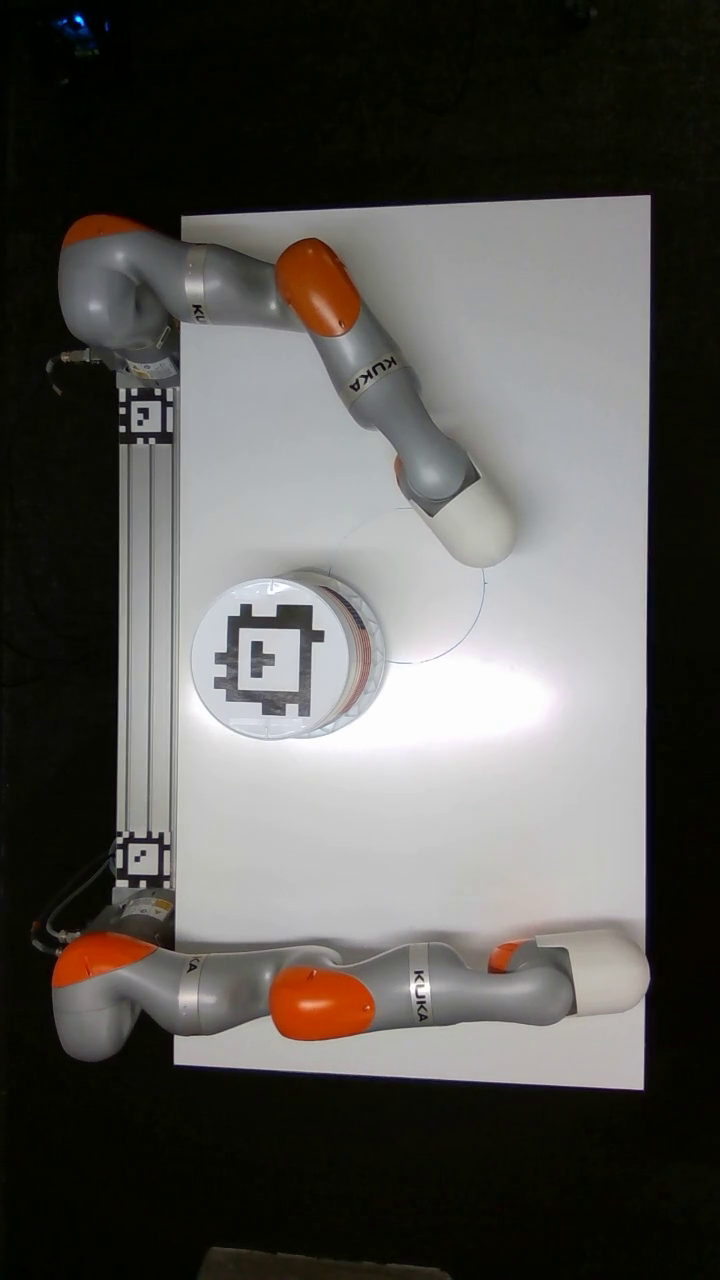}
    } \\
    \subfloat[]{%
        \hspace*{-2mm}
        \input{figures/iiwa-bucket-x} \hspace*{-5mm}
        \input{figures/iiwa-bucket-y} \hspace*{-5mm}
        \input{figures/iiwa-bucket-theta}
    }
    \caption{Bimanual planar bucket manipulation on hardware. \textbf{(a)} Keyframes from the executed rollout. \textbf{(b)} Bucket state trajectory and predicted tube.}
    \label{fig:iiwa-bucket-rollout}
\end{figure}

\label{sec:planar-bucket}
\begin{table}[t]
    \caption{Comparison of algorithms on bimanual bucket manipulation}
    \label{tab:algorithms-comparison}
    \centering
    \setlength{\tabcolsep}{3pt}
    \vspace*{-3mm}
    \begin{tabular}{l c c c c}
      \hline
      & \textbf{TO-CTR} & \textbf{Ours} & \textbf{\makecell{TO-CTR\\with MPC}} & \textbf{\makecell{Ours\\with MPC}} \\
      \hline
      MPC horizon                            &    -   &    -   &    1   &    1   \\
      Goal position error (m)                & 0.1522 & 0.1447 & 0.1558 & 0.1538 \\
      Goal angle error (rad)                 & 0.1034 & 0.0946 & 0.0985 & 0.0987 \\
      Nominal traj. $(\vec{z},\vec{v})$ cost & 313.51 & 299.43 &    -   &    -   \\
      Real traj. $(\vec{x},\vec{u})$ cost    & 314.66 & 296.41 & 318.02 & 309.39 \\
      Constraint violating rollouts          & 3.1\%  & 0.0\%  & 1.2\%  & 0.0\%  \\
      Compute time (s)                       & 88.2   & 85.9   & 6.36   & 6.99   \\
      \hline
    \end{tabular}
\end{table}

\begin{figure}[t]
    \centering
    \subfloat[\hspace*{-8.3mm}]{%
      \begin{tikzpicture}

\definecolor{color1}{RGB}{127,201,127}
\definecolor{color2}{RGB}{190,174,212}

\begin{axis}[
width=5cm,
height=4cm,
xmin=0.0, xmax=0.6,
ymin=0.0, ymax=80,
xlabel={\footnotesize Position + angle goal error},
ylabel={\footnotesize Number of samples},     
tick label style={font=\footnotesize},
legend cell align={left},
legend style={
    font=\footnotesize,
    at={(1.00,1.00)},
    anchor=north east,
    draw=none,
    fill=none,
    row sep=-2pt,
    inner sep=0pt,
},
ylabel shift=-1mm,
]

\addplot+[
    ybar interval,
    fill=color1,
    draw=none,
    fill opacity=0.5,
    legend image code/.code={
        \draw[fill=color1, fill opacity=0.5, draw=none] (0cm,-0.05cm) rectangle (0.3cm,0.05cm);
    },
] 
coordinates {
(0.01, 1)
(0.03, 10)
(0.05, 19)
(0.07, 24)
(0.09, 34)
(0.11, 39)
(0.13, 48)
(0.15, 56)
(0.17, 73)
(0.19, 70)
(0.21, 62)
(0.23, 59)
(0.25, 68)
(0.27, 45)
(0.29, 55)
(0.31, 49)
(0.33, 36)
(0.35, 36)
(0.37, 34)
(0.39, 36)
(0.41, 28)
(0.43, 43)
(0.45, 18)
(0.47, 20)
(0.49, 13)
(0.51, 10)
(0.53, 5)
(0.55, 3)
(0.57, 4)
(0.59, 2)
};
\addlegendentry{TO-CTR}

\addplot+[
    ybar interval,
    fill=color2,
    draw=none,
    fill opacity=0.5,
    legend image code/.code={
        \draw[fill=color2, fill opacity=0.5, draw=none] (0cm,-0.05cm) rectangle (0.3cm,0.05cm);
    },
] 
coordinates {
(0.01, 2)
(0.03, 10)
(0.05, 25)
(0.07, 24)
(0.09, 39)
(0.11, 49)
(0.13, 62)
(0.15, 72)
(0.17, 68)
(0.19, 73)
(0.21, 61)
(0.23, 68)
(0.25, 54)
(0.27, 59)
(0.29, 48)
(0.31, 40)
(0.33, 37)
(0.35, 39)
(0.37, 36)
(0.39, 32)
(0.41, 32)
(0.43, 18)
(0.45, 20)
(0.47, 10)
(0.49, 10)
(0.51, 5)
(0.53, 2)
(0.55, 3)
(0.57, 2)
(0.59, 0)
};
\addlegendentry{Ours}

\addplot[color1, very thick, densely dashed] coordinates {(0.2586, 0) (0.2586, 1000)};
\addplot[color2, very thick, densely dashed] coordinates {(0.2393, 0) (0.2393, 1000)};

\end{axis}

\end{tikzpicture}
      \label{fig:planar-iiwa-noreplan-histogram}
    }
    \subfloat[]{%
      \hspace*{-4.5mm}
      \begin{tikzpicture}

\definecolor{color1}{RGB}{127,201,127}
\definecolor{color2}{RGB}{190,174,212}

\begin{axis}[
width=5cm,
height=4cm,
xmin=0.0, xmax=0.6,
ymin=0.0, ymax=80,
xlabel={\footnotesize Position + angle goal error},
tick label style={font=\footnotesize},
legend cell align={left},
legend style={
    font=\footnotesize,
    at={(1.00,1.00)},
    anchor=north east,
    draw=none,
    fill=none,
    row sep=-2pt,
    inner sep=0pt,
},
]

\addplot+[
    ybar interval,
    fill=color1,
    draw=none,
    fill opacity=0.5,
    legend image code/.code={
        \draw[fill=color1, fill opacity=0.5, draw=none] (0cm,-0.05cm) rectangle (0.3cm,0.05cm);
    },
] 
coordinates {
(0.01, 2)
(0.03, 10)
(0.05, 22)
(0.07, 24)
(0.09, 38)
(0.11, 44)
(0.13, 45)
(0.15, 58)
(0.17, 68)
(0.19, 77)
(0.21, 57)
(0.23, 56)
(0.25, 63)
(0.27, 50)
(0.29, 47)
(0.31, 40)
(0.33, 47)
(0.35, 39)
(0.37, 37)
(0.39, 36)
(0.41, 27)
(0.43, 37)
(0.45, 20)
(0.47, 16)
(0.49, 13)
(0.51, 10)
(0.53, 9)
(0.55, 3)
(0.57, 3)
(0.59, 2)
};
\addlegendentry{TO-CTR}

\addplot+[
    ybar interval,
    fill=color2,
    draw=none,
    fill opacity=0.5,
    legend image code/.code={
        \draw[fill=color2, fill opacity=0.5, draw=none] (0cm,-0.05cm) rectangle (0.3cm,0.05cm);
    },
] 
coordinates {
(0.01, 5)
(0.03, 10)
(0.05, 19)
(0.07, 23)
(0.09, 39)
(0.11, 43)
(0.13, 49)
(0.15, 54)
(0.17, 76)
(0.19, 70)
(0.21, 59)
(0.23, 57)
(0.25, 63)
(0.27, 47)
(0.29, 48)
(0.31, 46)
(0.33, 43)
(0.35, 39)
(0.37, 35)
(0.39, 37)
(0.41, 24)
(0.43, 37)
(0.45, 20)
(0.47, 18)
(0.49, 13)
(0.51, 11)
(0.53, 7)
(0.55, 3)
(0.57, 3)
(0.59, 2)
};
\addlegendentry{Ours}

\addplot[color1, very thick, densely dashed] coordinates {(0.2543, 0) (0.2543, 1000)};
\addplot[color2, very thick, densely dashed] coordinates {(0.2525, 0) (0.2525, 1000)};

\end{axis}

\end{tikzpicture}
      \label{fig:planar-iiwa-mpc-histogram}
    }
    \caption{Goal position error plus goal angle error for bimanual planar bucket manipulation. \textbf{(a)} Without replanning. \textbf{(b)} With MPC.}
    \label{fig:planar-iiwa-histogram}
\end{figure}

For this task, the goal is to reorient and translate the bucket to a desired target pose. We first evaluate our method on bimanual hardware, using two Kuka iiwa 7 arms, each acting as a 3-DoF finger (Fig.~\ref{fig:iiwa-bucket-rollout}).
In the hardware rollout, the bucket reaches the goal configuration with only a small steady-state error, in spite of sim-to-real modeling errors such as mass and joint stiffness. We believe this robustness is due in part to the deployed policy \eqref{eq:affine-policy} having an integral structure, which enables the rejection of constant disturbances.
Furthermore, aside from noise in the position estimates, the bucket remains largely within the computed tubes, affirming \textbf{RQ1}. Strict containment can be achieved by inflating $E_\kappa(x,u)$ with an additional bound to account for perception noise \cite{antoine2026visionsls, chou2022safe}.

We also use this task to compare against the baseline (Sec.~\ref{sec:baseline-ctr}) in simulation. We sampled a set of initial and goal states and computed a trajectory or policy using both the baseline and our method. Samples deemed infeasible by either algorithm were discarded. A quantitative comparison between our method and the baseline is presented in \cref{tab:algorithms-comparison} and \cref{fig:planar-iiwa-histogram}. All reported metrics are averaged over 1000 experiments.

Without replanning, our method achieves lower average goal position and orientation error than both the open-loop baseline and the baseline with \ac{MPC}. We therefore confirm \textbf{RQ2}: our method improves goal-reaching accuracy, although the improvement is modest. This is likely because the planning horizon is not long enough for smoothing errors to accumulate sufficiently to invalidate the \ac{TO-CTR} trajectory.

Although our method can also be executed in an \ac{MPC} fashion, it provides no clear advantage over \ac{TO-CTR} with \ac{MPC}, aside from reduced constraint violation. Because our method already yields an affine feedback policy that captures the unilateral nature of contact and enables inexpensive online rollouts, additional replanning through \ac{MPC} is largely unnecessary unless the system encounters additional disturbances not accounted for during planning.

\cref{tab:algorithms-comparison} also indicates that several \ac{TO-CTR} rollouts on the nonsmooth hybrid dynamics result in constraint violations. These violations manifest either as robot collisions (\cref{fig:iiwa-bucket-collision}) or as the bucket falling off the supporting table (\cref{fig:iiwa-bucket-fall}). In both cases, the nominal trajectories generated by \ac{TO-CTR} are feasible under the smoothed model; however, executing the nominal controls on the nonsmooth hybrid dynamics leads to constraint violations.

We also compare against a method that uses exact dynamics in the forward pass and smoothed gradients in the backward pass, as in \cite{kim2025contact} (\cref{fig:iiwa-bucket-fallfree-iLQR}), which is likewise insufficient to guarantee constraint satisfaction. This failure arises because gradient bias prevents state constraints from being accurately translated into control constraints. Furthermore, the resulting linearization error is first-order in $\|\delta \bm{u}\|$. As a result, the local model is never sufficiently accurate, and reducing $\|\delta \bm{u}\|$ alone cannot recover feasibility guarantees.

In contrast, our method certifies constraint satisfaction for the nonsmooth hybrid dynamics. Furthermore, as shown in \cref{fig:iiwa-bucket-fallfree-SLS}, our method optimizes a policy that minimizes the reachable tube, thereby keeping the nominal trajectory and the true rollout closer than \ac{TO-CTR}.

We also address \textbf{RQ3}. Strict constraint enforcement does not degrade performance. In fact, our method achieves lower nominal trajectory cost, requiring smaller control effort and incurring lower state cost than the baseline. This suggests that the tube-based reasoning enables the optimizer to select contacts less conservatively. In contrast, the baseline’s \ac{CTR} condition appears overly conservative relative to the tube characterization, prematurely marking certain transitions as infeasible.

Overall, this experiment validates \textbf{RQ1–RQ3}. Closed-loop executions of the nonsmooth hybrid dynamics remain within the predicted tubes (RQ1). We obtain modest improvements in goal-reaching error without sacrificing performance (RQ2–RQ3), while providing a formal certificate of robust constraint satisfaction.

\begin{figure}[!t]
    \centering
    \subfloat[]{%
        \adjincludegraphics[height=1.4cm, angle=90, trim={{.09\width} {.07\height} {.08\width} 0}, clip]{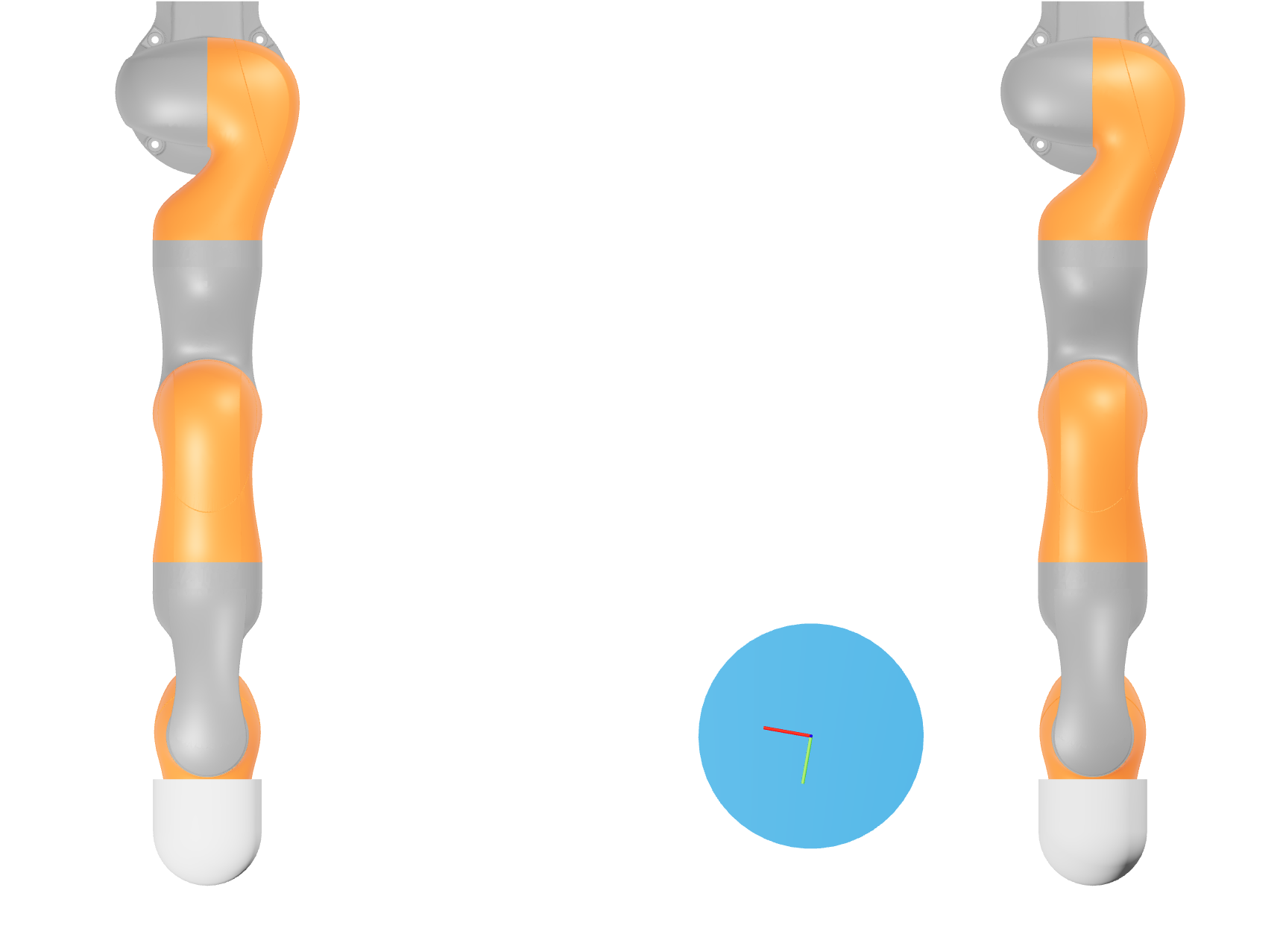}
        \adjincludegraphics[height=1.4cm, angle=90, trim={{.09\width} {.07\height} {.08\width} 0}, clip]{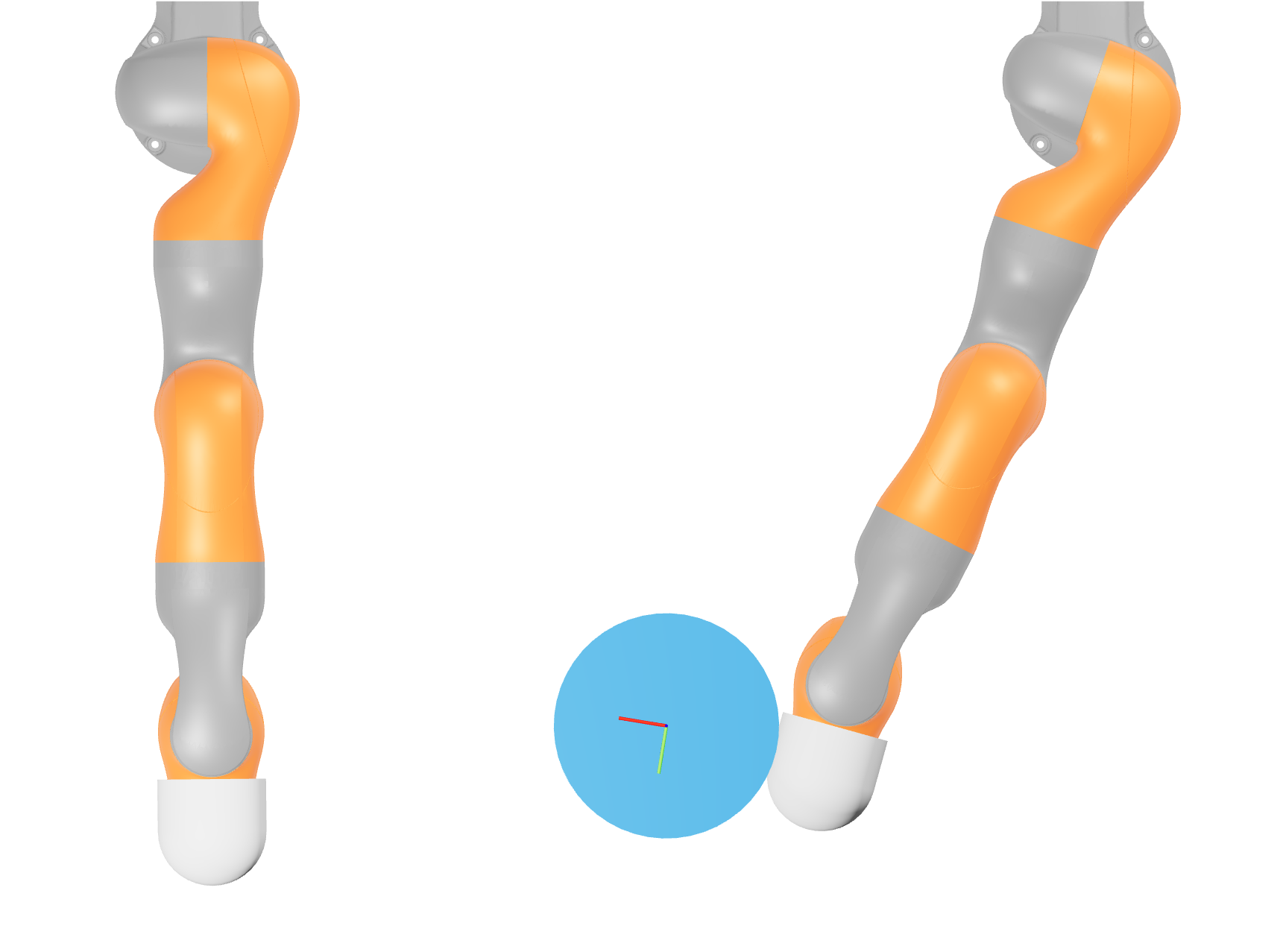}
        \adjincludegraphics[height=1.4cm, angle=90, trim={{.09\width} {.07\height} {.08\width} 0}, clip]{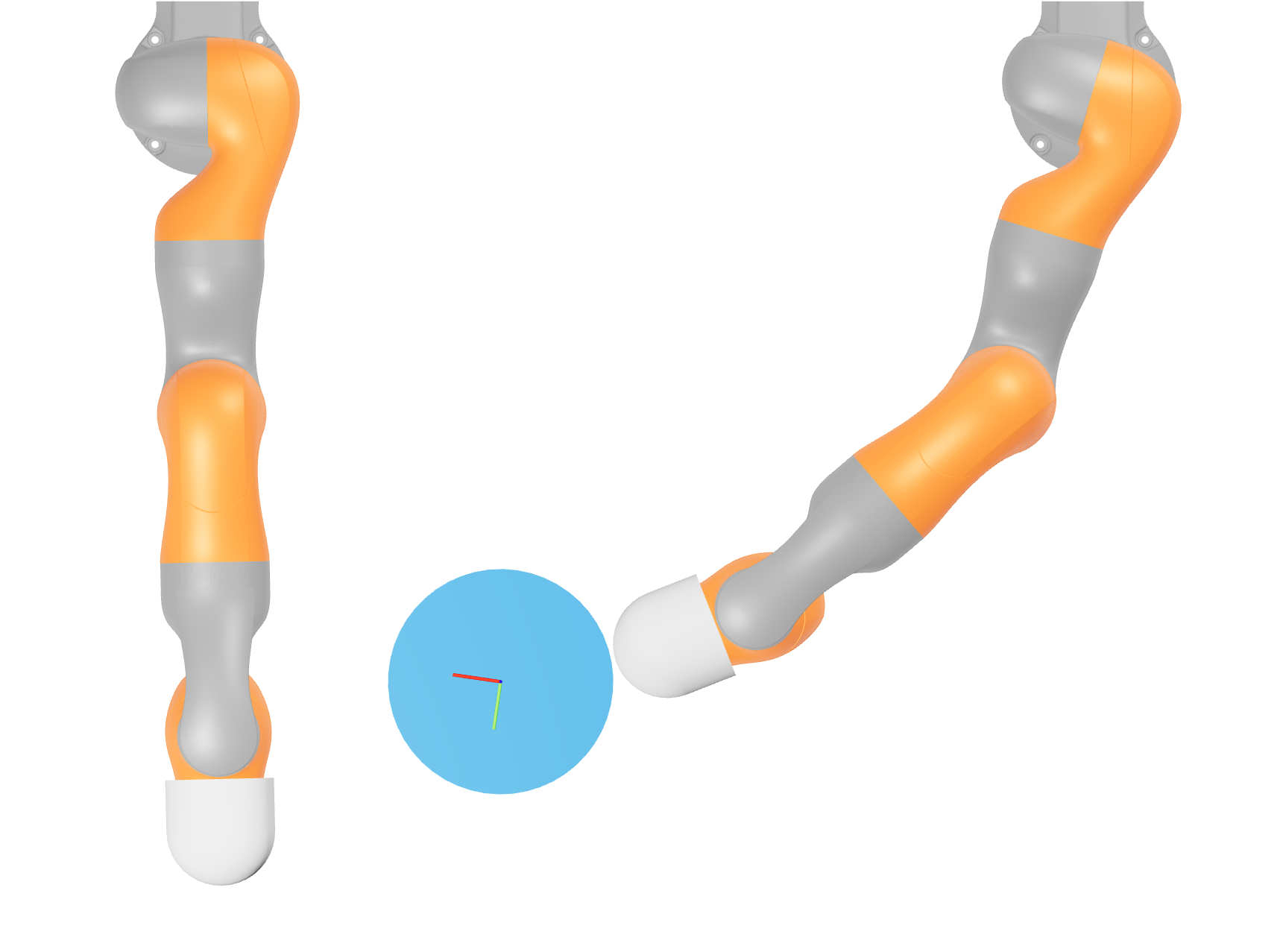}
        \adjincludegraphics[height=1.4cm, angle=90, trim={{.09\width} {.07\height} {.08\width} 0}, clip]{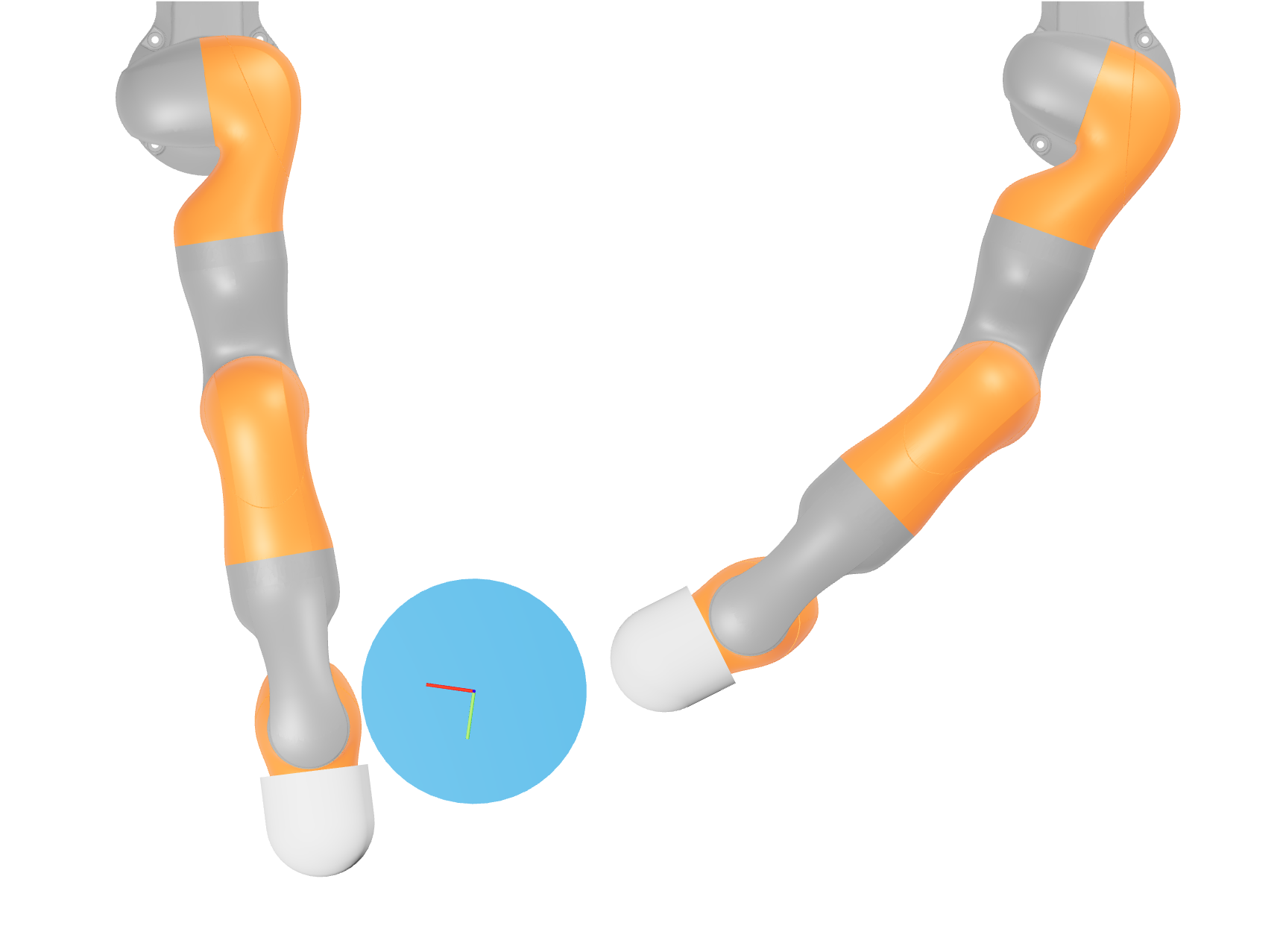}
        \adjincludegraphics[height=1.4cm, angle=90, trim={{.09\width} {.07\height} {.08\width} 0}, clip]{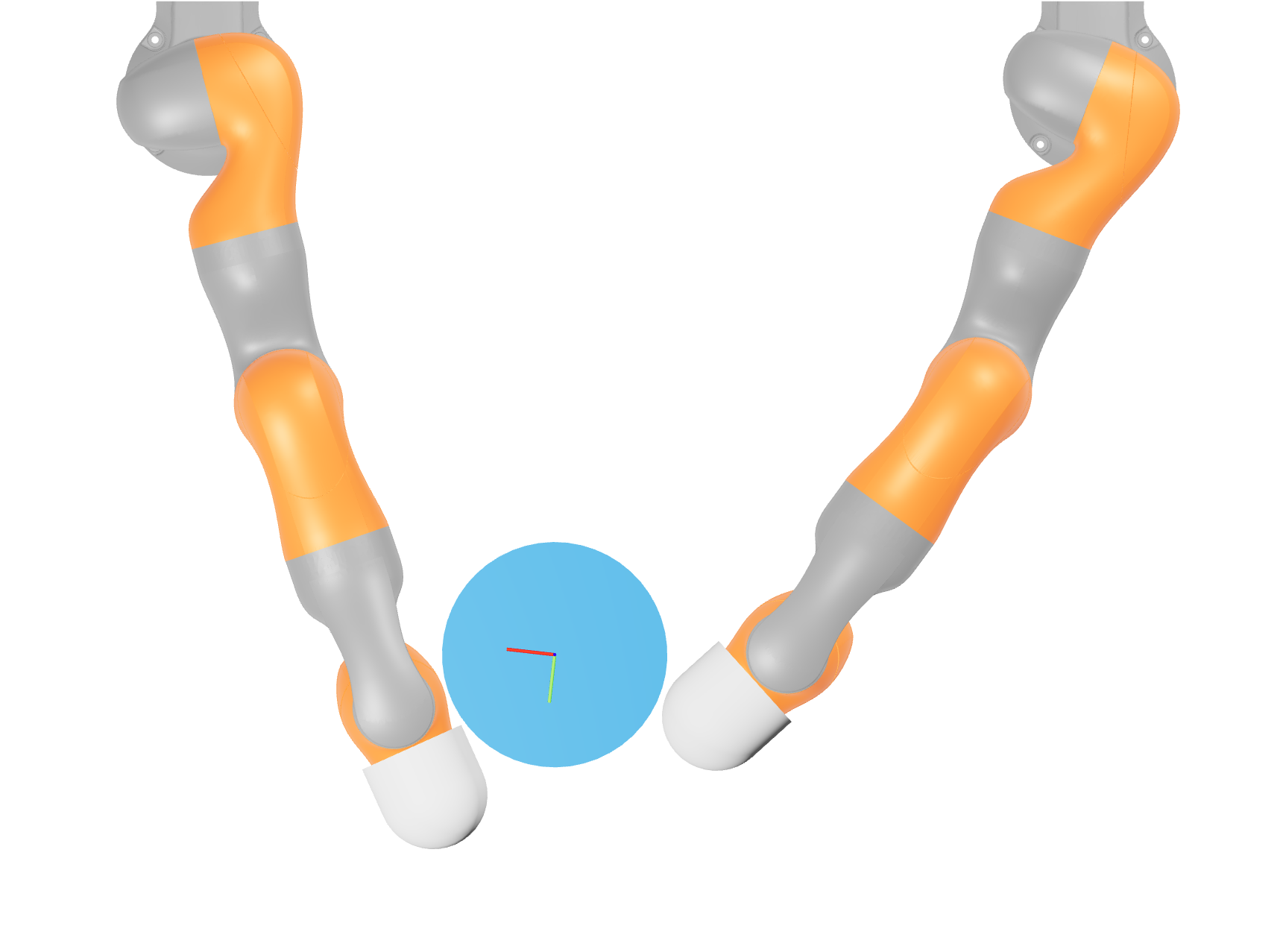}
        \adjincludegraphics[height=1.4cm, angle=90, trim={{.09\width} {.07\height} {.08\width} 0}, clip]{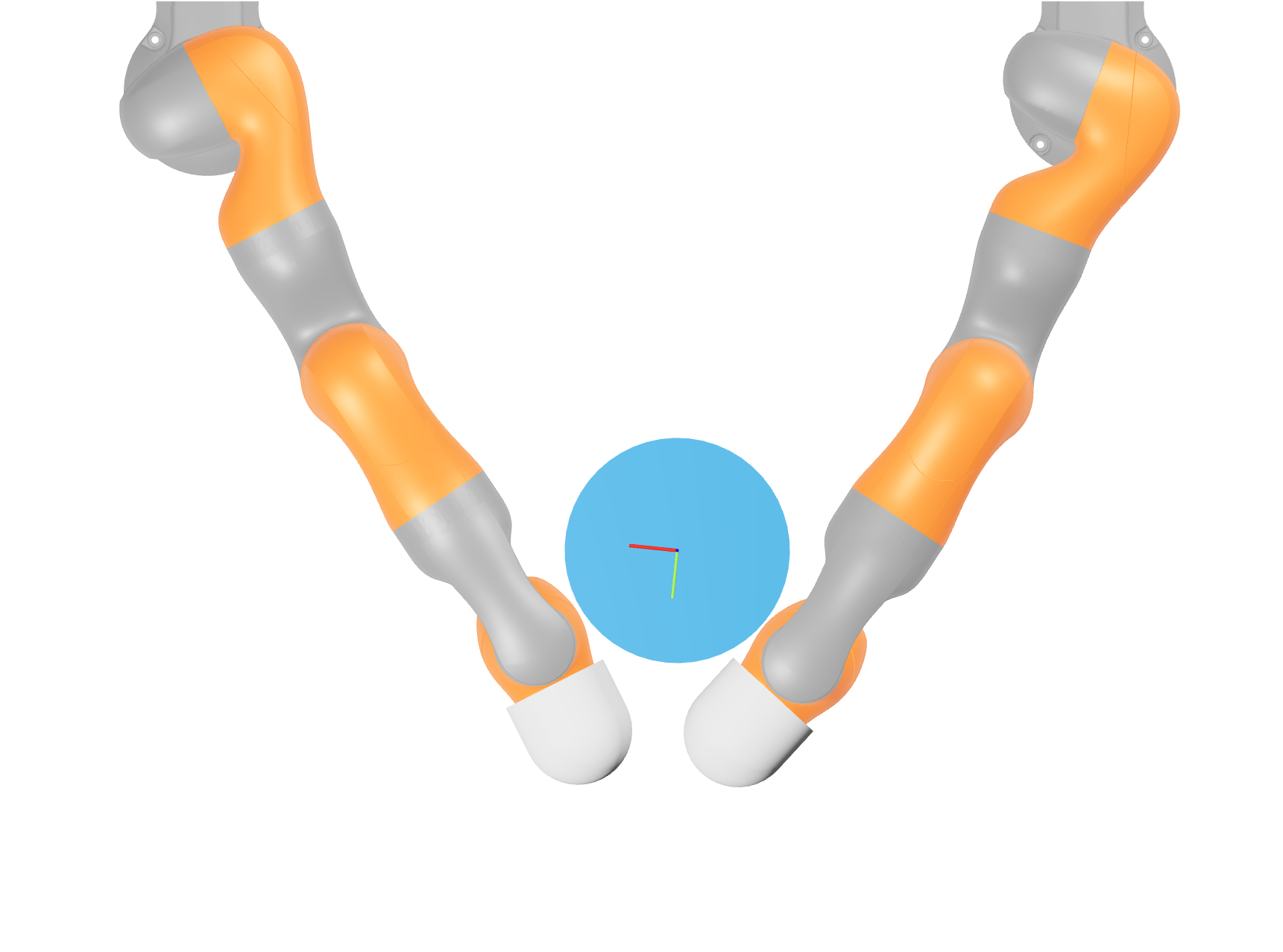}
    } \\    
    \subfloat[]{%
        \adjincludegraphics[height=1.4cm, angle=90, trim={{.09\width} {.07\height} {.08\width} 0}, clip]{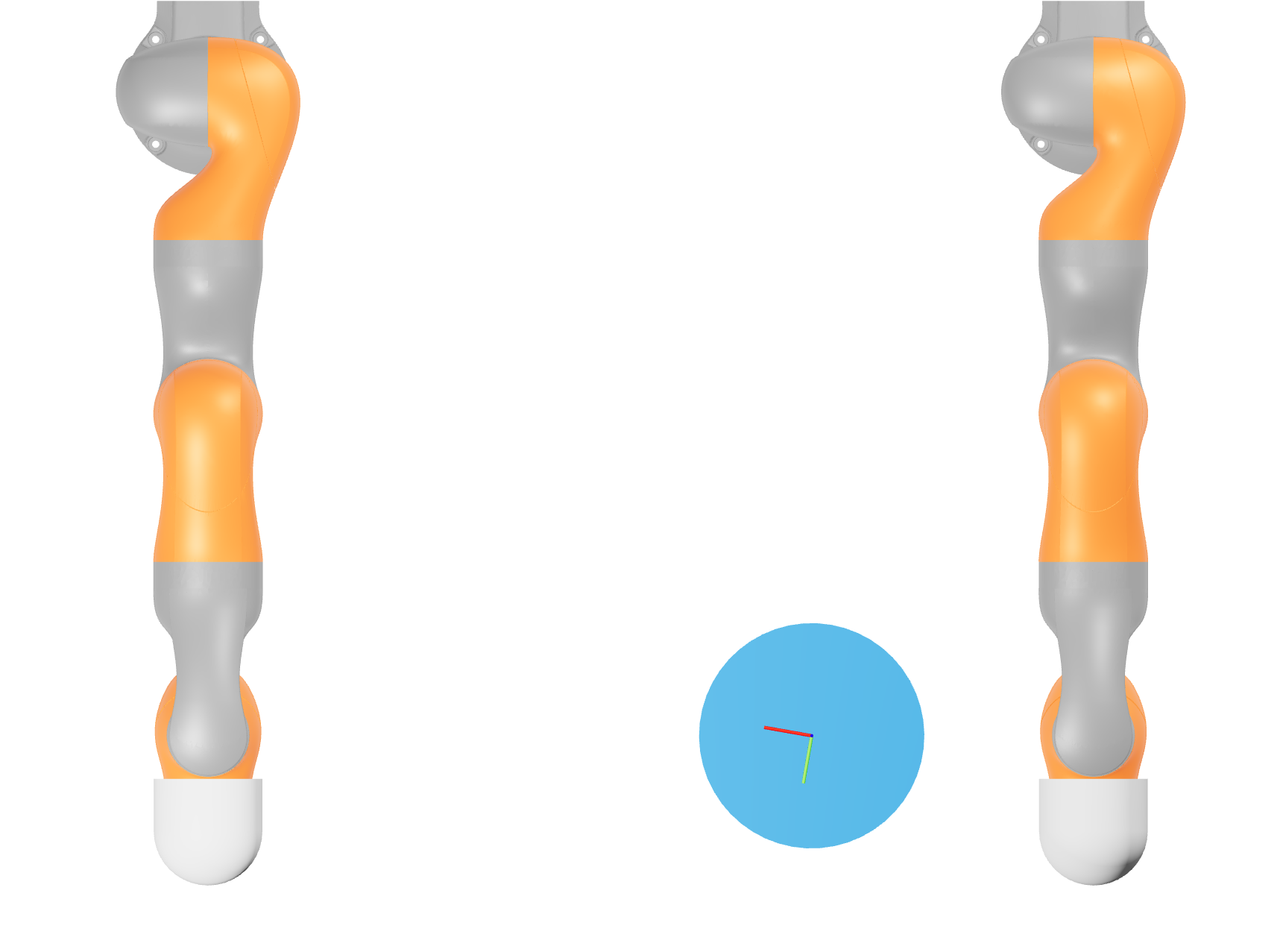}
        \adjincludegraphics[height=1.4cm, angle=90, trim={{.09\width} {.07\height} {.08\width} 0}, clip]{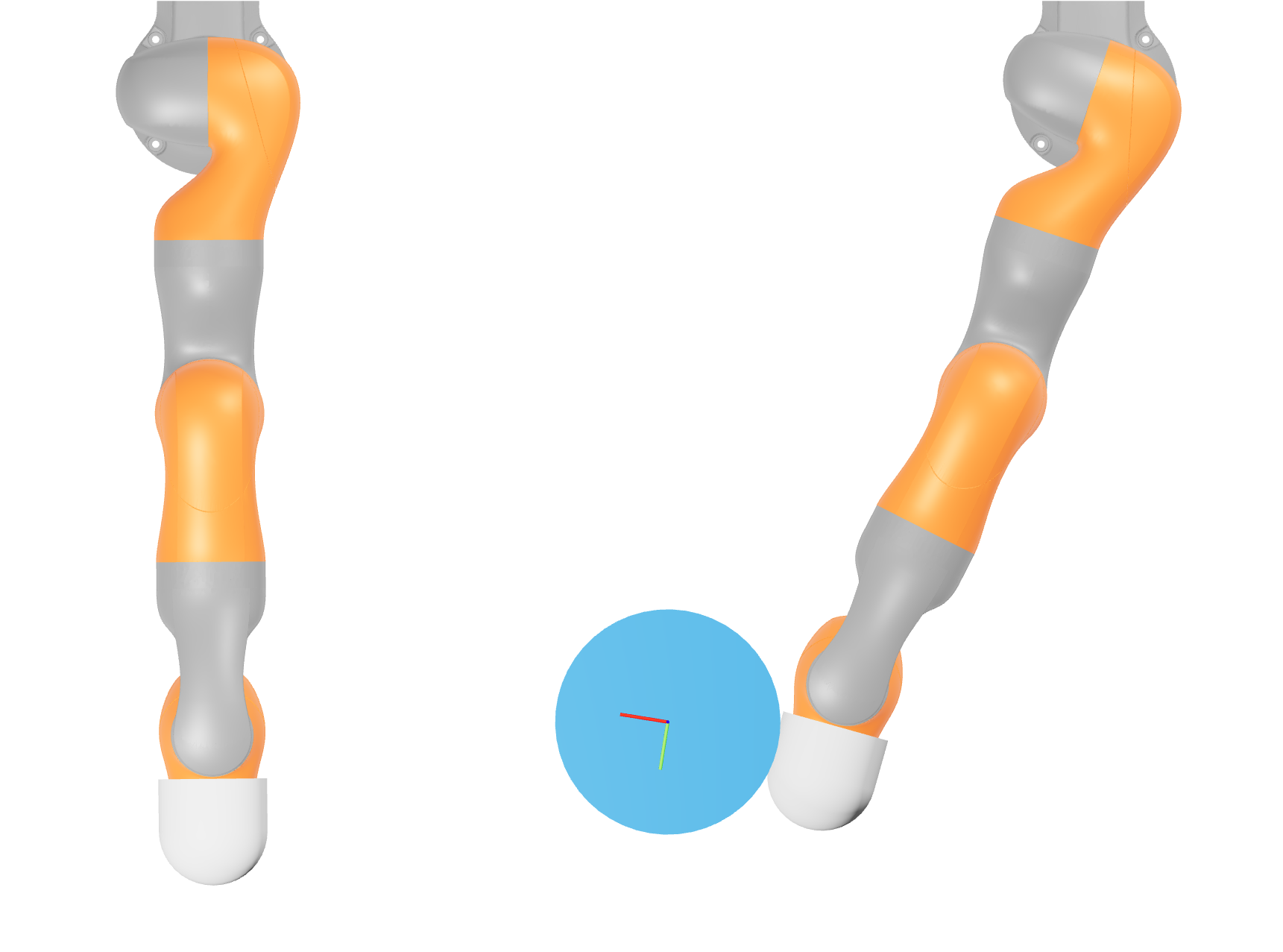}
        \adjincludegraphics[height=1.4cm, angle=90, trim={{.09\width} {.07\height} {.08\width} 0}, clip]{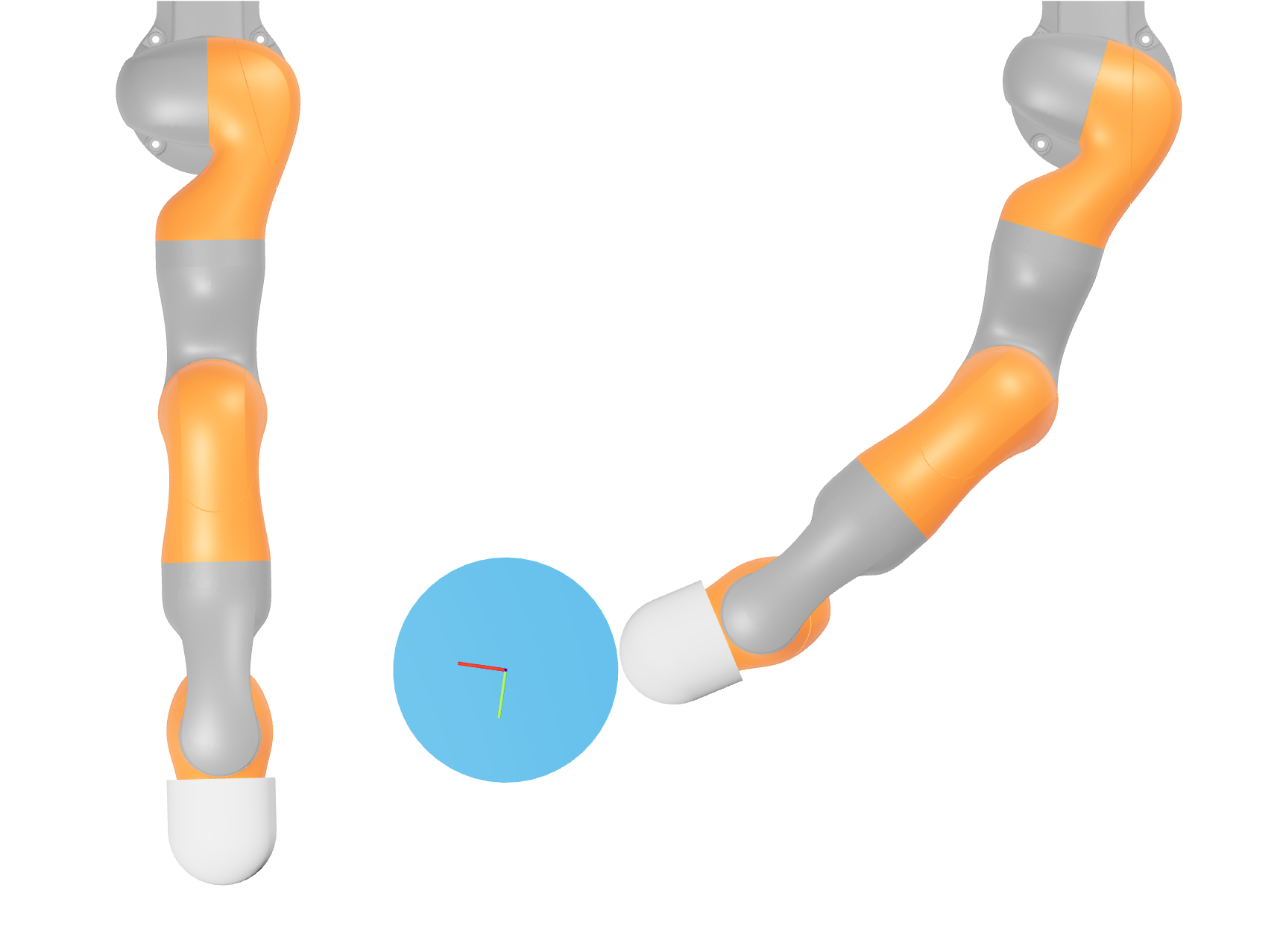}
        \adjincludegraphics[height=1.4cm, angle=90, trim={{.09\width} {.07\height} {.08\width} 0}, clip]{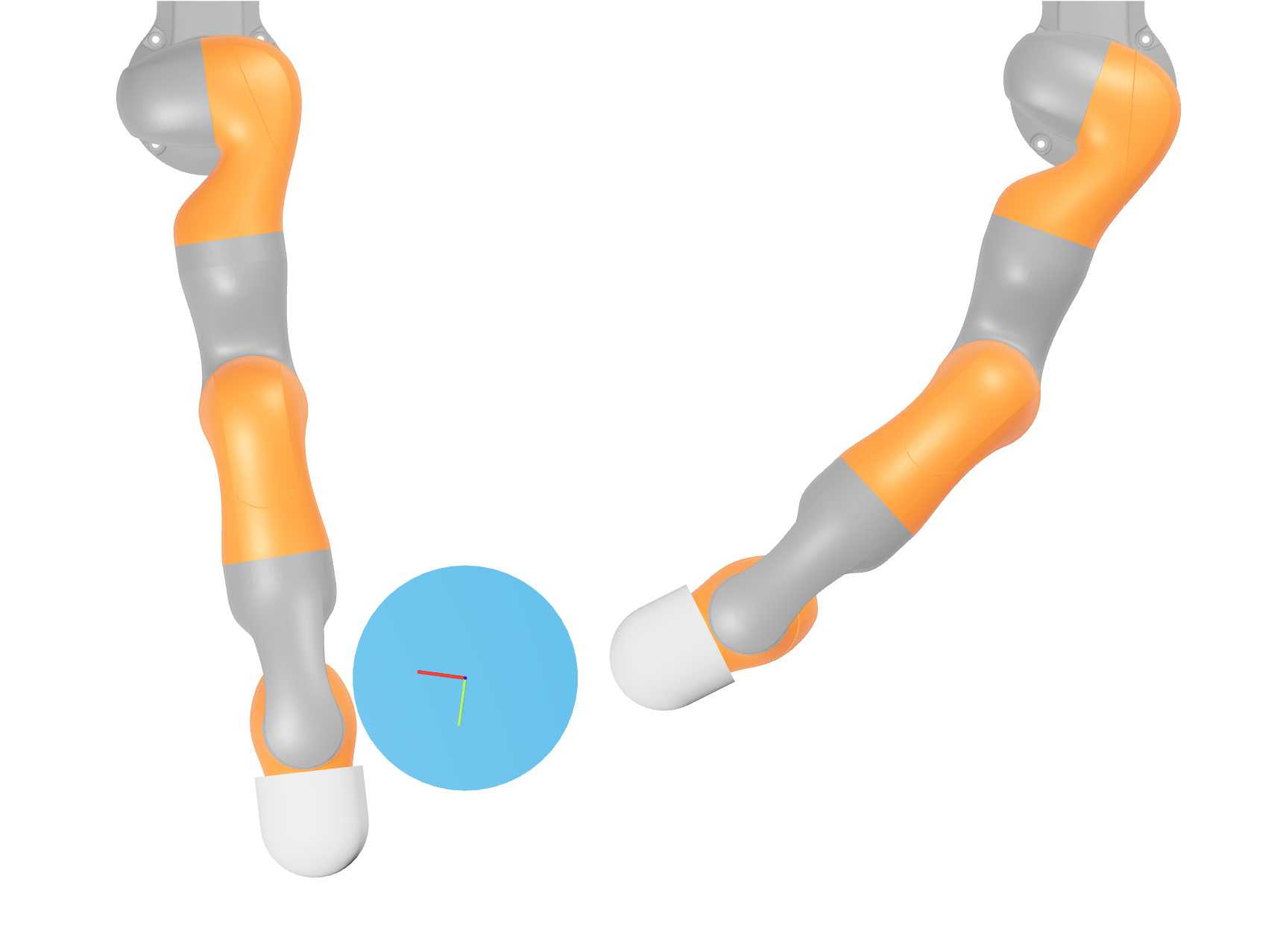}
        \adjincludegraphics[height=1.4cm, angle=90, trim={{.09\width} {.07\height} {.08\width} 0}, clip]{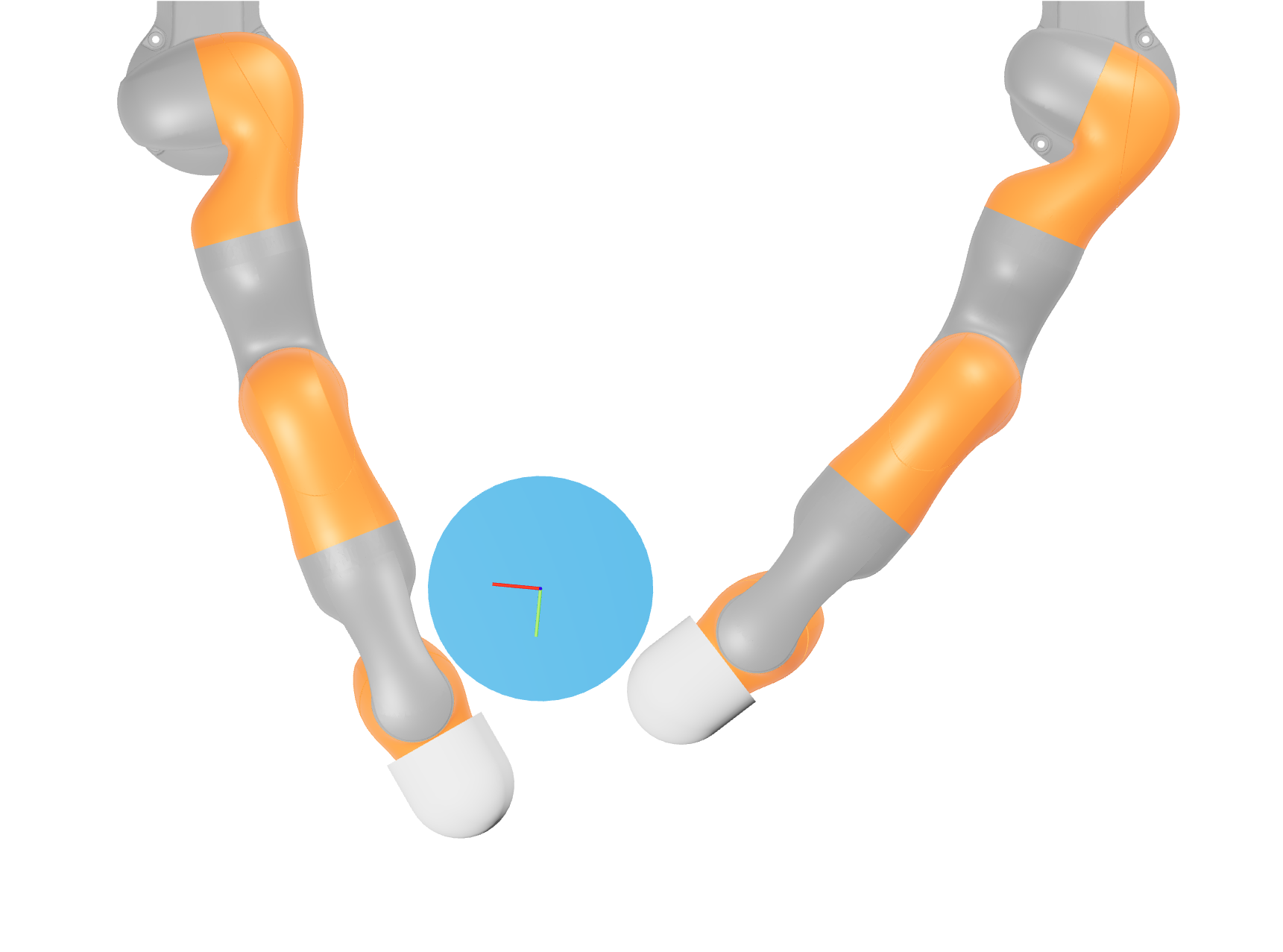}
        \adjincludegraphics[height=1.4cm, angle=90, trim={{.09\width} {.07\height} {.08\width} 0}, clip]{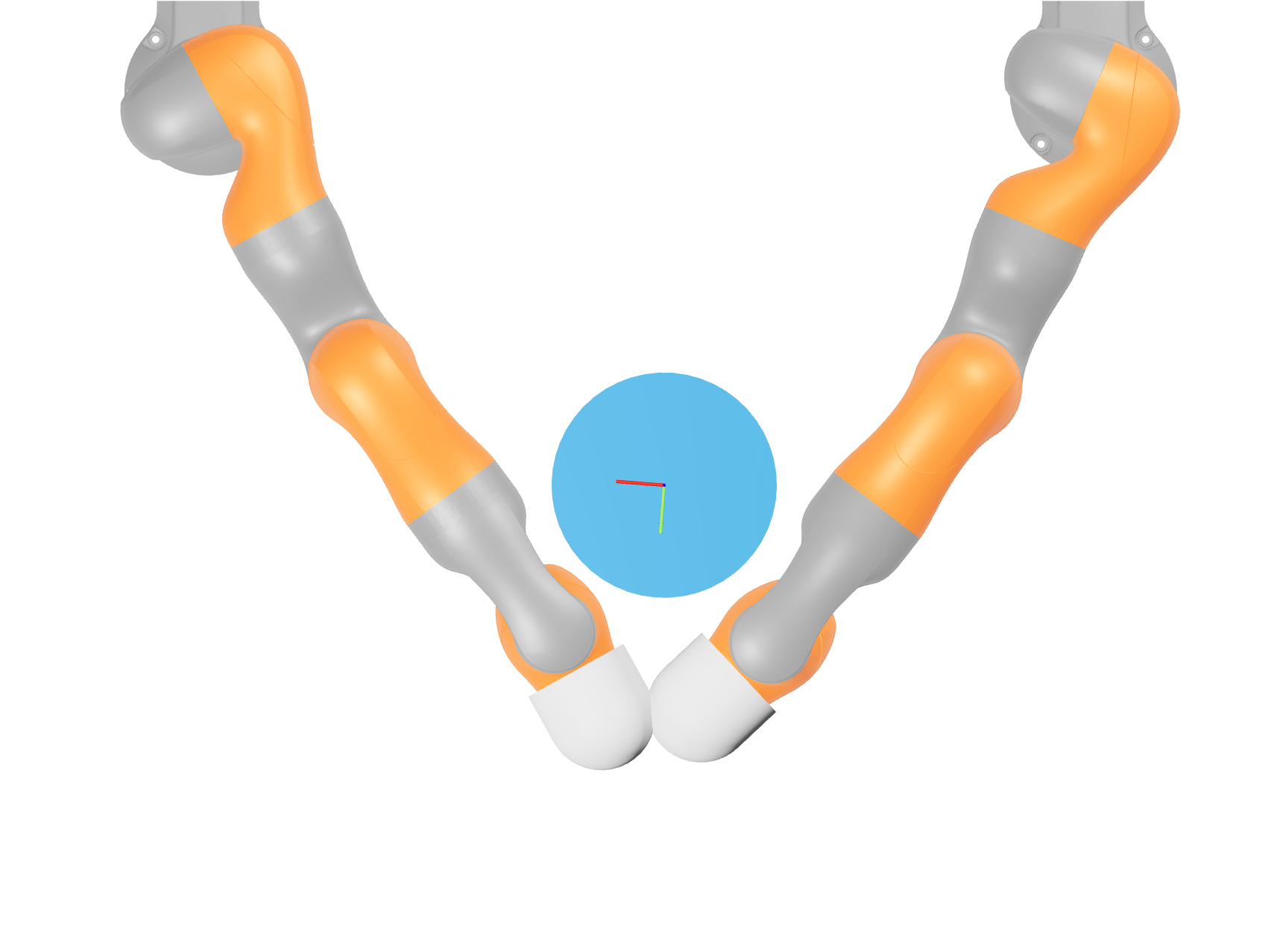}
    } \\    
    \subfloat[]{%
        \adjincludegraphics[height=1.4cm, angle=90, trim={{.09\width} {.07\height} {.08\width} 0}, clip]{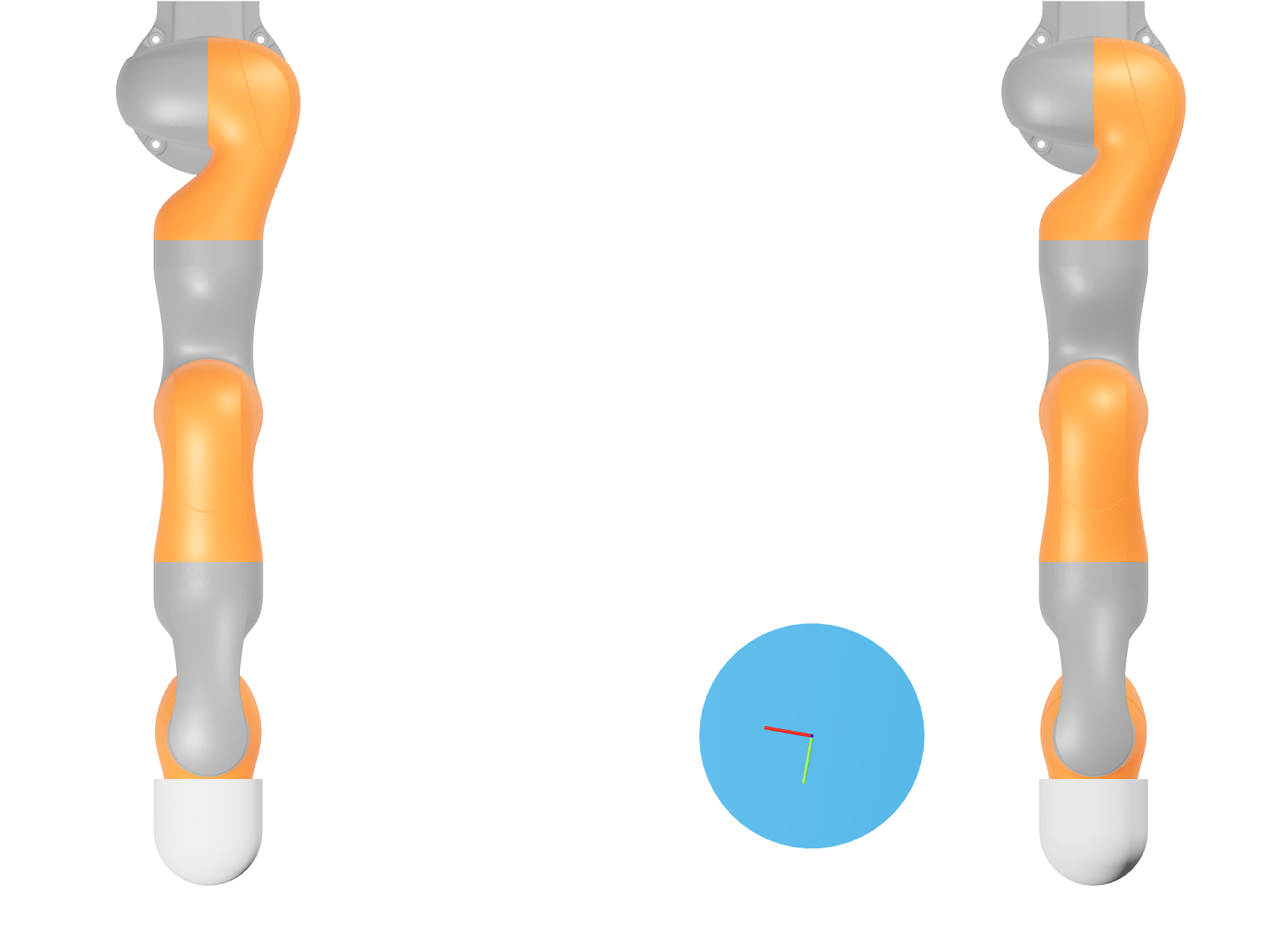}
        \adjincludegraphics[height=1.4cm, angle=90, trim={{.09\width} {.07\height} {.08\width} 0}, clip]{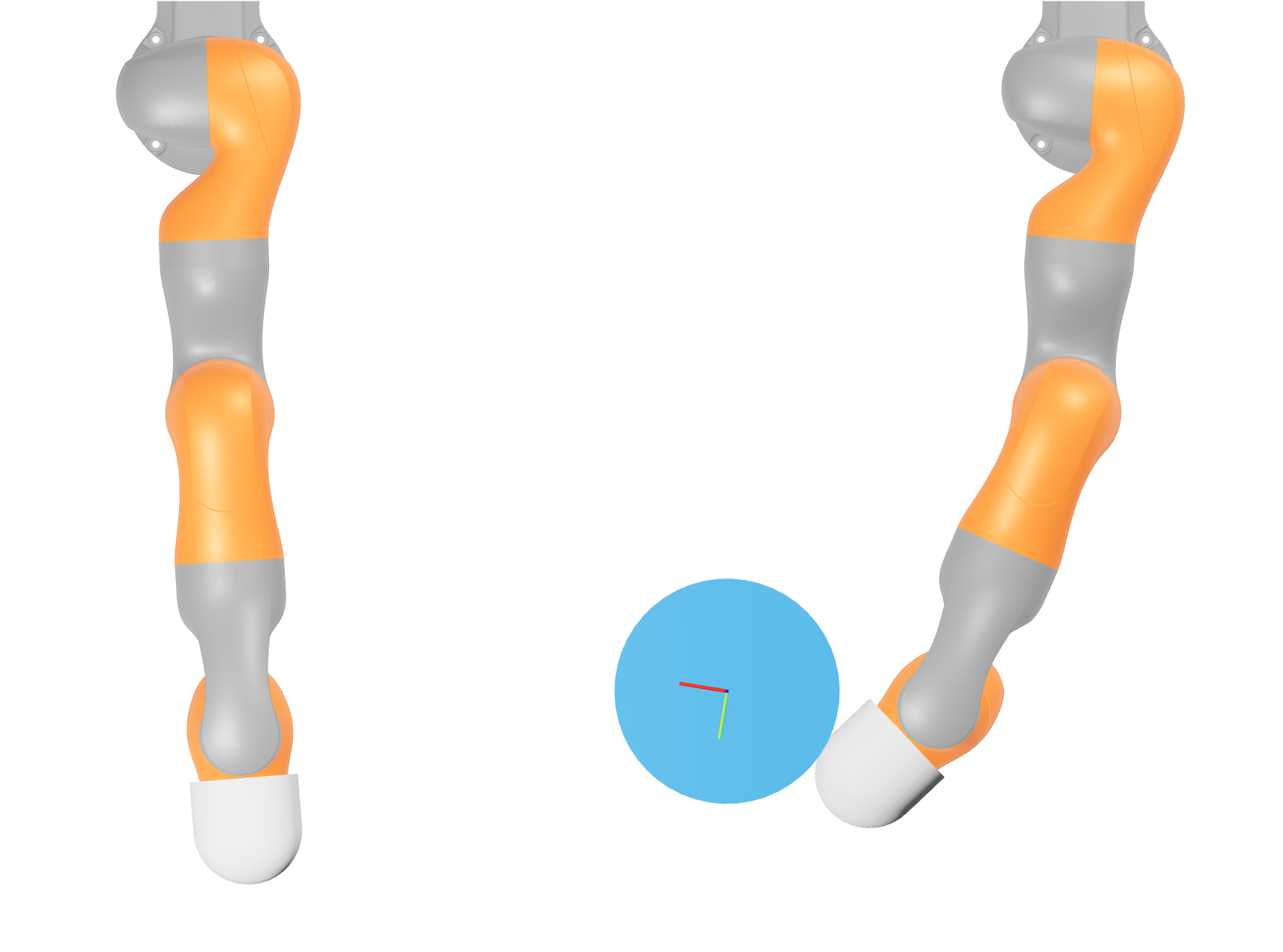}
        \adjincludegraphics[height=1.4cm, angle=90, trim={{.09\width} {.07\height} {.08\width} 0}, clip]{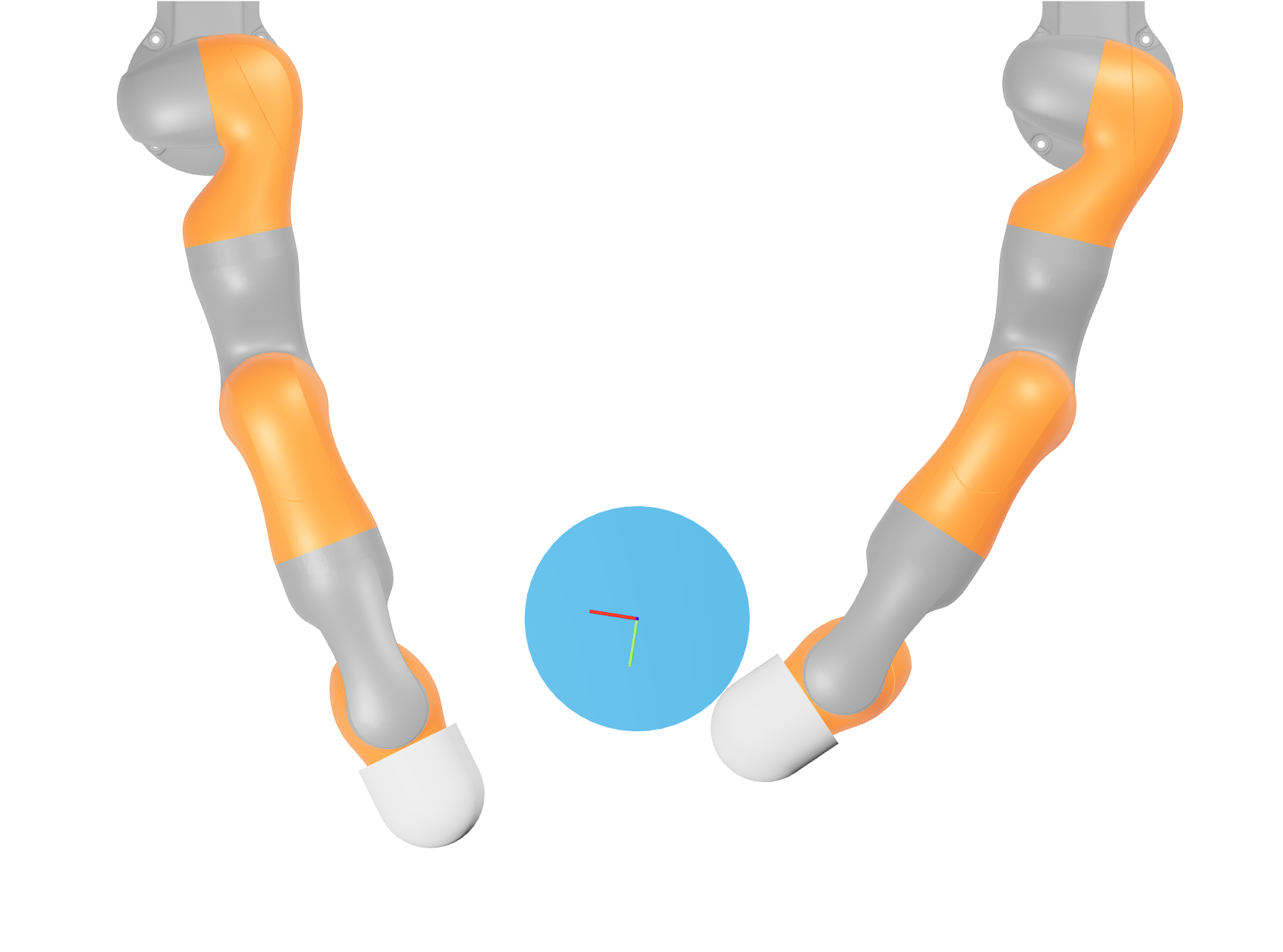}
        \adjincludegraphics[height=1.4cm, angle=90, trim={{.09\width} {.07\height} {.08\width} 0}, clip]{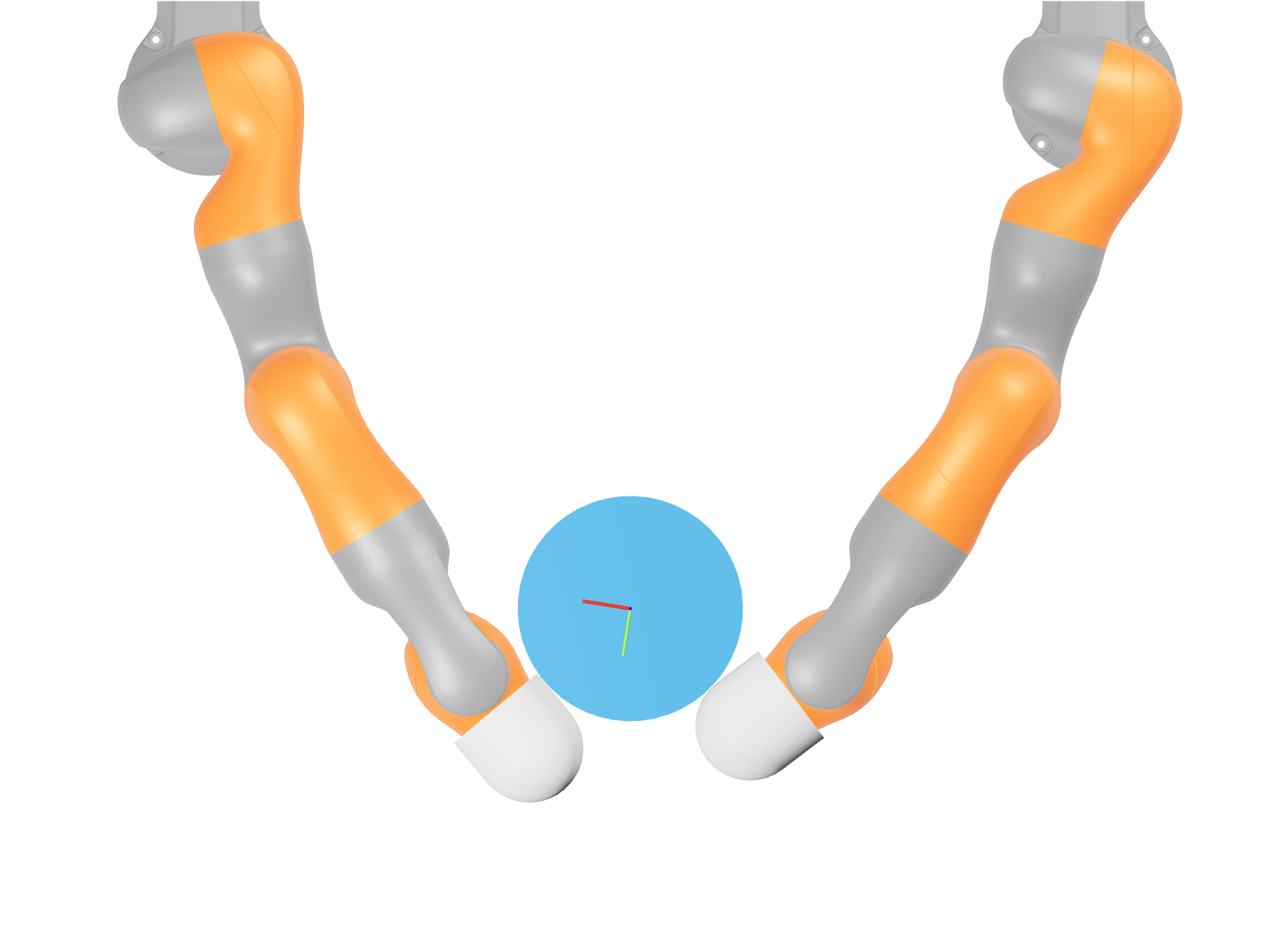}
        \adjincludegraphics[height=1.4cm, angle=90, trim={{.09\width} {.07\height} {.08\width} 0}, clip]{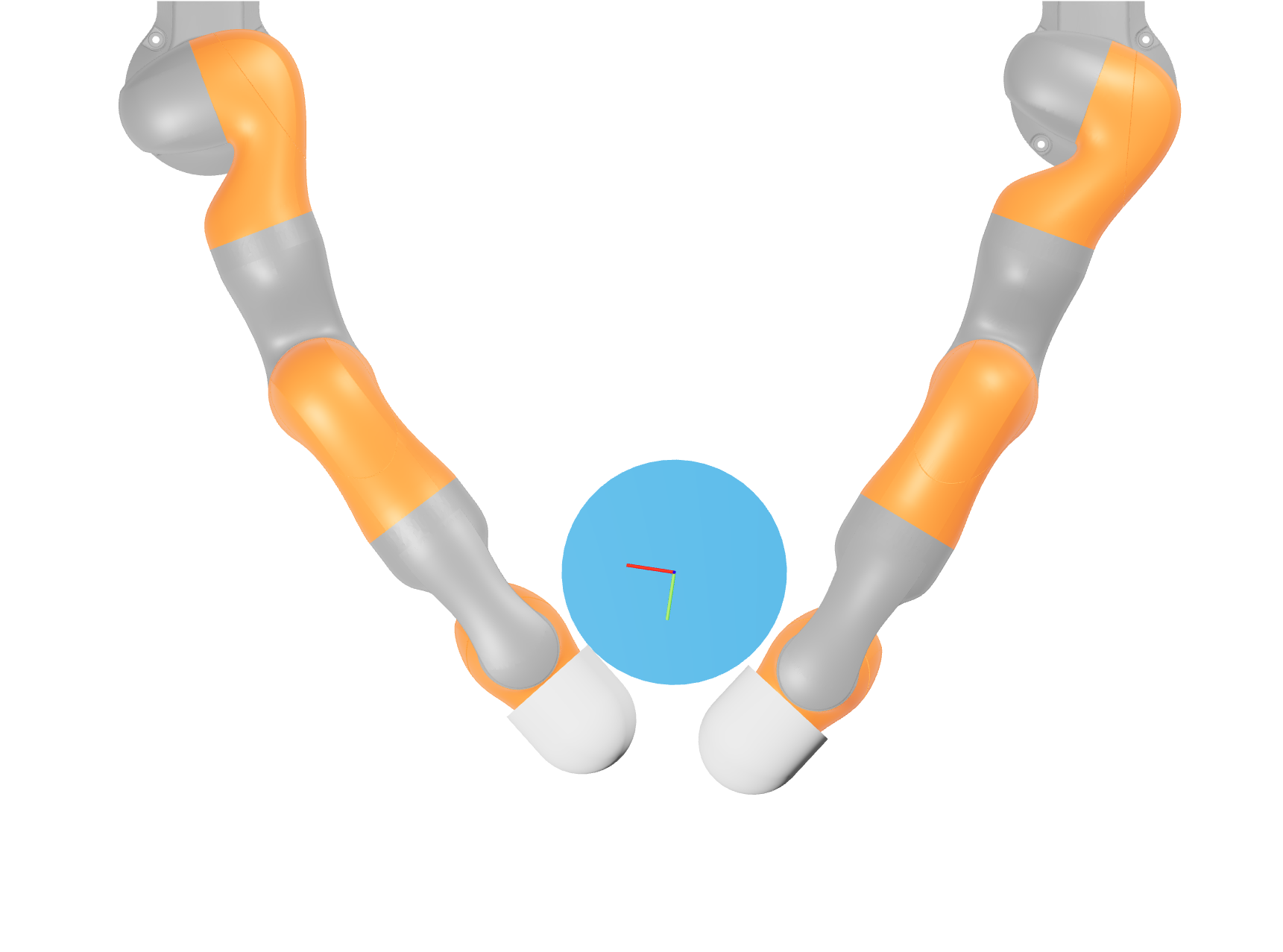}
        \adjincludegraphics[height=1.4cm, angle=90, trim={{.09\width} {.07\height} {.08\width} 0}, clip]{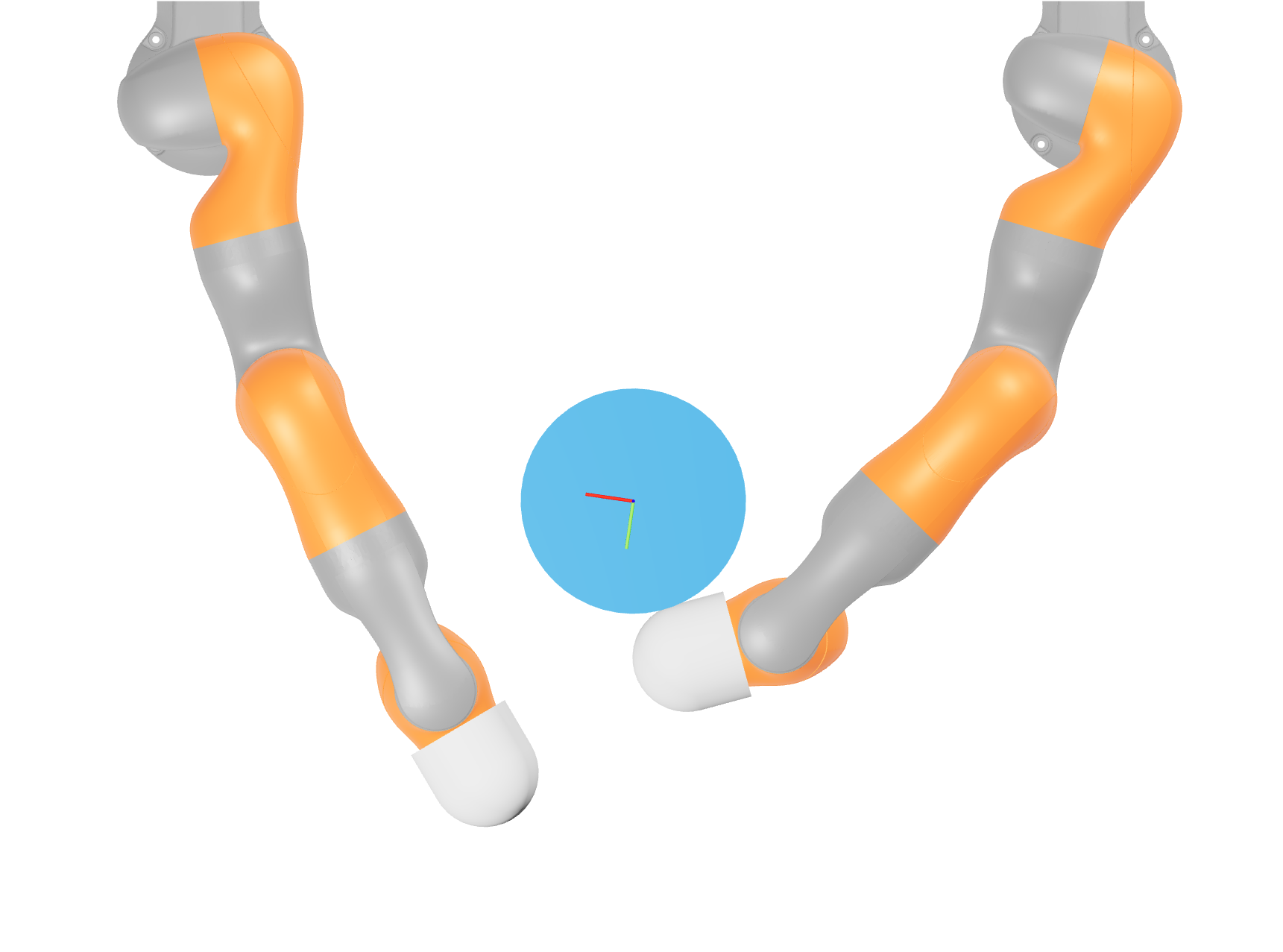}
    }
    \caption{Rollouts of bimanual planar bucket manipulation illustrating robot collisions under baselines and their absence in our method. 
    \textbf{(a)} The trajectory generated by \ac{TO-CTR} under smoothed dynamics exhibits no collision. 
    \textbf{(b)} Executing the \ac{TO-CTR} nominal control sequence on the nonsmooth hybrid dynamics leads to a robot collision. 
    \textbf{(c)} Executing the policy produced by our method on the nonsmooth hybrid dynamics avoids robot collision.}
    \label{fig:iiwa-bucket-collision}
\end{figure}

\begin{figure}[!t]
    \centering
    \subfloat[]{%
      \shortstack{
        \adjincludegraphics[height=1.4cm, angle=90, trim={{.01\width} {.07\height} {.04\width} 0}, clip]{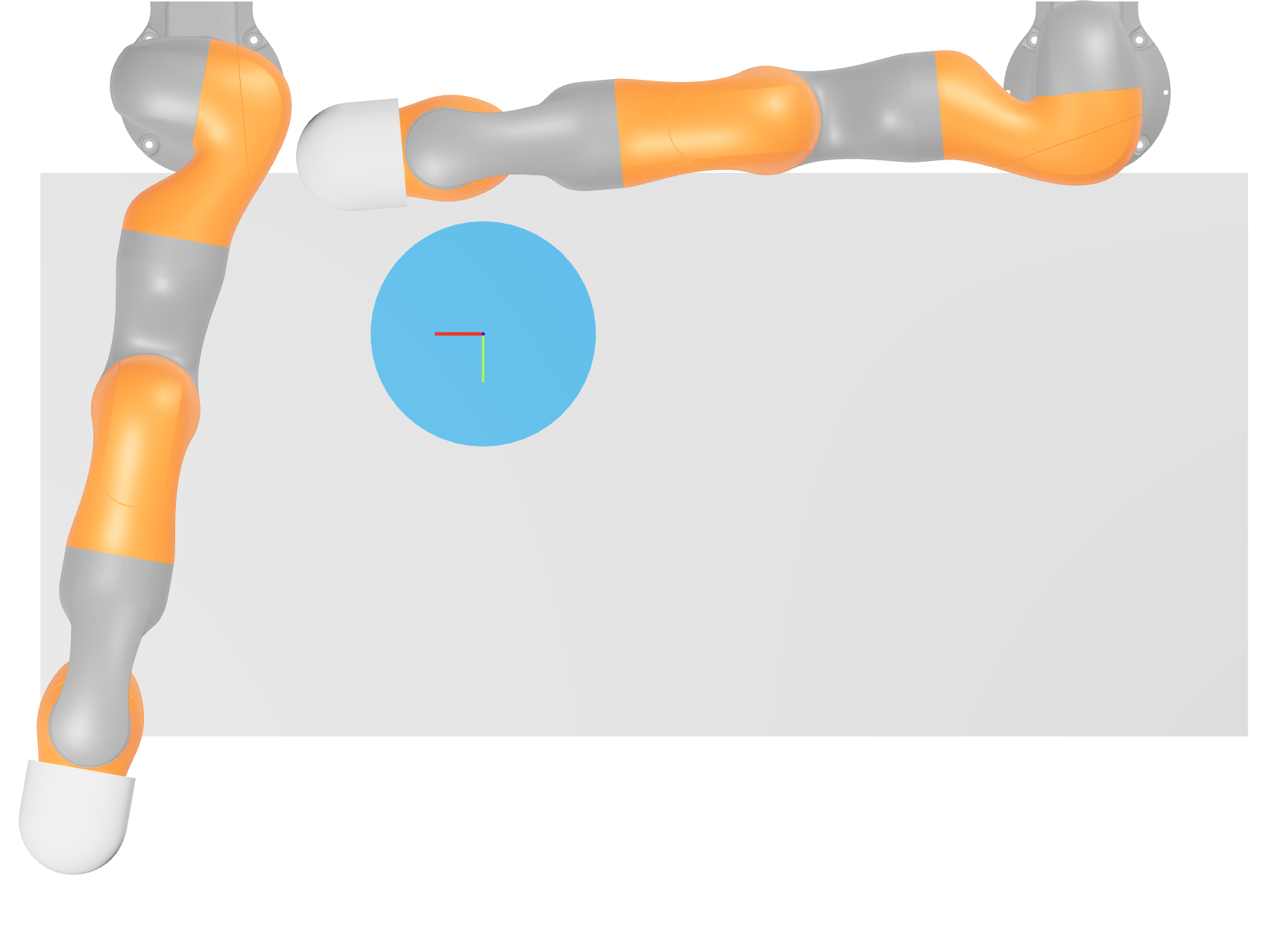}
        \adjincludegraphics[height=1.4cm, angle=90, trim={{.01\width} {.07\height} {.04\width} 0}, clip]{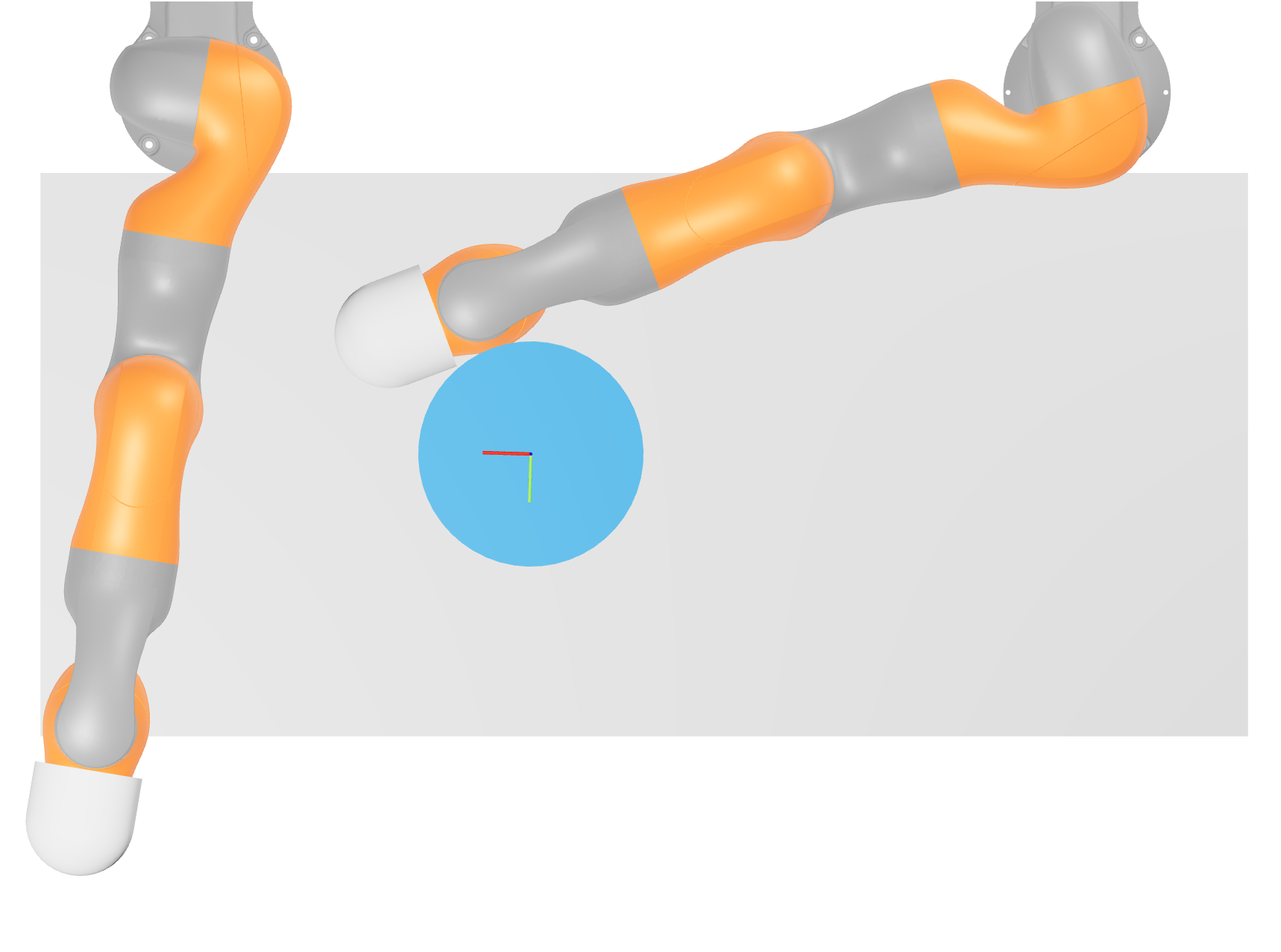}
        \adjincludegraphics[height=1.4cm, angle=90, trim={{.01\width} {.07\height} {.04\width} 0}, clip]{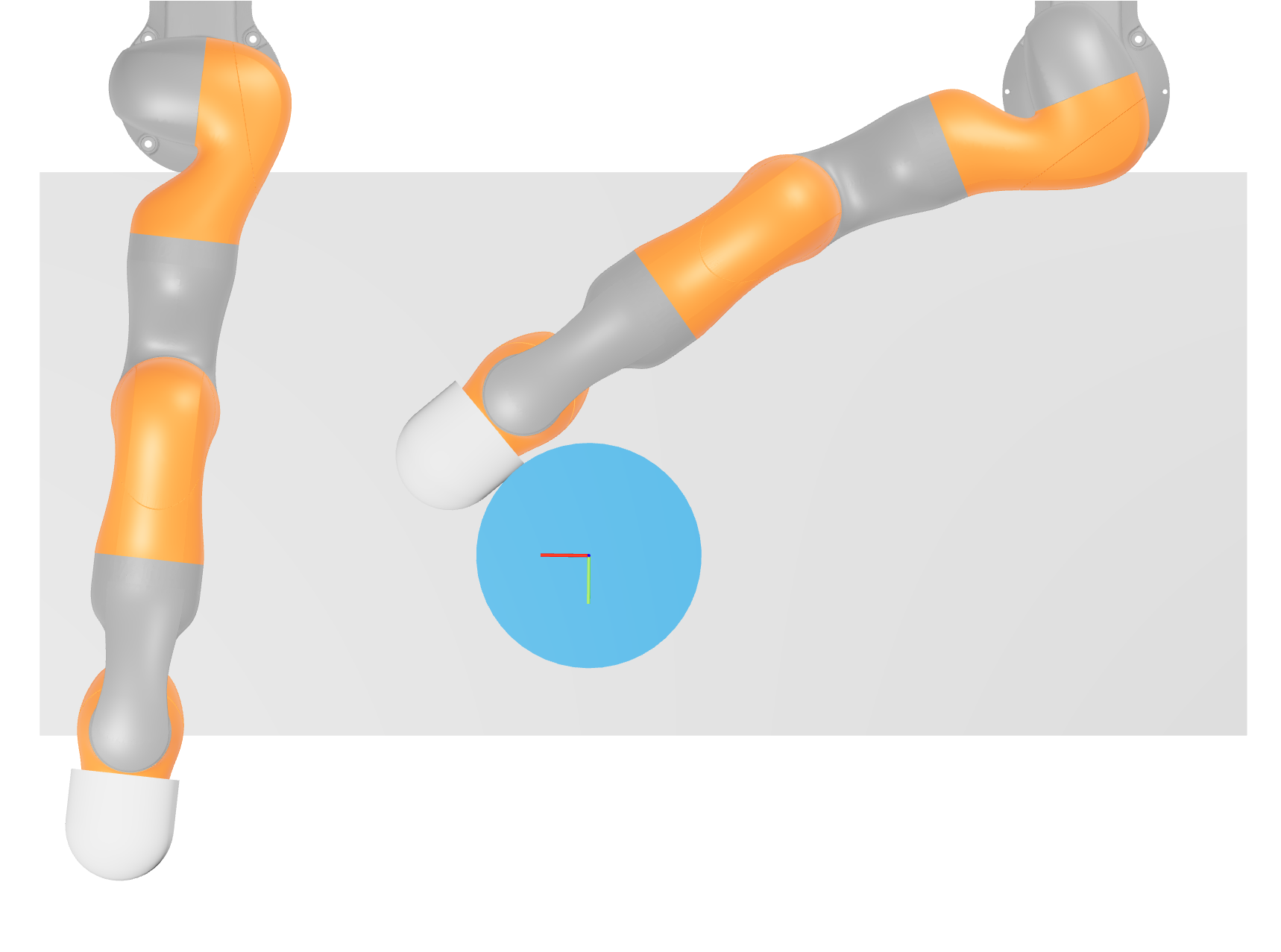}
        \adjincludegraphics[height=1.4cm, angle=90, trim={{.01\width} {.07\height} {.04\width} 0}, clip]{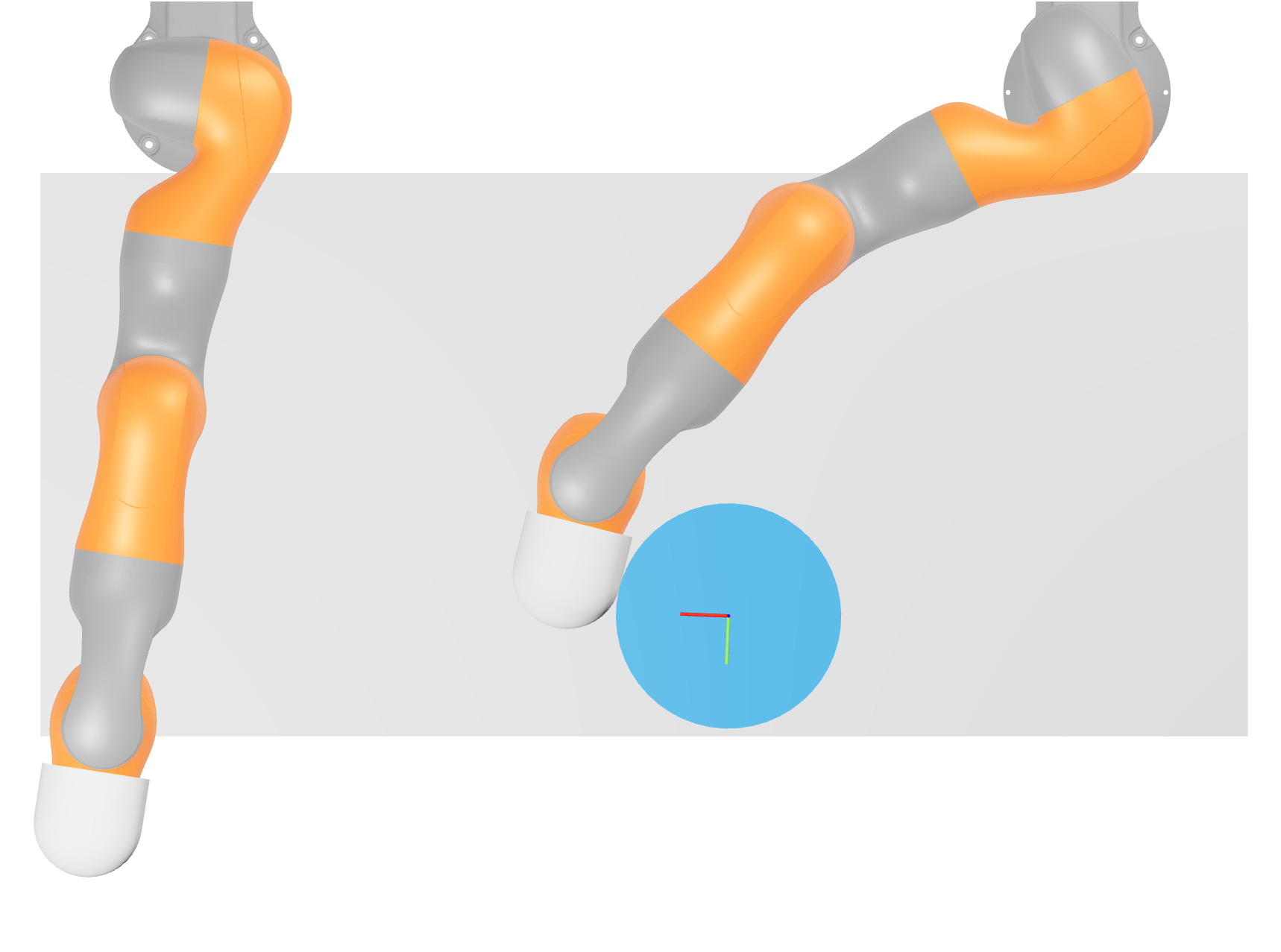}
        \adjincludegraphics[height=1.4cm, angle=90, trim={{.01\width} {.07\height} {.04\width} 0}, clip]{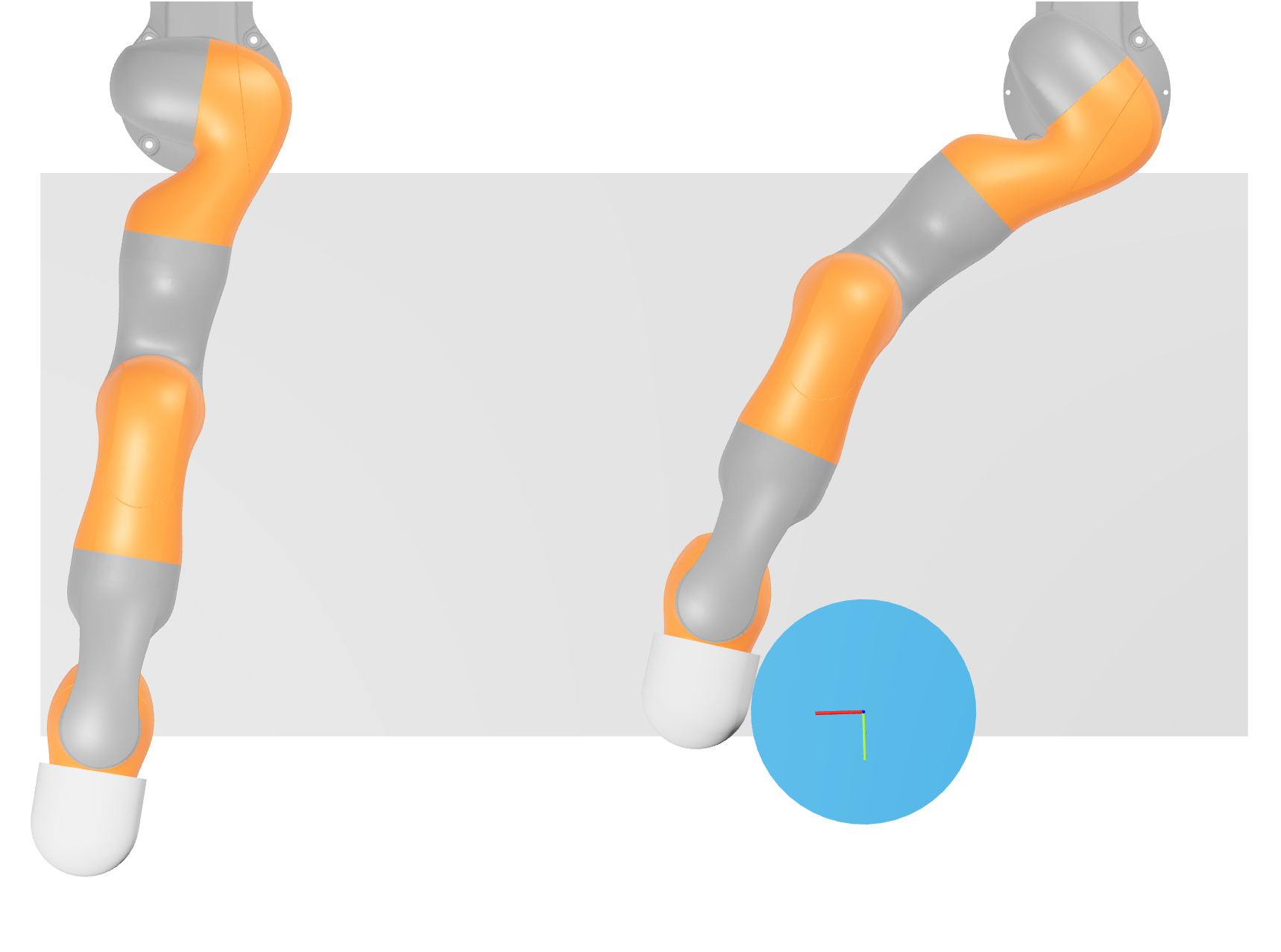}
        \adjincludegraphics[height=1.4cm, angle=90, trim={{.01\width} {.07\height} {.04\width} 0}, clip]{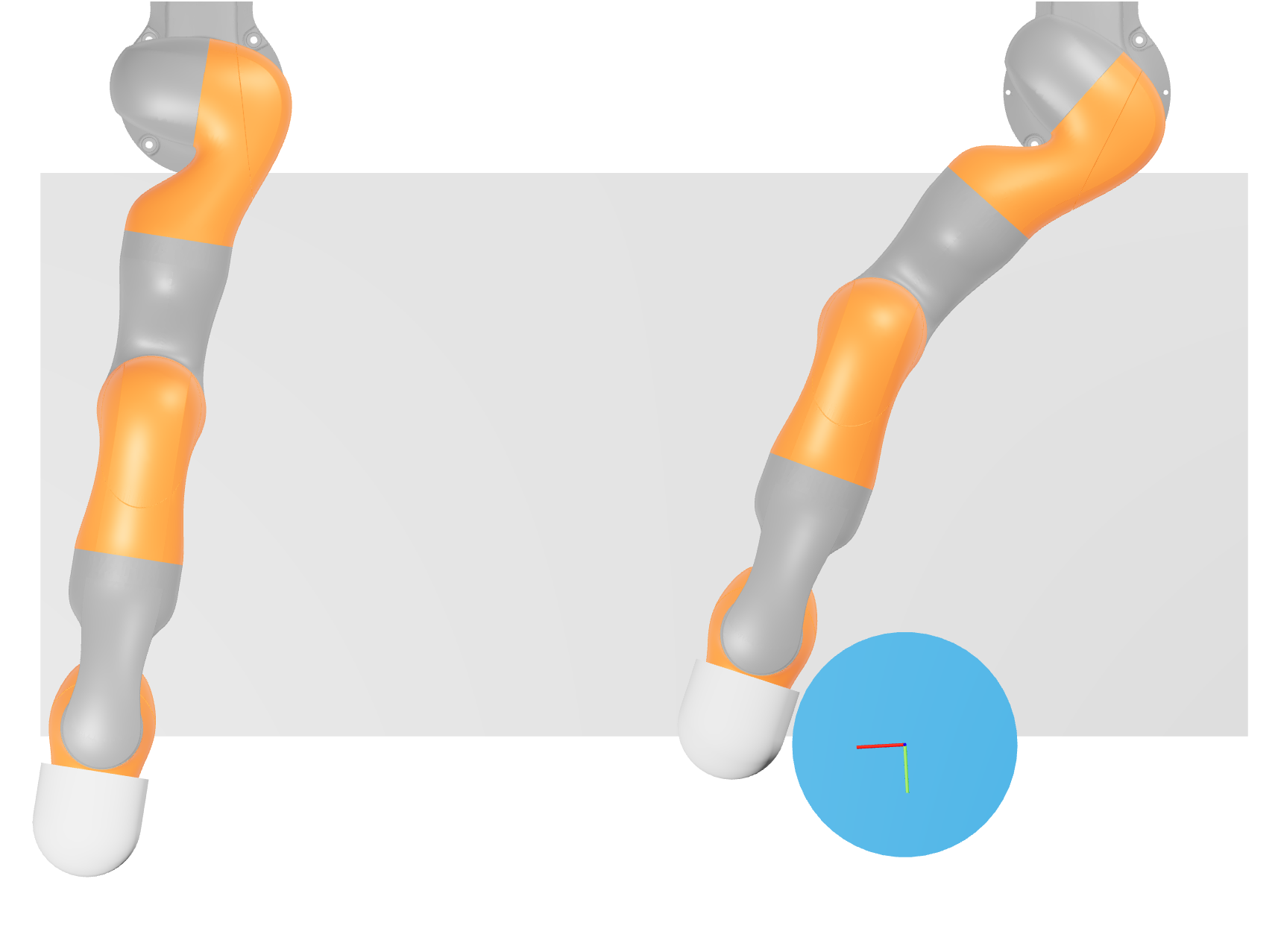}
        \\[2mm]
        \hspace*{-2mm}
        \begin{tikzpicture}

\definecolor{color1}{RGB}{228,26,28}
\definecolor{color2}{RGB}{55,126,184}
\definecolor{color3}{RGB}{77,175,74}
\definecolor{color4}{RGB}{152,78,163}

\begin{axis}[
width=3.6cm, height=3.4cm,
xlabel={\footnotesize $t$},
ylabel={\footnotesize $x$},
xlabel shift=-1mm,
ylabel shift=-3mm,
xmin=0, xmax=2,
tick label style={
    font=\scriptsize,
    /pgf/number format/fixed,
},
legend cell align={left},
legend style={
    font=\footnotesize,
    at={(-0.3,1.25)},
    anchor=north west,
    draw=none,
    fill=none,
    row sep=-2pt,
    inner sep=0pt,
},
legend image post style={scale=0.4},
]

\pgfplotstableread[row sep=crcr]{
t     Xo_nominal  Xo_true    \\
0.00  0.20000000  0.20000000 \\
0.02  0.19995382  0.20000000 \\
0.04  0.20030591  0.20059303 \\
0.06  0.19698237  0.19693787 \\
0.08  0.19090275  0.19058468 \\
0.10  0.18184042  0.18128292 \\
0.12  0.17005872  0.16926951 \\
0.14  0.15554037  0.15449617 \\
0.16  0.13875212  0.13744369 \\
0.18  0.11967920  0.11820940 \\
0.20  0.09860954  0.09700740 \\
0.22  0.07604756  0.07433771 \\
0.24  0.05252342  0.05072525 \\
0.26  0.02890046  0.02703195 \\
0.28  0.00535843  0.00340556 \\
0.30 -0.01798855 -0.02010363 \\
0.32 -0.03905093 -0.04155400 \\
0.34 -0.05436453 -0.05718203 \\
0.36 -0.06335498 -0.06548930 \\
0.38 -0.07144512 -0.07322934 \\
0.40 -0.07963274 -0.08127691 \\
0.42 -0.08802954 -0.08945457 \\
0.44 -0.09683578 -0.09795146 \\
0.46 -0.10614487 -0.10639412 \\
0.48 -0.11596631 -0.11554408 \\
0.50 -0.12612084 -0.12526048 \\
0.52 -0.13578760 -0.13441152 \\
0.54 -0.14490882 -0.14287906 \\
0.56 -0.15343891 -0.15066661 \\
0.58 -0.16151370 -0.15816959 \\
0.60 -0.16930946 -0.16548059 \\
0.62 -0.17689443 -0.17261349 \\
0.64 -0.18422454 -0.17947950 \\
0.66 -0.19126897 -0.18594396 \\
0.68 -0.19792938 -0.19199481 \\
0.70 -0.20419906 -0.19771714 \\
0.72 -0.21015584 -0.20320344 \\
0.74 -0.21590116 -0.20852406 \\
0.76 -0.22150829 -0.21373272 \\
0.78 -0.22703984 -0.21887250 \\
0.80 -0.23254430 -0.22396521 \\
0.82 -0.23806787 -0.22902332 \\
0.84 -0.24364949 -0.23405668 \\
0.86 -0.24928518 -0.23922773 \\
0.88 -0.25489780 -0.24447433 \\
0.90 -0.26038843 -0.24964170 \\
0.92 -0.26569732 -0.25463329 \\
0.94 -0.27079979 -0.25940727 \\
0.96 -0.27569385 -0.26394606 \\
0.98 -0.28040262 -0.26826397 \\
1.00 -0.28495696 -0.27240150 \\
1.02 -0.28937523 -0.27638932 \\
1.04 -0.29366594 -0.28024404 \\
1.06 -0.29783651 -0.28397927 \\
1.08 -0.30189200 -0.28760075 \\
1.10 -0.30583176 -0.29110634 \\
1.12 -0.30965028 -0.29448942 \\
1.14 -0.31333753 -0.29773948 \\
1.16 -0.31687750 -0.30084102 \\
1.18 -0.32024541 -0.30377105 \\
1.20 -0.32340541 -0.30649780 \\
1.22 -0.32631293 -0.30898542 \\
1.24 -0.32892521 -0.31121083 \\
1.26 -0.33121698 -0.31317106 \\
1.28 -0.33319343 -0.31487356 \\
1.30 -0.33488941 -0.31633745 \\
1.32 -0.33635401 -0.31758708 \\
1.34 -0.33763474 -0.31865015 \\
1.36 -0.33877072 -0.31955293 \\
1.38 -0.33979214 -0.32031931 \\
1.40 -0.34072185 -0.32096676 \\
1.42 -0.34157707 -0.32151302 \\
1.44 -0.34237091 -0.32197210 \\
1.46 -0.34311347 -0.32235555 \\
1.48 -0.34381262 -0.32267384 \\
1.50 -0.34447456 -0.32293533 \\
1.52 -0.34510430 -0.32314736 \\
1.54 -0.34570588 -0.32331585 \\
1.56 -0.34628262 -0.32344832 \\
1.58 -0.34683727 -0.32354868 \\
1.60 -0.34737215 -0.32362047 \\
1.62 -0.34788922 -0.32366135 \\
1.64 -0.34839015 -0.32368152 \\
1.66 -0.34887638 -0.32368823 \\
1.68 -0.34934915 -0.32368988 \\
1.70 -0.34980956 -0.32369025 \\
1.72 -0.35025855 -0.32369297 \\
1.74 -0.35069699 -0.32369439 \\
1.76 -0.35112561 -0.32369517 \\
1.78 -0.35154511 -0.32369561 \\
1.80 -0.35195606 -0.32369587 \\
1.82 -0.35235903 -0.32369602 \\
1.84 -0.35275448 -0.32369612 \\
1.86 -0.35314290 -0.32369619 \\
1.88 -0.35352464 -0.32369623 \\
1.90 -0.35390008 -0.32369626 \\
1.92 -0.35426952 -0.32369628 \\
1.94 -0.35463325 -0.32369630 \\
1.96 -0.35499142 -0.32369631 \\
1.98 -0.35534360 -0.32369632 \\
2.00 -0.35569129 -0.32369690 \\
}\datatable

\addplot[semithick, densely dashed, forget plot] {-0.355};

\addplot[thick, color2]
    table[x=t, y=Xo_nominal]{\datatable};
\addlegendentry{Nominal trajectory}

\addplot[thick, color1]
    table[x=t, y=Xo_true]{\datatable};

\end{axis}

\end{tikzpicture} \hspace*{-5mm}
        \begin{tikzpicture}

\definecolor{color1}{RGB}{228,26,28}
\definecolor{color2}{RGB}{55,126,184}
\definecolor{color3}{RGB}{77,175,74}
\definecolor{color4}{RGB}{152,78,163}

\begin{axis}[
width=3.6cm, height=3.4cm,
xlabel={\footnotesize $t$},
ylabel={\footnotesize $y$},
xlabel shift=-1mm,
ylabel shift=-2mm,
xmin=0, xmax=2,
ymax=0.85,
tick label style={
    font=\scriptsize,
    /pgf/number format/fixed,
},
legend cell align={left},
legend style={
    font=\footnotesize,
    at={(-0.3,1.25)},
    anchor=north west,
    draw=none,
    fill=none,
    row sep=-2pt,
    inner sep=0pt,
},
legend image post style={scale=0.4},
]

\pgfplotstableread[row sep=crcr]{
t     Xo_nominal  Xo_true    \\
0.00  0.30000000  0.30000000 \\
0.02  0.30038597  0.30000002 \\
0.04  0.30225106  0.30163658 \\
0.06  0.31304058  0.31228849 \\
0.08  0.33281308  0.33196132 \\
0.10  0.35915547  0.35822785 \\
0.12  0.39001800  0.38901957 \\
0.14  0.42340474  0.42232316 \\
0.16  0.45708744  0.45595156 \\
0.18  0.48926864  0.48810317 \\
0.20  0.51867198  0.51746537 \\
0.22  0.54437235  0.54310706 \\
0.24  0.56567471  0.56432701 \\
0.26  0.58234909  0.58089292 \\
0.28  0.59372494  0.59210028 \\
0.30  0.59941767  0.59748094 \\
0.32  0.60172268  0.59908187 \\
0.34  0.60504823  0.60175923 \\
0.36  0.61341771  0.61058803 \\
0.38  0.62199730  0.61938537 \\
0.40  0.63080673  0.62817826 \\
0.42  0.63899432  0.63642251 \\
0.44  0.64677260  0.64437672 \\
0.46  0.65345522  0.65195255 \\
0.48  0.65935622  0.65860515 \\
0.50  0.66421035  0.66396842 \\
0.52  0.66918840  0.66959578 \\
0.54  0.67421152  0.67551751 \\
0.56  0.67938156  0.68179530 \\
0.58  0.68438815  0.68775754 \\
0.60  0.68918669  0.69344225 \\
0.62  0.69372190  0.69883729 \\
0.64  0.69792380  0.70388448 \\
0.66  0.70161066  0.70862153 \\
0.68  0.70485108  0.71301981 \\
0.70  0.70787808  0.71716313 \\
0.72  0.71082629  0.72114233 \\
0.74  0.71373296  0.72502872 \\
0.76  0.71662502  0.72887201 \\
0.78  0.71952040  0.73271532 \\
0.80  0.72240102  0.73658302 \\
0.82  0.72521205  0.74048316 \\
0.84  0.72787978  0.74440636 \\
0.86  0.73034206  0.74800925 \\
0.88  0.73261318  0.75126507 \\
0.90  0.73476674  0.75434357 \\
0.92  0.73684614  0.75735746 \\
0.94  0.73886311  0.76035144 \\
0.96  0.74083898  0.76337559 \\
0.98  0.74279098  0.76646228 \\
1.00  0.74471567  0.76958551 \\
1.02  0.74660734  0.77270940 \\
1.04  0.74846870  0.77581383 \\
1.06  0.75030185  0.77888139 \\
1.08  0.75210463  0.78190195 \\
1.10  0.75387300  0.78486934 \\
1.12  0.75560164  0.78777632 \\
1.14  0.75728269  0.79061232 \\
1.16  0.75890390  0.79336122 \\
1.18  0.76044576  0.79599804 \\
1.20  0.76187865  0.79848413 \\
1.22  0.76316612  0.80076355 \\
1.24  0.76427850  0.80275770 \\
1.26  0.76521039  0.80439595 \\
1.28  0.76598475  0.80567372 \\
1.30  0.76663782  0.80664339 \\
1.32  0.76720232  0.80738280 \\
1.34  0.76770194  0.80795729 \\
1.36  0.76815284  0.80841346 \\
1.38  0.76856617  0.80878082 \\
1.40  0.76894980  0.80908537 \\
1.42  0.76930945  0.80933898 \\
1.44  0.76964935  0.80955111 \\
1.46  0.76997274  0.80972925 \\
1.48  0.77028210  0.80987729 \\
1.50  0.77057940  0.80999941 \\
1.52  0.77086619  0.81009841 \\
1.54  0.77114378  0.81017770 \\
1.56  0.77141317  0.81023390 \\
1.58  0.77167527  0.81027060 \\
1.60  0.77193079  0.81029268 \\
1.62  0.77218036  0.81031102 \\
1.64  0.77242449  0.81031772 \\
1.66  0.77266368  0.81031955 \\
1.68  0.77289830  0.81031995 \\
1.70  0.77312874  0.81032004 \\
1.72  0.77335527  0.81032068 \\
1.74  0.77357823  0.81032102 \\
1.76  0.77379784  0.81032121 \\
1.78  0.77401436  0.81032131 \\
1.80  0.77422800  0.81032137 \\
1.82  0.77443898  0.81032141 \\
1.84  0.77464747  0.81032144 \\
1.86  0.77485368  0.81032145 \\
1.88  0.77505775  0.81032146 \\
1.90  0.77525987  0.81032147 \\
1.92  0.77546013  0.81032148 \\
1.94  0.77565870  0.81032148 \\
1.96  0.77585562  0.81032148 \\
1.98  0.77605074  0.81032149 \\
2.00  0.77624463  0.81032164 \\
}\datatable

\addplot[semithick, densely dashed, forget plot] {0.8};

\addplot[thick, color2, forget plot]
    table[x=t, y=Xo_nominal]{\datatable};

\addplot[thick, color1]
    table[x=t, y=Xo_true]{\datatable};
\addlegendentry{True dynamics rollout}

\addplot[name path=lower, draw=none, forget plot] {0.8};
\addplot[name path=upper, draw=none, forget plot] {0.9};

\addplot[
    draw=none,
    pattern=crosshatch,
    pattern color=black!60,
]
fill between[
    of=upper and lower,
];

\end{axis}

\end{tikzpicture} \hspace*{-5mm}
        \begin{tikzpicture}

\definecolor{color1}{RGB}{228,26,28}
\definecolor{color2}{RGB}{55,126,184}
\definecolor{color3}{RGB}{77,175,74}
\definecolor{color4}{RGB}{152,78,163}

\begin{axis}[
width=3.6cm, height=3.4cm,
xlabel={\footnotesize $t$},
ylabel={\footnotesize $\theta$},
xlabel shift=-1mm,
ylabel shift=-2.8mm,
xmin=0, xmax=2,
tick label style={
    font=\scriptsize,
    /pgf/number format/fixed,
},
every y tick scale label/.style={
    at={(axis description cs:0.03,1.05)},
    anchor=north east,
    inner sep=0,
},
legend cell align={left},
legend style={
    font=\footnotesize,
    at={(-0.25,1.25)},
    anchor=north west,
    draw=none,
    fill=none,
    row sep=-2pt,
    inner sep=0pt,
},
legend image post style={scale=0.4},
]

\pgfplotstableread[row sep=crcr]{
t     Xo_nominal  Xo_true    \\
0.00  0.00000000  0.00000000 \\
0.02  0.00000124 -0.00000000 \\
0.04  0.00048885  0.00068113 \\
0.06 -0.00133380 -0.00142920 \\
0.08 -0.00448567 -0.00478964 \\
0.10 -0.00917068 -0.00961523 \\
0.12 -0.01511218 -0.01564481 \\
0.14 -0.02238318 -0.02297603 \\
0.16 -0.02863063 -0.02946539 \\
0.18 -0.03376399 -0.03467937 \\
0.20 -0.03797908 -0.03893385 \\
0.22 -0.04145292 -0.04241803 \\
0.24 -0.04456434 -0.04552702 \\
0.26 -0.04744502 -0.04840350 \\
0.28 -0.05121373 -0.05221490 \\
0.30 -0.05670101 -0.05788669 \\
0.32 -0.06292489 -0.06445319 \\
0.34 -0.06590547 -0.06770018 \\
0.36 -0.06245624 -0.06333155 \\
0.38 -0.05840551 -0.05874647 \\
0.40 -0.05399231 -0.05408217 \\
0.42 -0.04978018 -0.04953822 \\
0.44 -0.04567894 -0.04498380 \\
0.46 -0.04235101 -0.04054533 \\
0.48 -0.03961620 -0.03686589 \\
0.50 -0.03770080 -0.03432216 \\
0.52 -0.03545520 -0.03134849 \\
0.54 -0.03295677 -0.02792105 \\
0.56 -0.03011919 -0.02400709 \\
0.58 -0.02723265 -0.02025727 \\
0.60 -0.02435821 -0.01663340 \\
0.62 -0.02160722 -0.01313382 \\
0.64 -0.01913004 -0.00979954 \\
0.66 -0.01705700 -0.00666358 \\
0.68 -0.01529410 -0.00373857 \\
0.70 -0.01364582 -0.00097437 \\
0.72 -0.01202169  0.00167768 \\
0.74 -0.01041792  0.00425596 \\
0.76 -0.00883016  0.00678895 \\
0.78 -0.00726000  0.00930031 \\
0.80 -0.00573596  0.01180269 \\
0.82 -0.00431229  0.01430187 \\
0.84 -0.00304966  0.01679763 \\
0.86 -0.00198588  0.01898187 \\
0.88 -0.00108175  0.02084961 \\
0.90 -0.00025239  0.02257926 \\
0.92  0.00054634  0.02428431 \\
0.94  0.00132357  0.02600705 \\
0.96  0.00209342  0.02778989 \\
0.98  0.00286213  0.02965419 \\
1.00  0.00361765  0.03156700 \\
1.02  0.00434993  0.03348977 \\
1.04  0.00505858  0.03540081 \\
1.06  0.00574285  0.03728138 \\
1.08  0.00639901  0.03912080 \\
1.10  0.00702331  0.04091394 \\
1.12  0.00761225  0.04265618 \\
1.14  0.00816157  0.04434138 \\
1.16  0.00866521  0.04596043 \\
1.18  0.00911393  0.04749941 \\
1.20  0.00949400  0.04893643 \\
1.22  0.00979006  0.05023798 \\
1.24  0.00999411  0.05135017 \\
1.26  0.01011503  0.05222116 \\
1.28  0.01017697  0.05285083 \\
1.30  0.01020569  0.05328197 \\
1.32  0.01021841  0.05357551 \\
1.34  0.01022390  0.05377977 \\
1.36  0.01022616  0.05392718 \\
1.38  0.01022694  0.05403613 \\
1.40  0.01022705  0.05412373 \\
1.42  0.01022686  0.05419517 \\
1.44  0.01022653  0.05425461 \\
1.46  0.01022616  0.05430522 \\
1.48  0.01022577  0.05434751 \\
1.50  0.01022541  0.05438275 \\
1.52  0.01022506  0.05441136 \\
1.54  0.01022474  0.05443464 \\
1.56  0.01022444  0.05444781 \\
1.58  0.01022417  0.05445288 \\
1.60  0.01022392  0.05445302 \\
1.62  0.01022370  0.05445792 \\
1.64  0.01022349  0.05445837 \\
1.66  0.01022331  0.05445819 \\
1.68  0.01022314  0.05445810 \\
1.70  0.01022301  0.05445808 \\
1.72  0.01022288  0.05445791 \\
1.74  0.01022280  0.05445783 \\
1.76  0.01022272  0.05445778 \\
1.78  0.01022269  0.05445775 \\
1.80  0.01022267  0.05445774 \\
1.82  0.01022270  0.05445773 \\
1.84  0.01022274  0.05445772 \\
1.86  0.01022283  0.05445772 \\
1.88  0.01022294  0.05445772 \\
1.90  0.01022310  0.05445771 \\
1.92  0.01022326  0.05445771 \\
1.94  0.01022347  0.05445771 \\
1.96  0.01022366  0.05445771 \\
1.98  0.01022388  0.05445771 \\
2.00  0.01022407  0.05445768 \\
}\datatable

\addplot[semithick, densely dashed, forget plot] {0.0};

\addplot[thick, color2, forget plot]
    table[x=t, y=Xo_nominal]{\datatable};

\addplot[thick, color1, forget plot]
    table[x=t, y=Xo_true]{\datatable};

\end{axis}

\end{tikzpicture}        
      }
    } \\
    \subfloat[]{%
        \hspace*{-2mm}
        \begin{tikzpicture}

\definecolor{color1}{RGB}{228,26,28}
\definecolor{color2}{RGB}{55,126,184}
\definecolor{color3}{RGB}{77,175,74}
\definecolor{color4}{RGB}{152,78,163}

\begin{axis}[
width=3.6cm, height=3.4cm,
xlabel={\footnotesize $t$},
ylabel={\footnotesize $x$},
xlabel shift=-1mm,
ylabel shift=-3mm,
xmin=0, xmax=2,
tick label style={
    font=\scriptsize,
    /pgf/number format/fixed,
},
legend cell align={left},
legend style={
    font=\footnotesize,
    at={(-0.3,1.25)},
    anchor=north west,
    draw=none,
    fill=none,
    row sep=-2pt,
    inner sep=0pt,
},
legend image post style={scale=0.4},
]

\pgfplotstableread[row sep=crcr]{
t     Xo_true    \\
0.00  0.20000000 \\
0.02  0.20000000 \\
0.04  0.19929339 \\
0.06  0.19624413 \\
0.08  0.19231783 \\
0.10  0.18765343 \\
0.12  0.18198639 \\
0.14  0.17540321 \\
0.16  0.16785346 \\
0.18  0.15931226 \\
0.20  0.14977791 \\
0.22  0.13927388 \\
0.24  0.12778100 \\
0.26  0.11531600 \\
0.28  0.10192444 \\
0.30  0.08760198 \\
0.32  0.07259018 \\
0.34  0.05702264 \\
0.36  0.04102246 \\
0.38  0.02454927 \\
0.40  0.00777347 \\
0.42 -0.00917048 \\
0.44 -0.02617741 \\
0.46 -0.04315137 \\
0.48 -0.06000363 \\
0.50 -0.07665214 \\
0.52 -0.09302156 \\
0.54 -0.10904313 \\
0.56 -0.12465451 \\
0.58 -0.13979938 \\
0.60 -0.15442708 \\
0.62 -0.16849276 \\
0.64 -0.18195964 \\
0.66 -0.19480817 \\
0.68 -0.20705010 \\
0.70 -0.21869768 \\
0.72 -0.22974425 \\
0.74 -0.24016794 \\
0.76 -0.24993360 \\
0.78 -0.25899414 \\
0.80 -0.26729127 \\
0.82 -0.27473590 \\
0.84 -0.28120808 \\
0.86 -0.28659759 \\
0.88 -0.29082252 \\
0.90 -0.29384941 \\
0.92 -0.29575023 \\
0.94 -0.29692252 \\
0.96 -0.29766365 \\
0.98 -0.29847069 \\
1.00 -0.29927078 \\
1.02 -0.30002397 \\
1.04 -0.30070073 \\
1.06 -0.30127907 \\
1.08 -0.30174142 \\
1.10 -0.30207266 \\
1.12 -0.30225847 \\
1.14 -0.30228379 \\
1.16 -0.30228402 \\
1.18 -0.30228406 \\
1.20 -0.30228409 \\
1.22 -0.30228411 \\
1.24 -0.30228413 \\
1.26 -0.30228416 \\
1.28 -0.30228423 \\
1.30 -0.30228462 \\
1.32 -0.30228541 \\
1.34 -0.30239191 \\
1.36 -0.30265670 \\
1.38 -0.30309978 \\
1.40 -0.30374098 \\
1.42 -0.30460132 \\
1.44 -0.30570049 \\
1.46 -0.30703573 \\
1.48 -0.30848675 \\
1.50 -0.31008061 \\
1.52 -0.31178395 \\
1.54 -0.31357216 \\
1.56 -0.31542303 \\
1.58 -0.31731886 \\
1.60 -0.31924571 \\
1.62 -0.32119291 \\
1.64 -0.32315240 \\
1.66 -0.32511821 \\
1.68 -0.32708611 \\
1.70 -0.32905340 \\
1.72 -0.33101836 \\
1.74 -0.33297952 \\
1.76 -0.33493574 \\
1.78 -0.33688598 \\
1.80 -0.33882922 \\
1.82 -0.34076432 \\
1.84 -0.34269004 \\
1.86 -0.34460510 \\
1.88 -0.34650505 \\
1.90 -0.34837947 \\
1.92 -0.35021747 \\
1.94 -0.35200602 \\
1.96 -0.35372947 \\
1.98 -0.35536935 \\
2.00 -0.35690557 \\
}\datatable

\addplot[semithick, densely dashed, forget plot] {-0.355};

\addplot[thick, color1]
    table[x=t, y=Xo_true]{\datatable};

\end{axis}

\end{tikzpicture} \hspace*{-5mm}
        \begin{tikzpicture}

\definecolor{color1}{RGB}{228,26,28}
\definecolor{color2}{RGB}{55,126,184}
\definecolor{color3}{RGB}{77,175,74}
\definecolor{color4}{RGB}{152,78,163}

\begin{axis}[
width=3.6cm, height=3.4cm,
xlabel={\footnotesize $t$},
ylabel={\footnotesize $y$},
xlabel shift=-1mm,
ylabel shift=-2mm,
xmin=0, xmax=2,
ymax=0.85,
tick label style={
    font=\scriptsize,
    /pgf/number format/fixed,
},
legend cell align={left},
legend style={
    font=\footnotesize,
    at={(-0.3,1.25)},
    anchor=north west,
    draw=none,
    fill=none,
    row sep=-2pt,
    inner sep=0pt,
},
legend image post style={scale=0.4},
]

\pgfplotstableread[row sep=crcr]{
t     Xo_true    \\
0.0   0.30000000 \\
0.02  0.30000002 \\
0.04  0.30684394 \\
0.06  0.31863075 \\
0.08  0.33496399 \\
0.10  0.35496555 \\
0.12  0.37768863 \\
0.14  0.40248150 \\
0.16  0.42870226 \\
0.18  0.45580133 \\
0.20  0.48331108 \\
0.22  0.51084067 \\
0.24  0.53799661 \\
0.26  0.56443978 \\
0.28  0.58987675 \\
0.30  0.61394560 \\
0.32  0.63663630 \\
0.34  0.65785615 \\
0.36  0.67760149 \\
0.38  0.69587035 \\
0.40  0.71270334 \\
0.42  0.72809842 \\
0.44  0.74203148 \\
0.46  0.75449059 \\
0.48  0.76547838 \\
0.50  0.77501065 \\
0.52  0.78311393 \\
0.54  0.78982308 \\
0.56  0.79517985 \\
0.58  0.79923376 \\
0.60  0.80204862 \\
0.62  0.80372293 \\
0.64  0.80444399 \\
0.66  0.80461546 \\
0.68  0.80477434 \\
0.70  0.80491990 \\
0.72  0.80505210 \\
0.74  0.80517115 \\
0.76  0.80527734 \\
0.78  0.80537091 \\
0.80  0.80545204 \\
0.82  0.80536930 \\
0.84  0.80490360 \\
0.86  0.80406162 \\
0.88  0.80307413 \\
0.90  0.80243190 \\
0.92  0.80313286 \\
0.94  0.80486848 \\
0.96  0.80590137 \\
0.98  0.80694057 \\
1.00  0.80789264 \\
1.02  0.80872353 \\
1.04  0.80941705 \\
1.06  0.80996838 \\
1.08  0.81037879 \\
1.10  0.81065271 \\
1.12  0.81079587 \\
1.14  0.81081399 \\
1.16  0.81081401 \\
1.18  0.81081401 \\
1.20  0.81081401 \\
1.22  0.81081400 \\
1.24  0.81081400 \\
1.26  0.81081399 \\
1.28  0.81081398 \\
1.30  0.81081394 \\
1.32  0.81081415 \\
1.34  0.81085186 \\
1.36  0.81093951 \\
1.38  0.81107615 \\
1.40  0.81126053 \\
1.42  0.81149136 \\
1.44  0.81176672 \\
1.46  0.81210982 \\
1.48  0.81234214 \\
1.50  0.81237145 \\
1.52  0.81240396 \\
1.54  0.81243916 \\
1.56  0.81247514 \\
1.58  0.81251170 \\
1.60  0.81254824 \\
1.62  0.81258456 \\
1.64  0.81262056 \\
1.66  0.81265619 \\
1.68  0.81269136 \\
1.70  0.81272578 \\
1.72  0.81275950 \\
1.74  0.81279295 \\
1.76  0.81282619 \\
1.78  0.81285897 \\
1.80  0.81289098 \\
1.82  0.81292184 \\
1.84  0.81295120 \\
1.86  0.81297875 \\
1.88  0.81298532 \\
1.90  0.81292992 \\
1.92  0.81279959 \\
1.94  0.81259176 \\
1.96  0.81230794 \\
1.98  0.81194890 \\
2.00  0.81153036 \\
}\datatable

\addplot[semithick, densely dashed, forget plot] {0.8};

\addplot[thick, color1]
    table[x=t, y=Xo_true]{\datatable};
\addlegendentry{True dynamics rollout}

\addplot[name path=lower, draw=none, forget plot] {0.8};
\addplot[name path=upper, draw=none, forget plot] {0.9};

\addplot[
    draw=none,
    pattern=crosshatch,
    pattern color=black!60,
]
fill between[
    of=upper and lower,
];

\end{axis}

\end{tikzpicture} \hspace*{-5mm}
        \begin{tikzpicture}

\definecolor{color1}{RGB}{228,26,28}
\definecolor{color2}{RGB}{55,126,184}
\definecolor{color3}{RGB}{77,175,74}
\definecolor{color4}{RGB}{152,78,163}

\begin{axis}[
width=3.6cm, height=3.4cm,
xlabel={\footnotesize $t$},
ylabel={\footnotesize $\theta$},
xlabel shift=-1mm,
ylabel shift=-2.8mm,
xmin=0, xmax=2,
tick label style={
    font=\scriptsize,
    /pgf/number format/fixed,
},
every y tick scale label/.style={
    at={(axis description cs:0.03,1.05)},
    anchor=north east,
    inner sep=0,
},
legend cell align={left},
legend style={
    font=\footnotesize,
    at={(-0.25,1.25)},
    anchor=north west,
    draw=none,
    fill=none,
    row sep=-2pt,
    inner sep=0pt,
},
legend image post style={scale=0.4},
]

\pgfplotstableread[row sep=crcr]{
t      Xo_true    \\
0.00   0.00000000 \\
0.02  -0.00000000 \\
0.04  -0.00030288 \\
0.06  -0.00337745 \\
0.08  -0.00656407 \\
0.10  -0.00907595 \\
0.12  -0.01105399 \\
0.14  -0.01209782 \\
0.16  -0.01222828 \\
0.18  -0.01149116 \\
0.20  -0.00995820 \\
0.22  -0.00770201 \\
0.24  -0.00490260 \\
0.26  -0.00166500 \\
0.28   0.00193960 \\
0.30   0.00572216 \\
0.32   0.00997731 \\
0.34   0.01466699 \\
0.36   0.01962938 \\
0.38   0.02372504 \\
0.40   0.02708023 \\
0.42   0.02972539 \\
0.44   0.03164914 \\
0.46   0.03283885 \\
0.48   0.03328694 \\
0.50   0.03299191 \\
0.52   0.03195793 \\
0.54   0.03019430 \\
0.56   0.02771533 \\
0.58   0.02454207 \\
0.60   0.02070847 \\
0.62   0.01627888 \\
0.64   0.01139295 \\
0.66   0.00636968 \\
0.68   0.00163351 \\
0.70  -0.00283685 \\
0.72  -0.00705166 \\
0.74  -0.01101222 \\
0.76  -0.01471243 \\
0.78  -0.01813983 \\
0.80  -0.02127624 \\
0.82  -0.02421805 \\
0.84  -0.02710608 \\
0.86  -0.02989409 \\
0.88  -0.03235685 \\
0.90  -0.03406778 \\
0.92  -0.03421342 \\
0.94  -0.03320613 \\
0.96  -0.03258603 \\
0.98  -0.03193210 \\
1.00  -0.03130237 \\
1.02  -0.03072432 \\
1.04  -0.03021632 \\
1.06  -0.02979061 \\
1.08  -0.02945615 \\
1.10  -0.02922022 \\
1.12  -0.02908970 \\
1.14  -0.02907218 \\
1.16  -0.02907214 \\
1.18  -0.02907214 \\
1.20  -0.02907214 \\
1.22  -0.02907214 \\
1.24  -0.02907214 \\
1.26  -0.02907214 \\
1.28  -0.02907213 \\
1.30  -0.02907205 \\
1.32  -0.02907162 \\
1.34  -0.02900274 \\
1.36  -0.02883130 \\
1.38  -0.02854517 \\
1.40  -0.02813199 \\
1.42  -0.02757865 \\
1.44  -0.02687293 \\
1.46  -0.02601531 \\
1.48  -0.02520795 \\
1.50  -0.02449440 \\
1.52  -0.02372187 \\
1.54  -0.02289908 \\
1.56  -0.02203689 \\
1.58  -0.02114292 \\
1.60  -0.02022379 \\
1.62  -0.01928457 \\
1.64  -0.01832925 \\
1.66  -0.01736083 \\
1.68  -0.01638191 \\
1.70  -0.01539635 \\
1.72  -0.01440746 \\
1.74  -0.01341599 \\
1.76  -0.01242265 \\
1.78  -0.01142820 \\
1.80  -0.01043359 \\
1.82  -0.00943980 \\
1.84  -0.00844788 \\
1.86  -0.00745888 \\
1.88  -0.00649086 \\
1.90  -0.00558269 \\
1.92  -0.00475044 \\
1.94  -0.00400286 \\
1.96  -0.00334681 \\
1.98  -0.00279140 \\
2.00  -0.00233449 \\
}\datatable

\addplot[semithick, densely dashed, forget plot] {0.0};

\addplot[thick, color1, forget plot]
    table[x=t, y=Xo_true]{\datatable};

\end{axis}

\end{tikzpicture}
      \label{fig:iiwa-bucket-fallfree-iLQR}
    } \\
    \subfloat[]{%
      \shortstack{
        \adjincludegraphics[height=1.4cm, angle=90, trim={{.01\width} {.07\height} {.04\width} 0}, clip]{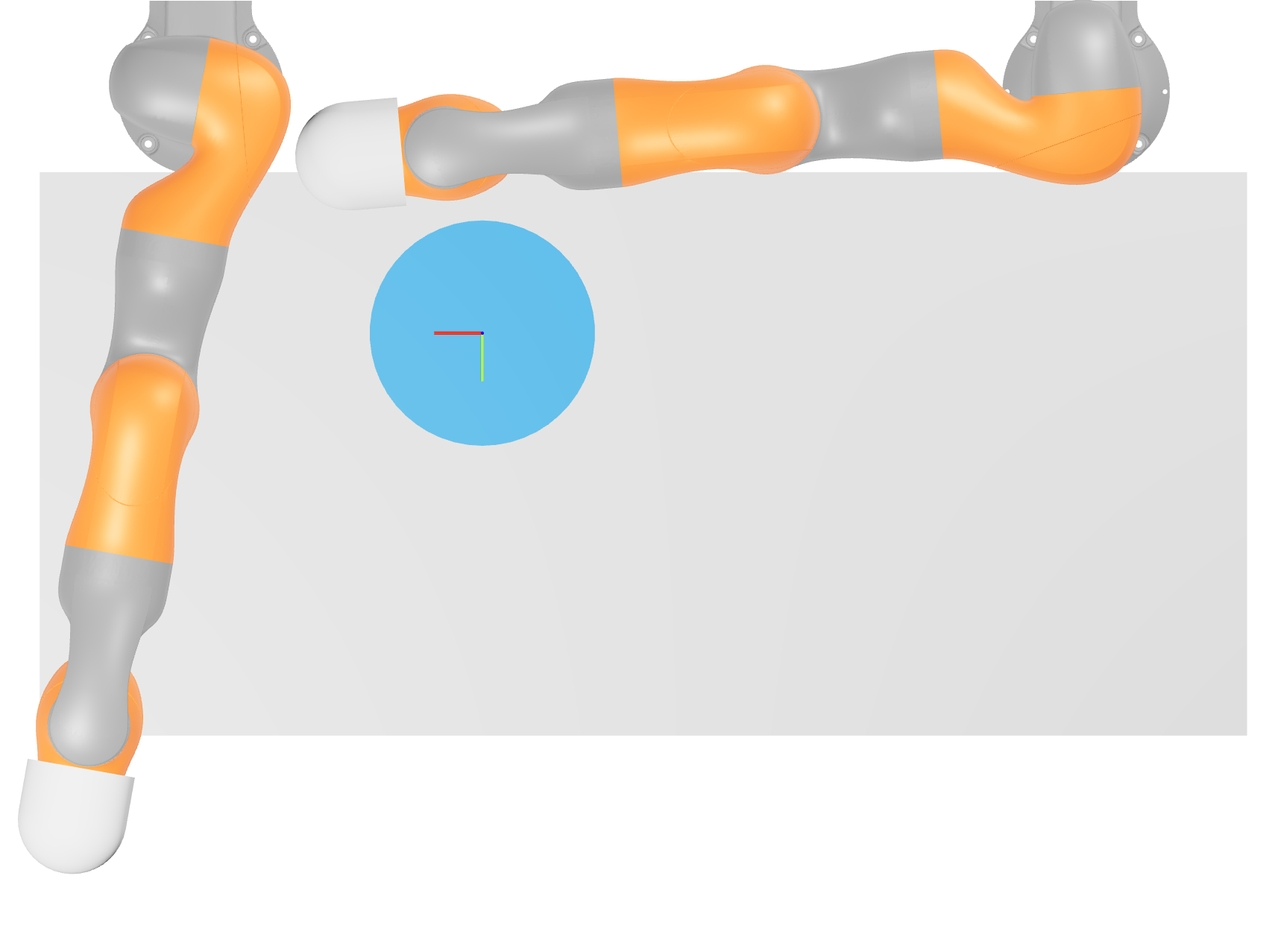}
        \adjincludegraphics[height=1.4cm, angle=90, trim={{.01\width} {.07\height} {.04\width} 0}, clip]{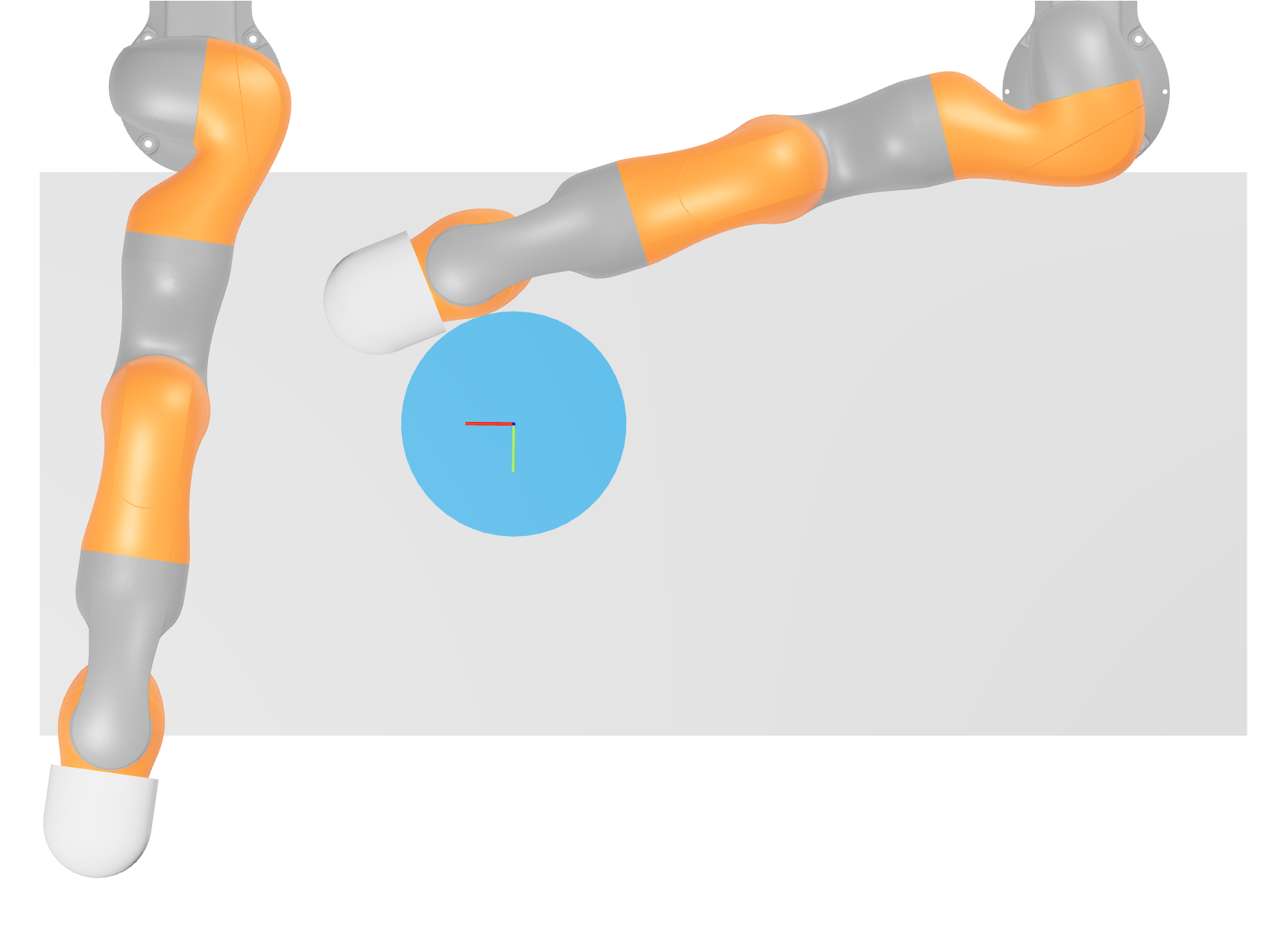}
        \adjincludegraphics[height=1.4cm, angle=90, trim={{.01\width} {.07\height} {.04\width} 0}, clip]{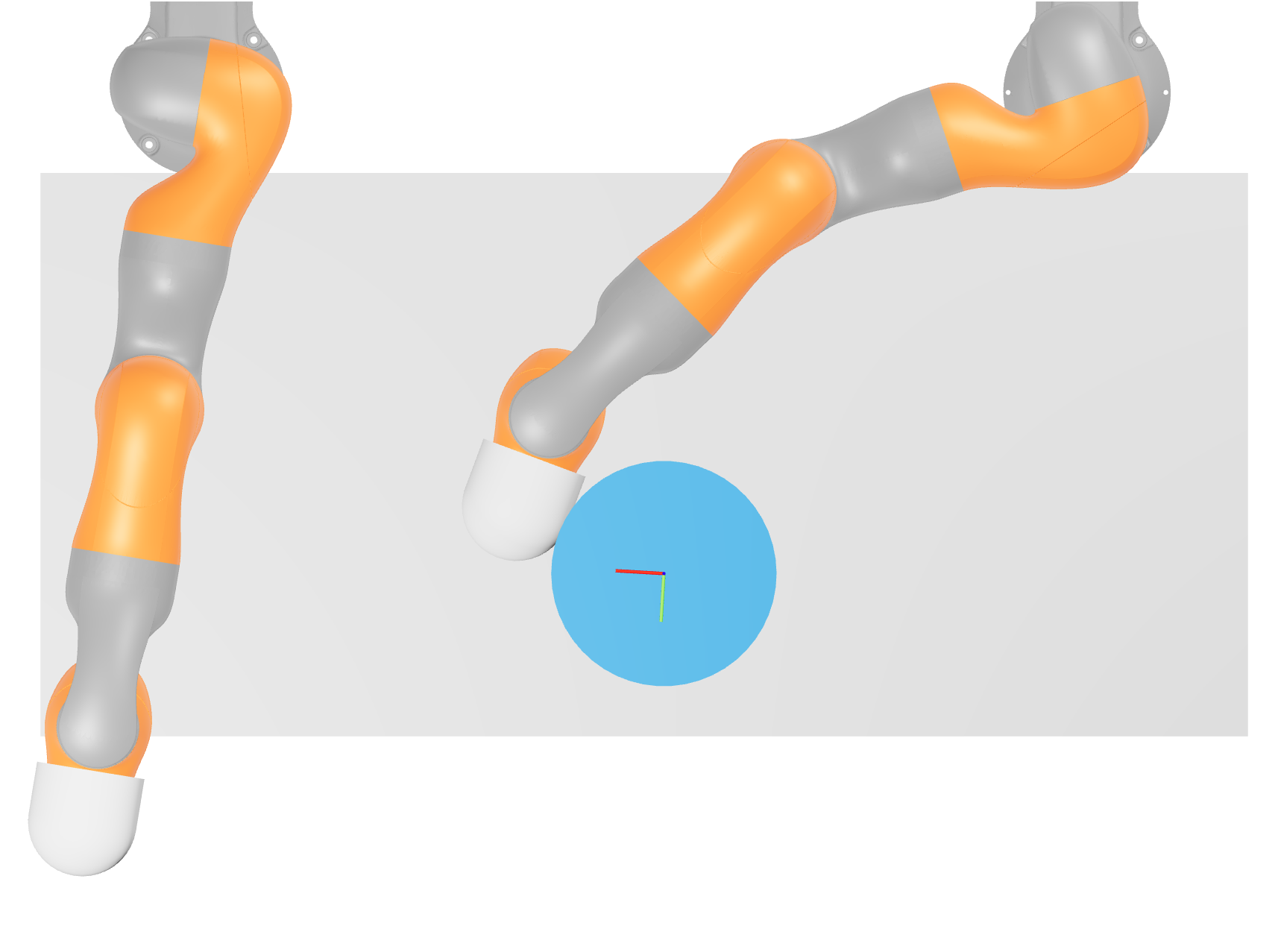}
        \adjincludegraphics[height=1.4cm, angle=90, trim={{.01\width} {.07\height} {.04\width} 0}, clip]{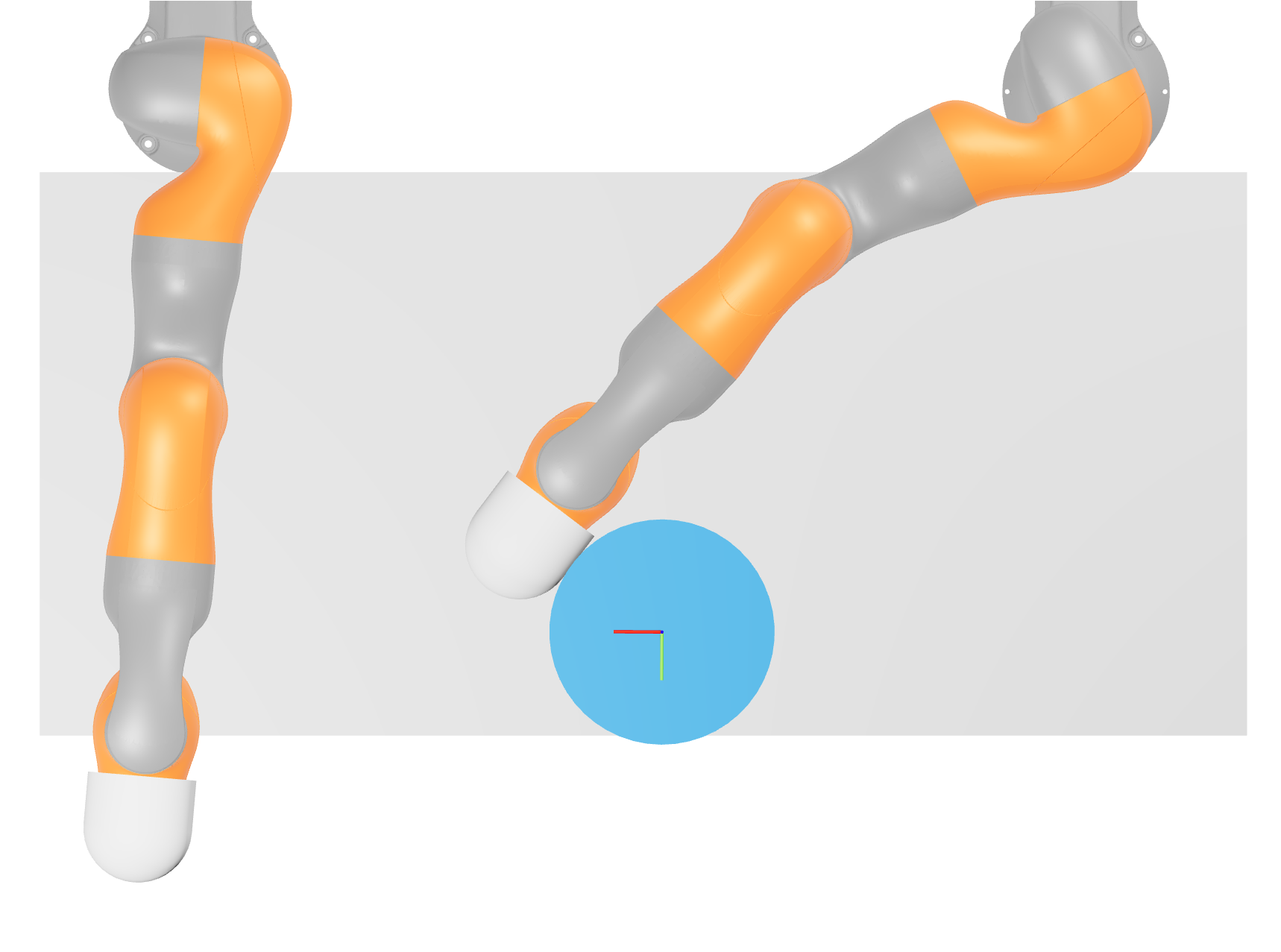}
        \adjincludegraphics[height=1.4cm, angle=90, trim={{.01\width} {.07\height} {.04\width} 0}, clip]{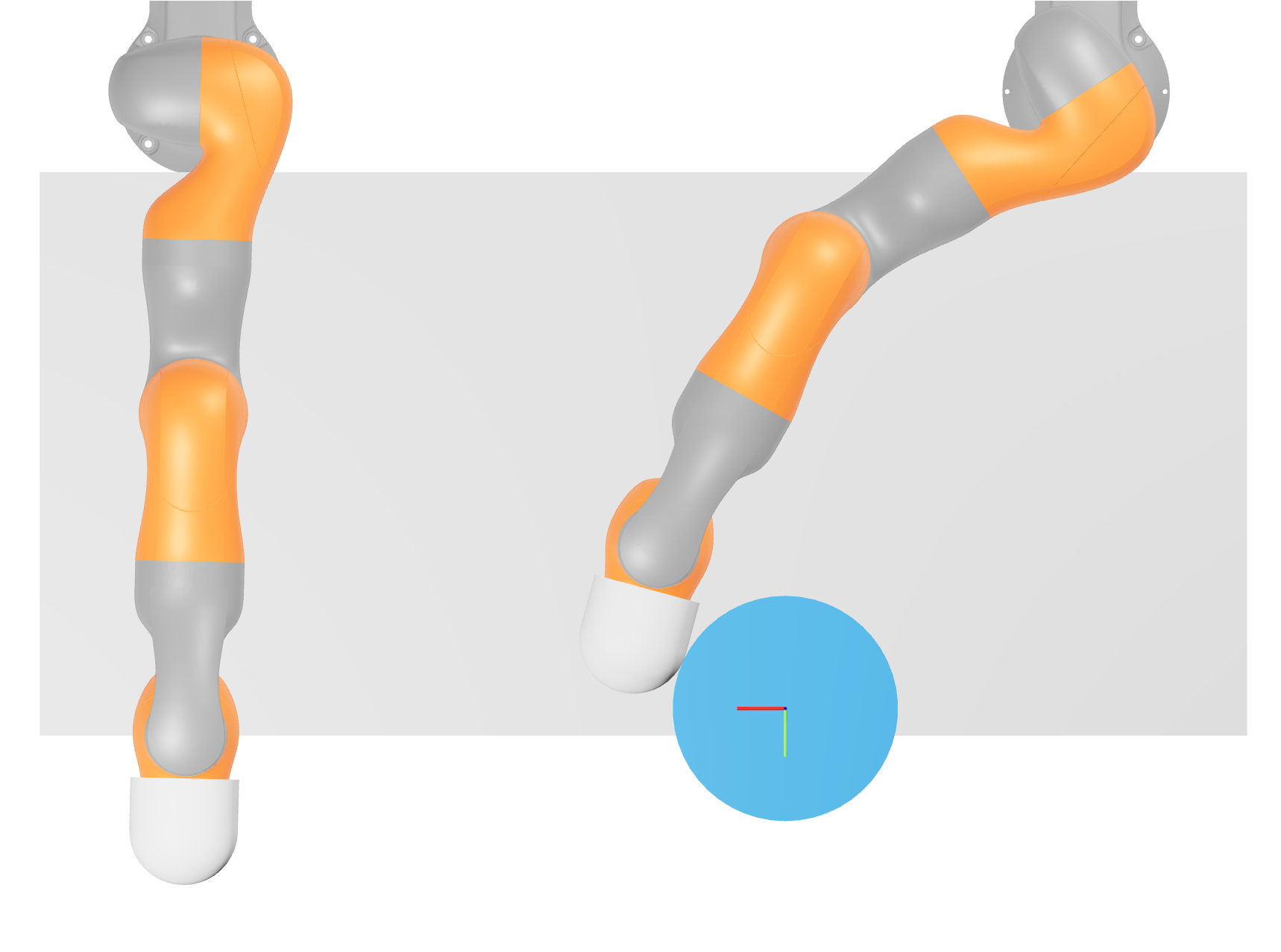}
        \adjincludegraphics[height=1.4cm, angle=90, trim={{.01\width} {.07\height} {.04\width} 0}, clip]{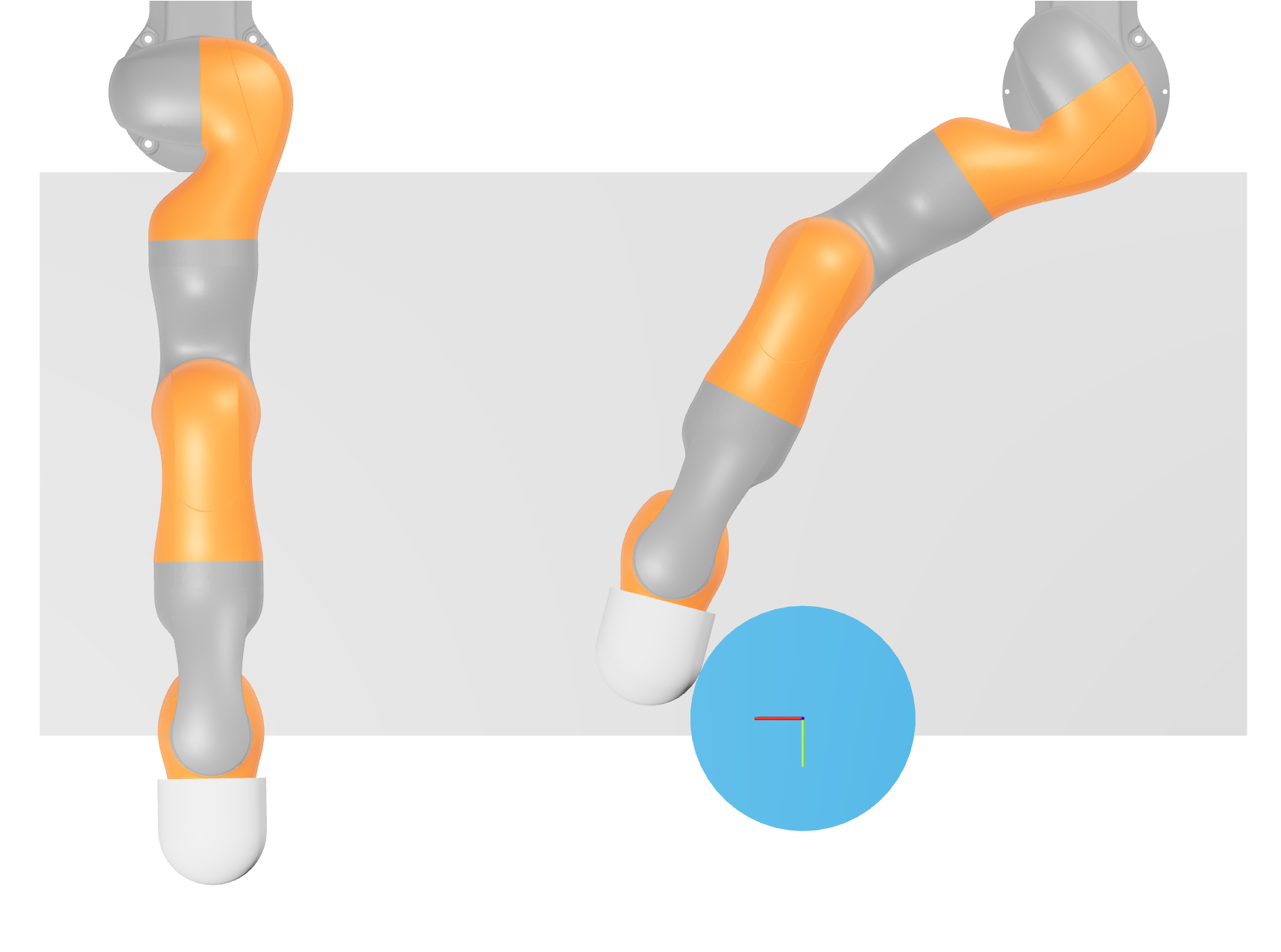}
        \\[2mm]
        \hspace*{-2mm}
        \begin{tikzpicture}

\definecolor{color1}{RGB}{228,26,28}
\definecolor{color2}{RGB}{55,126,184}
\definecolor{color3}{RGB}{77,175,74}
\definecolor{color4}{RGB}{152,78,163}

\begin{axis}[
width=3.6cm, height=3.4cm,
xlabel={\footnotesize $t$},
ylabel={\footnotesize $x$},
xlabel shift=-1mm,
ylabel shift=-3mm,
xmin=0, xmax=2,
tick label style={
    font=\scriptsize,
    /pgf/number format/fixed,
},
legend cell align={left},
legend style={
    font=\footnotesize,
    at={(-0.3,1.25)},
    anchor=north west,
    draw=none,
    fill=none,
    row sep=-2pt,
    inner sep=0pt,
},
legend image post style={scale=0.4},
]

\pgfplotstableread[row sep=crcr]{
t     Xo_nominal  Xo_true     Xo_lb       Xo_ub      \\
0.00  0.20000000  0.20000000  0.20000000  0.20000000 \\
0.02  0.19995382  0.20000000  0.20012113  0.19991634 \\
0.04  0.19831473  0.19713575  0.19837348  0.19697785 \\
0.06  0.19540338  0.19374830  0.19541179  0.19326937 \\
0.08  0.19112700  0.18932067  0.19112537  0.18850036 \\
0.10  0.18553789  0.18363787  0.18554588  0.18258442 \\
0.12  0.17856359  0.17652233  0.17857861  0.17529329 \\
0.14  0.17062068  0.16888471  0.17092236  0.16688530 \\
0.16  0.16102397  0.15905869  0.16149123  0.15739702 \\
0.18  0.15043960  0.14841055  0.15101559  0.14670191 \\
0.20  0.13883211  0.13678724  0.13952560  0.13498626 \\
0.22  0.12635510  0.12430635  0.12717582  0.12241592 \\
0.24  0.11321644  0.11117284  0.11416914  0.10920130 \\
0.26  0.09965307  0.09761971  0.10073744  0.09557874 \\
0.28  0.08592634  0.08391013  0.08713813  0.08179877 \\
0.30  0.07230632  0.07024621  0.07363033  0.06816930 \\
0.32  0.05924174  0.05741109  0.06147687  0.05402358 \\
0.34  0.04602266  0.04438933  0.04949007  0.03943967 \\
0.36  0.03288008  0.03091875  0.03639738  0.02678446 \\
0.38  0.02042142  0.01854304  0.02407356  0.01443547 \\
0.40  0.00855059  0.00677621  0.01238549  0.00262028 \\
0.42 -0.00274206 -0.00441032  0.00131322 -0.00862015 \\
0.44 -0.01346876 -0.01502633 -0.00916069 -0.01929522 \\
0.46 -0.02368086 -0.02512464 -0.01909996 -0.02945790 \\
0.48 -0.03339899 -0.03472456 -0.02853128 -0.03912802 \\
0.50 -0.04264279 -0.04385117 -0.03746881 -0.04833694 \\
0.52 -0.05143024 -0.05252097 -0.04592954 -0.05710312 \\
0.54 -0.05977931 -0.06074742 -0.05393741 -0.06543651 \\
0.56 -0.06770921 -0.06854933 -0.06151395 -0.07335500 \\
0.58 -0.07523939 -0.07594612 -0.06868170 -0.08087678 \\
0.60 -0.08238927 -0.08295714 -0.07546475 -0.08801931 \\
0.62 -0.08917808 -0.08960264 -0.08188497 -0.09480210 \\
0.64 -0.09562537 -0.09590288 -0.08796427 -0.10124494 \\
0.66 -0.10174656 -0.10188952 -0.09372638 -0.10735886 \\
0.68 -0.10755356 -0.10757183 -0.09919509 -0.11314887 \\
0.70 -0.11307054 -0.11298544 -0.10437147 -0.11867289 \\
0.72 -0.11829135 -0.11810019 -0.10938055 -0.12388053 \\
0.74 -0.12323844 -0.12294091 -0.11416013 -0.12879918 \\
0.76 -0.12791791 -0.12752500 -0.11872296 -0.13343866 \\
0.78 -0.13232438 -0.13185460 -0.12308148 -0.13778771 \\
0.80 -0.13644988 -0.13592572 -0.12723659 -0.14183240 \\
0.82 -0.14029297 -0.13973531 -0.13117444 -0.14557180 \\
0.84 -0.14386937 -0.14329055 -0.13487304 -0.14903299 \\
0.86 -0.14721260 -0.14661427 -0.13832424 -0.15226634 \\
0.88 -0.15036916 -0.14974287 -0.14153849 -0.15533735 \\
0.90 -0.15336820 -0.15271167 -0.14456855 -0.15826708 \\
0.92 -0.15622280 -0.15554027 -0.14744354 -0.16106145 \\
0.94 -0.15894226 -0.15824077 -0.15017339 -0.16372893 \\
0.96 -0.16153520 -0.16082259 -0.15276524 -0.16627820 \\
0.98 -0.16400715 -0.16328994 -0.15522201 -0.16871475 \\
1.00 -0.16635656 -0.16563617 -0.15753754 -0.17103666 \\
1.02 -0.16858665 -0.16786437 -0.15971456 -0.17324710 \\
1.04 -0.17070073 -0.16997187 -0.16175293 -0.17534883 \\
1.06 -0.17270581 -0.17196431 -0.16366010 -0.17734863 \\
1.08 -0.17461212 -0.17385405 -0.16544840 -0.17925655 \\
1.10 -0.17643079 -0.17565440 -0.16713104 -0.18108339 \\
1.12 -0.17817244 -0.17737700 -0.16872040 -0.18283936 \\
1.14 -0.17983521 -0.17901252 -0.17021212 -0.18452159 \\
1.16 -0.18142466 -0.18056528 -0.17161311 -0.18613518 \\
1.18 -0.18294516 -0.18203932 -0.17292818 -0.18768423 \\
1.20 -0.18438713 -0.18343021 -0.17415053 -0.18915134 \\
1.22 -0.18574543 -0.18473142 -0.17526883 -0.19053090 \\
1.24 -0.18701989 -0.18593864 -0.17627802 -0.19182313 \\
1.26 -0.18821445 -0.18705133 -0.17717993 -0.19303237 \\
1.28 -0.18933638 -0.18807307 -0.17798272 -0.19416631 \\
1.30 -0.19039559 -0.18901003 -0.17870015 -0.19523541 \\
1.32 -0.19140400 -0.18987396 -0.17935048 -0.19625268 \\
1.34 -0.19237328 -0.19067828 -0.17995219 -0.19723124 \\
1.36 -0.19330457 -0.19142989 -0.18050831 -0.19816958 \\
1.38 -0.19419912 -0.19213155 -0.18101976 -0.19906835 \\
1.40 -0.19505815 -0.19278472 -0.18148749 -0.19992837 \\
1.42 -0.19588290 -0.19339054 -0.18191250 -0.20075058 \\
1.44 -0.19667465 -0.19395022 -0.18229599 -0.20153601 \\
1.46 -0.19743469 -0.19446514 -0.18263929 -0.20228583 \\
1.48 -0.19816436 -0.19493141 -0.18294394 -0.20300131 \\
1.50 -0.19886503 -0.19535351 -0.18321164 -0.20368386 \\
1.52 -0.19953810 -0.19573415 -0.18344425 -0.20433494 \\
1.54 -0.20018500 -0.19607669 -0.18364377 -0.20495611 \\
1.56 -0.20080713 -0.19638454 -0.18381229 -0.20554896 \\
1.58 -0.20140593 -0.19666036 -0.18395194 -0.20611509 \\
1.60 -0.20198281 -0.19690620 -0.18406491 -0.20665616 \\
1.62 -0.20253916 -0.19712384 -0.18415337 -0.20717376 \\
1.64 -0.20307628 -0.19731489 -0.18421939 -0.20766942 \\
1.66 -0.20359546 -0.19748073 -0.18426500 -0.20814467 \\
1.68 -0.20409792 -0.19762304 -0.18429214 -0.20860095 \\
1.70 -0.20458482 -0.19774444 -0.18430263 -0.20903964 \\
1.72 -0.20505724 -0.19784648 -0.18429818 -0.20946205 \\
1.74 -0.20551623 -0.19792971 -0.18428037 -0.20986940 \\
1.76 -0.20596275 -0.19799492 -0.18425066 -0.21026284 \\
1.78 -0.20639755 -0.19804414 -0.18421014 -0.21064329 \\
1.80 -0.20682072 -0.19807704 -0.18415881 -0.21101078 \\
1.82 -0.20723203 -0.19809190 -0.18409629 -0.21136502 \\
1.84 -0.20763153 -0.19809580 -0.18402266 -0.21170600 \\
1.86 -0.20802016 -0.19809662 -0.18393944 -0.21203490 \\
1.88 -0.20839965 -0.19809686 -0.18384917 -0.21235382 \\
1.90 -0.20877171 -0.19809696 -0.18375423 -0.21266485 \\
1.92 -0.20913824 -0.19809704 -0.18365719 -0.21297030 \\
1.94 -0.20950101 -0.19809712 -0.18356037 -0.21327233 \\
1.96 -0.20986195 -0.19809730 -0.18346629 -0.21357326 \\
1.98 -0.21022312 -0.19809808 -0.18337768 -0.21387564 \\
2.00 -0.21059523 -0.19816486 -0.18331848 -0.21419315 \\
}\datatable

\addplot[semithick, densely dashed, forget plot] {-0.355};

\addplot[thick, color2]
    table[x=t, y=Xo_nominal]{\datatable};
\addlegendentry{Nominal trajectory}

\addplot[thick, color1]
    table[x=t, y=Xo_true]{\datatable};

\addplot[name path=lower, draw=none, forget plot]
    table[x=t, y=Xo_lb]{\datatable};

\addplot[name path=upper, draw=none, forget plot]
    table[x=t, y=Xo_ub]{\datatable};

\addplot[fill=color3, fill opacity=0.4, draw=none, legend image code/.code={
        \draw[fill=color3, fill opacity=0.4, draw=none] (0cm,-0.1cm) rectangle (0.6cm,0.1cm);
    }]
    fill between[of=lower and upper];

\end{axis}

\end{tikzpicture} \hspace*{-5mm}
        \begin{tikzpicture}

\definecolor{color1}{RGB}{228,26,28}
\definecolor{color2}{RGB}{55,126,184}
\definecolor{color3}{RGB}{77,175,74}
\definecolor{color4}{RGB}{152,78,163}

\begin{axis}[
width=3.6cm, height=3.4cm,
xlabel={\footnotesize $t$},
ylabel={\footnotesize $y$},
xlabel shift=-1mm,
ylabel shift=-2mm,
xmin=0, xmax=2,
ymax=0.85,
tick label style={
    font=\scriptsize,
    /pgf/number format/fixed,
},
legend cell align={left},
legend style={
    font=\footnotesize,
    at={(-0.3,1.25)},
    anchor=north west,
    draw=none,
    fill=none,
    row sep=-2pt,
    inner sep=0pt,
},
legend image post style={scale=0.4},
]

\pgfplotstableread[row sep=crcr]{
t     Xo_nominal  Xo_true     Xo_lb       Xo_ub      \\
0.00  0.30000000  0.30000000  0.30000000  0.30000000 \\
0.02  0.30038597  0.30000002  0.30018233  0.29953232 \\
0.04  0.30517557  0.30437717  0.30483246  0.30375165 \\
0.06  0.31574814  0.31466246  0.31534694  0.31387622 \\
0.08  0.33099204  0.32979131  0.33058498  0.32883133 \\
0.10  0.34984012  0.34857771  0.34946250  0.34746757 \\
0.12  0.37129191  0.36996041  0.37094259  0.36872589 \\
0.14  0.39460452  0.39348504  0.39451279  0.39245496 \\
0.16  0.41889744  0.41770515  0.41887956  0.41666551 \\
0.18  0.44365943  0.44244547  0.44368101  0.44126611 \\
0.20  0.46826321  0.46703771  0.46832928  0.46569991 \\
0.22  0.49221523  0.49097348  0.49233466  0.48947283 \\
0.24  0.51510254  0.51383666  0.51528549  0.51216559 \\
0.26  0.53657587  0.53526940  0.53683294  0.53341513 \\
0.28  0.55634754  0.55498201  0.55668937  0.55292382 \\
0.30  0.57449086  0.57298091  0.57490977  0.57079715 \\
0.32  0.59137281  0.59007630  0.59209550  0.58826914 \\
0.34  0.60659840  0.60552578  0.60741587  0.60431759 \\
0.36  0.62025798  0.61888031  0.62115164  0.61797839 \\
0.38  0.63307153  0.63166563  0.63411044  0.63049532 \\
0.40  0.64486680  0.64347162  0.64608871  0.64197872 \\
0.42  0.65566578  0.65429231  0.65710075  0.65249084 \\
0.44  0.66552166  0.66418039  0.66719457  0.66208565 \\
0.46  0.67449501  0.67319368  0.67641865  0.67081991 \\
0.48  0.68266807  0.68141571  0.68484722  0.67877626 \\
0.50  0.69011689  0.68892030  0.69255213  0.68602754 \\
0.52  0.69691068  0.69577505  0.69960294  0.69264046 \\
0.54  0.70311145  0.70204192  0.70606193  0.69867543 \\
0.56  0.70877393  0.70777427  0.71198402  0.70418294 \\
0.58  0.71394720  0.71302021  0.71741696  0.70920787 \\
0.60  0.71867537  0.71782306  0.72240293  0.71379253 \\
0.62  0.72299814  0.72222152  0.72697913  0.71797425 \\
0.64  0.72695990  0.72625049  0.73118911  0.72179443 \\
0.66  0.73061011  0.72994642  0.73508378  0.72529814 \\
0.68  0.73398460  0.73335042  0.73869882  0.72851740 \\
0.70  0.73709110  0.73645105  0.74210204  0.73138176 \\
0.72  0.73999738  0.73935440  0.74527149  0.73395696 \\
0.74  0.74271252  0.74206842  0.74826288  0.73631832 \\
0.76  0.74522307  0.74456195  0.75102953  0.73846577 \\
0.78  0.74752646  0.74681924  0.75357110  0.74037831 \\
0.80  0.74962847  0.74883757  0.75590804  0.74204255 \\
0.82  0.75154816  0.75063600  0.75807190  0.74347214 \\
0.84  0.75331855  0.75226007  0.76008652  0.74472652 \\
0.86  0.75497449  0.75376359  0.76197248  0.74587454 \\
0.88  0.75654955  0.75520298  0.76373812  0.74698853 \\
0.90  0.75804393  0.75656742  0.76540101  0.74804187 \\
0.92  0.75945818  0.75784393  0.76697068  0.74901711 \\
0.94  0.76079803  0.75903249  0.76845307  0.74991868 \\
0.96  0.76206925  0.76013666  0.76985428  0.75075311 \\
0.98  0.76327712  0.76116294  0.77118030  0.75152753 \\
1.00  0.76442515  0.76212194  0.77243562  0.75224859 \\
1.02  0.76551451  0.76301407  0.77362123  0.75291647 \\
1.04  0.76655083  0.76385498  0.77474421  0.75354081 \\
1.06  0.76753716  0.76465058  0.77580780  0.75412472 \\
1.08  0.76847716  0.76540182  0.77681592  0.75467069 \\
1.10  0.76937569  0.76611124  0.77777395  0.75518356 \\
1.12  0.77023794  0.76678412  0.77868780  0.75566938 \\
1.14  0.77106551  0.76743354  0.77955978  0.75613096 \\
1.16  0.77186068  0.76806532  0.78039232  0.75657016 \\
1.18  0.77262494  0.76868223  0.78118615  0.75698824 \\
1.20  0.77335081  0.76925483  0.78193756  0.75736582 \\
1.22  0.77403713  0.76977920  0.78264338  0.75770245 \\
1.24  0.77468484  0.77026058  0.78330336  0.75800006 \\
1.26  0.77529643  0.77070820  0.78391942  0.75826191 \\
1.28  0.77587563  0.77113358  0.78449539  0.75849251 \\
1.30  0.77642717  0.77155227  0.78503675  0.75869770 \\
1.32  0.77695665  0.77197907  0.78555028  0.75888487 \\
1.34  0.77746925  0.77242665  0.78604225  0.75906114 \\
1.36  0.77796390  0.77287613  0.78651280  0.75922103 \\
1.38  0.77844114  0.77332219  0.78696241  0.75936442 \\
1.40  0.77890147  0.77376183  0.78739163  0.75949109 \\
1.42  0.77934540  0.77419277  0.78780103  0.75960100 \\
1.44  0.77977347  0.77461279  0.78819119  0.75969429 \\
1.46  0.78018628  0.77501963  0.78856276  0.75977123 \\
1.48  0.78058442  0.77542027  0.78891643  0.75983229 \\
1.50  0.78096853  0.77580866  0.78925295  0.75987806 \\
1.52  0.78133928  0.77618099  0.78957309  0.75990923 \\
1.54  0.78169734  0.77653288  0.78987765  0.75992661 \\
1.56  0.78204336  0.77686014  0.79016746  0.75993105 \\
1.58  0.78237805  0.77716025  0.79044332  0.75992372 \\
1.60  0.78270208  0.77743228  0.79070611  0.75990569 \\
1.62  0.78301613  0.77767613  0.79095673  0.75987757 \\
1.64  0.78332077  0.77789146  0.79119593  0.75984025 \\
1.66  0.78361665  0.77807963  0.79142455  0.75979461 \\
1.68  0.78390438  0.77824158  0.79164336  0.75974151 \\
1.70  0.78418455  0.77837664  0.79185310  0.75968174 \\
1.72  0.78445770  0.77848614  0.79205448  0.75961605 \\
1.74  0.78472435  0.77857326  0.79224815  0.75954510 \\
1.76  0.78498496  0.77864102  0.79243471  0.75946952 \\
1.78  0.78523992  0.77868829  0.79261466  0.75938978 \\
1.80  0.78548931  0.77871251  0.79278807  0.75930590 \\
1.82  0.78573299  0.77871994  0.79295474  0.75921763 \\
1.84  0.78597095  0.77872138  0.79311466  0.75912501 \\
1.86  0.78620370  0.77872165  0.79326851  0.75902877 \\
1.88  0.78643209  0.77872172  0.79341739  0.75893002 \\
1.90  0.78665697  0.77872176  0.79356241  0.75882979 \\
1.92  0.78687925  0.77872178  0.79370477  0.75872913 \\
1.94  0.78709980  0.77872181  0.79384557  0.75862900 \\
1.96  0.78731951  0.77872187  0.79398601  0.75853041 \\
1.98  0.78753934  0.77872214  0.79412734  0.75843444 \\
2.00  0.78776425  0.77878066  0.79427643  0.75835209 \\
}\datatable

\addplot[semithick, densely dashed, forget plot] {0.8};

\addplot[thick, color2, forget plot]
    table[x=t, y=Xo_nominal]{\datatable};

\addplot[thick, color1]
    table[x=t, y=Xo_true]{\datatable};
\addlegendentry{Closed-loop rollout}

\addplot[name path=lower, draw=none, forget plot]
    table[x=t, y=Xo_lb]{\datatable};

\addplot[name path=upper, draw=none, forget plot]
    table[x=t, y=Xo_ub]{\datatable};

\addplot[fill=color3, fill opacity=0.4, draw=none, legend image code/.code={
        \draw[fill=color3, fill opacity=0.4, draw=none] (0cm,-0.1cm) rectangle (0.6cm,0.1cm);
    }]
    fill between[of=lower and upper];

\addplot[name path=lower, draw=none, forget plot] {0.8};
\addplot[name path=upper, draw=none, forget plot] {0.9};

\addplot[
    draw=none,
    pattern=crosshatch,
    pattern color=black!60,
]
fill between[
    of=upper and lower,
];

\end{axis}

\end{tikzpicture} \hspace*{-5mm}
        \begin{tikzpicture}

\definecolor{color1}{RGB}{228,26,28}
\definecolor{color2}{RGB}{55,126,184}
\definecolor{color3}{RGB}{77,175,74}
\definecolor{color4}{RGB}{152,78,163}

\begin{axis}[
width=3.6cm, height=3.4cm,
xlabel={\footnotesize $t$},
ylabel={\footnotesize $\theta$},
xlabel shift=-1mm,
ylabel shift=-3.5mm,
xmin=0, xmax=2,
tick label style={
    font=\scriptsize,
    /pgf/number format/fixed,
},
every y tick scale label/.style={
    at={(axis description cs:0.03,1.05)},
    anchor=north east,
    inner sep=0,
},
legend cell align={left},
legend style={
    font=\footnotesize,
    at={(-0.25,1.25)},
    anchor=north west,
    draw=none,
    fill=none,
    row sep=-2pt,
    inner sep=0pt,
},
legend image post style={scale=0.4},
]

\pgfplotstableread[row sep=crcr]{
t     Xo_nominal  Xo_true     Xo_lb       Xo_ub      \\
0.00  0.00000000  0.00000000  0.00000000  0.00000000 \\
0.02  0.00000124 -0.00000000 -0.00000008 -0.00000404 \\
0.04 -0.00095996 -0.00204616 -0.00104278 -0.00212564 \\
0.06 -0.00218439 -0.00369711 -0.00226205 -0.00405164 \\
0.08 -0.00356942 -0.00518851 -0.00362088 -0.00584343 \\
0.10 -0.00489988 -0.00658029 -0.00493032 -0.00742178 \\
0.12 -0.00612776 -0.00791737 -0.00613777 -0.00889953 \\
0.14 -0.00677334 -0.00822013 -0.00645182 -0.00962244 \\
0.16 -0.00767113 -0.00939715 -0.00725865 -0.01017520 \\
0.18 -0.00809318 -0.00983647 -0.00764224 -0.01058579 \\
0.20 -0.00832475 -0.01003380 -0.00781143 -0.01081693 \\
0.22 -0.00842758 -0.01009139 -0.00782929 -0.01092562 \\
0.24 -0.00845479 -0.01006915 -0.00775650 -0.01096263 \\
0.26 -0.00846323 -0.01003291 -0.00765428 -0.01099250 \\
0.28 -0.00850811 -0.01003856 -0.00758803 -0.01107685 \\
0.30 -0.00848675 -0.01006729 -0.00738648 -0.01115289 \\
0.32 -0.00800507 -0.00926261 -0.00622281 -0.01081821 \\
0.34 -0.00800577 -0.00902863 -0.00585579 -0.00920767 \\
0.36 -0.00824468 -0.00971444 -0.00597921 -0.00893492 \\
0.38 -0.00798182 -0.00932230 -0.00552827 -0.00865438 \\
0.40 -0.00754473 -0.00873120 -0.00486201 -0.00823689 \\
0.42 -0.00700536 -0.00804009 -0.00404498 -0.00771724 \\
0.44 -0.00642139 -0.00730184 -0.00314116 -0.00714861 \\
0.46 -0.00585000 -0.00657427 -0.00222465 -0.00658912 \\
0.48 -0.00530278 -0.00586619 -0.00131677 -0.00604843 \\
0.50 -0.00478547 -0.00518697 -0.00042568 -0.00553794 \\
0.52 -0.00429996 -0.00453784  0.00044597 -0.00505959 \\
0.54 -0.00384700 -0.00391670  0.00129438 -0.00461064 \\
0.56 -0.00342780 -0.00332533  0.00211645 -0.00419350 \\
0.58 -0.00304301 -0.00276503  0.00290868 -0.00380958 \\
0.60 -0.00269284 -0.00223649  0.00366628 -0.00345935 \\
0.62 -0.00237704 -0.00174074  0.00438567 -0.00314369 \\
0.64 -0.00211240 -0.00128033  0.00504745 -0.00288095 \\
0.66 -0.00191388 -0.00091126  0.00563340 -0.00268679 \\
0.68 -0.00176485 -0.00061378  0.00615473 -0.00254434 \\
0.70 -0.00167481 -0.00041584  0.00664060 -0.00251985 \\
0.72 -0.00159361 -0.00022645  0.00685571 -0.00240905 \\
0.74 -0.00152783 -0.00005386  0.00704293 -0.00232108 \\
0.76 -0.00149417  0.00006726  0.00716476 -0.00227118 \\
0.78 -0.00149632  0.00012065  0.00720549 -0.00226724 \\
0.80 -0.00153041  0.00010291  0.00717033 -0.00230989 \\
0.82 -0.00158451  0.00002752  0.00708503 -0.00238952 \\
0.84 -0.00164195 -0.00007501  0.00699256 -0.00249120 \\
0.86 -0.00169068 -0.00017331  0.00692167 -0.00259458 \\
0.88 -0.00172382 -0.00023870  0.00687199 -0.00266881 \\
0.90 -0.00175317 -0.00029449  0.00681475 -0.00272344 \\
0.92 -0.00178428 -0.00035948  0.00675050 -0.00278098 \\
0.94 -0.00181719 -0.00043920  0.00668300 -0.00284617 \\
0.96 -0.00185152 -0.00053522  0.00661376 -0.00291857 \\
0.98 -0.00188600 -0.00064468  0.00654650 -0.00299651 \\
1.00 -0.00191794 -0.00075738  0.00648674 -0.00307518 \\
1.02 -0.00194814 -0.00087476  0.00642936 -0.00315198 \\
1.04 -0.00197390 -0.00098404  0.00638303 -0.00322459 \\
1.06 -0.00199587 -0.00108327  0.00634217 -0.00328891 \\
1.08 -0.00201540 -0.00117689  0.00630422 -0.00334677 \\
1.10 -0.00203320 -0.00126827  0.00626922 -0.00340008 \\
1.12 -0.00204944 -0.00135789  0.00623767 -0.00344922 \\
1.14 -0.00206244 -0.00143237  0.00621322 -0.00349174 \\
1.16 -0.00207265 -0.00148902  0.00619287 -0.00352617 \\
1.18 -0.00208062 -0.00152741  0.00617505 -0.00355314 \\
1.20 -0.00208880 -0.00156977  0.00615519 -0.00357801 \\
1.22 -0.00209652 -0.00161719  0.00613758 -0.00360243 \\
1.24 -0.00210318 -0.00166424  0.00612277 -0.00362444 \\
1.26 -0.00210844 -0.00170360  0.00611091 -0.00364272 \\
1.28 -0.00211212 -0.00172760  0.00610198 -0.00365662 \\
1.30 -0.00211416 -0.00172619  0.00609589 -0.00366601 \\
1.32 -0.00211454 -0.00169186  0.00609260 -0.00367099 \\
1.34 -0.00211340 -0.00161939  0.00609173 -0.00367196 \\
1.36 -0.00211190 -0.00152642  0.00609133 -0.00367158 \\
1.38 -0.00211011 -0.00141827  0.00609171 -0.00367052 \\
1.40 -0.00210813 -0.00129792  0.00609263 -0.00366897 \\
1.42 -0.00210603 -0.00116766  0.00609396 -0.00366713 \\
1.44 -0.00210387 -0.00102973  0.00609557 -0.00366515 \\
1.46 -0.00210172 -0.00088646  0.00609736 -0.00366317 \\
1.48 -0.00209961 -0.00073090  0.00609923 -0.00366127 \\
1.50 -0.00209758 -0.00056955  0.00610113 -0.00365951 \\
1.52 -0.00209564 -0.00040641  0.00610299 -0.00365793 \\
1.54 -0.00209382 -0.00024619  0.00610478 -0.00365655 \\
1.56 -0.00209212 -0.00009343  0.00610648 -0.00365538 \\
1.58 -0.00209055  0.00004893  0.00610806 -0.00365443 \\
1.60 -0.00208911  0.00017943  0.00610951 -0.00365368 \\
1.62 -0.00208781  0.00029738  0.00611084 -0.00365312 \\
1.64 -0.00208668  0.00040193  0.00611201 -0.00365278 \\
1.66 -0.00208570  0.00049370  0.00611303 -0.00365260 \\
1.68 -0.00208486  0.00057284  0.00611390 -0.00365256 \\
1.70 -0.00208413  0.00063791  0.00611464 -0.00365262 \\
1.72 -0.00208351  0.00068938  0.00611527 -0.00365279 \\
1.74 -0.00208298  0.00072964  0.00611578 -0.00365304 \\
1.76 -0.00208254  0.00076079  0.00611621 -0.00365336 \\
1.78 -0.00208215  0.00078121  0.00611654 -0.00365373 \\
1.80 -0.00208181  0.00078897  0.00611681 -0.00365412 \\
1.82 -0.00208154  0.00078967  0.00611700 -0.00365466 \\
1.84 -0.00208137  0.00078944  0.00611708 -0.00365532 \\
1.86 -0.00208128  0.00078936  0.00611707 -0.00365604 \\
1.88 -0.00208124  0.00078934  0.00611700 -0.00365678 \\
1.90 -0.00208121  0.00078933  0.00611692 -0.00365748 \\
1.92 -0.00208116  0.00078932  0.00611686 -0.00365809 \\
1.94 -0.00208108  0.00078931  0.00611685 -0.00365860 \\
1.96 -0.00208093  0.00078930  0.00611690 -0.00365897 \\
1.98 -0.00208070  0.00078925  0.00611707 -0.00365921 \\
2.00 -0.00208042  0.00081259  0.00611741 -0.00365929 \\
}\datatable

\addplot[semithick, densely dashed, forget plot] {0.0};

\addplot[thick, color2, forget plot]
    table[x=t, y=Xo_nominal]{\datatable};

\addplot[thick, color1, forget plot]
    table[x=t, y=Xo_true]{\datatable};

\addplot[name path=lower, draw=none, forget plot]
    table[x=t, y=Xo_lb]{\datatable};

\addplot[name path=upper, draw=none, forget plot]
    table[x=t, y=Xo_ub]{\datatable};

\addplot[fill=color3, fill opacity=0.4, draw=none, legend image code/.code={
        \draw[fill=color3, fill opacity=0.4, draw=none] (0cm,-0.1cm) rectangle (0.6cm,0.1cm);
    }]
    fill between[of=lower and upper];
\addlegendentry{Predicted tube}

\end{axis}

\end{tikzpicture}
      }
      \label{fig:iiwa-bucket-fallfree-SLS}
    }
    \caption{Rollouts of bimanual planar bucket manipulation illustrating state constraint violations under baselines and their absence in our method. Infeasible states are indicated by a cross-hatched pattern.
    \textbf{(a)} Executing the \ac{TO-CTR} nominal control sequence on the nonsmooth hybrid dynamics leads to bucket falling out of the gray table.
    \textbf{(b)} Using exact dynamics for rollouts with smoothed gradients also violates the constraint, causing the bucket to fall off the table in the same manner as shown in the keyframes in (a).
    \textbf{(c)} Executing the policy produced by our method on the nonsmooth hybrid dynamics prevents the bucket from falling off the table.}
    \label{fig:iiwa-bucket-fall}
\end{figure}

\subsection{Bimanual Planar Box Manipulation}
\begin{figure}[!t]
    \centering
    \vspace*{-2.5mm}
    \subfloat[]{%
        \adjincludegraphics[height=1.7cm, trim={0 0 {.17\width} 0}, clip]{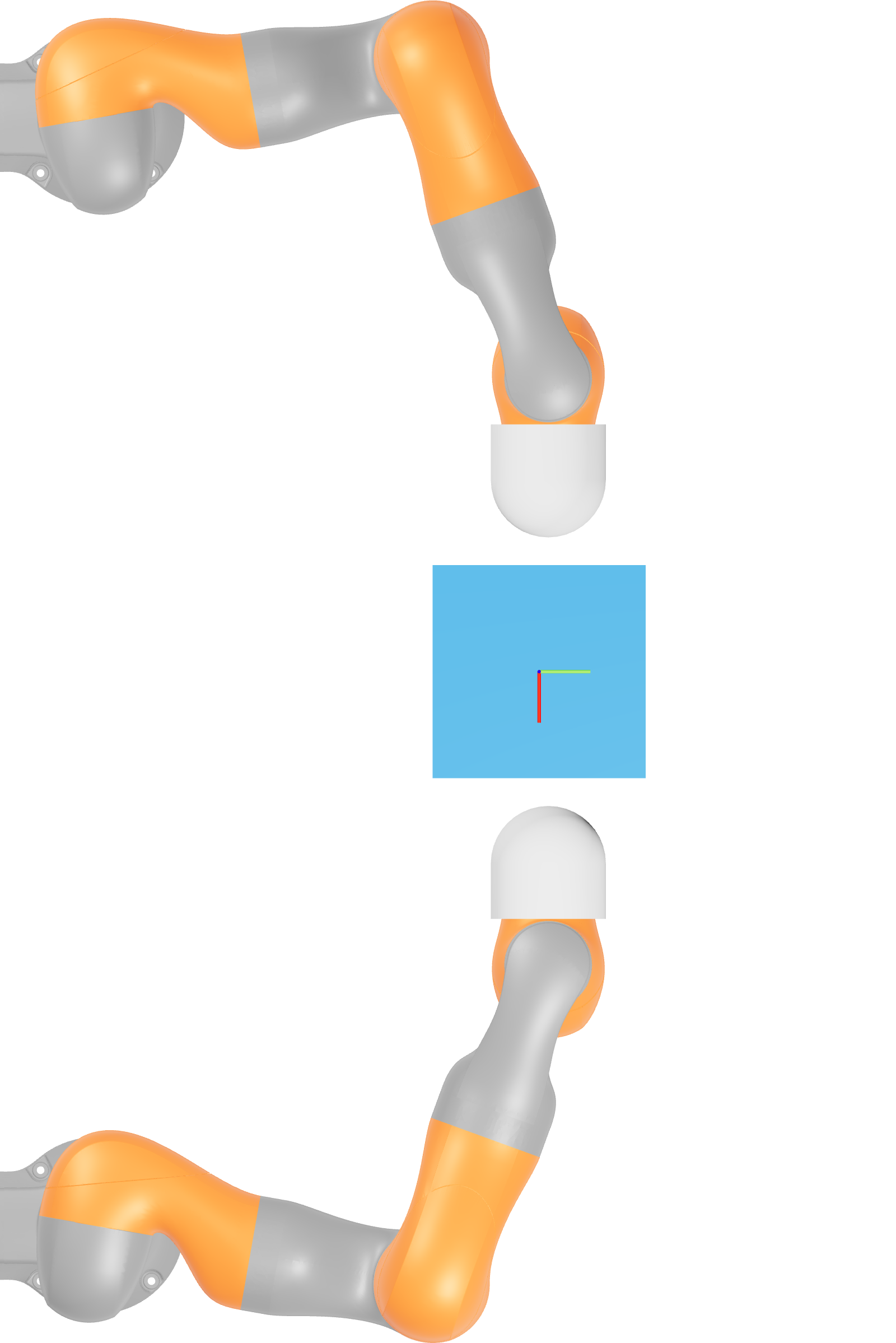}
        \adjincludegraphics[height=1.7cm, trim={0 0 {.17\width} 0}, clip]{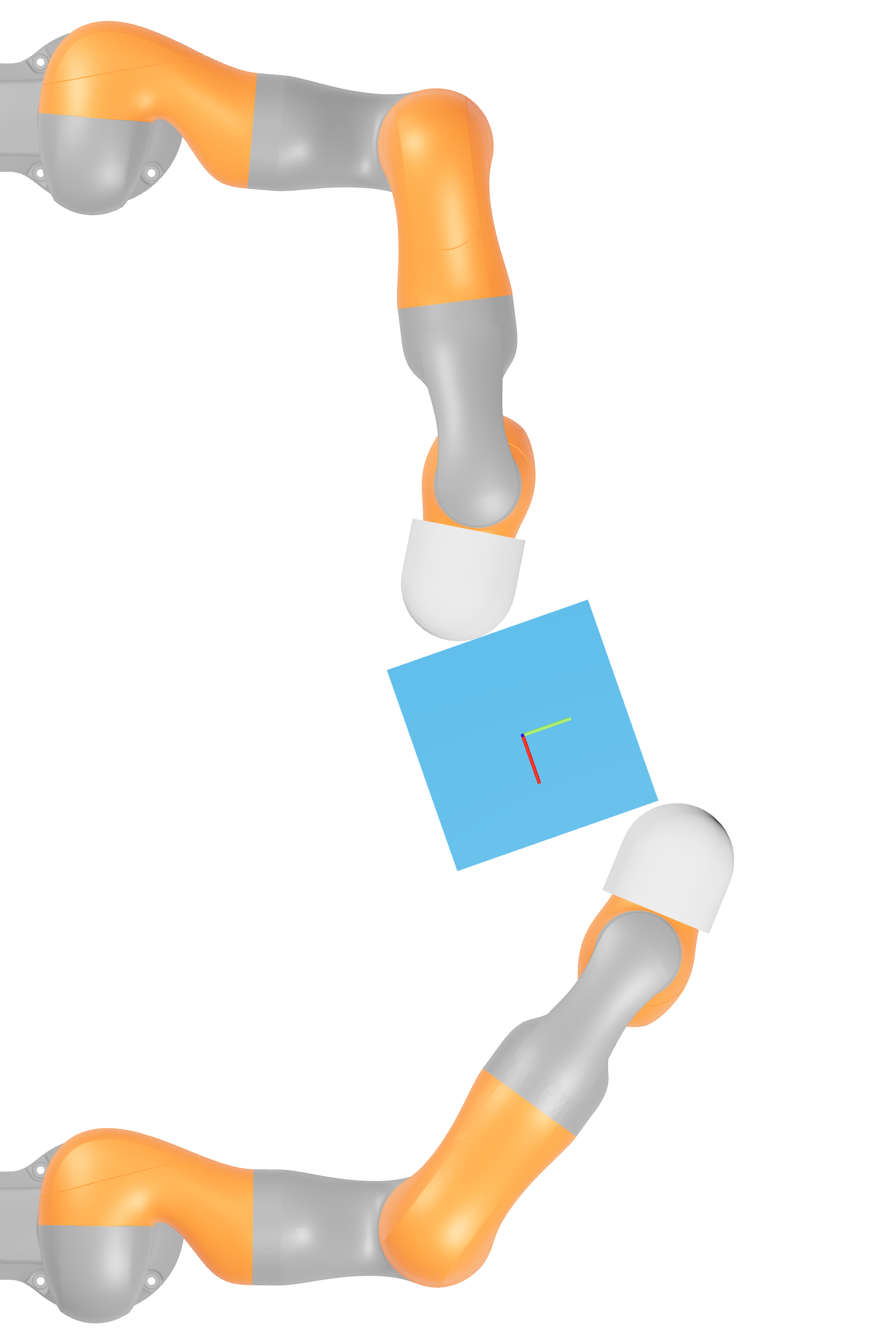}
        \adjincludegraphics[height=1.7cm, trim={0 0 {.17\width} 0}, clip]{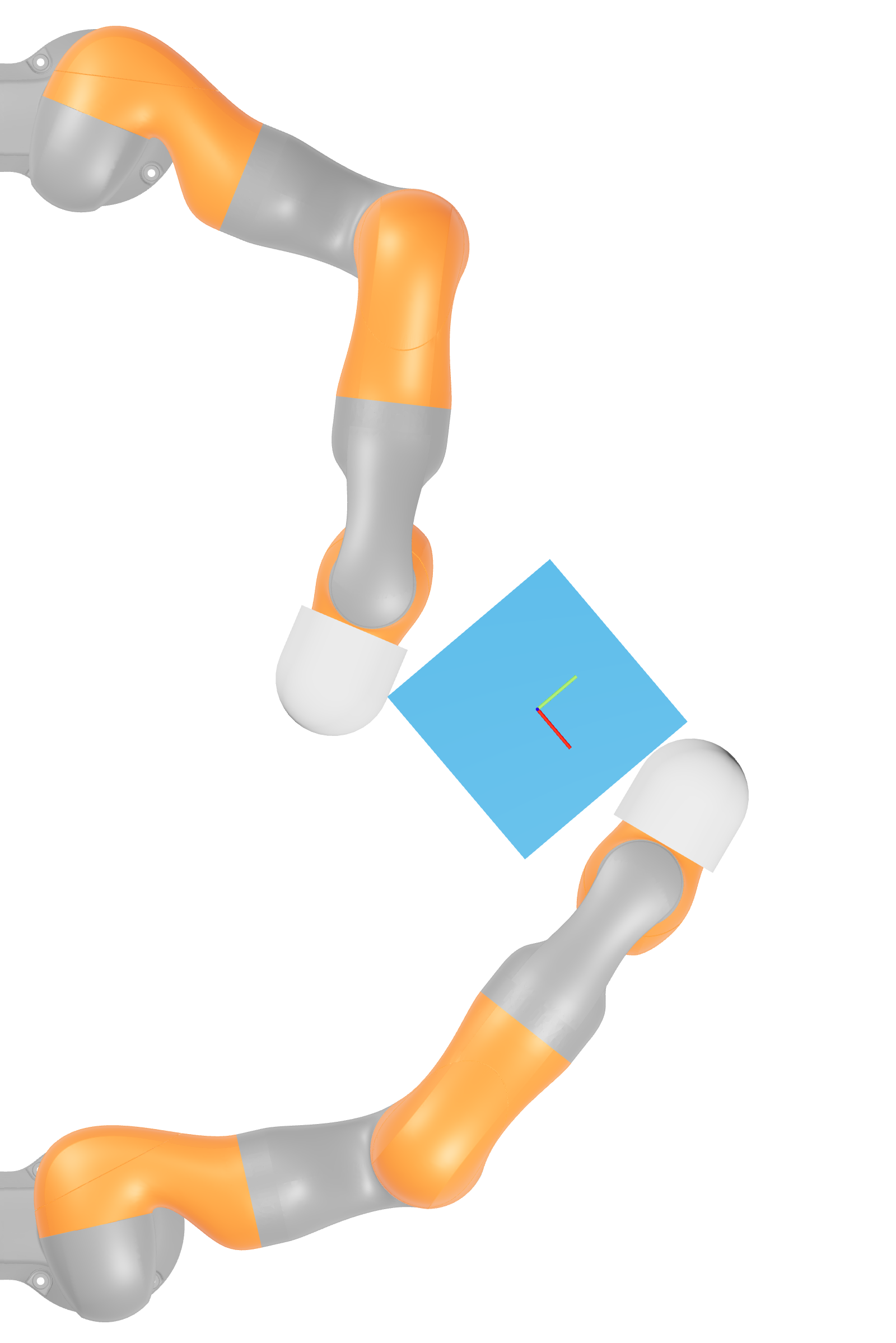}
        \adjincludegraphics[height=1.7cm, trim={0 0 {.17\width} 0}, clip]{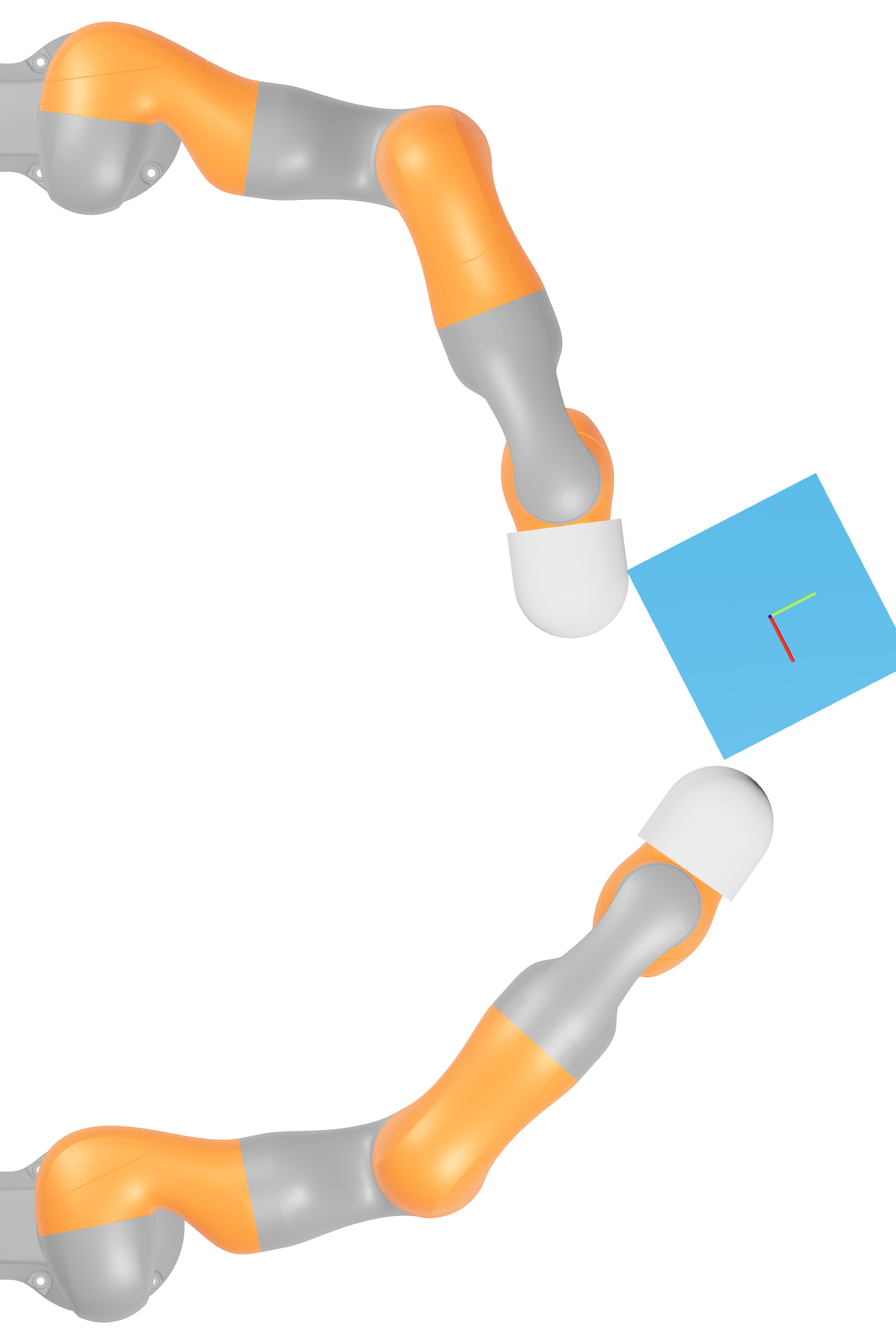}
    } \hfill
    \subfloat[]{%
        \adjincludegraphics[height=1.7cm, trim={0 0 {.17\width} 0}, clip]{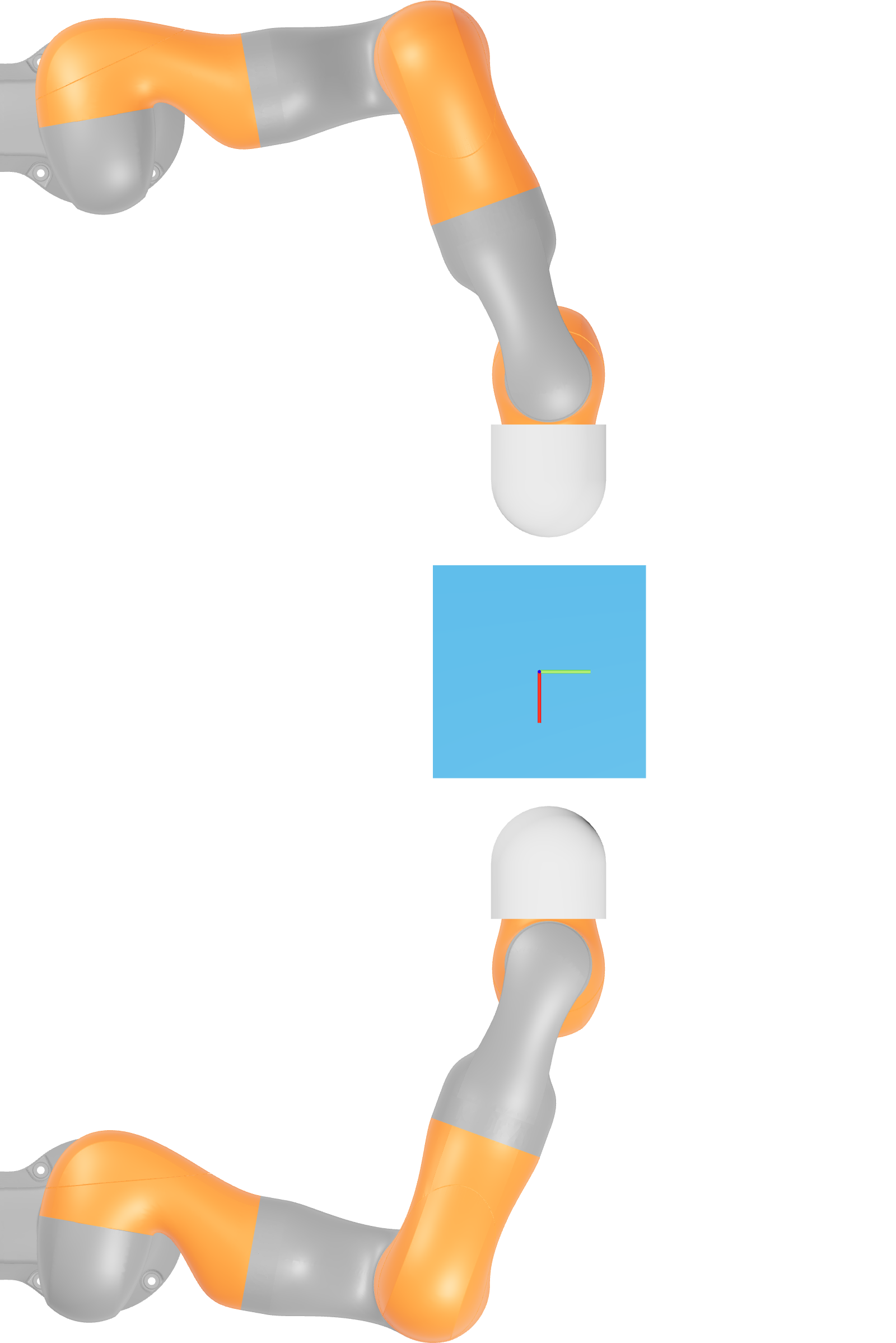}
        \adjincludegraphics[height=1.7cm, trim={0 0 {.17\width} 0}, clip]{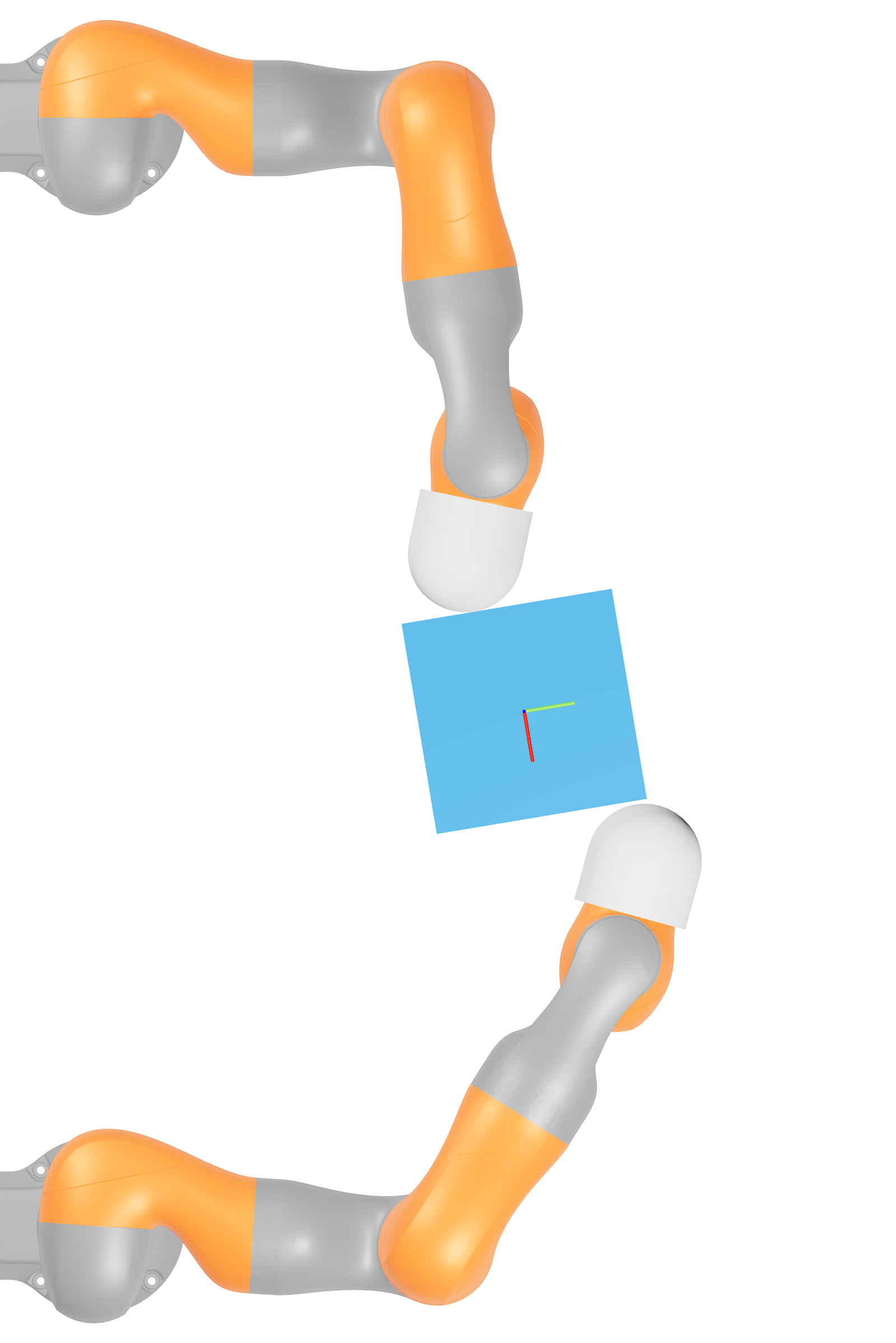}
        \adjincludegraphics[height=1.7cm, trim={0 0 {.17\width} 0}, clip]{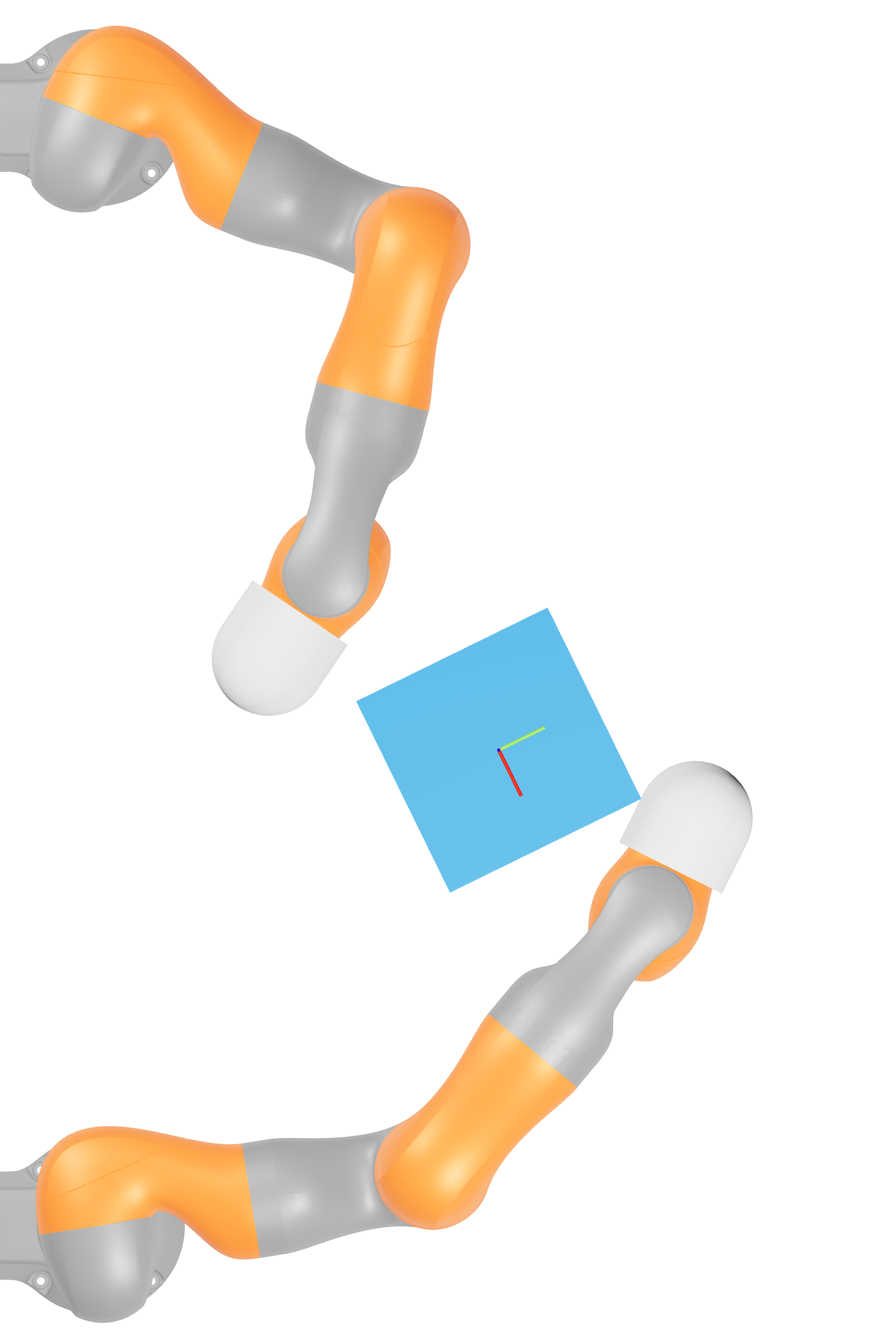}
        \adjincludegraphics[height=1.7cm, trim={0 0 {.17\width} 0}, clip]{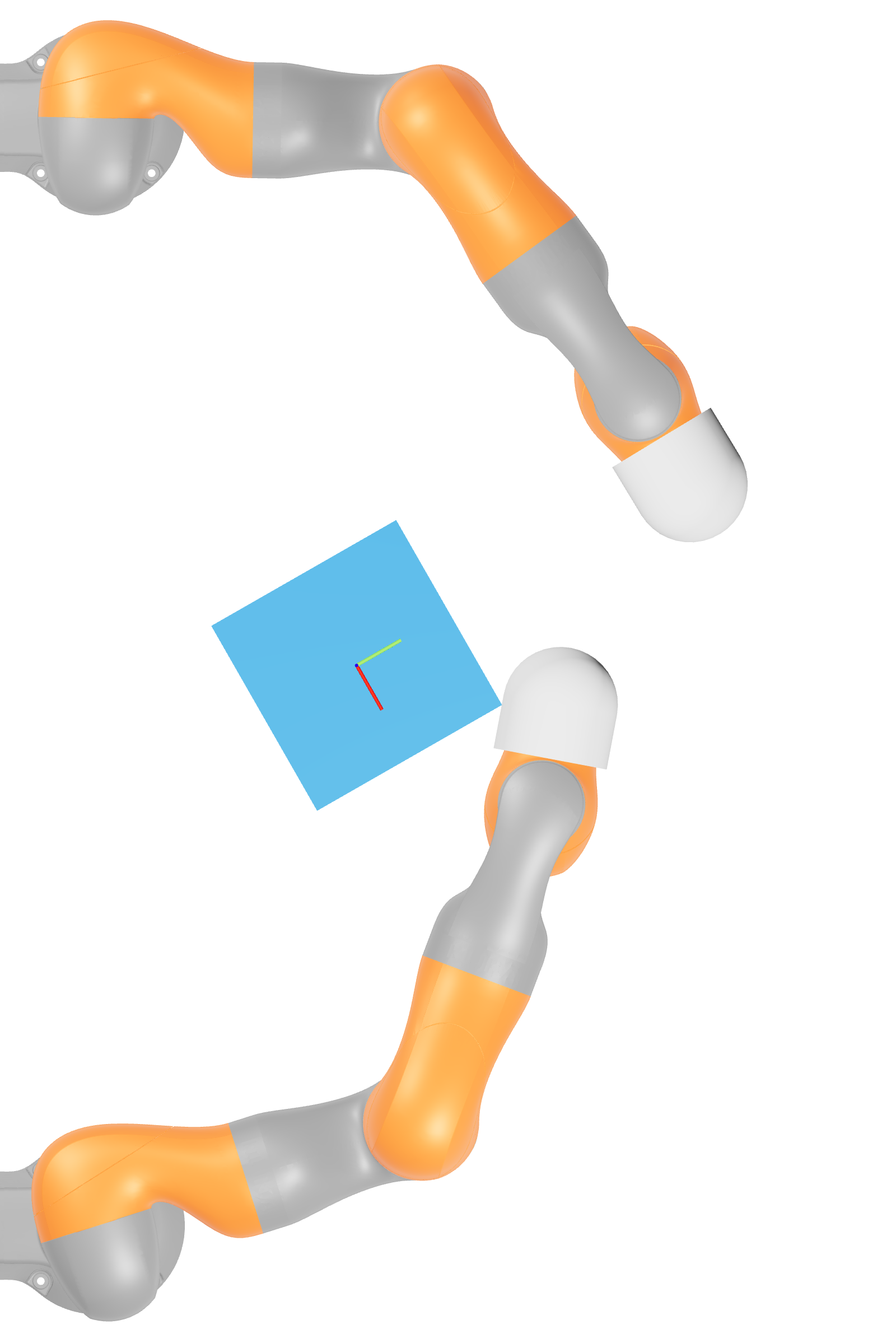}
    }\\
    \subfloat[]{%
        \hspace*{-2mm}
        \begin{tikzpicture}

\definecolor{color1}{RGB}{228,26,28}
\definecolor{color2}{RGB}{55,126,184}
\definecolor{color3}{RGB}{77,175,74}
\definecolor{color4}{RGB}{152,78,163}

\begin{axis}[
width=3.6cm, height=3.4cm,
xlabel={\footnotesize $t$},
ylabel={\footnotesize $x$},
xlabel shift=-1mm,
ylabel shift=-3mm,
xmin=0, xmax=2,
tick label style={
    font=\scriptsize,
    /pgf/number format/fixed,
},
legend cell align={left},
legend style={
    font=\footnotesize,
    at={(-0.3,1.25)},
    anchor=north west,
    draw=none,
    fill=none,
    row sep=-2pt,
    inner sep=0pt,
},
legend image post style={scale=0.4},
]

\pgfplotstableread[row sep=crcr]{
t     Xo_nominal  Xo_true     Xo_lb       Xo_ub      \\
0.00  0.00000000  0.00000000  0.00000000  0.00000000 \\
0.02 -0.00000000  0.00000000  0.00040382 -0.00040382 \\
0.04  0.00533011  0.00551678  0.00643854  0.00462195 \\
0.06  0.01451570  0.01464656  0.01711118  0.01352682 \\
0.08  0.02322427  0.02268556  0.02365861  0.02112208 \\
0.10  0.03120634  0.03012783  0.03139294  0.02843370 \\
0.12  0.03832031  0.03647165  0.03800721  0.03461825 \\
0.14  0.04534588  0.04305337  0.04473796  0.04085805 \\
0.16  0.05289983  0.05044144  0.05221147  0.04782214 \\
0.18  0.06100342  0.05854939  0.06030679  0.05558543 \\
0.20  0.06947821  0.06702880  0.06884091  0.06396366 \\
0.22  0.07813264  0.07547794  0.07773697  0.07282652 \\
0.24  0.08681609  0.08450174  0.08699912  0.08245848 \\
0.26  0.09492258  0.09572781  0.09626820  0.09246092 \\
0.28  0.10021018  0.10283261  0.10480237  0.09957849 \\
0.30  0.09788982  0.10182604  0.10769645  0.09646026 \\
0.32  0.08672365  0.08959791  0.09970821  0.08255568 \\
0.34  0.07548736  0.07680778  0.08841116  0.06789689 \\
0.36  0.06982956  0.07278286  0.08050516  0.06088094 \\
0.38  0.06463156  0.06847068  0.07318037  0.05723374 \\
0.40  0.05929778  0.06148045  0.06552543  0.05188107 \\
0.42  0.05590440  0.05789962  0.06192690  0.04837442 \\
0.44  0.05303411  0.05468492  0.05916211  0.04555025 \\
0.46  0.05187525  0.05377115  0.05822397  0.04436359 \\
0.48  0.05026720  0.05212956  0.05708840  0.04217440 \\
0.50  0.04785274  0.04924677  0.05549392  0.03842971 \\
0.52  0.04681446  0.04809559  0.05538752  0.03584135 \\
0.54  0.04589868  0.04719363  0.05515795  0.03379373 \\
0.56  0.04484459  0.04616636  0.05473262  0.03160779 \\
0.58  0.04382420  0.04521504  0.05439483  0.02938470 \\
0.60  0.04315484  0.04460707  0.05464150  0.02727304 \\
0.62  0.04258880  0.04419303  0.05505389  0.02534813 \\
0.64  0.04217356  0.04401292  0.05568734  0.02367435 \\
0.66  0.04227947  0.04436796  0.05706274  0.02242002 \\
0.68  0.04291879  0.04541398  0.05905855  0.02171983 \\
0.70  0.04414372  0.04697203  0.06194992  0.02129002 \\
0.72  0.04306896  0.04614128  0.06194804  0.01918591 \\
0.74  0.03958045  0.04247807  0.05891374  0.01541316 \\
0.76  0.03593202  0.03857649  0.05575170  0.01146257 \\
0.78  0.03276631  0.03523590  0.05327248  0.00772737 \\
0.80  0.03013442  0.03247532  0.05154573  0.00425833 \\
0.82  0.02769340  0.02996456  0.05005615  0.00092784 \\
0.84  0.02506119  0.02734194  0.04823304 -0.00244571 \\
0.86  0.02274904  0.02491423  0.04707920 -0.00580141 \\
0.88  0.01968299  0.02172212  0.04498234 -0.00965634 \\
0.90  0.01678533  0.01878577  0.04309085 -0.01333310 \\
0.92  0.01439827  0.01651611  0.04172364 -0.01646381 \\
0.94  0.01273819  0.01485390  0.04147392 -0.01910748 \\
0.96  0.01134822  0.01338305  0.04181947 -0.02163577 \\
0.98  0.00887446  0.01097113  0.04059690 -0.02477022 \\
1.00  0.00596755  0.00811946  0.03887333 -0.02817727 \\
1.02  0.00319822  0.00525650  0.03766144 -0.03160621 \\
1.04  0.00025427  0.00216827  0.03651127 -0.03524852 \\
1.06 -0.00363863 -0.00114061  0.03332819 -0.03893763 \\
1.08 -0.00644024 -0.00383501  0.03180005 -0.04170698 \\
1.10 -0.00871378 -0.00607331  0.03112115 -0.04395503 \\
1.12 -0.01108583 -0.00841732  0.03049653 -0.04624526 \\
1.14 -0.01388687 -0.01119771  0.02948526 -0.04889829 \\
1.16 -0.01710990 -0.01442782  0.02809786 -0.05197118 \\
1.18 -0.02058713 -0.01792194  0.02643870 -0.05530231 \\
1.20 -0.02414026 -0.02152120  0.02481265 -0.05881668 \\
1.22 -0.02809560 -0.02549818  0.02246925 -0.06266586 \\
1.24 -0.03230183 -0.02971505  0.01968920 -0.06669698 \\
1.26 -0.03684149 -0.03428846  0.01652491 -0.07103677 \\
1.28 -0.04149246 -0.03898244  0.01328470 -0.07548171 \\
1.30 -0.04615734 -0.04366394  0.00982512 -0.07979911 \\
1.32 -0.05064833 -0.04810914  0.00619534 -0.08372790 \\
1.34 -0.05511553 -0.05176766  0.00233826 -0.08746925 \\
1.36 -0.05930571 -0.05440707 -0.00121612 -0.09093056 \\
1.38 -0.06358032 -0.05669932 -0.00479255 -0.09453735 \\
1.40 -0.06874293 -0.05966452 -0.00970475 -0.09891971 \\
1.42 -0.07427705 -0.06305460 -0.01501360 -0.10382302 \\
1.44 -0.07962411 -0.06653814 -0.01996160 -0.10882457 \\
1.46 -0.08336271 -0.06968599 -0.02304412 -0.11256685 \\
1.48 -0.08352117 -0.07181248 -0.02255649 -0.11293726 \\
1.50 -0.08336960 -0.07259475 -0.02206387 -0.11298503 \\
1.52 -0.08337834 -0.07160947 -0.02228528 -0.11295202 \\
1.54 -0.08352048 -0.07141682 -0.02321458 -0.11271395 \\
1.56 -0.08361213 -0.07141682 -0.02443464 -0.11213953 \\
1.58 -0.08369913 -0.07141683 -0.02573265 -0.11138601 \\
1.60 -0.08385580 -0.07141685 -0.02706019 -0.11062265 \\
1.62 -0.08435237 -0.07176956 -0.02862583 -0.11020857 \\
1.64 -0.08509271 -0.07241706 -0.03013864 -0.11016292 \\
1.66 -0.08548197 -0.07270273 -0.03125088 -0.10984329 \\
1.68 -0.08565671 -0.07270275 -0.03205683 -0.10940957 \\
1.70 -0.08573907 -0.07270275 -0.03273531 -0.10898068 \\
1.72 -0.08578649 -0.07270275 -0.03333957 -0.10862101 \\
1.74 -0.08582007 -0.07270276 -0.03384737 -0.10836326 \\
1.76 -0.08584811 -0.07270276 -0.03432779 -0.10820470 \\
1.78 -0.08588344 -0.07270276 -0.03477273 -0.10816424 \\
1.80 -0.08591998 -0.07270279 -0.03517752 -0.10822914 \\
1.82 -0.08593607 -0.07270279 -0.03542948 -0.10841847 \\
1.84 -0.08594558 -0.07270279 -0.03528109 -0.10885929 \\
1.86 -0.08595749 -0.07270279 -0.03608162 -0.10920105 \\
1.88 -0.08597289 -0.07270279 -0.03570498 -0.11000468 \\
1.90 -0.08599843 -0.07270279 -0.03542962 -0.11080900 \\
1.92 -0.08602990 -0.07270279 -0.03505215 -0.11161638 \\
1.94 -0.08606267 -0.07270279 -0.03457092 -0.11241244 \\
1.96 -0.08609215 -0.07270279 -0.03401871 -0.11319354 \\
1.98 -0.08611700 -0.07270280 -0.03342444 -0.11399667 \\
2.00 -0.08613451 -0.07270280 -0.03336167 -0.11458050 \\
}\datatable

\addplot[semithick, densely dashed, forget plot] {0.0};

\addplot[thick, color2]
    table[x=t, y=Xo_nominal]{\datatable};
\addlegendentry{Nominal trajectory}

\addplot[thick, color1]
    table[x=t, y=Xo_true]{\datatable};

\addplot[name path=lower, draw=none, forget plot]
    table[x=t, y=Xo_lb]{\datatable};

\addplot[name path=upper, draw=none, forget plot]
    table[x=t, y=Xo_ub]{\datatable};

\addplot[fill=color3, fill opacity=0.4, draw=none, legend image code/.code={
        \draw[fill=color3, fill opacity=0.4, draw=none] (0cm,-0.1cm) rectangle (0.6cm,0.1cm);
    }]
    fill between[of=lower and upper];

\end{axis}

\end{tikzpicture} \hspace*{-5mm}
        \begin{tikzpicture}

\definecolor{color1}{RGB}{228,26,28}
\definecolor{color2}{RGB}{55,126,184}
\definecolor{color3}{RGB}{77,175,74}
\definecolor{color4}{RGB}{152,78,163}

\begin{axis}[
width=3.6cm, height=3.4cm,
xlabel={\footnotesize $t$},
ylabel={\footnotesize $y$},
xlabel shift=-1mm,
ylabel shift=-2mm,
xmin=0, xmax=2,
tick label style={
    font=\scriptsize,
    /pgf/number format/fixed,
},
legend cell align={left},
legend style={
    font=\footnotesize,
    at={(-0.3,1.25)},
    anchor=north west,
    draw=none,
    fill=none,
    row sep=-2pt,
    inner sep=0pt,
},
legend image post style={scale=0.4},
]

\pgfplotstableread[row sep=crcr]{
t     Xo_nominal  Xo_true     Xo_lb       Xo_ub      \\
0.00  0.52000000  0.52000000  0.52000000  0.52000000 \\
0.02  0.52001122  0.52000000  0.51999794  0.51997961 \\
0.04  0.51738797  0.51724161  0.51733603  0.51703743 \\
0.06  0.51294420  0.51269708  0.51273699  0.51179246 \\
0.08  0.50899225  0.50876527  0.50978675  0.50795031 \\
0.10  0.50579428  0.50527386  0.50704598  0.50375349 \\
0.12  0.50366556  0.50249412  0.50546491  0.50041954 \\
0.14  0.50279379  0.50159477  0.50518066  0.49908063 \\
0.16  0.50257970  0.50190696  0.50554339  0.49908995 \\
0.18  0.50219903  0.50211115  0.50566276  0.49910081 \\
0.20  0.50134995  0.50168067  0.50522894  0.49854419 \\
0.22  0.50014252  0.50045769  0.50432629  0.49743912 \\
0.24  0.49873802  0.49935700  0.50347190  0.49539955 \\
0.26  0.49762607  0.49860651  0.50375544  0.49489979 \\
0.28  0.49577873  0.49676888  0.50385859  0.49403450 \\
0.30  0.49164434  0.49539569  0.50219282  0.49070466 \\
0.32  0.48458695  0.49029683  0.49840303  0.48398321 \\
0.34  0.47893444  0.48436673  0.49582859  0.47876440 \\
0.36  0.47474619  0.48003756  0.49217710  0.47390472 \\
0.38  0.46984562  0.47517894  0.48796328  0.46783566 \\
0.40  0.46726363  0.47122607  0.48585286  0.46411453 \\
0.42  0.46745224  0.46991219  0.48642619  0.46392156 \\
0.44  0.46930943  0.47052662  0.48841236  0.46566377 \\
0.46  0.47280437  0.47299819  0.49173809  0.46901620 \\
0.48  0.47667866  0.47589836  0.49513965  0.47241498 \\
0.50  0.48148165  0.47996337  0.49925774  0.47626801 \\
0.52  0.48757878  0.48559736  0.50459195  0.48110335 \\
0.54  0.49424205  0.49177530  0.51044429  0.48646953 \\
0.56  0.50091130  0.49795768  0.51651552  0.49179155 \\
0.58  0.50736586  0.50393382  0.52257170  0.49684851 \\
0.60  0.51352120  0.50971633  0.52858273  0.50150913 \\
0.62  0.51935104  0.51516270  0.53440335  0.50584721 \\
0.64  0.52482213  0.52023152  0.54001402  0.50983151 \\
0.66  0.52994792  0.52502675  0.54547105  0.51343584 \\
0.68  0.53481467  0.52946375  0.55072127  0.51682046 \\
0.70  0.53944686  0.53379869  0.55580700  0.51992312 \\
0.72  0.54408044  0.53804576  0.56057792  0.52324496 \\
0.74  0.54905148  0.54272478  0.56561692  0.52698294 \\
0.76  0.55445276  0.54789703  0.57103319  0.53119651 \\
0.78  0.56015824  0.55343001  0.57664582  0.53578037 \\
0.80  0.56606731  0.55922911  0.58241744  0.54057487 \\
0.82  0.57206546  0.56512377  0.58823962  0.54550680 \\
0.84  0.57806001  0.57095281  0.59398839  0.55055512 \\
0.86  0.58421533  0.57711547  0.60005664  0.55560127 \\
0.88  0.59045022  0.58335645  0.60622593  0.56079248 \\
0.90  0.59663220  0.58950270  0.61236294  0.56598516 \\
0.92  0.60272824  0.59544021  0.61844119  0.57116782 \\
0.94  0.60886462  0.60159489  0.62477693  0.57626728 \\
0.96  0.61518453  0.60808534  0.63148372  0.58147429 \\
0.98  0.62132338  0.61421784  0.63793928  0.58668307 \\
1.00  0.62731231  0.62018150  0.64431550  0.59172663 \\
1.02  0.63343988  0.62645812  0.65101955  0.59676335 \\
1.04  0.63969661  0.63293665  0.65801001  0.60179373 \\
1.06  0.64563674  0.63825020  0.66451644  0.60675754 \\
1.08  0.65126632  0.64378665  0.67106344  0.61104082 \\
1.10  0.65730433  0.64991204  0.67813808  0.61569381 \\
1.12  0.66385693  0.65662754  0.68569059  0.62094615 \\
1.14  0.67086669  0.66381943  0.69358449  0.62674651 \\
1.16  0.67832686  0.67147603  0.70180256  0.63302445 \\
1.18  0.68620819  0.67953904  0.71032051  0.63970027 \\
1.20  0.69454380  0.68806995  0.71922495  0.64671151 \\
1.22  0.70298836  0.69661709  0.72810657  0.65376164 \\
1.24  0.71146188  0.70514622  0.73701033  0.66068178 \\
1.26  0.71996007  0.71369697  0.74597165  0.66748100 \\
1.28  0.72833702  0.72214332  0.75490598  0.67397124 \\
1.30  0.73664773  0.73048645  0.76385245  0.68024250 \\
1.32  0.74477367  0.73858612  0.77273397  0.68611429 \\
1.34  0.75365462  0.74665719  0.78255574  0.69242130 \\
1.36  0.76346745  0.75502378  0.79354142  0.69923998 \\
1.38  0.77367443  0.76369349  0.80501154  0.70621249 \\
1.40  0.78535977  0.77384961  0.81777448  0.71489352 \\
1.42  0.79809118  0.78507367  0.83135972  0.72491248 \\
1.44  0.81099747  0.79667466  0.84489116  0.73544607 \\
1.46  0.82275706  0.80805484  0.85707418  0.74516158 \\
1.48  0.83090591  0.81860016  0.86551609  0.75154078 \\
1.50  0.83668851  0.82662532  0.87164633  0.75558929 \\
1.52  0.84006246  0.83060484  0.87557652  0.75709056 \\
1.54  0.84141141  0.83143448  0.87771267  0.75646456 \\
1.56  0.84194487  0.83143449  0.87920265  0.75504860 \\
1.58  0.84240725  0.83143450  0.88069790  0.75372732 \\
1.60  0.84292652  0.83143454  0.88222938  0.75275595 \\
1.62  0.84397328  0.83187255  0.88422094  0.75260033 \\
1.64  0.84552829  0.83264223  0.88669159  0.75294760 \\
1.66  0.84673979  0.83298050  0.88869244  0.75322835 \\
1.68  0.84753595  0.83298053  0.89016039  0.75329859 \\
1.70  0.84801249  0.83298054  0.89118457  0.75327739 \\
1.72  0.84832302  0.83298054  0.89193652  0.75324877 \\
1.74  0.84856768  0.83298055  0.89254391  0.75320071 \\
1.76  0.84878487  0.83298055  0.89304702  0.75321136 \\
1.78  0.84901338  0.83298057  0.89350795  0.75327074 \\
1.80  0.84926612  0.83298062  0.89395618  0.75337112 \\
1.82  0.84948021  0.83298062  0.89435978  0.75334175 \\
1.84  0.84965790  0.83298063  0.89477309  0.75298857 \\
1.86  0.84982127  0.83298063  0.89489228  0.75423646 \\
1.88  0.84998039  0.83298063  0.89559941  0.75382065 \\
1.90  0.85013447  0.83298063  0.89635439  0.75363808 \\
1.92  0.85028359  0.83298063  0.89710728  0.75361015 \\
1.94  0.85042779  0.83298063  0.89780179  0.75377382 \\
1.96  0.85056915  0.83298065  0.89843954  0.75417709 \\
1.98  0.85071142  0.83298066  0.89908216  0.75484504 \\
2.00  0.85087432  0.83298066  0.89976205  0.75623887 \\
}\datatable

\addplot[semithick, densely dashed, forget plot] {0.3};

\addplot[thick, color2, forget plot]
    table[x=t, y=Xo_nominal]{\datatable};

\addplot[thick, color1]
    table[x=t, y=Xo_true]{\datatable};
\addlegendentry{Closed-loop rollout}

\addplot[name path=lower, draw=none, forget plot]
    table[x=t, y=Xo_lb]{\datatable};

\addplot[name path=upper, draw=none, forget plot]
    table[x=t, y=Xo_ub]{\datatable};

\addplot[fill=color3, fill opacity=0.4, draw=none, legend image code/.code={
        \draw[fill=color3, fill opacity=0.4, draw=none] (0cm,-0.1cm) rectangle (0.6cm,0.1cm);
    }]
    fill between[of=lower and upper];

\end{axis}

\end{tikzpicture} \hspace*{-5mm}
        \begin{tikzpicture}

\definecolor{color1}{RGB}{228,26,28}
\definecolor{color2}{RGB}{55,126,184}
\definecolor{color3}{RGB}{77,175,74}
\definecolor{color4}{RGB}{152,78,163}

\begin{axis}[
width=3.6cm, height=3.4cm,
xlabel={\footnotesize $t$},
ylabel={\footnotesize $\theta$},
xlabel shift=-1mm,
ylabel shift=-2mm,
xmin=0, xmax=2,
tick label style={
    font=\scriptsize,
    /pgf/number format/fixed,
},
legend cell align={left},
legend style={
    font=\footnotesize,
    at={(-0.25,1.25)},
    anchor=north west,
    draw=none,
    fill=none,
    row sep=-2pt,
    inner sep=0pt,
},
legend image post style={scale=0.4},
]

\pgfplotstableread[row sep=crcr]{
t     Xo_nominal  Xo_true     Xo_lb       Xo_ub      \\
0.00  0.00000000  0.00000000  0.00000000  0.00000000 \\
0.02  0.00000000 -0.00000000  0.00003778 -0.00003778 \\
0.04  0.00182013  0.00178499  0.00198977  0.00157951 \\
0.06  0.00873183  0.00876164  0.00899229  0.00752718 \\
0.08  0.02458415  0.02499633  0.02624501  0.02480801 \\
0.10  0.05016630  0.05062811  0.05231987  0.05014911 \\
0.12  0.08541699  0.08568239  0.08803841  0.08504696 \\
0.14  0.12912471  0.12929160  0.13214119  0.12903218 \\
0.16  0.17875304  0.17890865  0.18215799  0.17930371 \\
0.18  0.23196623  0.23200810  0.23597143  0.23274408 \\
0.20  0.28710973  0.28690913  0.29175857  0.28769311 \\
0.22  0.34289730  0.34227624  0.34786737  0.34314729 \\
0.24  0.39808528  0.39694537  0.40257251  0.39794216 \\
0.26  0.45263484  0.44988032  0.45979235  0.45205426 \\
0.28  0.50470359  0.50171037  0.51453584  0.50388968 \\
0.30  0.54957272  0.55232675  0.56134545  0.54901675 \\
0.32  0.57980470  0.58641868  0.59418650  0.57987565 \\
0.34  0.60202198  0.60865855  0.61912285  0.60307257 \\
0.36  0.62291163  0.63311619  0.63859052  0.62397314 \\
0.38  0.63943170  0.65258612  0.65699796  0.64043008 \\
0.40  0.64744719  0.65746480  0.66435577  0.64776896 \\
0.42  0.65415205  0.66184529  0.67113783  0.65442405 \\
0.44  0.66008015  0.66588493  0.67727236  0.66051989 \\
0.46  0.66710598  0.67153304  0.68498380  0.66750814 \\
0.48  0.67316605  0.67614882  0.69197108  0.67286180 \\
0.50  0.67737825  0.67878396  0.69677893  0.67552401 \\
0.52  0.68216163  0.68242907  0.70146067  0.67821960 \\
0.54  0.68696268  0.68597483  0.70523525  0.68091460 \\
0.56  0.69157164  0.68929753  0.70860368  0.68341453 \\
0.58  0.69624013  0.69268917  0.71198978  0.68593747 \\
0.60  0.70106518  0.69640150  0.71569072  0.68850341 \\
0.62  0.70607446  0.70027650  0.71953709  0.69130635 \\
0.64  0.71133359  0.70438961  0.72362506  0.69443999 \\
0.66  0.71699495  0.70902543  0.72823419  0.69796415 \\
0.68  0.72347183  0.71435498  0.73364647  0.70230965 \\
0.70  0.73014397  0.72011855  0.73938339  0.70666180 \\
0.72  0.73409224  0.72307060  0.74225481  0.70862715 \\
0.74  0.73455748  0.72264185  0.74172895  0.70776619 \\
0.76  0.73433507  0.72159398  0.74063211  0.70650261 \\
0.78  0.73425911  0.72079090  0.73974040  0.70540019 \\
0.80  0.73435240  0.72024504  0.73907827  0.70434114 \\
0.82  0.73430719  0.71959589  0.73828779  0.70315628 \\
0.84  0.73384373  0.71848895  0.73703241  0.70171867 \\
0.86  0.73305905  0.71722312  0.73562406  0.69960904 \\
0.88  0.73100538  0.71465100  0.73302371  0.69631285 \\
0.90  0.72883710  0.71194304  0.73024730  0.69302517 \\
0.92  0.72715656  0.70960730  0.72783389  0.69043559 \\
0.94  0.72561305  0.70760907  0.72566493  0.68783953 \\
0.96  0.72367962  0.70536863  0.72328464  0.68474686 \\
0.98  0.72037302  0.70158220  0.71950490  0.68066589 \\
1.00  0.71618249  0.69692977  0.71488911  0.67589075 \\
1.02  0.71137161  0.69185221  0.70985558  0.67039485 \\
1.04  0.70569171  0.68598875  0.70416236  0.66395209 \\
1.06  0.69996057  0.67917164  0.69807839  0.65838719 \\
1.08  0.69393111  0.67282767  0.69192695  0.65246995 \\
1.10  0.68770309  0.66652925  0.68578177  0.64640776 \\
1.12  0.68074399  0.65960497  0.67906693  0.63968475 \\
1.14  0.67271800  0.65164674  0.67139261  0.63194902 \\
1.16  0.66359744  0.64262104  0.66270693  0.62310171 \\
1.18  0.65355212  0.63267772  0.65313059  0.61331511 \\
1.20  0.64274664  0.62201586  0.64286140  0.60263279 \\
1.22  0.63084979  0.61021243  0.63140597  0.59095513 \\
1.24  0.61801939  0.59748936  0.61900413  0.57836037 \\
1.26  0.60418591  0.58380403  0.60567908  0.56463461 \\
1.28  0.58955445  0.56939042  0.59168954  0.54991829 \\
1.30  0.57431351  0.55438150  0.57713715  0.53457057 \\
1.32  0.55869915  0.53901279  0.56217506  0.51898281 \\
1.34  0.54452817  0.52393800  0.54856077  0.50505067 \\
1.36  0.53249261  0.51011481  0.53695222  0.49345759 \\
1.38  0.52278160  0.49861767  0.52741045  0.48458038 \\
1.40  0.51442164  0.48846445  0.51867625  0.47807139 \\
1.42  0.50762388  0.47974312  0.51123448  0.47365015 \\
1.44  0.50269670  0.47272848  0.50557725  0.47126236 \\
1.46  0.49976497  0.46778280  0.50197537  0.47067828 \\
1.48  0.49741850  0.46537931  0.49903039  0.47035524 \\
1.50  0.49600629  0.46476930  0.49707878  0.47049202 \\
1.52  0.49515646  0.46441670  0.49569668  0.47070975 \\
1.54  0.49478422  0.46432949  0.49478084  0.47100811 \\
1.56  0.49470227  0.46432949  0.49419930  0.47124717 \\
1.58  0.49462738  0.46432948  0.49370029  0.47122267 \\
1.60  0.49444682  0.46432945  0.49318574  0.47097244 \\
1.62  0.49372860  0.46380694  0.49223697  0.47014779 \\
1.64  0.49258607  0.46285944  0.49101354  0.46869338 \\
1.66  0.49191974  0.46244163  0.49033515  0.46790890 \\
1.68  0.49160671  0.46244160  0.49007960  0.46768340 \\
1.70  0.49148882  0.46244159  0.49008285  0.46793006 \\
1.72  0.49145629  0.46244159  0.49025247  0.46848945 \\
1.74  0.49145510  0.46244159  0.49056077  0.46918752 \\
1.76  0.49146474  0.46244159  0.49098353  0.46004955 \\
1.78  0.49145576  0.46244158  0.49150481  0.46098834 \\
1.80  0.49142558  0.46244155  0.49211468  0.46197447 \\
1.82  0.49142444  0.46244154  0.49291958  0.46288754 \\
1.84  0.49144292  0.46244154  0.49400818  0.46345749 \\
1.86  0.49147068  0.46244154  0.49460322  0.45624532 \\
1.88  0.49149988  0.46244154  0.49568692  0.45752258 \\
1.90  0.49152816  0.46244154  0.49661129  0.45921323 \\
1.92  0.49155506  0.46244154  0.49736499  0.46104606 \\
1.94  0.49157750  0.46244154  0.49788226  0.46287921 \\
1.96  0.49159272  0.46244154  0.49812698  0.46449694 \\
1.98  0.49160110  0.46244154  0.49820906  0.46566313 \\
2.00  0.49161066  0.46244154  0.49827152  0.46654691 \\
}\datatable

\addplot[semithick, densely dashed, forget plot] {0.785398};

\addplot[thick, color2, forget plot]
    table[x=t, y=Xo_nominal]{\datatable};

\addplot[thick, color1, forget plot]
    table[x=t, y=Xo_true]{\datatable};

\addplot[name path=lower, draw=none, forget plot]
    table[x=t, y=Xo_lb]{\datatable};

\addplot[name path=upper, draw=none, forget plot]
    table[x=t, y=Xo_ub]{\datatable};

\addplot[fill=color3, fill opacity=0.4, draw=none, legend image code/.code={
        \draw[fill=color3, fill opacity=0.4, draw=none] (0cm,-0.1cm) rectangle (0.6cm,0.1cm);
    }]
    fill between[of=lower and upper];
\addlegendentry{Predicted tube}

\end{axis}

\end{tikzpicture}
    }\\
    \subfloat[]{%
        \hspace*{-2mm}
        \begin{tikzpicture}

\definecolor{color1}{RGB}{228,26,28}
\definecolor{color2}{RGB}{55,126,184}
\definecolor{color3}{RGB}{77,175,74}
\definecolor{color4}{RGB}{152,78,163}

\begin{axis}[
width=3.6cm, height=3.4cm,
xlabel={\footnotesize $t$},
ylabel={\footnotesize $x$},
xlabel shift=-1mm,
ylabel shift=-2mm,
xmin=0, xmax=2,
tick label style={
    font=\scriptsize,
    /pgf/number format/fixed,
},
legend cell align={left},
legend style={
    font=\footnotesize,
    at={(-0.3,1.25)},
    anchor=north west,
    draw=none,
    fill=none,
    row sep=-2pt,
    inner sep=0pt,
},
legend image post style={scale=0.4},
]

\pgfplotstableread[row sep=crcr]{
t     Xo_nominal  Xo_true     Xo_lb       Xo_ub      \\
0.00  0.00000000  0.00000000  0.00000000  0.00000000 \\
0.02 -0.00000000  0.00000000  0.00057288 -0.00057288 \\
0.04  0.00529351  0.00523987  0.00714232  0.00442112 \\
0.06  0.01310890  0.01548168  0.01449412  0.01195550 \\
0.08  0.02092521  0.02361766  0.02125832  0.01914837 \\
0.10  0.02759970  0.03068750  0.02760298  0.02465755 \\
0.12  0.03356907  0.03758509  0.03326880  0.02940992 \\
0.14  0.04022178  0.04292175  0.03978346  0.03536139 \\
0.16  0.04836359  0.04766869  0.04814160  0.04309092 \\
0.18  0.05829217  0.05296095  0.05855781  0.05285443 \\
0.20  0.07005274  0.06007208  0.07081409  0.06436944 \\
0.22  0.08334972  0.07105850  0.08442502  0.07723757 \\
0.24  0.09665290  0.08841810  0.09779463  0.09019190 \\
0.26  0.10646129  0.10520542  0.10738578  0.10047099 \\
0.28  0.10800551  0.10738371  0.10886368  0.10381620 \\
0.30  0.10050317  0.09727021  0.10248070  0.09874637 \\
0.32  0.09323226  0.08706046  0.09933080  0.09188895 \\
0.34  0.08934599  0.07634305  0.09893419  0.08815125 \\
0.36  0.08653823  0.06740672  0.09931761  0.08503332 \\
0.38  0.08428583  0.06525533  0.10027149  0.08182928 \\
0.40  0.08805834  0.06660661  0.10535416  0.08475358 \\
0.42  0.09440438  0.06779780  0.11281519  0.08892927 \\
0.44  0.10119357  0.07127452  0.11294570  0.09624560 \\
0.46  0.10888983  0.08017719  0.11217743  0.10262883 \\
0.48  0.11264633  0.09869717  0.11823646  0.10706293 \\
0.50  0.11454767  0.11561059  0.12308954  0.10928095 \\
0.52  0.11660847  0.11860984  0.12542635  0.11208946 \\
0.54  0.11782907  0.10424501  0.12767546  0.11268959 \\
0.56  0.11991756  0.09001368  0.12753387  0.11441806 \\
0.58  0.10636058  0.07535230  0.11305371  0.10155189 \\
0.60  0.09213332  0.06460341  0.09856425  0.08771210 \\
0.62  0.07982150  0.05391182  0.08677381  0.07523011 \\
0.64  0.06867038  0.04387420  0.07635806  0.06368823 \\
0.66  0.05873722  0.03476999  0.06715074  0.05331233 \\
0.68  0.04997147  0.02665598  0.05905759  0.04409907 \\
0.70  0.04232722  0.01951181  0.05202372  0.03602039 \\
0.72  0.03575844  0.01330257  0.04600328  0.02903988 \\
0.74  0.03019844  0.00800691  0.04095843  0.02308668 \\
0.76  0.02558989  0.00354989  0.03681882  0.01811554 \\
0.78  0.02185611 -0.00014140  0.03351268  0.01405436 \\
0.80  0.01890485 -0.00315092  0.03095826  0.01081012 \\
0.82  0.01662879 -0.00557854  0.02906019  0.00826534 \\
0.84  0.01489566 -0.00753058  0.02771279  0.00624936 \\
0.86  0.01355263 -0.00917362  0.02676742  0.00454681 \\
0.88  0.01258157 -0.01048591  0.02620705  0.00318437 \\
0.90  0.01195655 -0.01153049  0.02599374  0.00221023 \\
0.92  0.01157880 -0.01237248  0.02606337  0.00156943 \\
0.94  0.01132649 -0.01295382  0.02639163  0.00109665 \\
0.96  0.01111401 -0.01302457  0.02690656  0.00064739 \\
0.98  0.01091680 -0.01302457  0.02734795  0.00029244 \\
1.00  0.01075674 -0.01302457  0.02761435  0.00007013 \\
1.02  0.01063764 -0.01302457  0.02779479 -0.00006908 \\
1.04  0.01054173 -0.01302457  0.02794119 -0.00017304 \\
1.06  0.01045977 -0.01302457  0.02807021 -0.00026000 \\
1.08  0.01038723 -0.01302457  0.02818830 -0.00033742 \\
1.10  0.01032159 -0.01302457  0.02829870 -0.00040893 \\
1.12  0.01026117 -0.01302457  0.02840332 -0.00047651 \\
1.14  0.01020484 -0.01302457  0.02850346 -0.00054132 \\
1.16  0.01015176 -0.01302457  0.02860001 -0.00060409 \\
1.18  0.01010125 -0.01302457  0.02869368 -0.00066529 \\
1.20  0.01005282 -0.01302457  0.02878499 -0.00072526 \\
1.22  0.01000607 -0.01302457  0.02887436 -0.00078426 \\
1.24  0.00996068 -0.01302458  0.02896214 -0.00084248 \\
1.26  0.00991640 -0.01302458  0.02904861 -0.00090010 \\
1.28  0.00987300 -0.01302458  0.02913401 -0.00095726 \\
1.30  0.00983028 -0.01302458  0.02921855 -0.00101407 \\
1.32  0.00978805 -0.01302458  0.02930244 -0.00107066 \\
1.34  0.00974614 -0.01302458  0.02938584 -0.00112711 \\
1.36  0.00970439 -0.01302458  0.02946891 -0.00118352 \\
1.38  0.00966264 -0.01302458  0.02955181 -0.00123998 \\
1.40  0.00962074 -0.01302458  0.02963469 -0.00129659 \\
1.42  0.00957851 -0.01302458  0.02971770 -0.00135343 \\
1.44  0.00953578 -0.01302458  0.02980098 -0.00141063 \\
1.46  0.00949234 -0.01302458  0.02988471 -0.00146827 \\
1.48  0.00944797 -0.01302458  0.02996903 -0.00152648 \\
1.50  0.00940242 -0.01302458  0.03005413 -0.00158540 \\
1.52  0.00935541 -0.01302458  0.03014020 -0.00164525 \\
1.54  0.00930662 -0.01302458  0.03022741 -0.00170628 \\
1.56  0.00925567 -0.01302458  0.03031598 -0.00176882 \\
1.58  0.00920207 -0.01302459  0.03040612 -0.00183335 \\
1.60  0.00914519 -0.01302459  0.03049800 -0.00190055 \\
1.62  0.00908416 -0.01302459  0.03059171 -0.00197146 \\
1.64  0.00901779 -0.01302459  0.03068729 -0.00204765 \\
1.66  0.00894426 -0.01302459  0.03078450 -0.00213180 \\
1.68  0.00886067 -0.01302459  0.03088241 -0.00222883 \\
1.70  0.00876171 -0.01302459  0.03097895 -0.00234879 \\
1.72  0.00863571 -0.01302459  0.03107168 -0.00251484 \\
1.74  0.00845004 -0.01302459  0.03115473 -0.00278667 \\
1.76  0.00811690 -0.01302459  0.03111697 -0.00328672 \\
1.78  0.00758254 -0.01318182  0.03084948 -0.00403532 \\
1.80  0.00694502 -0.01288275  0.03058416 -0.00486514 \\
1.82  0.00633173 -0.01240516  0.03051851 -0.00562518 \\
1.84  0.00580935 -0.01177312  0.03073597 -0.00625992 \\
1.86  0.00532568 -0.01104769  0.03117475 -0.00684279 \\
1.88  0.00488684 -0.01029388  0.03177925 -0.00737732 \\
1.90  0.00453718 -0.00954622  0.03241702 -0.00781943 \\
1.92  0.00425759 -0.00887435  0.03311695 -0.00820526 \\
1.94  0.00400884 -0.00833106  0.03377535 -0.00857516 \\
1.96  0.00374289 -0.00789690  0.03418384 -0.00898540 \\
1.98  0.00344934 -0.00751927  0.03437253 -0.00944076 \\
2.00  0.00304233 -0.00737920  0.03428684 -0.01006170 \\
}\datatable

\addplot[semithick, densely dashed, forget plot] {0.0};

\addplot[thick, color2]
    table[x=t, y=Xo_nominal]{\datatable};
\addlegendentry{Nominal trajectory}

\addplot[thick, color1]
    table[x=t, y=Xo_true]{\datatable};

\addplot[name path=lower, draw=none, forget plot]
    table[x=t, y=Xo_lb]{\datatable};

\addplot[name path=upper, draw=none, forget plot]
    table[x=t, y=Xo_ub]{\datatable};

\addplot[fill=color3, fill opacity=0.4, draw=none, legend image code/.code={
        \draw[fill=color3, fill opacity=0.4, draw=none] (0cm,-0.1cm) rectangle (0.6cm,0.1cm);
    }]
    fill between[of=lower and upper];

\end{axis}

\end{tikzpicture} \hspace*{-5mm}
        \begin{tikzpicture}

\definecolor{color1}{RGB}{228,26,28}
\definecolor{color2}{RGB}{55,126,184}
\definecolor{color3}{RGB}{77,175,74}
\definecolor{color4}{RGB}{152,78,163}

\begin{axis}[
width=3.6cm, height=3.4cm,
xlabel={\footnotesize $t$},
ylabel={\footnotesize $y$},
xlabel shift=-1mm,
ylabel shift=-2mm,
xmin=0, xmax=2,
tick label style={
    font=\scriptsize,
    /pgf/number format/fixed,
},
legend cell align={left},
legend style={
    font=\footnotesize,
    at={(-0.3,1.25)},
    anchor=north west,
    draw=none,
    fill=none,
    row sep=-2pt,
    inner sep=0pt,
},
legend image post style={scale=0.4},
]

\pgfplotstableread[row sep=crcr]{
t     Xo_nominal  Xo_true     Xo_lb       Xo_ub      \\
0.00  0.52000000  0.52000000  0.52000000  0.52000000 \\
0.02  0.52002589  0.52000000  0.51999884  0.51994658 \\
0.04  0.51756680  0.51738007  0.51753159  0.51671642 \\
0.06  0.51398580  0.51228085  0.51398199  0.51301487 \\
0.08  0.51082263  0.50828494  0.51161328  0.50941595 \\
0.10  0.50891916  0.50494590  0.51059871  0.50668733 \\
0.12  0.50869891  0.50262693  0.51142898  0.50590677 \\
0.14  0.50955796  0.50165210  0.51327865  0.50710853 \\
0.16  0.50967367  0.50241350  0.51388436  0.50798638 \\
0.18  0.50778497  0.50336850  0.51190338  0.50687144 \\
0.20  0.50370513  0.50284786  0.50771377  0.50320992 \\
0.22  0.49748610  0.50087204  0.50184084  0.49681013 \\
0.24  0.48923496  0.49681746  0.49421829  0.48802809 \\
0.26  0.47958564  0.49036801  0.48516951  0.47771484 \\
0.28  0.46973386  0.48331416  0.47559133  0.46712636 \\
0.30  0.46039669  0.47634359  0.46614672  0.45722849 \\
0.32  0.45328827  0.46863193  0.45885334  0.45020448 \\
0.34  0.44934061  0.46045575  0.45491573  0.44649076 \\
0.36  0.44867246  0.45715182  0.45478113  0.44496651 \\
0.38  0.45030614  0.45459803  0.45757553  0.44706337 \\
0.40  0.45271738  0.45447413  0.46002702  0.45130847 \\
0.42  0.45732635  0.45960397  0.46392062  0.45609060 \\
0.44  0.46448031  0.46840821  0.47096511  0.45891149 \\
0.46  0.47127511  0.47878176  0.47887401  0.46260829 \\
0.48  0.47831002  0.49318278  0.48502406  0.46949971 \\
0.50  0.48474666  0.50370377  0.49101261  0.47536334 \\
0.52  0.49032965  0.50549561  0.49612005  0.48039042 \\
0.54  0.49761214  0.49536327  0.50371305  0.48442681 \\
0.56  0.49859322  0.48597304  0.50532301  0.48091238 \\
0.58  0.48384110  0.47395349  0.49275836  0.46139345 \\
0.60  0.47070202  0.46027634  0.47903960  0.44964593 \\
0.62  0.45697973  0.44718304  0.46511652  0.43677867 \\
0.64  0.44357723  0.43454148  0.45177945  0.42391165 \\
0.66  0.43057816  0.42227142  0.43892815  0.41136255 \\
0.68  0.41813647  0.41047824  0.42665973  0.39934824 \\
0.70  0.40638222  0.39929531  0.41508360  0.38801793 \\
0.72  0.39542083  0.38882624  0.40429846  0.37747726 \\
0.74  0.38534923  0.37912780  0.39441228  0.36779023 \\
0.76  0.37621708  0.37028978  0.38545808  0.35903858 \\
0.78  0.36806551  0.36236887  0.37746879  0.35127661 \\
0.80  0.36092586  0.35539787  0.37047605  0.34453681 \\
0.82  0.35482260  0.34938508  0.36452266  0.33882546 \\
0.84  0.34978373  0.34429988  0.35969759  0.33410634 \\
0.86  0.34581051  0.34011867  0.35607689  0.33034055 \\
0.88  0.34286231  0.33677479  0.35362579  0.32747611 \\
0.90  0.34083155  0.33429538  0.35215572  0.32551818 \\
0.92  0.33953204  0.33270233  0.35138292  0.32441980 \\
0.94  0.33867199  0.33194273  0.35094958  0.32401178 \\
0.96  0.33799588  0.33185093  0.35065387  0.32401450 \\
0.98  0.33739771  0.33185089  0.35050757  0.32408217 \\
1.00  0.33690085  0.33185089  0.35052432  0.32402840 \\
1.02  0.33652257  0.33185089  0.35065681  0.32392366 \\
1.04  0.33622630  0.33185089  0.35083714  0.32382209 \\
1.06  0.33598146  0.33185089  0.35102836  0.32373401 \\
1.08  0.33577021  0.33185089  0.35121644  0.32365849 \\
1.10  0.33558216  0.33185089  0.35139706  0.32359299 \\
1.12  0.33541093  0.33185089  0.35156942  0.32353528 \\
1.14  0.33525240  0.33185089  0.35173399  0.32348362 \\
1.16  0.33510371  0.33185088  0.35189162  0.32343671 \\
1.18  0.33496284  0.33185088  0.35204319  0.32339355 \\
1.20  0.33482829  0.33185087  0.35218955  0.32335340 \\
1.22  0.33469886  0.33185087  0.35233146  0.32331565 \\
1.24  0.33457356  0.33185087  0.35246963  0.32327986 \\
1.26  0.33445160  0.33185087  0.35260466  0.32324563 \\
1.28  0.33433231  0.33185087  0.35273709  0.32321263 \\
1.30  0.33421511  0.33185087  0.35286740  0.32318060 \\
1.32  0.33409948  0.33185086  0.35299605  0.32314930 \\
1.34  0.33398495  0.33185086  0.35312343  0.32311851 \\
1.36  0.33387108  0.33185086  0.35324993  0.32308802 \\
1.38  0.33375747  0.33185086  0.35337591  0.32305765 \\
1.40  0.33364369  0.33185086  0.35350172  0.32302720 \\
1.42  0.33352932  0.33185086  0.35362769  0.32299648 \\
1.44  0.33341395  0.33185086  0.35375417  0.32296529 \\
1.46  0.33329714  0.33185086  0.35388146  0.32293339 \\
1.48  0.33317841  0.33185086  0.35400990  0.32290052 \\
1.50  0.33305721  0.33185085  0.35413979  0.32286640 \\
1.52  0.33293291  0.33185085  0.35427145  0.32283066 \\
1.54  0.33280471  0.33185085  0.35440517  0.32279288 \\
1.56  0.33267167  0.33185085  0.35454115  0.32275251 \\
1.58  0.33253260  0.33185085  0.35467946  0.32270884 \\
1.60  0.33238594  0.33185084  0.35481982  0.32266091 \\
1.62  0.33222960  0.33185084  0.35496111  0.32260739 \\
1.64  0.33206061  0.33185084  0.35510109  0.32254637 \\
1.66  0.33187449  0.33185084  0.35523474  0.32247497 \\
1.68  0.33166392  0.33185084  0.35535078  0.32238854 \\
1.70  0.33141555  0.33185084  0.35542330  0.32227902 \\
1.72  0.33110052  0.33185084  0.35539231  0.32213011 \\
1.74  0.33063993  0.33185082  0.35510885  0.32189293 \\
1.76  0.32980484  0.33185082  0.35411619  0.32136560 \\
1.78  0.32824737  0.33168384  0.35170302  0.32018490 \\
1.80  0.32586138  0.33017780  0.34785149  0.31823436 \\
1.82  0.32277355  0.32781943  0.34300906  0.31558687 \\
1.84  0.31916545  0.32483449  0.33757198  0.31237936 \\
1.86  0.31525560  0.32156803  0.33188192  0.30885727 \\
1.88  0.31136119  0.31832660  0.32636487  0.30532547 \\
1.90  0.30781513  0.31524667  0.32148575  0.30202741 \\
1.92  0.30469330  0.31257673  0.31729333  0.29910291 \\
1.94  0.30207675  0.31048052  0.31393136  0.29664420 \\
1.96  0.30010710  0.30885120  0.31167622  0.29468421 \\
1.98  0.29860265  0.30746721  0.31024278  0.29305644 \\
2.00  0.29695562  0.30555364  0.30851428  0.29123279 \\
}\datatable

\addplot[semithick, densely dashed, forget plot] {0.3};

\addplot[thick, color2, forget plot]
    table[x=t, y=Xo_nominal]{\datatable};

\addplot[thick, color1]
    table[x=t, y=Xo_true]{\datatable};
\addlegendentry{Closed-loop rollout}

\addplot[name path=lower, draw=none, forget plot]
    table[x=t, y=Xo_lb]{\datatable};

\addplot[name path=upper, draw=none, forget plot]
    table[x=t, y=Xo_ub]{\datatable};

\addplot[fill=color3, fill opacity=0.4, draw=none, legend image code/.code={
        \draw[fill=color3, fill opacity=0.4, draw=none] (0cm,-0.1cm) rectangle (0.6cm,0.1cm);
    }]
    fill between[of=lower and upper];

\end{axis}

\end{tikzpicture} \hspace*{-5mm}
        \begin{tikzpicture}

\definecolor{color1}{RGB}{228,26,28}
\definecolor{color2}{RGB}{55,126,184}
\definecolor{color3}{RGB}{77,175,74}
\definecolor{color4}{RGB}{152,78,163}

\begin{axis}[
width=3.6cm, height=3.4cm,
xlabel={\footnotesize $t$},
ylabel={\footnotesize $\theta$},
xlabel shift=-1mm,
ylabel shift=-2mm,
xmin=0, xmax=2,
tick label style={
    font=\scriptsize,
    /pgf/number format/fixed,
},
legend cell align={left},
legend style={
    font=\footnotesize,
    at={(-0.25,1.25)},
    anchor=north west,
    draw=none,
    fill=none,
    row sep=-2pt,
    inner sep=0pt,
},
legend image post style={scale=0.4},
]

\pgfplotstableread[row sep=crcr]{
t     Xo_nominal  Xo_true     Xo_lb       Xo_ub      \\
0.00  0.00000000  0.00000000  0.00000000  0.00000000 \\
0.02  0.00000000 -0.00000000  0.00006611 -0.00006611 \\
0.04  0.00243145  0.00169539  0.00268345  0.00182236 \\
0.06  0.01133366  0.00711220  0.01185481  0.01109952 \\
0.08  0.02984722  0.02240715  0.03144495  0.03002091 \\
0.10  0.05920492  0.04822564  0.06186285  0.05930352 \\
0.12  0.09872216  0.08305655  0.10228696  0.09911478 \\
0.14  0.14557649  0.12663272  0.14999637  0.14646919 \\
0.16  0.19590923  0.17752333  0.20101287  0.19682974 \\
0.18  0.24663203  0.23174741  0.25205518  0.24717683 \\
0.20  0.29591770  0.28572299  0.30159096  0.29606189 \\
0.22  0.34257463  0.33726427  0.34866533  0.34248942 \\
0.24  0.38611617  0.38260190  0.39277704  0.38598920 \\
0.26  0.42590410  0.41996428  0.43313456  0.42579453 \\
0.28  0.45894297  0.44367530  0.46659617  0.45860579 \\
0.30  0.48142483  0.45162231  0.48950523  0.48007517 \\
0.32  0.49980481  0.45948872  0.50863543  0.49615854 \\
0.34  0.51895507  0.46781432  0.52927249  0.51148450 \\
0.36  0.53856700  0.47836485  0.55197075  0.52583739 \\
0.38  0.55180695  0.47314489  0.56896920  0.54159191 \\
0.40  0.55881376  0.46287751  0.57757288  0.55635530 \\
0.42  0.56616903  0.45849145  0.58863015  0.56796480 \\
0.44  0.57253621  0.44921321  0.59522240  0.57363175 \\
0.46  0.57065364  0.44359970  0.57413826  0.55991414 \\
0.48  0.56452457  0.45272841  0.56674646  0.55242552 \\
0.50  0.55289759  0.45253458  0.55472478  0.53929404 \\
0.52  0.53551739  0.44790088  0.53666711  0.52268539 \\
0.54  0.51777879  0.45916971  0.51939127  0.50579864 \\
0.56  0.50956534  0.47073910  0.51013531  0.50655911 \\
0.58  0.52073182  0.48203245  0.52158282  0.51600685 \\
0.60  0.53336577  0.48885187  0.53397971  0.52898071 \\
0.62  0.54388613  0.49592161  0.54463852  0.53911787 \\
0.64  0.55333964  0.50256339  0.55447424  0.54790585 \\
0.66  0.56165091  0.50843919  0.56332812  0.55544737 \\
0.68  0.56887774  0.51349961  0.57116249  0.56190102 \\
0.70  0.57506956  0.51778469  0.57796291  0.56737533 \\
0.72  0.58027283  0.52134218  0.58375467  0.57192888 \\
0.74  0.58455953  0.52419363  0.58861103  0.57560122 \\
0.76  0.58798471  0.52642927  0.59256676  0.57847248 \\
0.78  0.59062826  0.52813051  0.59569152  0.58062289 \\
0.80  0.59259039  0.52938648  0.59808307  0.58214360 \\
0.82  0.59399121  0.53029900  0.59987373  0.58313951 \\
0.84  0.59498364  0.53096817  0.60127084  0.58372093 \\
0.86  0.59574141  0.53156231  0.60253874  0.58404340 \\
0.88  0.59627458  0.53203798  0.60364224  0.58410609 \\
0.90  0.59658984  0.53246859  0.60447825  0.58395540 \\
0.92  0.59676412  0.53292883  0.60503825  0.58366513 \\
0.94  0.59688104  0.53334112  0.60538254  0.58328367 \\
0.96  0.59698912  0.53339153  0.60574060  0.58273241 \\
0.98  0.59709526  0.53339151  0.60601741  0.58226623 \\
1.00  0.59717885  0.53339151  0.60616792  0.58198769 \\
1.02  0.59724003  0.53339151  0.60625126  0.58180390 \\
1.04  0.59729296  0.53339151  0.60631058  0.58166189 \\
1.06  0.59734238  0.53339151  0.60635889  0.58154212 \\
1.08  0.59738982  0.53339151  0.60640080  0.58143601 \\
1.10  0.59743599  0.53339151  0.60643854  0.58133912 \\
1.12  0.59748127  0.53339151  0.60647338  0.58124889 \\
1.14  0.59752586  0.53339151  0.60650603  0.58116330 \\
1.16  0.59756991  0.53339151  0.60653696  0.58108101 \\
1.18  0.59761347  0.53339151  0.60656652  0.58100109 \\
1.20  0.59765660  0.53339151  0.60659498  0.58092285 \\
1.22  0.59769936  0.53339151  0.60662258  0.58084590 \\
1.24  0.59774184  0.53339151  0.60664954  0.58077010 \\
1.26  0.59778412  0.53339151  0.60667605  0.58069529 \\
1.28  0.59782628  0.53339151  0.60670221  0.58062130 \\
1.30  0.59786839  0.53339151  0.60672816  0.58054797 \\
1.32  0.59791051  0.53339151  0.60675398  0.58047514 \\
1.34  0.59795270  0.53339151  0.60677976  0.58040264 \\
1.36  0.59799504  0.53339151  0.60680559  0.58033033 \\
1.38  0.59803759  0.53339151  0.60683153  0.58025805 \\
1.40  0.59808041  0.53339151  0.60685768  0.58018566 \\
1.42  0.59812357  0.53339151  0.60688412  0.58011301 \\
1.44  0.59816716  0.53339151  0.60691094  0.58003992 \\
1.46  0.59821122  0.53339151  0.60693824  0.57996626 \\
1.48  0.59825582  0.53339151  0.60696614  0.57989181 \\
1.50  0.59830103  0.53339151  0.60699478  0.57981636 \\
1.52  0.59834700  0.53339151  0.60702433  0.57973965 \\
1.54  0.59839385  0.53339151  0.60705504  0.57966139 \\
1.56  0.59844179  0.53339151  0.60708720  0.57958125 \\
1.58  0.59849106  0.53339151  0.60712125  0.57949883 \\
1.60  0.59854196  0.53339151  0.60715777  0.57941366 \\
1.62  0.59859494  0.53339151  0.60719769  0.57932520 \\
1.64  0.59865062  0.53339151  0.60724235  0.57923267 \\
1.66  0.59870998  0.53339151  0.60729404  0.57913513 \\
1.68  0.59877473  0.53339151  0.60735691  0.57903135 \\
1.70  0.59884812  0.53339151  0.60743939  0.57891934 \\
1.72  0.59893766  0.53339151  0.60756071  0.57879342 \\
1.74  0.59906531  0.53339151  0.60776759  0.57864419 \\
1.76  0.59928824  0.53339151  0.60813728  0.57855029 \\
1.78  0.59960149  0.53351373  0.60860967  0.57853838 \\
1.80  0.59987201  0.53277902  0.60898943  0.57831418 \\
1.82  0.59997829  0.53161684  0.60915608  0.57768305 \\
1.84  0.59987885  0.53011291  0.60908736  0.57660076 \\
1.86  0.59967672  0.52842604  0.60890801  0.57520616 \\
1.88  0.59942596  0.52671034  0.60868375  0.57363723 \\
1.90  0.59913931  0.52504127  0.60843683  0.57210655 \\
1.92  0.59885312  0.52356459  0.60821783  0.57060217 \\
1.94  0.59862830  0.52238529  0.60808739  0.56927160 \\
1.96  0.59854837  0.52145408  0.60814796  0.56837974 \\
1.98  0.59859016  0.52065237  0.60836786  0.56784516 \\
2.00  0.59873484  0.51994980  0.60873778  0.56758442 \\
}\datatable

\addplot[semithick, densely dashed, forget plot] {0.785398};

\addplot[thick, color2, forget plot]
    table[x=t, y=Xo_nominal]{\datatable};

\addplot[thick, color1, forget plot]
    table[x=t, y=Xo_true]{\datatable};

\addplot[name path=lower, draw=none, forget plot]
    table[x=t, y=Xo_lb]{\datatable};

\addplot[name path=upper, draw=none, forget plot]
    table[x=t, y=Xo_ub]{\datatable};

\addplot[fill=color3, fill opacity=0.4, draw=none, legend image code/.code={
        \draw[fill=color3, fill opacity=0.4, draw=none] (0cm,-0.1cm) rectangle (0.6cm,0.1cm);
    }]
    fill between[of=lower and upper];
\addlegendentry{Predicted tube}

\end{axis}

\end{tikzpicture}
    }
    \caption{Rollout of bimanual planar box manipulation. Keyframes under \textbf{(a)} no geometry smoothing, and \textbf{(b)} has geometry smoothing. Trajectory and predicted tube per object DoF under \textbf{(c)} no geometry smoothing, and \textbf{(d)} has geometry smoothing.}
    \label{fig:iiwa-box-rollout}
\end{figure}

\looseness-1Next, we evaluate our method on a planar box manipulation problem to motivate the need for geometry smoothing (RQ4). In this setting, the discontinuous gradients that arise from the discontinuous contact normals between the actuated arms and the box destabilize the method without geometry smoothing (Fig. \ref{fig:iiwa-box-rollout}a,c). While the closed-loop rollout remains within the reachable tube, once again validating \textbf{RQ1}, the box fails to approach the goal, leading to a large final position error, which is in fact further from the goal than the initial state, due to the unstable gradients. In contrast, with geometry smoothing, we find a solution that gets much closer to the target, leading to a goal error of 9~mm. In this case, the closed-loop rollouts do not stay within the reachable tubes, as the geometry smoothing-error is not accounted for in the bound $E_\kappa(x,u)$. However, as noted earlier, the resulting error only leads to a small constant offset from the tubes and does not grow over time. Overall, this result demonstrates the need for contact geometry smoothing for manipulating objects with discontinuous surface normals, validating \textbf{RQ4}, though this is at the cost of tube validity (\textbf{RQ1}).

\subsection{In-hand Cube Reorientation}

Finally, we evaluate our method on a high-dimensional dexterous in-hand manipulation task using an Allegro hand manipulating a cube. The system consists of 16 actuated degrees of freedom and 6 unactuated object degrees of freedom. The objective is to rotate the cube by $90^\circ$ (i.e., achieve a roll angle of $\pi/2$) while maintaining its center of mass position and the remaining Euler angles. This yields a 22-dimensional state and 16-dimensional control input system, providing a challenging test of scalability (\textbf{RQ5}). Despite the high dimensionality and contact-rich dynamics, Alg.~\ref{alg:scp} synthesizes a policy in 610 seconds with rollout result shown in Fig.~\ref{fig:inhand-cube-rollout}. We again validate \textbf{RQ1}: the true closed-loop trajectory remains within the predicted reachable tubes. The synthesized policy is able to stabilize the nonsmooth hybrid system about the nominal trajectory, and the resulting goal error is $9$ mm and $2.2^\circ$. Overall, this experiment demonstrates that Alg.~\ref{alg:scp} scales to high-dimensional, complex, contact-rich manipulation tasks while retaining a reachability-based certificate that guarantees robust constraint satisfaction.

\subsection{Ablation on Smoothing Parameter}
We perform an ablation study on $\kappa$ and plot the goal error and tube size as functions of $\kappa$ (\cref{fig:kappa-ablation}). As expected, the tube size scales proportionally with $\kappa$.
For small $\kappa$, insufficient smoothing prevents convergence to a good mode sequence, resulting in larger goal error. For sufficiently large $\kappa$, the solver converges to a favorable contact mode sequence, yielding a smaller goal error that varies monotonically with $\kappa$. 
Therefore, the smoothing parameter must exceed a task-dependent threshold to ensure reliable convergence, while remaining sufficiently small to avoid unnecessarily large tube sizes.
\cref{tab:task-parameters} reports the nominal smoothing parameter $\kappa$ used for each task. 

\begin{figure}[!t]
    \centering
    \vspace*{-2.5mm}
    \subfloat[]{
        \begin{tikzpicture}

\definecolor{color1}{RGB}{231,41,138}
\definecolor{color2}{RGB}{117,112,179}

\pgfplotstableread{
kappa       goal_error tube_size
0.01        0.275484   0.56782113
0.00630957  0.224854   0.29030676
0.00398107  0.178721   0.15111045
0.00251189  0.143012   0.08501434 
0.00158489  0.118597   0.04026456
0.001       0.101928   0.02481901
0.00063096  0.091447   0.01996683
0.00039811  0.704834   0.01002528
0.00025119  0.699487   0.00486300
0.00015849  0.697862   0.00234068
0.0001      0.694946   0.00145865
}\datatable

\begin{axis}[
width=4cm, height=3.2cm,
xmode=log,
xmin=1e-4, xmax=1e-2,
xlabel={\small $\kappa$},
xlabel shift=-1mm,
ylabel={\footnotesize Goal error},
ylabel style={color=color1},
ylabel shift=-1mm,
axis y line*=left,
tick label style={font=\scriptsize},
every y tick scale label/.style={
    at={(axis description cs:0.15,1.1)},
    anchor=north east,
    inner sep=0,
},
]
\addplot[color1] table[x=kappa, y=goal_error] {\datatable};
\end{axis}

\begin{axis}[
width=4cm, height=3.2cm,
xmode=log,
xmin=1e-4, xmax=1e-2,
ylabel={\footnotesize Tube size},
ylabel style={color=color2},
ylabel shift=-1mm,
axis y line*=right,
axis x line=none,
tick label style={font=\scriptsize},
]
\addplot[color2] table[x=kappa, y=tube_size] {\datatable};
\end{axis}

\end{tikzpicture}
    }
    \subfloat[]{
        \begin{tikzpicture}

\definecolor{color1}{RGB}{231,41,138}
\definecolor{color2}{RGB}{117,112,179}

\pgfplotstableread{
kappa        goal_error tube_size
0.001        0.199318   3.0759348
0.000630957  0.159829   1.9843291
0.000398107  0.138845   1.0090343
0.000251189  0.123510   0.5221930 
0.000158489  0.103138   0.3003248
0.0001       0.098661   0.1723498
0.000063096  0.123427   0.1252384
0.000039811  0.348508   0.0911908
0.000025119  1.254760   0.0213093
0.000015849  2.813783   0.0129022
0.00001      2.282993   0.0060983
}\datatable

\begin{axis}[
width=4cm, height=3.2cm,
xmode=log,
xmin=1e-5, xmax=1e-3,
xlabel={\small $\kappa$},
xlabel shift=-1mm,
ylabel={\footnotesize Goal error},
ylabel style={color=color1},
ylabel shift=-1mm,
axis y line*=left,
tick label style={font=\scriptsize},
every y tick scale label/.style={
    at={(axis description cs:0.15,1.1)},
    anchor=north east,
    inner sep=0,
},
]
\addplot[color1] table[x=kappa, y=goal_error] {\datatable};
\end{axis}

\begin{axis}[
width=4cm, height=3.2cm,
xmode=log,
xmin=1e-5, xmax=1e-3,
ylabel={\footnotesize Tube size},
ylabel style={color=color2},
ylabel shift=-1mm,
axis y line*=right,
axis x line=none,
tick label style={font=\scriptsize},
]
\addplot[color2] table[x=kappa, y=tube_size] {\datatable};
\end{axis}

\end{tikzpicture}
    }
    \caption{Goal error and tube size as a function of smoothing parameter. \textbf{(a)} Bimanual planar bucket manipulation task. \textbf{(b)} In-hand cube reorientation task.}
    \label{fig:kappa-ablation}
\end{figure}

\section{Limitations}
\label{sec:limitations}

Despite enabling gradient-based policy synthesis for contact-rich manipulation with formal guarantees on the nonsmooth hybrid dynamics, our method has several limitations.
First, our approach is susceptible to poor local minima, which we believe is partly due to the use of shooting method for optimization. In contrast, direct transcription often appears more robust to initialization and allows dynamically infeasible warm starts. Combining a more robust optimization method with stochastic exploration strategies similar to those used in \ac{RL} may improve motion planning performance.
Second, we account only for errors induced by contact dynamics smoothing and do not model uncertainty in physical parameters such as mass, friction, or geometry. Quantifying how parametric uncertainty propagates through convex programs \cite{gould2026over} and incorporating such uncertainty into our framework is an important direction for future work.
Third, our formulation relies on a quasistatic model and assumes contact only between actuated and unactuated bodies, limiting its applicability to dynamic manipulation and tool-use scenarios.
Finally, our algorithm can benefit from GPU acceleration to reduce solve time \cite{fang2026safe}.

\section{Conclusion}
We presented a principled method for handling hybrid contact dynamics that overcomes the nonsmoothness that has traditionally hindered gradient-based trajectory optimization, while retaining guarantees of closed-loop constraint satisfaction. 
The method builds on characterizing the smoothing error, propagating it to compute reachable tubes, and synthesizing affine feedback policies. 
Because the feedback gains are constructed from a one-sided deviation bound, they appropriately capture the unilateral nature of contact, while ensuring that the system remains within a region where those gains are valid by construction. 
The resulting policy enables significantly cheaper online execution than \ac{MPC}, making formal feedback synthesis practical for high-dimensional contact-rich systems.

\clearpage
\bibliographystyle{plainnat}
\bibliography{contents/references}

@article{schulman2017proximal,
  title={\href{https://doi.org/10.48550/arXiv.1707.06347}{Proximal policy optimization algorithms}},
  author={Schulman, John and Wolski, Filip and Dhariwal, Prafulla and Radford, Alec and Klimov, Oleg},
  journal={arXiv preprint arXiv:1707.06347},
  year={2017}
}

@inproceedings{freeman2021brax,
  title={\href{https://openreview.net/forum?id=VdvDlnnjzIN}{Brax--A differentiable physics engine for large scale rigid body simulation}},
  author={C. Daniel Freeman and Erik Frey and Anton Raichuk and Sertan Girgin and Igor Mordatch and Olivier Bachem},
  booktitle={35th Conference on Neural Information Processing Systems},
  year={2021}
}

@inproceedings{suh2022do,
  title = {\href{https://proceedings.mlr.press/v162/suh22b.html}{Do differentiable simulators give better policy gradients?}},
  author = {Suh, H.J. Terry and Simchowitz, Max and Zhang, Kaiqing and Tedrake, Russ},
  booktitle = {Proceedings of the 39th International Conference on Machine Learning},
  pages = {20668-20696},
  year = {2022}
}

@article{goulart2024clarabel,
   author = {Goulart, Paul J. and Chen, Yuwen},
   title = {\href{https://doi.org/10.48550/arXiv.2405.12762}{Clarabel: An interior-point solver for conic programs with quadratic objectives}},
   journal = {arXiv preprint arXiv.2405.12762},
   year = {2024}
}

@book{boyd2004convex,
   author = {Boyd, Stephen and Vandenberghe, Lieven},
   title = {\href{https://stanford.edu/~boyd/cvxbook/}{Convex Optimization}},
   publisher = {Cambridge University Press},
   year = {2004}
}

@manual{vandenberghe2010cvxopt,
   author = {Vandenberghe, Lieven},
   title = {\href{http://www.seas.ucla.edu/~vandenbe/publications/coneprog.pdf}{The CVXOPT linear and quadratic cone program solvers}},
   year = {2010}
}

@article{agrawal2019differentiable,
   author = {Agrawal, Akshay and Amos, Brandon and Barratt, Shane and Boyd, Stephen and Diamond, Steven and Kolter, J Zico},
   title = {\href{https://proceedings.neurips.cc/paper_files/paper/2019/file/9ce3c52fc54362e22053399d3181c638-Paper.pdf}{Differentiable convex optimization layers}},
   journal = {Advances in Neural Information Processing Systems},
   volume = {32},
   year = {2019}
}

@article{hansen2016cma,
  title={\href{https://doi.org/10.48550/arXiv.1604.00772}{The CMA evolution strategy: A tutorial}},
  author={Hansen, Nikolaus},
  journal={arXiv preprint arXiv:1604.00772},
  year={2016}
}

@article{pezzato2025sampling,
  author={Pezzato, Corrado and Salmi, Chadi and Trevisan, Elia and Spahn, Max and Alonso-Mora, Javier and Hernández Corbato, Carlos},
  title={\href{https://doi.org/10.1109/LRA.2025.3535185}{Sampling-based model predictive control leveraging parallelizable physics simulations}}, 
  journal={IEEE Robotics and Automation Letters}, 
  year={2025},
  volume={10},
  number={3},
  pages={2750-2757},
}

@article{howell2022predictive,
  title={\href{https://doi.org/10.48550/arXiv.2212.00541}{Predictive sampling: Real-time behaviour synthesis with {MuJoCo}}},
  author={Howell, Taylor and Gileadi, Nimrod and Tunyasuvunakool, Saran and Zakka, Kevin and Erez, Tom and Tassa, Yuval},
  journal={arXiv preprint arXiv:2212.00541},
  year={2022}
}

@article{jankowski2022vp,
  title={\href{https://doi.org/10.48550/arXiv.2210.04067}{{VP-STO}: Via-point-based stochastic trajectory optimization for reactive robot behavior}},
  author={Jankowski, Julius and Bruderm{\"u}ller, Lara and Hawes, Nick and Calinon, Sylvain},
  journal={arXiv preprint arXiv:2210.04067},
  year={2022}
}

@inproceedings{li2025drop,
  author={Li, Albert H. and Culbertson, Preston and Kurtz, Vince and Ames, Aaron D.},
  booktitle={2025 IEEE International Conference on Robotics and Automation}, 
  title={\href{https://doi.org/10.1109/ICRA55743.2025.11128433}{DROP: Dexterous reorientation via online planning}}, 
  year={2025},
  pages={14299-14306}
}

@inproceedings{peng2018simtoreal,
  author={Peng, Xue Bin and Andrychowicz, Marcin and Zaremba, Wojciech and Abbeel, Pieter},
  booktitle={2018 IEEE International Conference on Robotics and Automation}, 
  title={\href{https://doi.org/10.1109/ICRA.2018.8460528}{Sim-to-real transfer of robotic control with dynamics randomization}}, 
  year={2018},
  pages={3803-3810}
}

@inproceedings{rajeswaran2017epopt,
    title={\href{https://openreview.net/forum?id=SyWvgP5el}{{EPO}pt: Learning robust neural network policies using model ensembles}},
    author={Aravind Rajeswaran and Sarvjeet Ghotra and Balaraman Ravindran and Sergey Levine},
    booktitle={International Conference on Learning Representations},
    year={2017}
}

@article{suh2022bundled,
  author={Suh, H.J. Terry and Pang, Tao and Tedrake, Russ},
  journal={IEEE Robotics and Automation Letters}, 
  title={\href{https://doi.org/10.1109/LRA.2022.3146931}{Bundled gradients through contact via randomized smoothing}}, 
  year={2022},
  volume={7},
  number={2},
  pages={4000-4007}
}

@article{pang2023global,
  author={Pang, Tao and Suh, H.J. Terry and Yang, Lujie and Tedrake, Russ},
  journal={IEEE Transactions on Robotics}, 
  title={\href{https://doi.org/10.1109/TRO.2023.3300230}{Global planning for contact-rich manipulation via local smoothing of quasi-dynamic contact models}}, 
  year={2023},
  volume={39},
  number={6},
  pages={4691-4711}
}

@inproceedings{shirai2025is,
   author={Shirai, Yuki and Zhao, Tong and Suh, H.J. Terry and Zhu, Huaijiang and Ni, Xinpei and Wang, Jiuguang and Simchowitz, Max and Pang, Tao},
   title = {\href{https://doi.org/10.1109/ICRA55743.2025.11127776}{Is linear feedback on smoothed dynamics sufficient for stabilizing contact-rich plans?}},
   booktitle = {2025 IEEE International Conference on Robotics and Automation},
   pages = {11926–11932},
   year = {2025}
}

@article{suh2025dexterous,
   author = {H. J. Terry Suh and Tao Pang and Tong Zhao and Russ Tedrake},
   title ={\href{https://doi.org/10.1177/02783649251398875}{Dexterous contact-rich manipulation via the contact trust region}},
   journal = {The International Journal of Robotics Research},
   year = {2026},
}

@article{leeman2025robust,
   author={Leeman, Antoine P. and Köhler, Johannes and Zanelli, Andrea and Bennani, Samir and Zeilinger, Melanie N.},
   title = {\href{https://doi.org/10.1109/TAC.2025.3552482}{Robust nonlinear optimal control via system level synthesis}},
   journal = {IEEE Transactions on Automatic Control},
   volume = {70},
   number = {7},
   pages = {4780–4787},
   year = {2025}
}

@inproceedings{pang2021convex,
  author={Pang, Tao and Tedrake, Russ},
  booktitle={2021 IEEE International Conference on Robotics and Automation}, 
  title={\href{https://doi.org/10.1109/ICRA48506.2021.9560941}{A convex quasistatic time-stepping scheme for rigid multibody systems with contact and friction}}, 
  year={2021},
  pages={6614-6620}
}

@InProceedings{zhang2023adaptive,
  title = {\href{https://proceedings.mlr.press/v202/zhang23s.html}{Adaptive barrier smoothing for first-order policy gradient with contact dynamics}},
  author = {Zhang, Shenao and Jin, Wanxin and Wang, Zhaoran},
  booktitle = {Proceedings of the 40th International Conference on Machine Learning},
  pages = {41219-41243},
  year = 	{2023}
}

@inproceedings{sutton1999policy,
 author = {Sutton, Richard S. and McAllester, David and Singh, Satinder and Mansour, Yishay},
 title = {\href{https://proceedings.neurips.cc/paper_files/paper/1999/file/464d828b85b0bed98e80ade0a5c43b0f-Paper.pdf}{Policy gradient methods for reinforcement learning with function approximation}},
 booktitle = {Advances in Neural Information Processing Systems},
 volume = {12},
 year = {1999}
}

@inproceedings{montaut2023differentiable,
  author={Montaut, Louis and Lidec, Quentin Le and Bambade, Antoine and Petrik, Vladimir and Sivic, Josef and Carpentier, Justin},
  booktitle={2023 IEEE International Conference on Robotics and Automation}, 
  title={\href{https://doi.org/10.1109/ICRA48891.2023.10160251}{Differentiable collision detection: A randomized smoothing approach}}, 
  year={2023},
  pages={3240-3246}
}

@article{le2023single,
  author={Le Cleac'h, Simon and Schwager, Mac and Manchester, Zachary and Sindhwani, Vikas and Florence, Pete and Singh, Sumeet},
  journal={IEEE Robotics and Automation Letters}, 
  title={\href{https://doi.org/10.1109/LRA.2023.3268824}{Single-level differentiable contact simulation}}, 
  year={2023},
  volume={8},
  number={7},
  pages={4012-4019}
}

@inproceedings{mora2021pods,
  title ={\href{https://proceedings.mlr.press/v139/mora21a.html}{PODS: Policy optimization via differentiable simulation}},
  author={Mora, Miguel Angel Zamora and Peychev, Momchil and Ha, Sehoon and Vechev, Martin and Coros, Stelian},
  booktitle={Proceedings of the 38th International Conference on Machine Learning},
  pages={7805-7817},
  year={2021},
}

@inproceedings{qiao2021efficient,
  title={\href{https://proceedings.mlr.press/v139/qiao21a.html}{Efficient differentiable simulation of articulated bodies}},
  author={Qiao, Yi-Ling and Liang, Junbang and Koltun, Vladlen and Lin, Ming C},
  booktitle={Proceedings of the 38th International Conference on Machine Learning},
  pages={8661-8671},
  year={2021},
}

@inproceedings{xu2022accelerated,
    title={\href{https://openreview.net/forum?id=ZSKRQMvttc}{Accelerated policy learning with parallel differentiable simulation}},
    author={Jie Xu and Viktor Makoviychuk and Yashraj Narang and Fabio Ramos and Wojciech Matusik and Animesh Garg and Miles Macklin},
    booktitle={International Conference on Learning Representations},
    year={2022}
}

@inproceedings{georgiev2023adaptive,
  author = {Georgiev, Ignat and Srinivasan, Krishnan and Xu, Jie and Heiden, Eric and Garg, Animesh},
  title = {\href{https://doi.org/10.5555/3692070.3692688}{Adaptive horizon actor-critic for policy learning in contact-rich differentiable simulation}},
  booktitle = {Proceedings of the 41st International Conference on Machine Learning},
  year = {2024}
}

@article{le2024fast,
  author={Le Cleac'h, Simon and Howell, Taylor A. and Yang, Shuo and Lee, Chi-Yen and Zhang, John and Bishop, Arun and Schwager, Mac and Manchester, Zachary},
  journal={IEEE Transactions on Robotics}, 
  title={\href{https://doi.org/10.1109/TRO.2024.3351554}{Fast contact-implicit model predictive control}}, 
  year={2024},
  volume={40},
  pages={1617-1629}
}

@article{kim2025contact,
   author = {Kim, Gijeong and Kang, Dongyun and Kim, Joon-Ha and Hong, Seungwoo and Park, Hae-Won},
   title = {\href{https://doi.org/10.1177/02783649241273645}{Contact-implicit model predictive control: Controlling diverse quadruped motions without pre-planned contact modes or trajectories}},
   journal = {The International Journal of Robotics Research},
   volume = {44},
   number = {3},
   pages = {486–510},
   year = {2025}
}

@inproceedings{kang2025global,
   author = {Kang, Shucheng and Liu, Guorui and Yang, Heng},
   title = {\href{https://www.roboticsproceedings.org/rss21/p046.pdf}{Global contact-rich planning with sparsity-rich semidefinite relaxations}},
   booktitle = {Proceedings of Robotics: Science and Systems},
   year      = {2025}
}

@article{kurtz2026inverse,
  author = {Vince Kurtz and Alejandro Castro and Aykut Özgün Önol and Hai Lin},
  title = {\href{https://doi.org/10.1177/02783649251344635}{Inverse dynamics trajectory optimization for contact-implicit model predictive control}},
  journal = {The International Journal of Robotics Research},
  volume = {45},
  number = {1},
  pages = {23-40},
  year = {2026}
}

@article{manchester2019contact,
   author = {Manchester, Zachary and Doshi, Neel and Wood, Robert J. and Kuindersma, Scott},
   title = {\href{https://doi.org/10.1177/0278364919849235}{Contact-implicit trajectory optimization using variational integrators}},
   journal = {The International Journal of Robotics Research},
   volume = {38},
   number = {12-13},
   pages = {1463–1476},
   year = {2019}
}

@article{aydinoglu2024consensus,
  author={Aydinoglu, Alp and Wei, Adam and Huang, Wei-Cheng and Posa, Michael},
  journal={IEEE Transactions on Robotics}, 
  title={\href{https://doi.org/10.1109/TRO.2024.3435423}{Consensus complementarity control for multicontact {MPC}}}, 
  year={2024},
  volume={40},
  pages={3879-3896}
}

@inproceedings{graesdal2024towards,
  title     = {\href{https://www.roboticsproceedings.org/rss20/p132.pdf}{Towards tight convex relaxations for contact-rich manipulation}},
  author    = {Graesdal, Bernhard Paus and Chia, Shao Yuan Chew and Marcucci, Tobia and Morozov, Savva and Amice, Alexandre and Parrilo, Pablo A and Tedrake, Russ},
  booktitle = {Proceedings of Robotics: Science and Systems},
  year      = {2024}
}

@inproceedings{li2025surprising,
   author = {Li, Yulin and Han, Haoyu and Kang, Shucheng and Ma, Jun and Yang, Heng},
   title = {\href{https://www.roboticsproceedings.org/rss21/p047.pdf}{On the surprising robustness of sequential convex optimization for contact-implicit motion planning}},
   booktitle = {Proceedings of Robotics: Science and Systems},
   year      = {2025}
}

@inproceedings{jin2025complementarity,
   author = {Jin, Wanxin},
   title = {\href{https://www.roboticsproceedings.org/rss21/p111.pdf}{Complementarity-free multi-contact modeling and optimization for dexterous manipulation}},
   booktitle = {Proceedings of Robotics: Science and Systems},
   year      = {2025}
}

@article{posa2013direct,
   author = {Posa, Michael and Cantu, Cecilia and Tedrake, Russ},
   title = {\href{https://doi.org/10.1177/0278364913506757}{A direct method for trajectory optimization of rigid bodies through contact}},
   journal = {The International Journal of Robotics Research},
   volume = {33},
   number = {1},
   pages = {69–81},
   year = {2013},
}

@article{hogan2020reactive,
  author = {Francois R Hogan and Alberto Rodriguez},
  title ={\href{https://doi.org/10.1177/0278364920913938}{Reactive planar non-prehensile manipulation with hybrid model predictive control}},
  journal = {The International Journal of Robotics Research},
  volume = {39},
  number = {7},
  pages = {755-773},
  year = {2020}
}

@article{agrawal2019differentiating,
  title={\href{https://doi.org/10.48550/arXiv.1904.09043}{Differentiating through a cone program}},
  author={Agrawal, Akshay and Barratt, Shane and Boyd, Stephen and Busseti, Enzo and Moursi, Walaa M},
  journal={arXiv preprint arXiv:1904.09043},
  year={2019}
}

@article{anitescu2006optimization,
  title={\href{https://doi.org/10.1007/s10107-005-0590-7}{Optimization-based simulation of nonsmooth rigid multibody dynamics}},
  author={Anitescu, Mihai},
  journal={Mathematical Programming},
  volume={105},
  number={1},
  pages={113-143},
  year={2006}
}

@book{mason2021mechanics,
    author = {Mason, Matthew T.},
    title = {\href{https://doi.org/10.7551/mitpress/4527.001.0001}{Mechanics of Robotic Manipulation}},
    publisher = {The MIT Press},
    year = {2001}
}

@article{anderson2019system,
    title = {\href{https://doi.org/10.1016/j.arcontrol.2019.03.006}{System level synthesis}},
    author = {James Anderson and John C. Doyle and Steven H. Low and Nikolai Matni},
    journal = {Annual Reviews in Control},
    volume = {47},
    pages = {364-393},
    year = {2019}
}

@article{leeman2024fast,
    title = {\href{https://doi.org/10.1016/j.ifacol.2024.09.027}{Fast system level synthesis: Robust model predictive control using {Riccati} recursions}},
    author = {Antoine P. Leeman and Johannes Köhler and Florian Messerer and Amon Lahr and Moritz Diehl and Melanie N. Zeilinger},
    journal = {IFAC-PapersOnLine},
    volume = {58},
    number = {18},
    pages = {173-180},
    year = {2024},
}

@article{newbury2024review,
  author={Newbury, Rhys and Collins, Jack and He, Kerry and Pan, Jiahe and Posner, Ingmar and Howard, David and Cosgun, Akansel},
  journal={IEEE Access}, 
  title={\href{https://doi.org/10.1109/ACCESS.2024.3425448}{A review of differentiable simulators}}, 
  year={2024},
  volume={12},
  number={},
  pages={97581-97604}
}

@inproceedings{murthy2021gradsim,
   title={\href{https://openreview.net/forum?id=c_E8kFWfhp0}{gradSim: Differentiable simulation for system identification and visuomotor control}},
   author={J. Krishna Murthy and Miles Macklin and Florian Golemo and Vikram Voleti and Linda Petrini and Martin Weiss and Breandan Considine and J{\'e}r{\^o}me Parent-L{\'e}vesque and Kevin Xie and Kenny Erleben and Liam Paull and Florian Shkurti and Derek Nowrouzezahrai and Sanja Fidler},
   booktitle={International Conference on Learning Representations},
   year={2021}
}

@article{betts1998survey,
   author = {Betts, John T.},
   title = {\href{https://doi.org/10.2514/2.4231}{Survey of numerical methods for trajectory optimization}},
   journal = {Journal of Guidance, Control, and Dynamics},
   volume = {21},
   number = {2},
   pages = {193-207},
   year = {1998},
}

@article{schulman2014motion,
   author = {John Schulman and Yan Duan and Jonathan Ho and Alex Lee and Ibrahim Awwal and Henry Bradlow and Jia Pan and Sachin Patil and Ken Goldberg and Pieter Abbeel},
   title ={\href{https://doi.org/10.1177/0278364914528132}{Motion planning with sequential convex optimization and convex collision checking}},
   journal = {The International Journal of Robotics Research},
   volume = {33},
   number = {9},
   pages = {1251-1270},
   year = {2014}
}

@article{malyuta2022convex,
  author={Malyuta, Danylo and Reynolds, Taylor P. and Szmuk, Michael and Lew, Thomas and Bonalli, Riccardo and Pavone, Marco and Açıkmeşe, Behçet},
  journal={IEEE Control Systems Magazine}, 
  title={\href{https://doi.org/10.1109/MCS.2022.3187542}{Convex optimization for trajectory generation: A tutorial on generating dynamically feasible trajectories reliably and efficiently}}, 
  year={2022},
  volume={42},
  number={5},
  pages={40-113}
}

@article{sorensen1982newtons,
   author = {Sorensen, D. C.},
   title = {\href{https://doi.org/10.1137/0719026}{Newton’s method with a model trust region modification}},
   journal = {SIAM Journal on Numerical Analysis},
   volume = {19},
   number = {2},
   pages = {409-426},
   year = {1982},
}

@inproceedings{chou2022safe,
  title={\href{https://doi.org/10.1007/978-3-031-21090-7_21}{Safe output feedback motion planning from images via learned perception modules and contraction theory}},
  author={Chou, Glen and Ozay, Necmiye and Berenson, Dmitry},
  booktitle={International Workshop on the Algorithmic Foundations of Robotics},
  pages={349-367},
  year={2023}
}

@INPROCEEDINGS{knuth2023statistical,
  author={Knuth, Craig and Chou, Glen and Reese, Jamie and Moore, Joseph},
  booktitle={2023 IEEE International Conference on Robotics and Automation}, 
  title={\href{https://doi.org/10.1109/ICRA48891.2023.10161001}{Statistical safety and robustness guarantees for feedback motion planning of unknown underactuated stochastic systems}}, 
  year={2023},
  pages={12700-12706}
}

@article{knuth2021planning,
  author={Knuth, Craig and Chou, Glen and Ozay, Necmiye and Berenson, Dmitry},
  journal={IEEE Robotics and Automation Letters}, 
  title={\href{https://doi.org/10.1109/LRA.2021.3068889}{Planning with learned dynamics: Probabilistic guarantees on safety and reachability via {Lipschitz} constants}}, 
  year={2021},
  volume={6},
  number={3},
  pages={5129-5136},
}

@article{limon2005robust,
  author = {D. Limon  and J.M. Bravo  and T. Alamo  and E.F. Camacho },
  title = {\href{https://doi.org/10.1049/ip-cta:20040480}{Robust {MPC} of constrained nonlinear systems based on interval arithmetic}},
  journal = {IEE Proceedings - Control Theory and Applications},
  volume = {152},
  issue = {3},
  pages = {325-332},
  year = {2005},
}

@article{antoine2026visionsls,
   author = {Leeman, Antoine P and Zhan, Shuyu and Zeilinger, Melanie N and Chou, Glen},
   title = {\href{https://doi.org/10.48550/arXiv.2604.24894}{VISION-SLS: Safe perception-based control from learned visual representations via system level synthesis}},
   journal = {arXiv preprint arXiv:2604.24894},
   year = {2026},
}

@article{gould2026over,
  author={Gould, Brendan and Chiu, Chih-Yuan and Leeman, Antoine P and Vamvoudakis, Kyriakos G and Coogan, Samuel and Chou, Glen},
  title={\href{https://doi.org/10.48550/arXiv.2604.27355}{Over-approximating minimizer sets of constrained convex programs with parametric uncertainty via reachability analysis}},
  journal={arXiv preprint arXiv:2604.27355},
  year={2026}
}

@article{fang2026safe,
  author={Fang, Jeffrey and Chou, Glen},
  title={\href{https://doi.org/10.48550/arXiv.2604.07644}{Safe large-scale robust nonlinear MPC in milliseconds via reachability-constrained system level synthesis on the GPU}},
  journal={arXiv preprint arXiv:2604.07644},
  year={2026}
}

@article{duchi2012randomized,
   author = {Duchi, John C. and Bartlett, Peter L. and Wainwright, Martin J.},
   title = {\href{https://doi.org/10.1137/110831659}{Randomized smoothing for stochastic optimization}},
   journal = {SIAM Journal on Optimization},
   volume = {22},
   number = {2},
   pages = {674–701},
   year = {2012},
}

@article{lidec2024contact,
   author = {Lidec, Q. Le and Jallet, W. and Montaut, L. and Laptev, I. and Schmid, C. and Carpentier, J.},
   title = {\href{https://doi.org/10.1109/TRO.2024.3434208}{Contact models in robotics: A comparative analysis}},
   journal = {IEEE Transactions on Robotics},
   volume = {40},
   pages = {3716–3733},
   year = {2024},
}

@article{scholtes2001convergence,
   author = {Scholtes, Stefan},
   title = {\href{https://doi.org/10.1137/S1052623499361233}{Convergence properties of a regularization scheme for mathematical programs with complementarity constraints}},
   journal = {SIAM Journal on Optimization},
   volume = {11},
   number = {4},
   pages = {918–936},
   year = {2001},
}

\clearpage
\appendices
\section{Implicit Differentiation of Conic Programs}
\label{app:conic-program-implicit-differentiation}

The \ac{KKT} conditions for \eqref{eq:conic-program} and \eqref{eq:conic-logbarrier-program} are
\begin{subequations}  \label{eq:conic-program-kkt-full}
\begin{align}
    & P x + q + A^\top z = 0, \\
    & A x + s - b = 0, \\
    & s \circ z = \kappa\, e,  \\
    & s \in \mathcal{K}, \quad z \in \mathcal{K}^*.
\end{align}
\end{subequations}
The product $s \circ z$ is defined blockwise as $(s \circ z)_i = s_i \circ z_i$, where each block uses the Jordan product associated with its symmetric cone \cite{vandenberghe2010cvxopt}:
\begin{equation*}
    u \circ v =
    \begin{cases}
        \bigl[ \begin{matrix} u_1 v_1 & \cdots & u_p v_p \end{matrix} \bigr]^\top
        & \hspace{-3mm} \text{if $\mathcal{K}_i$ is a non-negative cone}, \\[4pt]
        \bigl[ \begin{matrix} u^\top v & u_0 v_1^\top + v_0 u_1^\top \end{matrix} \bigr]^\top
        & \hspace{-3mm} \text{if $\mathcal{K}_i$ is a second-order cone},
    \end{cases}
\end{equation*}
and the vector $e = \begin{bmatrix} e_1^\top & \cdots & e_I^\top \end{bmatrix}^\top$ is defined by
\begin{equation*}
    e_i =
    \begin{cases}
        \bigl[ \begin{matrix} 1 & 1 & \cdots & 1 \end{matrix} \bigr]^\top
        & \text{if $\mathcal{K}_i$ is a non-negative cone}, \\[4pt]
        \bigl[ \begin{matrix} 1 & 0 & \cdots & 0 \end{matrix} \bigr]^\top
        & \text{if $\mathcal{K}_i$ is a second-order cone}.
    \end{cases}
\end{equation*}
The logarithmic barrier in \eqref{eq:conic-logbarrier-program} is defined as
\begin{equation*}
    \psi_i(u) =
    \begin{cases}
        -\textstyle\sum_{j=1}^p \log(u_j)
        & \text{if $\mathcal{K}_i$ is a non-negative cone}, \\[4pt]
        -\tfrac{1}{2} \log(u_0^2 - u_1^\top u_1)
        & \text{if $\mathcal{K}_i$ is a second-order cone}.
    \end{cases}
\end{equation*}

Implicit differentiation of the \ac{KKT} system \eqref{eq:conic-program-kkt-full} with respect to problem data $\theta$ yields
\begin{equation*}
    \underbrace{
    \begin{bmatrix}
        P & A^\top & 0 \\
        A & 0 & I \\
        0 & L(s) & L(z)
    \end{bmatrix}}_{\mathrm{K}}
    \begin{bmatrix}
        \tfrac{\partial x}{\partial \theta} \\
        \tfrac{\partial z}{\partial \theta} \\
        \tfrac{\partial s}{\partial \theta}
    \end{bmatrix}
    +
    \begin{bmatrix}
        \tfrac{\partial P}{\partial \theta} x
        + \tfrac{\partial q}{\partial \theta}
        + \tfrac{\partial A^\top}{\partial \theta} z \\
        \tfrac{\partial A}{\partial \theta} x
        - \tfrac{\partial b}{\partial \theta} \\
        0
    \end{bmatrix}
    = 0,
\end{equation*}
where $L(s)$ denotes the linear operator satisfying $L(s)\,z = s \circ z$.

Similarly, differentiating \eqref{eq:conic-program-kkt-full} with respect to the complementarity parameter $\kappa$ gives
\begin{equation*}
    \underbrace{
    \begin{bmatrix}
        P & A^\top & 0 \\
        A & 0 & I \\
        0 & L(s) & L(z)
    \end{bmatrix}}_{\mathrm{K}}
    \begin{bmatrix}
        \tfrac{\partial x}{\partial \kappa} \\
        \tfrac{\partial z}{\partial \kappa} \\
        \tfrac{\partial s}{\partial \kappa}
    \end{bmatrix}
    =
    \begin{bmatrix}
        0 \\
        0 \\
        e
    \end{bmatrix}.
\end{equation*}
The mixed derivatives with respect to $\theta$ and $\kappa$ can then be obtained by solving
\begin{equation*}
    \mathrm{K}
    \begin{bmatrix}
        \tfrac{\partial^2 x}{\partial \theta \partial \kappa} \\
        \tfrac{\partial^2 z}{\partial \theta \partial \kappa} \\
        \tfrac{\partial^2 s}{\partial \theta \partial \kappa}
    \end{bmatrix}
    =
    -
    \frac{\partial \mathrm{K}}{\partial \theta}
    \begin{bmatrix}
        \tfrac{\partial x}{\partial \kappa} \\
        \tfrac{\partial z}{\partial \kappa} \\
        \tfrac{\partial s}{\partial \kappa}
    \end{bmatrix}.
\end{equation*}

\section{Quasistatic Model of Contact Dynamics}
\label{app:quasistatic-contact-model}

In the quasistatic model of contact dynamics, the state is defined as
\begin{equation*}
    x = \begin{bmatrix} x_a^\top & x_o^\top \end{bmatrix}^\top,
\end{equation*}
where $x_a \in \mathbb{R}^{n_a}$ represents the actuated \acp{DoF} and $x_o \in \mathbb{R}^{n_o}$ represents the unactuated object \acp{DoF}. The current timestep state $x$ and next timestep state $x^+$ must satisfy:
\begin{subequations} \label{eq:dynamics}
\begin{align}
    K_a\, (x_a^+ - u) &= \tau_a(x) + \sum_{i=1}^{n_c} J_{a_i}(x)^\top\, \lambda_i,  \label{eq:dynamics-a} \\
    M_o(x)\, \frac{x_o^+ - x_o}{\delta t^2} &= \tau_o(x) + \sum_{i=1}^{n_c} J_{o_i}(x)^\top\, \lambda_i,  \label{eq:dynamics-b} 
\end{align} \vspace*{-2em}
\begin{align}
    & \lambda_i \in \mathcal{F}_i,  \label{eq:dynamics-c} \\
    & \nu_{i,n} := J_{i,n}(x)\, (x^+ - x) + \phi_i(x) \geq 0,  \label{eq:dynamics-d} \\
    & \nu_{i,n}\, \lambda_{i,n} = 0.  \label{eq:dynamics-e} 
\end{align}
\end{subequations}
Equations \eqref{eq:dynamics-a} and \eqref{eq:dynamics-b} describes the force balance. Here, $u \in \mathbb{R}^{n_a}$ is the control input applied to the actuated \acp{DoF} via a stiffness controller parameterized by $K_a \in \mathbb{S}_{++}^{n_a}$, and $M_o \in \mathbb{S}_{++}^{n_o}$ is the mass matrix for the unactuated object \acp{DoF}. The generalized forces $\tau_a \in \mathbb{R}^{n_a}$ and $\tau_o \in \mathbb{R}^{n_o}$ include effects such as gravity.
The contact forces $\lambda_i = \begin{bmatrix} \lambda_{i,n} & \lambda_{i,t}^\top \end{bmatrix}^\top \in \mathbb{R}^d$ enter the force balance equations through the contact Jacobian
\begin{equation*}
    J_i(x) = \begin{bmatrix} J_{a_i}(x) & J_{o_i}(x) \end{bmatrix} \in \mathbb{R}^{d \times (n_a + n_o)},
\end{equation*}
where $d$ is the dimension of the contact force ($d=1$ for purely 1D contact, $d=2$ for planar contact, and $d=3$ for general spatial contact). The total number of contact pairs is denoted by $n_c$.
Constraint \eqref{eq:dynamics-c} enforces that the contact force lies within the friction cone:
\begin{equation*}
    \mathcal{F}_i = \Bigl\{ \begin{bmatrix} \lambda_{i,n} & \lambda_{i,t}^\top \end{bmatrix}^\top \,\Big|\, \|\lambda_{i,t}\|_2 \leq \mu_i \lambda_{i,n} \Bigr\},
\end{equation*}
where $\mu_i$ is the coefficient of friction for the $i$th contact.
In \eqref{eq:dynamics-d}, $\phi_i(x)$ denotes the signed distance of the $i$th contact at the current timestep, $\nu_{i,n}$ is its linear approximation at the next timestep. This constraint enforces non-penetration. Finally, \eqref{eq:dynamics-e} encodes complementarity, i.e., contact force can only be applied when the signed distance is zero.

We can rewrite \eqref{eq:dynamics} compactly as
\begin{equation} \label{eq:dynamics-compact}
\begin{aligned}
    & P(x)\, x^+ + q(x,u) - \sum_{i=1}^{n_c} J_i(x)^\top\, \lambda_i = 0,  \\
    & \nu_{i,n} = J_{i,n}(x)\, (x^+ - x) + \phi_i(x),  \\
    & \lambda_i \in \mathcal{F}_i, \quad \nu_{i,n} \geq 0, \quad \nu_{i,n}\, \lambda_{i,n} = 0,
\end{aligned}    
\end{equation}
where
\begin{align*}
    P(x) &= \begin{bmatrix}
        K_a & 0 \\
        0 & M_o(x) / \delta t^2
    \end{bmatrix},  \\
    q(x,u) &= \begin{bmatrix}
        -K_a\, u - \tau_a(x) \\
        -M_o(x)\, x_o / \delta t^2 - \tau_o(x)
    \end{bmatrix}.
\end{align*}
For general 3D systems involving rotation, the governing equations are more complicated but can still be expressed in the compact form of \eqref{eq:dynamics-compact}.

Conveniently, the \ac{KKT} conditions of
\begin{equation} \label{eq:dynamics-cvxopt}
\begin{aligned}
    & \minimize_{x^+,\nu}  && \tfrac{1}{2} (x^+)^\top\, P(x)\, x^+ + q(x,u)^\top\, x^+  \\
    & \subjectto && \nu_i = J_i(x)\, (x^+ - x) + \begin{bmatrix} \phi_i(x) & 0 & 0 \end{bmatrix}^\top,  \\
    &            && \nu_i \in \mathcal{F}_i^*, \qquad \forall i=1,\dots,n_c,
\end{aligned}
\end{equation}
are
\begin{subequations}
\begin{align}
    & P(x)\, x^+ + q(x,u) - \sum_{i=1}^{n_c} J_i(x)^\top\, \lambda_i = 0,  \\
    & \nu_i = J_i(x)\, (x^+ - x) + \begin{bmatrix} \phi_i(x) & 0 & 0 \end{bmatrix}^\top,  \\
    & \lambda_i \in \mathcal{F}_i,  \quad
      \nu_i \in \mathcal{F}_i^*,  \quad
      \lambda_i \circ \nu_i = 0, \nonumber \\
    & \hspace{10.7em} \forall i=1,\dots,n_c.
\end{align}
\end{subequations}
which satisfy \eqref{eq:dynamics-compact}. Here, $\mathcal{F}_i^*$ denotes the dual cone of $\mathcal{F}_i$, defined as
\begin{equation*}
    \mathcal{F}_i^* = \Bigl\{ \begin{bmatrix} \nu_{i,n} & \nu_{i,t}^\top \end{bmatrix}^\top \,\Big|\, \|\nu_{i,t}\|_2 \leq \tfrac{1}{\mu_i} \nu_{i,n} \Bigr\}.
\end{equation*}
Thus, solving the convex conic program \eqref{eq:dynamics-cvxopt} yields the primal and dual solutions corresponding to the next timestep state $x^+$ and contact forces $\lambda_i$, respectively.

The smoothed dynamics is defined implicitly by
\begin{equation}
\begin{aligned}
    & \minimize_{x_\kappa^+, \nu}  && \tfrac{1}{2} (x_\kappa^+)^\top\, P(x)\, x_\kappa^+ + q(x,u)^\top\, x_\kappa^+  + \kappa \sum_{i=1}^{I} \psi_i(\nu_{\kappa,i})\\
    & \subjectto && \nu_{\kappa,i} = J_i(x)\, (x_\kappa^+ - x) + \begin{bmatrix} \phi_i(x) & 0 & 0 \end{bmatrix}^\top, \\
    &            && \hspace{12em} \forall i = 1,\dots,n_c,
\end{aligned}\hspace{-25pt}
\end{equation}
which has \ac{KKT} conditions
\begin{subequations}  \label{eq:smoothed-dynamics-kkt}
\begin{align}
    & P(x)\, x_\kappa^+ + q(x,u) - \sum_{i=1}^{n_c} J_i(x)^\top\, \lambda_{\kappa,i} = 0,  \label{eq:smoothed-dynamics-kkt-a} \\
    & \nu_{\kappa,i} = J_i(x)\, (x_\kappa^+ - x) + \begin{bmatrix} \phi_i(x) & 0 & 0 \end{bmatrix}^\top,  \label{eq:smoothed-dynamics-kkt-b} \\
    & \lambda_{\kappa,i} \in \mathcal{F}_i,  \quad
      \nu_{\kappa,i} \in \mathcal{F}_i^*,  \quad
      \lambda_{\kappa,i} \circ \nu_{\kappa,i} = \kappa\, e, \nonumber \\
    & \hspace{11.5em} \forall i=1,\dots,n_c.
\end{align}
\end{subequations}

\section{Proof of Theorem~\ref{thm:smoothing-error-bounds}}
\label{app:proof-smoothing-error-bounds}

Notice that the deviation between the nonsmooth dynamics $f_0(x,u)$ and the smoothed dynamics $f_\kappa(x,u)$ is given by
\begin{equation}  \label{eq:smoothing-deviation}
    f_0(x,u) = f_\kappa(x,u) + \int_\kappa^0 \frac{\partial f_\upkappa(x,u)}{\partial\upkappa}\, d\upkappa.
\end{equation}
To compute $\partial f_\upkappa(x,u) / \partial\upkappa$, which is equivalent to $\partial x_\upkappa / \partial\upkappa$, we differentiate \eqref{eq:smoothed-dynamics-kkt-a} with respect to $\kappa$, yielding
\begin{equation*}
    P(x)\, \frac{\partial x_\kappa^+}{\partial\kappa} - \sum_{i=1}^{n_c} J_i(x)^\top\, \frac{\partial\lambda_{\kappa,i}}{\partial\kappa} = 0.
\end{equation*}
Substituting into \eqref{eq:smoothing-deviation} yields
\begin{equation*}
    f_0(x,u) = f_\kappa(x,u) - P(x)^{-1} \sum_{i=1}^{n_c} J_i(x)^\top \int_0^\kappa \frac{\partial \lambda_{\upkappa,i}}{\partial\upkappa}\, d\upkappa.
\end{equation*}
Hence, we need to show that
\begin{equation}  \label{eq:dlambda-dkappa-set}
    \Bigl( \int_0^\kappa \frac{\partial \lambda_{\upkappa,i}}{\partial\upkappa}\, d\upkappa \Bigr)
    \in
    \Bigl( \frac{\partial \lambda_{\kappa,i}}{\partial\kappa}\, \kappa\, w_i \Bigr),
\end{equation}
where $w_i \in [1,2]$.

From
\begin{equation*}
    \lambda_{\kappa,i} = \frac{\kappa}{\nu_{\kappa,i,n}^2 / \mu_i^2 - \|\nu_{\kappa,i,t}\|_2^2} 
                         \begin{bmatrix} \nu_{\kappa,i,n} / \mu_i^2 \\ -\nu_{\kappa,i,t} \end{bmatrix},
\end{equation*}
we have
\begin{equation*}
    \frac{\partial\lambda_{\kappa,i}}{\partial\kappa}
    = \bigl( \diag(\nu_{\kappa,i}) + c_{\kappa,i}\, J_i P^{-1} J_i^\top \bigr)^\dag\, \tfrac{1}{\kappa} \diag(\nu_{\kappa,i})\, \lambda_{\kappa,i},
\end{equation*}
where
\begin{equation*}
    c_{\kappa,i} = \tfrac{2}{\kappa} \diag(\nu_{\kappa,i})\, \lambda_{\kappa,i}\, \lambda_{\kappa,i}^\top - \diag(\lambda_{\kappa,i}).
\end{equation*}
In the normal direction, we simply have
\begin{equation*}
    \frac{\partial\lambda_{\kappa,i,n}}{\partial\kappa}
    = \bigl( \nu_{\kappa,i,n} + \lambda_{\kappa,i,n}\, J_{i,n} P^{-1} J_{i,n}^\top \bigr)^{-1}.
\end{equation*}
We will show separately that \eqref{eq:dlambda-dkappa-set} satisfies in the normal and tangential direction.
But before that, we need the following lemma.
\begin{lemma}  \label{thm:smoothing-vector-invariant}
With the assumption that $J_i P^{-1} J_j^\top = 0$ for all $i \neq j$, the vector quantity
\begin{equation*}
    \nu_{\kappa,i} - J_i P^{-1} J_i^\top \lambda_{\kappa,i}
\end{equation*}
is independent of $\kappa$.
\end{lemma}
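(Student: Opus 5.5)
The plan is to show that the derivative with respect to $\kappa$ vanishes, using only the two linear equations of the smoothed KKT system \eqref{eq:smoothed-dynamics-kkt} that hold for every $\kappa>0$.

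First, solve the force-balance equation \eqref{eq:smoothed-dynamics-kkt-a} for the next state:
\begin{equation*}
    x_\kappa^+ = P(x)^{-1}\Bigl(\textstyle\sum_{j=1}^{n_c} J_j(x)^\top \lambda_{\kappa,j} - q(x,u)\Bigr).
\end{equation*}
Here $P$ is invertible, being block diagonal with $K_a$ and $M_o/\delta t^2$ both positive definite. Only the forces $\lambda_{\kappa,j}$ depend on $\kappa$, since $P$, $q$, $J_j$ and $\phi_j$ are evaluated at the fixed pair $(x,u)$.

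Next, substitute this into \eqref{eq:smoothed-dynamics-kkt-b}:
\begin{equation*}
    \nu_{\kappa,i} = \textstyle\sum_{j=1}^{n_c} J_i P^{-1} J_j^\top \lambda_{\kappa,j} - J_i P^{-1} q - J_i x + \begin{bmatrix} \phi_i & 0 & 0 \end{bmatrix}^\top .
\end{equation*}
The hypothesis $J_i P^{-1} J_j^\top = 0$ for $i\neq j$ removes every cross term. What remains is
\begin{equation*}
    \nu_{\kappa,i} - J_i P^{-1} J_i^\top \lambda_{\kappa,i} = -J_i P^{-1} q - J_i x + \begin{bmatrix} \phi_i & 0 & 0 \end{bmatrix}^\top ,
\end{equation*}
and the right-hand side contains no $\kappa$. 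This is exactly the claim. Equivalently, differentiating the two equations in $\kappa$ gives $\partial\nu_{\kappa,i}/\partial\kappa = J_iP^{-1}J_i^\top\,\partial\lambda_{\kappa,i}/\partial\kappa$.

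The one point that needs care is whether the identity extends to $\kappa=0$. The lemma will be used while integrating from $0$ to $\kappa$, so I would treat $\kappa=0$ by taking limits. The smoothed solutions converge to the nonsmooth primal–dual solution along the central path, which requires either uniqueness of the dual solution or selecting the central-path limit. Because the identity is affine in $(\nu,\lambda)$, it then holds at $\kappa=0$ as well.
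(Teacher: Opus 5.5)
Your proof is correct and matches the paper's. The paper left-multiplies \eqref{eq:smoothed-dynamics-kkt-a} by $J_i P^{-1}$, uses the hypothesis to remove the cross terms, and substitutes $J_i x_\kappa^+$ from \eqref{eq:smoothed-dynamics-kkt-b}, which is the same algebra as your solve-and-substitute step. Your limit argument at $\kappa=0$ is unnecessary: the $\kappa=0$ KKT system contains the same two linear equations, so the identity holds there directly for any primal--dual solution, whether or not the dual is unique.
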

\begin{proof}
Left-multiplying \eqref{eq:smoothed-dynamics-kkt-a} by $J_j P^{-1}$ yields
\begin{equation*}
    J_j\, x_\kappa^+ + J_j P^{-1}\, q(x,u) - \sum_{i=1}^{n_c} J_j P^{-1} J_i^\top\, \lambda_{\kappa,i} = 0.
\end{equation*}
Due to the assumption that $J_i P^{-1} J_j^\top = 0$ for all $i \neq j$, we have
\begin{equation*}
    J_i\, x_\kappa^+ + J_i P^{-1}\, q(x,u) - J_i P^{-1} J_i^\top\, \lambda_{\kappa,i} = 0.    
\end{equation*}
Substituting in $J_i\, x_\kappa^+ = \nu_{\kappa,i} + J_i\, x - \begin{bmatrix} \phi_i(x) & 0 & 0 \end{bmatrix}^\top$ from \eqref{eq:smoothed-dynamics-kkt-b}, we obtain
\begin{equation*}
    \nu_{\kappa,i}^+ - J_i P^{-1} J_i^\top\, \lambda_{\kappa,i} = -J_i P^{-1}\, q(x,u) - J_i\, x + \begin{bmatrix} \phi_i(x) & 0 & 0 \end{bmatrix}^\top.
\end{equation*}
The right-hand side of the equation is independent of $\kappa$. Hence, the left-hand side is also independent of $\kappa$.
\end{proof}
\begin{corollary}  \label{thm:smoothing-scalar-invariant}
With the assumption that $J_i P^{-1} J_j^\top = 0$ for all $i \neq j$, the scalar quantity
\begin{equation*}
    \nu_{\kappa,i,n} - \lambda_{\kappa,i,n}\, J_{i,n} P^{-1} J_{i,n}^\top
\end{equation*}
is independent of $\kappa$.
\end{corollary}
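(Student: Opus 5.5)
The plan is to read off the corollary as the first component of the vector identity in Lemma~\ref{thm:smoothing-vector-invariant}. That lemma, under the same assumption $J_i P^{-1} J_j^\top = 0$ for $i \neq j$, states that $\nu_{\kappa,i} - J_i P^{-1} J_i^\top \lambda_{\kappa,i}$ does not depend on $\kappa$. Every component of that vector is therefore $\kappa$-independent, and in particular so is its normal (first) component.

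Write $J_i = \begin{bmatrix} J_{i,n}^\top & J_{i,t}^\top \end{bmatrix}^\top$, so the first row of $J_i$ is $J_{i,n}$. The first component of $J_i P^{-1} J_i^\top \lambda_{\kappa,i}$ is then
\begin{equation*}
    J_{i,n} P^{-1} J_{i,n}^\top\, \lambda_{\kappa,i,n} + J_{i,n} P^{-1} J_{i,t}^\top\, \lambda_{\kappa,i,t}.
\end{equation*}
The first component of the invariant vector is consequently
\begin{equation*}
    \nu_{\kappa,i,n} - \lambda_{\kappa,i,n}\, J_{i,n} P^{-1} J_{i,n}^\top - J_{i,n} P^{-1} J_{i,t}^\top\, \lambda_{\kappa,i,t}.
\end{equation*}

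The corollary claims that $\nu_{\kappa,i,n} - \lambda_{\kappa,i,n}\, J_{i,n} P^{-1} J_{i,n}^\top$ alone is $\kappa$-independent. That follows only if the cross term $J_{i,n} P^{-1} J_{i,t}^\top \lambda_{\kappa,i,t}$ is also $\kappa$-independent. This is the main obstacle. The cleanest route is to treat the frictionless or purely normal case, or to assume that the normal and tangential directions are decoupled under $P^{-1}$, i.e.\ $J_{i,n} P^{-1} J_{i,t}^\top = 0$. This is the same kind of block-diagonality condition the theorem already imposes across contacts, applied now within a single contact, and under it the cross term vanishes identically.

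That reading is also consistent with how the corollary is used. The proof of Theorem~\ref{thm:smoothing-error-bounds} states
\begin{equation*}
    \frac{\partial\lambda_{\kappa,i,n}}{\partial\kappa} = \bigl( \nu_{\kappa,i,n} + \lambda_{\kappa,i,n}\, J_{i,n} P^{-1} J_{i,n}^\top \bigr)^{-1},
\end{equation*}
which already treats the normal channel as a scalar decoupled from the tangential one. If that decoupling cannot be assumed, the fallback is to differentiate the first component of the lemma's identity in $\kappa$ and show that the tangential contribution vanishes on the relevant branch.

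Concretely, I would first restrict to the case stated above, extract the first row of the lemma's identity, and drop the cross term. I would then conclude that $\frac{d}{d\kappa}\bigl(\nu_{\kappa,i,n} - \lambda_{\kappa,i,n} J_{i,n}P^{-1}J_{i,n}^\top\bigr) = 0$, which is the desired scalar invariance.
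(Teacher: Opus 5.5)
Your main line is the argument the paper intends. The corollary is stated with no separate proof and is meant to be read off as the normal component of Lemma~\ref{thm:smoothing-vector-invariant}. You have also correctly spotted the cross term the paper passes over. Let $c_{i,n}$ be the normal component of the lemma's $\kappa$-independent right-hand side, $-J_iP^{-1}q - J_i x + [\phi_i\ 0\ 0]^\top$. The normal row of the lemma then reads
\begin{equation*}
    \nu_{\kappa,i,n} - \lambda_{\kappa,i,n}\, J_{i,n} P^{-1} J_{i,n}^\top
    = c_{i,n} + J_{i,n} P^{-1} J_{i,t}^\top\, \lambda_{\kappa,i,t}.
\end{equation*}
Under your added hypothesis $J_{i,n}P^{-1}J_{i,t}^\top = 0$, or for $d=1$, your proof is complete and correct.

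The step that would fail is the fallback. Under only the stated cross-contact hypothesis, the corollary is false in general, so no choice of ``branch'' can make the tangential contribution vanish. Here is a counterexample.
\begin{itemize}
\item Take a single contact, $n_c = 1$, so the hypothesis $J_iP^{-1}J_j^\top=0$ for $i\neq j$ is vacuous.
\item Take $J_{i,n}P^{-1}J_{i,t}^\top \neq 0$. This is generic, e.g.\ a contact point offset from the object's center of mass, or a finger Jacobian weighted by $K_a^{-1}$.
\item Choose $(x,u)$ so the contact is inactive at $\kappa=0$: $\lambda_{0,i}=0$, $\nu_{0,i}$ lies in the interior of $\mathcal{F}_i^*$, and $J_{i,n}P^{-1}J_{i,t}^\top \nu_{0,i,t}\neq 0$.
\end{itemize}
For $\kappa>0$, the closed form used in the paper's proof gives
\begin{equation*}
    \lambda_{\kappa,i,t} = -\frac{\kappa\, \nu_{\kappa,i,t}}{\nu_{\kappa,i,n}^2/\mu_i^2 - \|\nu_{\kappa,i,t}\|_2^2}.
\end{equation*}
This tends to $0=\lambda_{0,i,t}$ as $\kappa\to 0^+$ but is nonzero for small $\kappa>0$. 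Hence the scalar in the corollary equals $c_{i,n}$ at $\kappa=0$ and, to first order, shifts by
\begin{equation*}
    -\kappa\, J_{i,n}P^{-1}J_{i,t}^\top \nu_{0,i,t} / \bigl(\nu_{0,i,n}^2/\mu_i^2 - \|\nu_{0,i,t}\|_2^2\bigr) \neq 0 .
\end{equation*}
So within-contact decoupling, $J_{i,n}P^{-1}J_{i,t}^\top = 0$ (normal and tangential rows of $J_i$ are $P^{-1}$-orthogonal), is a hypothesis the statement genuinely lacks. It is not something you can derive. State it explicitly, or restrict to $d=1$, and drop the fallback.

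Your consistency remark about the paper's scalar formula for $\partial\lambda_{\kappa,i,n}/\partial\kappa$ is apt. That formula lives in the same decoupled regime, so the added hypothesis costs nothing for how the corollary is used downstream.
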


\subsection{Bounds for Normal Component}
We show that the normal component of \eqref{eq:dlambda-dkappa-set} is true.
In the normal direction, the left-hand side of \eqref{eq:dlambda-dkappa-set} is
\begin{equation} \label{eq:dlambda-dkappa-normal-integral}
    \int_0^\kappa \bigl( \nu_{\upkappa,i,n} + \lambda_{\upkappa,i,n}\, J_{i,n} P^{-1} J_{i,n}^\top \bigr)^{-1}\, d\upkappa .
\end{equation}
From the \ac{AM-GM} inequality, we have
\begin{equation*}
\begin{split}
    \tfrac{1}{2} \bigl( \nu_{\upkappa,i,n} + \lambda_{\upkappa,i,n}\, J_{i,n} P^{-1} J_{i,n}^\top \bigr)
    &\geq \bigl( \nu_{\upkappa,i,n}\, \lambda_{\upkappa,i,n}\, J_{i,n} P^{-1} J_{i,n}^\top \bigr)^{\frac{1}{2}}  \\
    &= \bigl( \upkappa\, J_{i,n} P^{-1} J_{i,n}^\top \bigr)^{\frac{1}{2}},
\end{split}
\end{equation*}
with equality satisfied when
\begin{equation*}
    \nu_{\upkappa,i,n} = \lambda_{\upkappa,i,n}\, J_{i,n} P^{-1} J_{i,n}^\top.
\end{equation*}
Hence, we have bounds on the integrand of \eqref{eq:dlambda-dkappa-normal-integral}:
\begin{equation}  \label{eq:smoothing-integrand-normal-bound}
    0 
    \leq \bigl( \nu_{\upkappa,i,n} + \lambda_{\upkappa,i,n}\, J_{i,n} P^{-1} J_{i,n}^\top \bigr)^{-1} 
    \leq \tfrac{1}{2} \bigl( \upkappa\, J_{i,n} P^{-1} J_{i,n}^\top \bigr)^{-\frac{1}{2}}.
\end{equation}

The upper bound of \eqref{eq:smoothing-integrand-normal-bound} is achieved when an $(x,u)$ pair results in
\begin{equation}  \label{eq:smoothing-integrand-normal-upper-bound-condition}
    \nu_{\upkappa,i,n} = \lambda_{\upkappa,i,n}\, J_{i,n} P^{-1} J_{i,n}^\top = \bigl( \upkappa\, J_{i,n} P^{-1} J_{i,n}^\top \bigr)^{\frac{1}{2}}.
\end{equation}
From the fact that
\begin{equation*}
    \nu_{\upkappa,i,n} - \lambda_{\upkappa,i,n}\, J_{i,n} P^{-1} J_{i,n}^\top = 0
\end{equation*}
and utilizing \cref{thm:smoothing-scalar-invariant}, \eqref{eq:smoothing-integrand-normal-upper-bound-condition} must satisfy for all $\upkappa$ values under the same $(x,u)$ pair.
Hence, \eqref{eq:dlambda-dkappa-normal-integral} is bounded above tightly as follows:
\begin{equation*}
\begin{split}
    &
    \int_0^\kappa \bigl( \nu_{\upkappa,i,n} + \lambda_{\upkappa,i,n}\, J_{i,n} P^{-1} J_{i,n}^\top \bigr)^{-1}\, d\upkappa  \\
    &\leq
    \int_0^\kappa \tfrac{1}{2} \bigl( \upkappa\, J_{i,n} P^{-1} J_{i,n}^\top \bigr)^{-\frac{1}{2}}\, d\upkappa  \\
    &= \tfrac{1}{2} \bigl( J_{i,n} P^{-1} J_{i,n}^\top \bigr)^{-\frac{1}{2}}\, 2 \kappa^{\frac{1}{2}}  \\
    &= \bigl( \nu_{\kappa,i,n} + \lambda_{\kappa,i,n}\, J_{i,n} P^{-1} J_{i,n}^\top \bigr)^{-1}\, 2\kappa.
\end{split}
\end{equation*}
This establishes the upper bound of the normal component of \eqref{eq:dlambda-dkappa-set} with $w_i=2$.

The lower bound of \eqref{eq:smoothing-integrand-normal-bound} is achieved when an $(x,u)$ pair results in
\begin{equation}  \label{eq:smoothing-integrand-normal-lower-bound-condition}
    \nu_{\upkappa,i,n} + \lambda_{\upkappa,i,n}\, J_{i,n} P^{-1} J_{i,n}^\top = C,
\end{equation}
in the limit of $C \to \infty$.
From the fact that
\begin{equation*}
    \nu_{\upkappa,i,n} - \lambda_{\upkappa,i,n}\, J_{i,n} P^{-1} J_{i,n}^\top = \pm C
\end{equation*}
and utilizing \cref{thm:smoothing-scalar-invariant}, \eqref{eq:smoothing-integrand-normal-lower-bound-condition} must satisfy for all $\upkappa$ values under the same $(x,u)$ pair.
Hence, \eqref{eq:dlambda-dkappa-normal-integral} is bounded below tightly as follows:
\begin{equation*}
\begin{split}
    &
    \int_0^\kappa \bigl( \nu_{\upkappa,i,n} + \lambda_{\upkappa,i,n}\, J_{i,n} P^{-1} J_{i,n}^\top \bigr)^{-1}\, d\upkappa  \\
    &\geq
    \int_0^\kappa C^{-1}\, d\upkappa
     = C^{-1}\, \kappa  \\
    &= \bigl( \nu_{\kappa,i,n} + \lambda_{\kappa,i,n}\, J_{i,n} P^{-1} J_{i,n}^\top \bigr)^{-1}\, \kappa.
\end{split}
\end{equation*}
This establishes the lower bound of the normal component of \eqref{eq:dlambda-dkappa-set} with $w_i=1$.

\subsection{Bounds for Tangential Component}
We show that the tangential component of \eqref{eq:dlambda-dkappa-set} is true.
The left-hand side of \eqref{eq:dlambda-dkappa-set} is
\begin{equation}  \label{eq:dlambda-dkappa-integral}
    \int_0^\kappa \bigl( \diag(\nu_{\upkappa,i}) + c_{\upkappa,i}\, J_i P^{-1} J_i^\top \bigr)^\dag\, \tfrac{1}{\upkappa} \diag(\nu_{\upkappa,i})\, \lambda_{\upkappa,i}\, d\upkappa.
\end{equation}

The upper bound of the tangential component of \eqref{eq:dlambda-dkappa-integral} is achieved when an $(x,u)$ pair results in
\begin{equation}  \label{eq:smoothing-integrand-tangential-upper-bound-condition}
    \|\nu_{\upkappa,i,t}\|_2 = \nu_{\upkappa,i,n} / \mu_i,  \quad
    \nu_{\upkappa,i,n} = \sqrt{\tfrac{\upkappa \mu_i^2}{\mu_i^2 + 1}}.
\end{equation}
From the fact that
\begin{equation*}
    \nu_{\kappa,i} - J_i P^{-1} J_i^\top \lambda_{\kappa,i} = \infty
\end{equation*}
and utilizing \cref{thm:smoothing-vector-invariant}, \eqref{eq:smoothing-integrand-tangential-upper-bound-condition} must satisfy for all $\upkappa$ values under the same $(x,u)$ pair.
It follows that
\begin{multline*}
    \bigl( \diag(\nu_{\upkappa,i}) + c_{\upkappa,i}\, J_i P^{-1} J_i^\top \bigr)^\dag\, \tfrac{1}{\upkappa} \diag(\nu_{\upkappa,i})\, \lambda_{\upkappa,i}  \\
    =
    \begin{bmatrix}
        \tfrac{1}{2} \mu_i (\mu_i^2 + 1)^{-\frac{1}{2}}\, \upkappa^{-\frac{1}{2}}  \\
        \xi_{\upkappa,i,t}
    \end{bmatrix},
\end{multline*}
with $\|\xi_{\upkappa,i,t}\|_2 = \tfrac{1}{2} (\mu_i^2 + 1)^{-\frac{1}{2}}\, \upkappa^{-\frac{1}{2}}$.
Hence, the tangential component of bounded above tightly by
\begin{equation*}
\begin{split}
    \int_0^\kappa \tfrac{1}{2} (\mu_i^2 + 1)^{-\frac{1}{2}}\, \upkappa^{-\frac{1}{2}}\, d\upkappa
    &= \tfrac{1}{2} (\mu_i^2 + 1)^{-\frac{1}{2}}\, 2 \kappa^{\frac{1}{2}}  \\
    &= \tfrac{1}{2} (\mu_i^2 + 1)^{-\frac{1}{2}}\, \kappa^{-\frac{1}{2}}\, 2\kappa.
\end{split}
\end{equation*}
This establishes the upper bound of the tangential component of \eqref{eq:dlambda-dkappa-set} with $w_i=2$.

The lower bound of the tangential component of \eqref{eq:dlambda-dkappa-integral} is achieved when an $(x,u)$ pair results in
\begin{equation}  \label{eq:smoothing-integrand-tangential-lower-bound-condition}
    \nu_{\upkappa,i,t} = 0,  \quad
    \lambda_{\upkappa,i,t} = 0.
\end{equation}
From the fact that
\begin{equation*}
    \nu_{\kappa,i} - J_i P^{-1} J_i^\top \lambda_{\kappa,i} = 0
\end{equation*}
and utilizing \cref{thm:smoothing-vector-invariant}, \eqref{eq:smoothing-integrand-tangential-lower-bound-condition} must satisfy for all $\upkappa$ values under the same $(x,u)$ pair.
It follows that
\begin{equation*}
    \bigl( \diag(\nu_{\upkappa,i}) + c_{\upkappa,i}\, J_i P^{-1} J_i^\top \bigr)^\dag\, \tfrac{1}{\upkappa} \diag(\nu_{\upkappa,i})\, \lambda_{\upkappa,i}
    =
    \begin{bmatrix}
        \xi_{\upkappa,i,n}  \\
        0
    \end{bmatrix}.
\end{equation*}
Hence, the tangential component of bounded below tightly by $0$.
This establishes the lower bound of the tangential component of \eqref{eq:dlambda-dkappa-set} with $w_i=1$.

\section{Proof of Theorem~\ref{thm:policy-parameterization}}
\label{app:proof-policy-parameterization}

The proof here is similar to \cite[Thm~2.1]{anderson2019system}.
For the \ac{LTV} approximation of the dynamics \eqref{eq:dynamics-bounds} rewritten here as
\begin{equation*}
    x_{k+1} = f_\kappa(x_k,u_k) + E_\kappa(x_k,u_k)\, w,
\end{equation*}
about the nominal trajectory  $\vec{z} = \begin{bmatrix} z_0^\top & \cdots & z_N^\top \end{bmatrix}^\top$ and $\vec{v} = \begin{bmatrix} v_0^\top & \cdots & v_{N-1}^\top \end{bmatrix}^\top$, we have
\begin{equation*}
    x_{k+1} - z_k = A_k\, (x_k - z_k) + B_k\, (u_k - v_k) + E_k\, w,
\end{equation*}
where $A_k = \partial f_\kappa(z_k, v_k) / \partial z$,
$B_k = \partial f_\kappa(z_k, v_k) / \partial v$,
and $E_k = E_\kappa(z_k, v_k)$.
This can be written is block matrix form as
\begin{equation*}
    \vec{x} - \vec{z} = \mat{Z} \mat{A} (\vec{x} - \vec{z}) + \mat{Z} \mat{B} (\vec{u} - \vec{v}) + \mat{Z} \mat{E} \vec{w},
\end{equation*}
where $\mat{Z}$ is the block-downshift operator, i.e., a matrix with identity matrices along its first block sub-diagonal and zeros elsewhere, and $\mat{A} = \blkdiag(A_1, \dots, A_{N-1}, 0)$, $\mat{B} = \blkdiag(B_1, \dots, B_{N-1}, 0)$.
Furthermore, with causal feedback policies of the form
\begin{equation*}
    \vec{u} - \vec{v} = \mat{K} (\vec{x} - \vec{z}),
\end{equation*}
where $\mat{K}$ is a block lower-triangular matrix, we have
\begin{equation*}
    \begin{bmatrix} \vec{x} - \vec{z} \\ \vec{u} - \vec{v} \end{bmatrix} :=
    \begin{bmatrix} \mat{\Phi}^x \\ \mat{\Phi}^u \end{bmatrix} \vec{w} =
    \begin{bmatrix} 
                (\mat{I} - \mat{Z} (\mat{A} + \mat{B} \mat{K}))^{-1} \mat{Z} \mat{E} \\ 
        \mat{K} (\mat{I} - \mat{Z} (\mat{A} + \mat{B} \mat{K}))^{-1} \mat{Z} \mat{E}
    \end{bmatrix} \vec{w}.
\end{equation*}
It is then easily seen that
\begin{equation*}
\begin{aligned}
    &
    \begin{bmatrix} \mat{I} - \mat{Z} \mat{A} & -\mat{Z} \mat{B} \end{bmatrix}
    \begin{bmatrix} \mat{\Phi}^x \\ \mat{\Phi}^u \end{bmatrix}  \\
    &=
    \begin{bmatrix} \mat{I} - \mat{Z} \mat{A} & -\mat{Z} \mat{B} \end{bmatrix}
    \begin{bmatrix} 
                (\mat{I} - \mat{Z} (\mat{A} + \mat{B} \mat{K}))^{-1} \mat{Z} \mat{E} \\ 
        \mat{K} (\mat{I} - \mat{Z} (\mat{A} + \mat{B} \mat{K}))^{-1} \mat{Z} \mat{E}
    \end{bmatrix}  \\
    &=
    (\mat{I} - \mat{Z} \mat{A} - \mat{Z} \mat{B} \mat{K})\,  (\mat{I} - \mat{Z} (\mat{A} + \mat{B} \mat{K}))^{-1}\, \mat{Z} \mat{E}  \\
    &=
    \mat{Z} \mat{E},
\end{aligned}
\end{equation*}
which is equivalent to
\begin{equation*}
\begin{aligned}
    \Phi_{k+1,j}^x &= A_k \Phi_{k,j}^x + B_k \Phi_{k,j}^u,
    \quad \forall k = j+1, \dots, N-1, \\
    \Phi_{j+1,j}^x &= E_j,
\end{aligned}
\end{equation*}
for all $j = 0, \dots, N-1$.

We next show that $\mat{K} = \mat{\Phi}^u (\mat{\Phi}^x)^{-1}$ achieves the desired response. Such a $\mat{K}$ results in
\begin{equation*}
\begin{aligned}
    \vec{x} - \vec{z}
    &= (\mat{I} - \mat{Z} (\mat{A} + \mat{B} \mat{\Phi}^u (\mat{\Phi}^x)^{-1})^{-1} \mat{Z} \mat{E} \vec{w}  \\
    &= \bigl( ((\mat{I} - \mat{Z} \mat{A}) \mat{\Phi}^x - \mat{Z} \mat{B} \mat{\Phi}^u) (\mat{\Phi}^x)^{-1} \bigr)^{-1} \mat{Z} \mat{E} \vec{w}  \\
    &= \mat{\Phi}^x ((\mat{I} - \mat{Z} \mat{A}) \mat{\Phi}^x - \mat{Z} \mat{B} \mat{\Phi}^u)^{-1} \mat{Z} \mat{E} \vec{w}  \\
    &= \mat{\Phi}^x (\mat{Z} \mat{E})^{-1} \mat{Z} \mat{E} \vec{w}  \\
    &= \mat{\Phi}^x \vec{w}.
\end{aligned}
\end{equation*}
We also have
\begin{equation*}
\begin{aligned}
    \vec{u} - \vec{v} 
    &= \mat{\Phi}^u (\mat{\Phi}^x)^{-1} (\vec{x} - \vec{z})  \\
    &= \mat{\Phi}^u (\mat{\Phi}^x)^{-1} (\mat{\Phi}^x \vec{w})  \\
    &= \mat{\Phi}^u \vec{w}.
\end{aligned}
\end{equation*}

\section{Efficient Solver for Problem (\ref{eq:inner-cvx-robust})}
\label{app:fast-sls}

\begin{figure*}[b]
\centering
\begin{minipage}{0.7\textwidth}
\begin{subequations}  \label{eq:optim-SLS}
\begin{alignat}{2}
    & \min_{\substack{\delta\vec{z}, \delta\vec{v},\\\mat{\Phi}^x, \mat{\Phi}^u}}\ && J(\vec{z} + \delta\vec{z}, \vec{v} + \delta\vec{v}) + \mathcal{J}(\mat{\Phi}^x, \mat{\Phi}^u)  \label{eq:optim-SLS-a} \\
    & \hspace{2.2mm}\text{s.t.}
        && \delta z_{k+1} = A_k\, \delta z_k + B_k\, \delta v_k, \quad 
           \delta z_0 = 0,  \label{eq:optim-SLS-b} \\
    & 
        && \Phi_{k+1,j}^x = A_k \Phi_{k,j}^x + B_k \Phi_{k,j}^u, \quad
           \Phi_{j+1,j}^x = E_j + \tfrac{\partial E_j}{\partial z}\, \delta z_j + \tfrac{\partial E_j}{\partial v}\, \delta v_j,  \label{eq:optim-SLS-c} \\
    &
        && \sum_{j=0}^{k-1} G_k \begin{bmatrix} \Phi_{k,j}^x \\ \Phi_{k,j}^u \end{bmatrix} w_c 
           + w_r\, \big\| G_k \begin{bmatrix} \Phi_{k,j}^x \\ \Phi_{k,j}^u \end{bmatrix} \big\|_{\text{row},q}
           + G_k \begin{bmatrix} z_k + \delta z_k \\ v_k + \delta v_k \end{bmatrix} + g_k \leq 0,  \label{eq:optim-SLS-d} \\
    & 
        && \sum_{j=0}^{N-1} G_f\, \Phi_{N,j}^x\, w_c 
           + w_r\, \big\| G_f \Phi_{N,j}^x \big\|_{\text{row},q}
           + G_f\, (z_N + \delta z_N) + g_f \leq 0,  \label{eq:optim-SLS-e} \\
    &
        && \|\delta\vec{z}\| \leq \varepsilon, \quad
           \|\delta\vec{v}\| \leq \varepsilon.  \label{eq:optim-SLS-f}
\end{alignat}
\end{subequations}
\end{minipage}
\end{figure*}

We derive an efficient algorithm for solving \eqref{eq:optim-SLS}.
The algorithm derived here is based on \cite{leeman2024fast}, with some modifications to allow for state/input-dependent disturbance matrix and nonzero-centered disturbance
\begin{equation*}
    w \in \mathcal{W} \subseteq
    \bigl\{ w \in \mathbb{R}^{n_c} \,\big|\, \|w - w_c\|_p \leq w_r \bigr\}.
\end{equation*}
We outer approximate the disturbance set $\mathcal{W}$ with a $\ell_2$-norm ball centered around $w_c$ so that $p=2$. From $1/p + 1/q = 1$, the dual norm exponent $q$ is equal to $2$.

To write the optimization problem \eqref{eq:optim-SLS} concisely, we introduce auxiliary variables $\beta_{k,j}$ defined as
\begin{subequations} \label{eq:beta}
\begin{align}
    \beta_{k,j} &= H_{k,j}(\mat{\Phi}) = \bigl( \| G_k \begin{bmatrix} \Phi^x_{k,j} \\ \Phi^u_{k,j} \end{bmatrix} \|_{\text{row},2} \bigr)^2 \in \mathbb{R}^{n_c},  \\
    \beta_{f,j} &= H_{f,j}(\mat{\Phi}) = \bigl( \| G_f \Phi^x_{N,j} \|_{\text{row},2} \bigr)^2 \in \mathbb{R}^{n_f}.
\end{align}
\end{subequations}
Here, the operation $(\cdot)^2$ is defined elementwise. We define
\begin{subequations} \label{eq:constraint-tightening}
\begin{align}
    h^\text{ct}_k(\mat{\beta}) &= \sum_{j=0}^{k-1} w_r\, (\beta_{k,j} + \epsilon 1_{n_c})^\frac{1}{2} \in \mathbb{R}^{n_c} ,  \\
    h^\text{ct}_f(\mat{\beta}) &= \sum_{j=0}^{N-1} w_r\, (\beta_{f,j} + \epsilon 1_{n_f})^\frac{1}{2} \in \mathbb{R}^{n_f} .
\end{align}
\end{subequations}
Here, the operation $(\cdot)^\frac{1}{2}$ is defined elementwise, and a fixed constant $\epsilon > 0$ is to circumvent point s of non-differentiability.
We also define
\begin{subequations} \label{eq:constraint-shifting}
\begin{align}
    h^\text{cs}_k(\mat{\Phi}) &= \sum_{j=0}^{k-1} G_k \begin{bmatrix} \Phi^x_{k,j} \\ \Phi^u_{k,j} \end{bmatrix} w_c \in \mathbb{R}^{n_c} ,  \\
    h^\text{cs}_f(\mat{\Phi}) &= \sum_{j=0}^{N-1} G_f\, \Phi^x_{N,j}\, w_c \in \mathbb{R}^{n_f} .
\end{align}
\end{subequations}

We define $\vec{y} = (\delta\vec{z}, \delta\vec{v})$ and summarize the optimization problem \eqref{eq:optim-SLS} as
\begin{subequations} \label{eq:optim-withbeta}
\begin{alignat}{2}
    &\min_{\vec{y},\mat{\Phi},\mat{\beta}}\quad && J(\vec{y}) + \mathcal{J}(\mat{\Phi}) ,  \\ 
    &\hspace{1.5mm}\text{s.t.}
           && f(\vec{y}) = 0 ,  \label{eq:optim-withbeta-b} \\
    &      && D(\mat{\Phi}, \vec{y}) = 0 ,  \label{eq:optim-withbeta-c} \\
    &      && h^\text{cs}(\mat{\Phi}) + h^\text{ct}(\mat{\beta}) + h(\vec{y}) \leq 0 ,  \label{eq:optim-withbeta-d} \\
    &      && H(\mat{\Phi}) - \mat{\beta} = 0 ,  \label{eq:optim-withbeta-e}  \\
    &      && g(\vec{y}) \leq 0.  \label{eq:optim-withbeta-f}
\end{alignat}    
\end{subequations}
The constraint \eqref{eq:optim-withbeta-b} encodes the nominal dynamics \eqref{eq:optim-SLS-b}, and the constraint \eqref{eq:optim-withbeta-c} encodes the disturbance propagation \eqref{eq:optim-SLS-c}. Inequality \eqref{eq:optim-withbeta-d} captures the tightened constraints \eqref{eq:optim-SLS-d}--\eqref{eq:optim-SLS-e}. Constraint \eqref{eq:optim-withbeta-e} captures the nonlinear relationship \eqref{eq:beta} between $\mat{\beta}$ and $\mat{\Phi}$. Constraint \eqref{eq:optim-withbeta-f} encodes additional constraints such as \eqref{eq:optim-SLS-f}.

We further define $\tilde{\mathcal{J}}$ as a composition function condensing the disturbance propagation \eqref{eq:optim-withbeta-c}, such that \mbox{$\tilde{\mathcal{J}}(\mat{\Phi}^u, \vec{y}) = \mathcal{J}(\mat{\Phi})$} for any $\mat{\Phi}$ with \mbox{$D(\mat{\Phi}, \vec{y}) = 0$}. We do the same for $\tilde{h}^\text{cs}$ and $\tilde{H}$. The results in a more compact optimization problem:
\begin{equation} \label{eq:optim-compact}
\begin{alignedat}{2}
    &\min_{\vec{y},\mat{\Phi}^u,\mat{\beta}}\quad && J(\vec{y}) + \tilde{\mathcal{J}}(\mat{\Phi}^u, \vec{y}) ,  \\ 
    &\hspace{2.3mm}\text{s.t.}
           && f(\vec{y}) = 0 ,  \\
    &      && \tilde{h}^\text{cs}(\mat{\Phi}^u) + h^\text{ct}(\mat{\beta}) + h(\vec{y}) \leq 0 ,  \\
    &      && \tilde{H}(\mat{\Phi}^u) - \mat{\beta} = 0 ,  \\
    &      && g(\vec{y}) \leq 0.
\end{alignedat}    
\end{equation}
The Lagrangian of \eqref{eq:optim-compact} is given by
\begin{multline} \label{eq:optim-compact-lagrangian}
    \mathcal{L}(\vec{y}, \mat{\Phi}^u, \mat{\beta}, \lambda, \mu, \eta)
    = J(\vec{y}) + \tilde{\mathcal{J}}(\mat{\Phi}^u, \vec{y})
    + \lambda^\top f(\vec{y})  \\
    + \mu^\top \bigl( \tilde{h}^\text{cs}(\mat{\Phi}^u) + h^\text{ct}(\mat{\beta}) + h(\vec{y}) \bigr)  \\
    + \eta^\top \bigl( \tilde{H}(\mat{\Phi}^u) - \mat{\beta} \bigr)
    + \gamma^\top g(\vec{y}) .
\end{multline}
with $\lambda$, $\mu$, and $\eta$ the dual variables.
The \ac{KKT} conditions of \eqref{eq:optim-compact} are given by
\begin{subequations} \label{eq:optim-compact-KKT}
\begin{align}
    \nabla J(\vec{y}) + \nabla_{\vec{y}} \tilde{\mathcal{J}}(\mat{\Phi}^u, \vec{y}) + \nabla f(\vec{y})\, \lambda + \nabla h(\vec{y}) \mu &= 0 ,  \label{eq:optim-compact-KKT-a} \\
    \nabla_{\mat{\Phi}^u} \tilde{\mathcal{J}}(\mat{\Phi}^u, \vec{y}) + \nabla \tilde{h}^\text{cs}(\mat{\Phi}^u)\, \mu + \nabla \tilde{H}(\mat{\Phi}^u)\, \eta &= 0 ,  \label{eq:optim-compact-KKT-b} \\
    \nabla h^\text{ct}(\mat{\beta})\, \mu - \eta &= 0 ,  \label{eq:optim-compact-KKT-c} \\
    f(\vec{y}) &= 0 ,  \label{eq:optim-compact-KKT-d} \\
    0 \leq \mu \perp \tilde{h}^\text{cs}(\mat{\Phi}^u) + h^\text{ct}(\mat{\beta}) + h(\vec{y}) &\leq 0 ,  \label{eq:optim-compact-KKT-e} \\
    \tilde{H}(\mat{\Phi}^u) - \mat{\beta} &= 0 .  \label{eq:optim-compact-KKT-f} \\
    0 \leq \gamma \perp g(\vec{y}) &\leq 0 .  \label{eq:optim-compact-KKT-g}
\end{align}
\end{subequations}

The key idea of the algorithm is to partition the \ac{KKT} conditions \eqref{eq:optim-compact-KKT} into two subsets and solve them alternately.
In particular, the subset of necessary conditions \eqref{eq:optim-compact-KKT-a}, \eqref{eq:optim-compact-KKT-c}, \eqref{eq:optim-compact-KKT-d}, \eqref{eq:optim-compact-KKT-e}, \eqref{eq:optim-compact-KKT-g} with fixed $\bar{\mat{\Phi}}^u$ and $\bar{\mat{\beta}}$, corresponds to a nominal trajectory optimization:
\begin{subequations} \label{eq:optim-traj-compact}
\begin{align}
    \nabla J(\vec{y}) + \nabla_{\vec{y}} \tilde{\mathcal{J}}(\bar{\mat{\Phi}}^u, \vec{y}) + \nabla f(\vec{y})\, \lambda + \nabla h(\vec{y})\, \mu &= 0 ,  \\
    \nabla h^\text{ct}(\bar{\mat{\beta}})\, \mu - \eta &= 0 ,  \label{eq:optim-traj-compact-b} \\
    f(\vec{y}) &= 0 ,  \\
    0 \leq \mu \perp \tilde{h}^\text{cs}(\bar{\mat{\Phi}}^u) + h^\text{ct}(\bar{\mat{\beta}}) + h(\vec{y}) &\leq 0 ,  \\
    0 \leq \gamma \perp g(\vec{y}) &\leq 0 .
\end{align}
\end{subequations}
The solution to \eqref{eq:optim-traj-compact} yields $\bar{\vec{y}}$, $\bar{\mu}$, and $\bar{\eta}$, which are used to solve the remaining necessary conditions \eqref{eq:optim-compact-KKT-b} and \eqref{eq:optim-compact-KKT-f}, corresponding to an uncertainty tube optimization:
\begin{subequations} \label{eq:optim-tube-compact}
\begin{align}
    \nabla_{\mat{\Phi}^u} \tilde{\mathcal{J}}(\mat{\Phi}^u, \bar{\vec{y}}) + \nabla \tilde{h}^\text{cs}(\mat{\Phi}^u)\, \bar{\mu} + \nabla \tilde{H}(\mat{\Phi}^u)\, \bar{\eta} &= 0 ,  \\
    \tilde{H}(\mat{\Phi}) - \mat{\beta} &= 0 .  \label{eq:optim-tube-compact-b}
\end{align}
\end{subequations}
The solution to \eqref{eq:optim-tube-compact} yields a new fixed $\bar{\mat{\Phi}}^u$ and $\bar{\mat{\beta}}$, which are used to solve \eqref{eq:optim-traj-compact} again. The alternation continues until the \ac{KKT} conditions \eqref{eq:optim-compact-KKT} are all satisfied.

\subsection{Uncertainty Tube Optimization}
The solution to \eqref{eq:optim-tube-compact} can be obtained by solving the following optimization problem:
\begin{equation*} \label{eq:optim-tube}
\begin{alignedat}{2}
    &\min_{\mat{\Phi}}\quad && \mathcal{J}(\mat{\Phi}) + \bar{\mu}^\top h^\text{cs}(\mat{\Phi}) + \bar{\eta}^\top H(\mat{\Phi}) ,  \\
    &\hspace{0.6mm}\text{s.t.} && D(\mat{\Phi}, \bar{\vec{y}}) = 0 ,
\end{alignedat}
\end{equation*}
or equivalently
\begin{equation*}  \label{eq:Phi-governing}
\begin{alignedat}{2}
    &\min_{\mat{\Phi}}\quad && \sum_{j=0}^{N-1} \biggl( \sum_{k=j+1}^{N-1} \trace\Bigl( \begin{bmatrix} \Phi^x_{k,j} \\ \Phi^u_{k,j} \end{bmatrix}^\top P_{k,j} \begin{bmatrix} \Phi^x_{k,j} \\ \Phi^u_{k,j} \end{bmatrix} + 2 p_{k,j}^\top \begin{bmatrix} \Phi^x_{k,j} \\ \Phi^u_{k,j} \end{bmatrix} \Bigr) \nonumber\\
    &                       &&\hspace{7em} + \trace\Bigl( \Phi_{N,j}^{x^\top} P_{N,j} \Phi^x_{N,j} + 2 p_{N,j}^\top \Phi^x_{N,j} \Bigr) \biggr) ,  \\
    &\hspace{0.6mm}\text{s.t.} && \Phi^x_{k+1,j} = A_k \Phi^x_{k,j} + B_k \Phi^u_{k,j} ,  \\
    &            && \Phi^x_{j+1,j} =  E_j + \tfrac{\partial E_j}{\partial z}\, \overline{\delta z}_j + \tfrac{\partial E_j}{\partial v}\, \overline{\delta v}_j ,
\end{alignedat}    
\end{equation*}
where
\begin{align*}
    P_{k,j} &= \begin{bmatrix} \bar{Q} & 0 \\ 0 & \bar{R} \end{bmatrix} + G_k^\top \diag(\bar{\eta}_{k,j})\, G_k ,\quad
    p_{k,j} = \frac{1}{2} G_k^\top \bar{\mu}_k w_c^\top , \\
    P_{N,j} &= \bar{Q}_f + G_f^\top \diag(\bar{\eta}_{N,j})\, G_f ,\hspace{2.8em}
    p_{N,j} = \frac{1}{2} G_f^\top \bar{\mu}_N w_c^\top .
\end{align*}
This can be via using $N$ independent Riccati recursions \cite{leeman2024fast}. Each Riccati recursion solves for
\begin{equation}
\begin{alignedat}{2}
    &\min_{\mat{\Phi}}\quad && \sum_{k=j+1}^{N-1} \trace\Bigl( \begin{bmatrix} \Phi^x_{k,j} \\ \Phi^u_{k,j} \end{bmatrix}^\top P_{k,j} \begin{bmatrix} \Phi^x_{k,j} \\ \Phi^u_{k,j} \end{bmatrix} + 2 p_{k,j}^\top \begin{bmatrix} \Phi^x_{k,j} \\ \Phi^u_{k,j} \end{bmatrix} \Bigr) \nonumber\\
    &                       &&\hspace{5em} + \trace\Bigl( \Phi_{N,j}^{x^\top} P_{N,j} \Phi^x_{N,j} + 2 p_{N,j}^\top \Phi^x_{N,j} \Bigr) ,  \\
    &\hspace{0.6mm}\text{s.t.} && \Phi^x_{k+1,j} = A_k \Phi^x_{k,j} + B_k \Phi^u_{k,j} ,  \\
    &            && \Phi^x_{j+1,j} =  E_j + \tfrac{\partial E_j}{\partial z}\, \overline{\delta z}_j + \tfrac{\partial E_j}{\partial v}\, \overline{\delta v}_j .
\end{alignedat}
\end{equation}
 We partition $P_{k,j}$ and $p_{k,j}$ as 
\begin{equation*}
    P_{k,j} = \begin{bmatrix} P_{k,j}^{xx} & P_{k,j}^{xu} \\ P_{k,j}^{ux} & P_{k,j}^{uu} \end{bmatrix} , \quad
    p_{k,j} = \begin{bmatrix} p_{k,j}^{x} \\ p_{k,j}^{u} \end{bmatrix} .
\end{equation*}
The Riccati recursion starts by initializing
\begin{equation*}
    S_{N,j} = P_{N,j} , \quad 
    s_{N,j} = p_{N,j} .
\end{equation*}
Followed by a backward pass for $k = N-1, \dots, j+1$:
\begin{subequations}
\begin{align*}
    M_{k,j} &= P_{k,j}^{uu} + B_k^\top S_{k+1,j} B_k ,  \\
    F_{k,j} &= P_{k,j}^{ux} + B_k^\top S_{k+1,j} A_k ,  \\
    f_{k,j} &= p_{k,j}^u + B_k^\top s_{k+1,j} ,  \\
    H_{k,j} &= P_{k,j}^{xx} + A_k^\top S_{k+1,j} A_k ,  \\
    h_{k,j} &= p_{k,j}^x + A_k^\top s_{k+1,j} ,  \\
    K_{k,j} &= -M_{k,j}^{-1} F_{k,j} ,\hspace{3.6em} k_{k,j} = -M_{k,j}^{-1} f_{k,j} ,  \\
    S_{k,j} &= H_{k,j} + F_{k,j}^\top K_{k,j} ,\hspace{1.5em} s_{k,j} = h_{k,j} + F_{k,j}^\top k_{k,j} .
\end{align*}    
\end{subequations}
Followed by a forward pass starting from $\Phi^x_{j+1,j} = E_j + \tfrac{\partial E_j}{\partial z}\, \overline{\delta z}_j + \tfrac{\partial E_j}{\partial v}\, \overline{\delta v}_j$ and for $k = j+1, \dots, N-1$:
\begin{subequations}
\begin{align*}
    \Phi^u_{k,j} &= K_{k,j} \Phi^x_{k,j} + k_{k,j} ,  \\
    \Phi^x_{k+1,j} &= A_k \Phi^x_{k,j} + B_k \Phi^u_{k,j} .
\end{align*}
\end{subequations}
The resulting optimal value of the objective is
\begin{equation*}
    \sum_{j=0}^{N-1} \trace\bigl( \Phi_{j+1,j}^{x^\top}\, S_{j+1,j}\, \Phi_{j+1,j}^{x} \bigr).
\end{equation*}
Once solved, the value of $\mat{\beta}$ is then obtained by evaluating \eqref{eq:optim-tube-compact-b} and \eqref{eq:beta}.

\subsection{Nominal Trajectory Optimization}
The solution to \eqref{eq:optim-traj-compact} can be obtained by solving a nominal trajectory optimization problem:
\begin{equation*} \label{eq:optim-traj}
\begin{alignedat}{2}
    &\min_{\delta\vec{z}, \delta\vec{v}}\quad && J(\vec{z} + \delta\vec{z}, \vec{v} + \delta\vec{v})  \\
    &    && + \trace\Bigl( \begin{bmatrix} \delta z_j \\ \delta v_j\end{bmatrix}^\top \begin{bmatrix} \frac{\partial E_j}{\partial z} & \frac{\partial E_j}{\partial v} \end{bmatrix}^\top S_{j+1,j} \begin{bmatrix} \frac{\partial E_j}{\partial z} & \frac{\partial E_j}{\partial v} \end{bmatrix} \begin{bmatrix} \delta z_j \\ \delta v_j\end{bmatrix} \Bigr) ,  \\
    &\hspace{1mm}\text{s.t.} 
         && \delta z_{k+1} = A_k\, \delta z_k + B_k\, \delta v_k, \quad \delta z_0 = 0 ,  \\
    &    && h^\text{cs}_k(\mat{\bar{\Phi}}) + h^\text{ct}_k(\bar{\mat{\beta}}) + G_k \begin{bmatrix} z_k + \delta z_k \\ v_k + \delta v_k \end{bmatrix} + g_k \leq 0 ,  \\
    &    && h^\text{cs}_N(\mat{\bar{\Phi}}) + h^\text{ct}_N(\bar{\mat{\beta}}) + G_f\, (z_N + \delta z_N) + g_f \leq 0 ,  \\
    &    && \|\delta\vec{z}\| \leq \varepsilon, \quad
           \|\delta\vec{v}\| \leq \varepsilon.
\end{alignedat}    
\end{equation*}
This problem is a \ac{QP}.
Once solved, the value of $\eta$ is then obtained by evaluating \eqref{eq:optim-traj-compact-b} and \eqref{eq:constraint-tightening},
\begin{subequations}
\begin{align*}
    \eta_{k,j} &= \tfrac{1}{2} w_r\, (\beta_{k,j} + \epsilon 1_{n_c})^{-\frac{1}{2}} \odot \mu_k ,  \\
    \eta_{f,j} &= \tfrac{1}{2} w_r\, (\beta_{f,j} + \epsilon 1_{n_f})^{-\frac{1}{2}} \odot \mu_N .
\end{align*}
\end{subequations}
Here, the operation $(\cdot)^{-\frac{1}{2}}$ is applied elementwise, and $\odot$ denotes elementwise multiplication.

\vspace{\baselineskip}
The algorithm alternates between uncertainty tube optimization, which uses Riccati recursions, and nominal trajectory optimization, which solves a \ac{QP}, until convergence. Both the Riccati recursions and \ac{QP} solves can benefit from GPU acceleration to reduce the algorithm solve time \cite{fang2026safe}.

\end{document}